\begin{document}

\title{Convergence of Sparse Variational Inference \\ in Gaussian Processes Regression}

\author{\name David R. Burt \email drb62@cam.ac.uk \\
       \name Carl Edward Rasmussen \email cer54@cam.ac.uk \\
       \addr Department of Engineering, University of Cambridge, UK
       \AND
         \name Mark van der Wilk\email m.vdwilk@imperial.ac.uk \\
       \addr Department of Computing, Imperial College London, UK \\
       \addr Prowler.io,\thanks{Previous affiliation where significant portion of work was completed.} Cambridge, UK
}
\editor{Kilian Weinberger}

\maketitle
\begin{abstract}%
    Gaussian processes are distributions over functions that are versatile and mathematically convenient priors in Bayesian modelling. However, their use is often impeded for data with large numbers of observations, $N$, due to the cubic (in $N$) cost of matrix operations used in exact inference. Many solutions have been proposed that rely on $M \ll N$ \emph{inducing variables} to form an approximation at a cost of $\BigO(NM^2)$. While the computational cost appears linear in $N$, the true complexity depends on how $M$ must scale with $N$ to ensure a certain quality of the approximation. In this work, we investigate upper and lower bounds on how $M$ needs to grow with $N$ to ensure high quality approximations. We show that we can make the KL-divergence between the approximate model and the exact posterior arbitrarily small for a Gaussian-noise regression model with $M \ll N$. Specifically, for the popular squared exponential kernel and $D$-dimensional Gaussian distributed covariates, $M = \BigO((\log N)^D)$ suffice and a method with an overall computational cost of $\BigO\left(N(\log N)^{2D}(\log \log N)^2\right)$ can be used to perform inference.
\end{abstract}

\begin{keywords}
Gaussian processes, approximate inference, variational methods, Bayesian non-parameterics, kernel methods
\end{keywords}
\section{Introduction}

Gaussian process (GP) priors are commonly used in Bayesian modelling due to their mathematical convenience and empirical success. The resulting models give flexible mean predictions, as well as useful estimates of uncertainty. GP priors are often used with a Gaussian likelihood for regression tasks, as the Bayesian posterior can be computed in closed form in this case. Additionally, in many instances, the kernel is differentiable with respect to hyperparameters, in which case hyperparameters can be efficiently learned using gradient-based optimization by maximizing the marginal likelihood, which can be computed analytically (also known as empirical Bayes, or type-II maximum likelihood). However, standard implementations of exact inference in Gaussian process regression models require storing and inverting a kernel matrix, imposing an $\mcO(N^2)$ memory cost and an $\mcO(N^3)$ computational cost, where $N$ is the number of training examples. These computational constraints have pushed researchers to adopt approximate methods in order to allow Gaussian process models to scale to large data sets.

Sparse methods \citep[e.g.~][]{seegerfast,snelson_sparse_2006,titsias_variational_2009} rely on a set of \emph{inducing variables} to represent the posterior distribution. While these methods have been widely adopted in research and application areas, there is a limited theoretical understanding of the effects of these approximations on the quality of posterior predictions, as well as what biases are introduced into hyperparameter selection when using approximations to the marginal likelihood. In this work, we aim to characterize the accuracy of sparse approximations. If all of the key properties of the exact model, i.e.~the predictive mean and uncertainties and the marginal likelihood, are maintained by very sparse models, then a great deal of computation can be saved through these approximations. 

We focus on the case of sparse inference in the variational framework of \citet{titsias_variational_2009}. We analyze the relationship between the level of sparsity used in performing inference, which dictates the computational cost, and the quality of the approximate posterior distribution. In particular, we analyze how many inducing variables should be used in order for the KL-divergence between the approximate posterior and the Bayesian posterior to be small. This offers theoretical insight into the trade-off between computation and quality of inference within the variational framework. From a practical perspective, our work suggests new methods for choosing which inducing variables to use to construct the approximation and provides theoretically grounded insight into the types of problems to which the sparse variational approach is particularly well-suited.

\subsection{Our Contributions}
\begin{itemize}
    \item We derive bounds on the quality of variational inference in Gaussian process models. When our bounds are applied in the case of the squared exponential (SE)  kernel and Gaussian or compactly supported inputs, we prove that the variational approximation can be made arbitrarily close to the true posterior with arbitrarily high probability using $\mcO((\log N)^D)$ inducing variables, where $D$ is the dimensionality of the training inputs, leading to an overall computational cost of $\BigO(N(\log N)^{2D}(\log\log N)^2)$. Note that we consider $D$ fixed throughout, implying a scaling in $N$ that is nearly linear, i.e.~$\BigO(N^{1+\epsilon})$, $\forall \epsilon > 0$.
    \item Our bounds measure the discrepancy to the true posterior using the KL-divergence between the approximate and exact posteriors. We also show that this implies convergence of the point-wise predictive means and variances.
    \item We show that theoretical guarantees on the quality of matrix approximation for existing methods for selecting regressors in sparse kernel ridge regression can be directly translated into guarantees on variational sparse GP regression. We demonstrate this for ridge leverage scores.
    \item We derive lower bounds on the number of inducing variables needed to ensure that the KL-divergence remains small. For the SE kernel and Gaussian covariate distribution, these lower bounds have the same dependence on the size of the data set as the upper bounds. 
    \item Based on the theoretical results, we provide recommendations on how to select inducing variables in practice, and demonstrate empirical improvements.
\end{itemize}
This paper is an extension of the work \citet{burt2019rates} presented at ICML 2019.
\subsection{Overview of this Paper}
In \cref{sec:background} we introduce notation and review the Gaussian process regression model, as well as sparse variational inference for Gaussian process models.  In \cref{sec:practical-considerations}, we discuss practical considerations regarding assessing the quality of sparse variational inference using upper bounds on the log marginal likelihood that can be computed after observing a data set. In \cref{sec:regression-rates}, we prove our main results, which bound the quality of the sparse approximate posterior, as measured by the KL-divergence. In order to do this, we consider methods for selecting inducing inputs inspired by methods used to obtain theoretical guarantees on sparse kernel ridge regression.  \Cref{sec:examples} considers specific, commonly studied kernels and covariate distributions and investigates the implications of our results in these instances. We provide concrete computational complexities for finding arbitrarily accurate approximations to GPs. In \cref{sec:lower-bounds}, we consider the inverse problem, and show that in certain instances the KL-divergence will be large unless the number of inducing variables increases sufficiently quickly as a function of the size of the data set. \Cref{sec:experiments} discusses practical insights and limitations of the theory as applied to real-world problems.

\section{Background and Notation}\label{sec:background}
In this section, we review exact inference in Gaussian process models, as well as sparse methods for approximate inference in these models. We particularly focus on the formulation of sparse methods based on variational inference \citep{titsias_variational_2009}. Throughout the paper, we use boldface letters to denote random variables, and the same letter in non-bold to denote a realization of this random variable. We follow the standard shorthand notation adopted in many Bayesian machine learning papers and denote probability densities by lower case letters $p$ and $q$, with the distribution to which they are associated inferred by the name of the argument; e.g. $p(X,y)$ is the density of a joint distribution over random variables $\bfX$ and $\bfy$ evaluated at $\bfX=X$ and $\bfy=y$.

\subsection{Gaussian Processes}
A Gaussian process is a collection of real-valued random variables indexed by a set $\mcX$, such that any finite collection of these random variables is jointly Gaussian distributed. While most commonly $\mcX$ is a subset of $\R^D$, Gaussian processes can be indexed by other sets. Such a process can be viewed as defining a distribution over functions $\bff: \mcX\to\Reals$, for which the distribution of function values for a finite set of inputs is Gaussian.

A procedure for specifying the first two moments of any finite marginal distribution in a consistent manner defines a GP. This can be done by selecting a \emph{mean function} $\mu: \mcX \to \R$ and a symmetric, positive semi-definite \emph{covariance function} $k: \mcX \times \mcX \to \R$. The finite dimensional marginal indexed by $\dataX = (x_1, \dotsc, x_N)\transpose \subset \mcX$, denoted $\fdata$, is distributed as
\begin{align}
    \fdata &\sim \mcN(\mu_X, \Kff) \,,
\end{align}
with $\mu_X = (\mu(x_1), \dotsc, \mu(x_N))\transpose$ and $\Kff$ an $N\times N$ matrix with $\left[\Kff\right]_{n,n'} = k(x_n, x_{n'})$. Properties such as smoothness, variance and characteristic lengthscale of functions that are sampled from the GP are determined by the covariance function. The covariance function is often parameterized in such a way that these properties can be adjusted based on properties of the observed data.

\subsection{Gaussian Process Regression}
In this work we perform Bayesian regression using a Gaussian process as the prior distribution over the function we want to learn. 
We observe a data set of $N$ training examples, $\mcD = \{(x_n,y_n)\}_{n=1}^N$ with $x_n \in \mcX$ and $y_n \in \R$ and want to infer a posterior distribution over functions $\bff: \mcX\to\Reals$ that relate the inputs to the outputs.
We define $\dataX=(x_1, \dotsc, x_N)\transpose$, $\datay=(y_1, \dotsc, y_N)\transpose$ and $\fdata = (\bff(x_1), \cdots \bff(x_N))\transpose$. More generally for any finite, set $X' \subset \mcX, |X'|=S<\infty$, which we assume has a fixed ordering, we will use $\mathbf{f}_{X'}$ to denote the (random) vector in $\R^S$, formed by considering the Gaussian process at indices $x \in X'$. 

We specify our Bayesian model through a prior and likelihood. We place a GP prior, which for notational convenience we assume has zero mean function i.e. $\mu \equiv 0$, on the function $\bff$ so that 
\begin{equation}\label{eqn:prior}
    \bff \sim \GP(0,k).
\end{equation}
To allow for deviations from $\bff$ in the observations, we model the data $\datay$ as a noisy observation of this process through the likelihood
\begin{equation}\label{eqn:likelihood-model}
    \bfy \given\fdata \sim \mcN(\fdata, \noisevariance \bfI) \,,
\end{equation}
where the noise variance $\noisevariance$, is a model hyperparameter and $\bfI$ is an $N\times N$ identity matrix.

Since the likelihood and the prior are conjugate in this model, Bayesian inference can be performed in closed form. The posterior density over the latent function values at any finite collection of $T$ new data points $X^\star = (x^\star_1, \cdots, x^\star_T)\transpose$ is given by
\begin{align}
    p(f_{X^\star} \given \mcD) &= \int_{f_X\in \Reals^N} p(f_{X^\star}, f_X \given \mcD) \calcd f_X \nonumber \\
    &= \int_{f_{X} \in \Reals^N} p(f_{X^\star} \given f_{X}) p(f_{X} \given \mcD) \calcd f_{X}.
\end{align}
Both $p(f_{X^\star} \given f_{X})$ and $p(f_{X} \given \mcD)$ are Gaussian densities and the marginal distribution of a Gaussian is Gaussian, so $p(f_{X^\star}\given \mcD)$ is also a Gaussian density. The posterior predictive distribution over the inputs $\fpred$ has mean vector and covariance matrix
\begin{equation}\label{eqn:GP-pred-posterior}
    \widehat{\mu}_\star = \Ksf(\Kff+\noisevariance \bfI)^{-1}\datay \text{ \quad and \quad} \widehat{\Sigma}_{\star \star} = \Kss - \Ksf(\Kff+\noisevariance \bfI)^{-1}\Ksf\transpose,
\end{equation}
where $\Ksf$ is $T \times N$ matrix with $[\Ksf]_{t,n} = k(x^{\star}_t,x_n)$ and $\Kss$ is a $T\times T$ matrix with $[\Kss]_{t,t'} = k(x^{\star}_t,x^{\star}_{t'})$.

The \emph{marginal likelihood} is of interest in Bayesian models for selecting the properties of the model, which are determined by hyperparameters. Point estimates of model hyperparameters are commonly obtained by maximizing the marginal likelihood with respect to the noise variance $\noisevariance$, and any parameters of the prior covariance function $k$. In the case of conjugate regression described above, the log marginal likelihood takes the form
\begin{equation}\label{eqn:full-marginal-likelihood}
 \log p(\datay) =   -\frac{1}{2}\log\det(\Kff+\noisevariance\bfI) -\frac{1}{2} \datay\transpose(\Kff+\noisevariance\bfI)^{-1}\datay-\frac{N}{2}\log 2 \pi.
\end{equation}

The quadratic term measures how well the data $\datay$ lines up with degrees of variation that are allowed under the prior. The log-determinant term measures how much variation there is in the prior, and penalizes priors which are widely spread. The combination of these terms in the log marginal likelihood balances the ability of the model to fit the data with model complexity, which allows a suitable model to be chosen; see \cite{gpml} for more discussion of the marginal likelihood as a tool for model selection as well as an introduction to Gaussian processes.

Despite closed-form expressions for both the predictive posterior (Eq.~\ref{eqn:GP-pred-posterior}) and marginal likelihood (Eq.~\ref{eqn:full-marginal-likelihood}), exact inference in Gaussian process regression models is impractical for large data sets due to the cost of storing and inverting the kernel matrix $\Kff$, leading to $\mcO(N^2)$ memory and $\mcO(N^3)$ time complexities. Sparse approximations have been widely adopted to address this issue.

\subsection{Approximate Inference for Gaussian Processes}
Approximate inference in Gaussian process regression is performed for a different reason than in most Bayesian models. Approximate inference is usually applied when the exact posterior is \emph{analytically} intractable. In our case, we can analytically write down the posterior, but the cost of computation is often prohibitive. The methods we discuss here all approximate the posterior with a different Gaussian process which has more favorable computational properties. As this approximate posterior has a similar form to the exact posterior, and we can control the trade-off between accuracy and computation, it is plausible that our approximation may be very accurate.

\subsubsection{Inducing Variable Methods}
The large cost of computing the posterior GP comes from needing to infer a Gaussian distribution for the function values at all $N$ input locations. Sparse approximations \citep{seegerfast,snelson_sparse_2006,titsias_variational_2009} avoid this cost by instead computing an approximate posterior that only depends on the data through the process at $M\ll N$ locations. 

The aim of these methods is to compress the combined effect of a large number of input and output pairs into a distribution over function values at a small set of inputs. In regions where data is dense, there is often redundant information about what the function is actually doing, so little is lost in performing this approximation. The selected input locations and their corresponding function values are named inducing inputs and outputs respectively, and together are named \emph{inducing points}. Later, it was suggested that more general linear transformations of the process could also be used to compress knowledge into \citep{lazaro-gredilla_inter-domain_2009}. We generally refer to these approaches as \emph{inducing variable} methods. In all of these methods, a low-rank matrix appears in place of $\Kff$ in the computation of the posterior predictive and log marginal likelihood. This matrix can be manipulated with a much lower computational cost than working with $\Kff$ directly.

The success of inducing variable methods depends heavily on \emph{which} $M$ random variables are chosen to represent the knowledge about the function $f$.  Because in this work we are concerned with characterizing how large $M$ should be, we need a good method for choosing the inducing variables, as well as a meaningful criterion for judging the quality of the resulting approximation. The variational formulation of \citet{titsias_variational_2009} is of particular interest, as it uses a well-defined divergence for characterizing the quality of the posterior, which can also be used as a guide for selecting the inducing variables.

\subsubsection{The Variational Formulation}\label{sec:variational-formulation}
Variational inference proceeds by defining a family of candidate distributions $\mcQ$, and then selecting the distribution $Q\in\mcQ$ that minimizes the KL-divergence between the approximation and the posterior. In practice, elements of $\mcQ$ are parameterized and the approximate posterior is selected by choosing an initial approximation which is then refined by finding a local minimum of the KL-divergence as a function of the variational parameters. In variational GP methods \citep{titsias_variational_2009,hensman_gaussian_2013} $\mcQ$ consists of GPs with finite dimensional marginal densities of the form
\begin{align}
    q(f_{X'}, U) = q(U)p(f_{X'}\given U) \label{eq:q-marinal} \,,
\end{align}
for any $X' \subset \mcX, |X'|<\infty$, where $q$ is the density of the approximate posterior at this collection of points, and $p(f_{X'}\given U)$ is the density of the prior distribution of $\bff_{X'}$ at $f_{X'}$ conditioned on the random variables $\bfU$ evaluated at $\bfU=U$. In inducing point approximations, we take the inducing variables to be point evaluations of $f$, i.e.~$\bfU = \bff_Z$, with inducing inputs $Z \subset \mcX$ and $|Z| = M$. 

As discussed in the previous section, we can also define inducing variables as linear transformations of the prior process of the form
\[
\bfu_m = \int_\mcX g_m(x) \bff(x) \calcd\rho(x),
\]
where we assume $\rho$ is a measure on $\mcX$ defined with respect to an appropriate $\sigma$-algebra and $g_m \in L^1(\mcX,\rho)$. If $\rho$ is taken to be a discrete measure, then these features correspond to (weighted) sums of inducing points; while other forms of these inducing variables of this form have been explored \citep{lazaro-gredilla_inter-domain_2009,hensman2018variational}. 

The density $q(U)$ is chosen to be an $M$-dimensional Gaussian density. This choice of variational family induces a Gaussian process approximate posterior with mean and covariance functions
\[
\mu_Q(x) = k_{f(x)\text{u}}\Kuu\inv\mu_{\text{U}} \,, \text{ \quad and \quad} k_Q(x,x') = k(x,x') + k_{f(x)\text{u}}\Kuu\inv\left(\Sigma_{\text{U}} - \Kuu\right)\Kuu\inv k_{\text{u}f(x')} \,,
\]
where $\mu_\text{U},\Sigma_\text{U}$ are the mean and covariance of $q(U)$, $\Kuu$ is the $M \times M$ matrix with entries $[\Kuu]_{m,m'} = \cov(\bfu_m,\bfu_{m'})$, $k_{f(x)\text{u}}$ is the row vector with entries $[k_{f(x')\text{u}}]_{m}=\cov(\bff(x),\bfu_{m})$ and $k_{\text{u}f(x)}$ is a column vector defined similarly. The variational parameters consist of $Z$, which determines the random variables that are included in $\bfU$, and $\mu_\text{U}$ and $\Sigma_\text{U}$, which determine the distribution over $\bfU$.

\newcommand{\lb}{\mathcal{L}}
As is usually the case in variational inference, minimizing the KL-divergence is done indirectly by maximizing a lower bound to the marginal likelihood, $\lb$ (also known as the evidence lower bound, or ELBO), which has $\KL{Q}{P}$ as its slack:
\begin{gather}
    \lb + \KL{Q}{P} = \log p(y) \label{eqn:lb-kl-relation} \, 
    \implies \argmin_{Q\in \mcQ} \KL{Q}{P} = \argmax_{Q\in \mcQ} \lb(Q) \,.
\end{gather}
where $P$ denotes the (exact) posterior process \citep{matthews_sparse_2016}. 

When the likelihood is isotropic Gaussian, the unique optimum for the parameters $\{\mu_\text{U},\Sigma_\text{U}\}$ can be computed in closed form. Using these optimal values, we obtain the ELBO as it was introduced by \citet{titsias_variational_2009},
\newcommand{\lbcol}{\lb}
\begin{equation}\label{eqn:elbo}
    \lbcol = -\frac{1}{2}\log\det(\Qff+\noisevariance\bfI) -\frac{1}{2} \datay\transpose(\Qff+\noisevariance\bfI)^{-1}\datay-\frac{N}{2}\log 2 \pi-\frac{1}{2\noisevariance}\Tr(\Kff-\Qff) \,,
\end{equation}
where $\Qff=\Kuf\transpose\Kuu^{-1}\Kuf$ and $\Kuf$ is the $M \times N$ matrix with entries $[\Kuf]_{i,j} = \cov(\bfu_i,\bff(x_j))$. 

\subsubsection{Measuring the Quality of a Variational Approximation}\label{sec:posterior-consistency}

In order to assess whether variational inference leads to an accurate approximation to the posterior, we need to choose a definition of what it means for an approximation to be accurate. We choose to measure the quality of an approximation in terms of the KL-divergence, $\KL{Q}{P}$. This KL-divergence is $0$ if and only if the approximate posterior is equal to the exact posterior. Under this measure, an approximation is considered good if this KL-divergence is small.

Variational approximations using this KL-divergence have been criticized for failing to provide guarantees on important quantities such as posterior estimates of the mean and variance. \citet{Huggins2018ScalableGP} observed that there exist Gaussian distributions such that the (normalized) difference between the means of the distributions is exponentially large as a function of the KL-divergence between the two distributions, as is the ratio of the variances. This has been used to motivate variational approaches based on other notions of divergence, as well as a more careful assessment of the quality of the approximations obtained via variational inference \citep{huggins2019practical}. 

However, in our case of sparse Gaussian process regression, a sufficiently small KL-divergence between the approximate and true posterior implies bounds on the approximation quality of the marginal posterior mean and variance function.  \Cref{prop:marginal-bound} states one such bound:
\begin{restatable}[]{prop}{marginalbound}
\label{prop:marginal-bound}
Suppose $2\KL{Q}{P} \leq \gamma \leq \frac{1}{5}$. For any $x^\star \in \mcX$, let $\mu_1$ denote the posterior mean of the variational approximation at $x^\star$ and $\mu_2$ denote the mean of the exact posterior at $x^\star$ . Similarly, let $\sigma^2_1,\sigma^2_2$ denote the variances of the approximate and exact posteriors at $x^\star$. Then,
\begin{align*}
\lvert\mu_1-\mu_2\rvert \leq \sigma_2\sqrt{\gamma} \leq \frac{\sigma_1\sqrt{\gamma}}{\sqrt{1-\sqrt{3\gamma}}}
\text{\quad and \quad}
\lvert 1- \sigma_1^2/\sigma_2^2 \rvert<\sqrt{3\gamma}.
\end{align*}
\end{restatable}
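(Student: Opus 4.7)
The plan is to reduce the problem to the one-dimensional Gaussian marginals at $x^\star$ and then exploit the explicit KL formula. By monotonicity of the KL-divergence under marginalization (equivalently, the data processing inequality applied to evaluation at $x^\star$), $\KL{Q_\star}{P_\star} \leq \KL{Q}{P} \leq \gamma/2$, where $Q_\star = \mcN(\mu_1,\sigma_1^2)$ and $P_\star = \mcN(\mu_2,\sigma_2^2)$. Setting $r := \sigma_1^2/\sigma_2^2$ and $\delta := (\mu_1-\mu_2)^2/\sigma_2^2$, the Gaussian KL formula yields
\[
\phi(r) + \delta \;=\; 2\KL{Q_\star}{P_\star} \;\leq\; \gamma, \qquad \phi(r) := r - 1 - \log r.
\]
The key structural observation is that $\phi(r) \geq 0$ for all $r>0$, so each of $\phi(r)$ and $\delta$ is individually at most $\gamma$. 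The mean bound $|\mu_1-\mu_2| \leq \sigma_2\sqrt{\gamma}$ is then immediate from $\delta \leq \gamma$. The alternate form involving $\sigma_1$ follows by combining this with the variance bound proved below: if $r \geq 1$ then $\sigma_2 \leq \sigma_1$ and the statement is trivial, while if $r < 1$ the variance bound gives $r > 1 - \sqrt{3\gamma}$, hence $\sigma_2 \leq \sigma_1/\sqrt{1-\sqrt{3\gamma}}$.

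The core task is therefore to show $|r-1| < \sqrt{3\gamma}$ from $\phi(r) \leq \gamma \leq 1/5$. For $r \leq 1$ I put $t = 1-r$; the expansion $\phi(1-t) = \sum_{k \geq 2} t^k/k$ has only non-negative terms, so $\phi(1-t) \geq t^2/2$ and hence $t \leq \sqrt{2\gamma} < \sqrt{3\gamma}$. For $r \geq 1$ I put $s = r-1$ and prove the contrapositive $s \geq \sqrt{3\gamma} \Rightarrow \phi(1+s) > \gamma$. For $s > \sqrt{3/5}$ this is immediate from monotonicity of $\phi$ on $[1,\infty)$ together with $\phi(1+\sqrt{3/5}) > 1/5 \geq \gamma$. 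For $s \in [\sqrt{3\gamma},\sqrt{3/5}]$ one has $\gamma \leq s^2/3$, so it suffices to show $h(s) := s - \log(1+s) - s^2/3 > 0$ on this interval. A direct computation gives $h'(s) = s(1-2s)/(3(1+s))$, whence $h$ is increasing on $[0,1/2]$ and decreasing on $[1/2,\infty)$; combined with $h(0)=0$, it is enough to verify the single inequality $h(\sqrt{3/5}) > 0$.

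This last verification is the main obstacle, because the bound $\sqrt{3\gamma}$ is essentially sharp at the boundary $\gamma = 1/5$ (numerically $h(\sqrt{3/5}) \approx 8\times 10^{-4}$), so any coarse Taylor truncation will be too loose. The cleanest route I would take is to establish a sufficiently sharp rational upper bound on $\log(1+\sqrt{3/5})$ via the integral representation $\log(1+u) = \int_0^u (1+t)^{-1}\,dt$ together with a tangent or trapezoidal quadrature bound on $(1+t)^{-1}$ calibrated near the endpoint $u = \sqrt{3/5}$, and then verify the resulting inequality symbolically. The hypothesis $\gamma \leq 1/5$ is essentially the sharpest threshold for which this step succeeds; the bound $\sqrt{3\gamma}$ would fail for $\gamma$ only moderately larger.
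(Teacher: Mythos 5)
Your proof is correct and follows essentially the same route as the paper's: marginalize to the one--dimensional Gaussians at $x^\star$ via the chain rule for the KL-divergence, apply the closed--form Gaussian KL, use nonnegativity of $r \mapsto r-1-\log r$ to peel off the mean term, and then show $|r-1|<\sqrt{3\gamma}$ via the inequality $r-1-\log r \geq (r-1)^2/3$ on the relevant interval. Your case split at $r=1$ (Taylor series for $r<1$, contrapositive with the explicit verification $h(\sqrt{3/5})>0$ for $r>1$) makes the tightness near $\gamma = 1/5$ more explicit than the paper does, but the underlying idea and conclusion are the same.
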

 
The proof (\cref{app:proof-marginals}) uses that the KL-divergence between any pair of joint distributions upper bounds the KL-divergence between marginals of these distributions. It then suffices to bound the difference between the mean and variance of univariate Gaussian distributions with a small KL-divergence between them.

\Cref{prop:marginal-bound} implies that in cases where we can prove the KL-divergence between the approximate posterior and the exact posterior is very small, we are guaranteed to obtain similar marginal predictions with the variational approximation to those we would obtain with the exact model.  We note that direct approaches to bounding marginal moments may lead to tighter bounds on these quantities \citep[e.g.~][]{calandriello2019gaussian}, but we prefer to consider the KL-divergence due to its connection to the variational objective function.

The consequences of a small KL-divergence for hyperparameter selection using the evidence lower bound are more subtle, as both the approximate posterior and exact posterior depend on model hyperparameters, and it is generally difficult to ensure that the KL-divergence is uniformly small. We will be discuss these issues in more detail in \cref{sec:experiments}.

\subsubsection{Computation and Accuracy Trade-Offs}
The ELBO (Eq.~\ref{eqn:elbo}) as well as the corresponding choices for $\mu_{\text{U}}$ and $\Sigma_{\text{U}}$ (needed for making predictions) can be computed in $\mcO(NM^2)$ time, and with $\mcO(NM)$ space. If a good approximation can be found with $M\ll N$, the savings in computational cost are large compared to exact inference. From \cref{eqn:elbo} we see that the approximation is perfect when choosing $Z = \dataX$, as this leads to $\Qff = \Kff$. However, no computation is saved in this setting. We seek a more complete understanding of the trade-off between accuracy and computational cost when $M < N$ by understanding how $M$ needs to grow with $N$ to ensure an approximation of a certain quality. We derive probabilistic upper and lower bounds on this rate that depend on kernel properties that can be analyzed \emph{before} observing any data.

\subsection{Spectrum of Kernels and Mercer's Theorem}

In the previous section, we noted that sparse methods imply a low-rank approximation $\Qff$ to the kernel matrix $\Kff$. In order to understand the impact of sparsity on the variational posterior, it is necessary to understand how well $\Kff$ can be approximated by a rank-$M$ matrix. This depends on the behavior of the eigenvalues of $\Kff$. 

For small data sets, an eigendecomposition of $\Kff$ allows direct empirical analysis. However, for problems where sparse approximations are actually of interest, eigendecompositions are not available within our computational constraints. However, even without access to a specific data set, we can reason that properties of the training inputs have a large impact on the properties of the eigendecomposition of the kernel matrix. For example, consider the case of a squared exponential kernel given by
\[
k_{\textup{SE}}(x,x') = v\exp\left(-\frac{\|x-x'\|^2_2}{2\ell^2}\right),
\]
where $v>0$ is the signal variance, which controls the variance of marginal distributions of the prior and $\ell>0$ is the lengthscale, which controls how quickly the correlation between function values decreases as a function of the distance between their inputs. If each covariate in our training data set is sufficiently far apart relative to the lengthscale, then $\Kff \approx v\bfI$  and any approximation by low-rank matrix will be of poor quality. Alternatively, if each covariate takes the same value, $\Kff$ is a rank-one matrix and $\Qff=\Kff$ if a single inducing point is placed at the location of the covariates. Therefore, in order to make statements about the eigenvalues of $\Kff$, we will need to make assumptions about the locations of training covariates $\dataX$.

For the remainder of the paper, we assume $\mcX=\R^D$ (the generalization of most of our results is straightforward). One method for understanding the eigenvalues of $\Kff$ is to suppose the $x_i$ are realizations of random variables $\bfx_i\iidsim \mu_x$, where $\mu_x$ is a probability measure on $\R^D$ with density $p(x)$. Under this assumption, the limiting properties of the kernel matrix are captured by the \emph{kernel operator}, $\mcK: L^2(\R^D,\mu_x)\to L^2(\R^D,\mu_x)$ defined by
\begin{equation}\label{eqn:operator_definition}
(\mcK g)(x) = \int g(x')k(x,x')p(x')\calcd x'.
\end{equation}
If the kernel is continuous and bounded, then $\mcK$ has countably many eigenvalues. We denote these eigenvalues in non-increasing order, so that $\lambda_1 \geq \lambda_2 \geq \dots \geq 0$. Corresponding to each non-zero eigenvalue $\lambda_m$ there is an eigenfunction $\phi_m$ which can be chosen to be continuous. 

Moreover, the collection $\{\phi_m\}_{m=1}^\infty$ can be chosen so that the eigenfunctions all have unit norm and are mutually orthogonal as elements of $L^2(\R^D,\mu_x)$. 
In this case, Mercer's theorem \citep{mercer1909,konig2013eigenvalue} states that for sufficiently nice $k$, for all $x \in \R^D$ such that $p(x)>0$,
\begin{align}
    k(x,x') = \sum_{m=1}^\infty \lambda_m \phi_m(x)\phi_m(x') \text{\quad and \quad} \sum_{m=1}^\infty \lambda_m < \infty,
\end{align}
where the sum on the left converges absolutely and uniformly.\footnote{See \citet{gpml}, section 4.3 for more discussion of Mercer's theorem.}

The bounds we derive in the remainder of this work will depend on how rapidly the eigenvalues $\{\lambda_m\}_{m=1}^\infty$ decay. As they are absolutely summable, they must decay faster than $1/m$. The decay of these eigenvalues is closely related to the complexity of the non-parametric model as well as the generalization properties of the posterior \citep{Micchelli1979DesignPF, plaskota1996noisy}. 
Generally, these eigenvalues decay faster for covariate distributions that are concentrated in a small volume, and for kernels that give smooth mean predictors \citep{widom_asymptotic_1963,widom1964asymptotic}. Therefore, the bounds we prove in \cref{sec:regression-rates} can be seen as verifying the intuition that sparse variational approximations can be successfully applied to models with smooth prior kernels, as well as data sets with densely clustered covariates. 

\subsection{Inducing Variable Selection and Related Bounds}
While the kernel eigenvalues determine how well a kernel matrix \emph{can} be approximated, the quality of an actual approximation depends on how the inducing variables are chosen. 
Inducing point selection has been widely studied for many methods that require constructing a Nystr\"{o}m approximation, like sparse Gaussian processes and kernel ridge regression (KRR). In the simplest case, a subset can be uniformly sampled from the training inputs. Bounds on the quality of the resulting matrix approximation, and downstream Kernel Ridge Regression predictor have been found for this case \citep{bach2013sharp,gittens16a} and depend heavily on assumptions about the covariate distribution and resulting kernel matrix. In the Gaussian process literature, some specific low-rank parametric approximations based on spectral information about the kernel operator or matrix have been proposed \citep{zhu_gaussian_1997, ferrari-trecate_finite-dimensional_1999, solin2020hilbert} together with analysis on the rate of decrease in error with additional features. However, these methods generally are either limited in the types of kernels they can be applied to or have higher computational complexity than inducing point methods. 

Heuristic inducing point selection methods have also been proposed in the hope of improving performance, for instance approximately minimizing $\Tr(\Kff-\Qff)$ \citep{smola2000sparse}, approximating the information gain of including a data point in the posterior \citep{seegerfast}, or using the k-means centres of the input distribution \citep{hensman_gaussian_2013,hensman_scalable_2015}.

Two methods from the KRR literature are of particular interest: sampling from a Determinantal Point Process (DPP) \citep{li2016fast}, and ridge leverage scores \citep{AlaouiFast2015,rudi2015less,calandriello2017distributed}. Theoretical guarantees exist in the literature for these methods applied to KRR, as well as empirical evidence of their efficacy compared to uniform sampling. The initial version of this work \citep{burt2019rates} analyzed convergence of the sparse variational GP posterior and marginal likelihood using the DPP initialization. Concurrently, \citet{calandriello2019gaussian} used ridge leverage scores to show  the DTC approximation \citep{seegerfast,quin2005unifying} can be made similar to the true posterior, in terms of pointwise predictive means and variances. Given the similarity between the DTC and variational posteriors, we include an analysis of ridge leverage sampling in this extended work to also provide results of convergence of the ELBO, and of the posterior in terms of the KL, which
also implies pointwise convergence of the predictive means and variances.

\section{Assessing Variational Inference: a Posteriori Bounds on the KL-divergence}\label{sec:practical-considerations}
We begin our investigation by considering how to choose the number of inducing variables for a specific  data set. The simplest approach to assessing whether sufficiently many inducing points are used is to gradually increase the number of inducing points, and assess how the evidence lower bound changes with each additional point. If the ELBO increases only slightly or not at all when an additional inducing point is added, it is tempting to conclude that the approximate posterior is very close to the exact posterior. However, this is not a sufficient condition for the approximation to have converged. It could be the case that the last inducing point placed was not placed effectively, or that increasing from $M$ to $M+1$ inducing points has little impact, but increasing to $M+c$, for some $c>1$, inducing points would lead to significantly better performance if these points are well-placed. 

A more refined mechanism for assessing the quality of the variational posterior would be to consider an upper bound on the KL-divergence that can be computed in similar computational time to the ELBO. Such a bound was proposed by \citet{titsias_variational_2014} and discussed as a method for assessing convergence in \citet{kim2018scaling}. In order to state this bound, we first need to introduce some notation. Let $t\coloneqq \Tr(\Kff-\Qff)$ denote the trace of $\Kff-\Qff$ and $\|\Kff-\Qff\|_{\textup{op}}$ denote the operator norm of $\Kff - \Qff$, which in this case is equal to the largest eigenvalue of this matrix as it is symmetric positive semidefinite. 
\begin{restatable}[\citealp{titsias_variational_2014}]{lem}{upperbound}
\label{lem:titsias-aposteriori}
For any $y \in \R^N, X \in \mcX^N$, and set of $M$ inducing variables, $U$, 
\[
 \log p(\datay) \leq \upperone \leq \uppertwo
\]
 where
\begin{equation}
    \upperone\coloneqq -\frac{1}{2}\log\det(\Qff+\noisevariance\bfI) -\frac{1}{2} \datay\transpose(\Qff+\|\Kff-\Qff\|_{\textup{op}}\bfI+\noisevariance\bfI)^{-1}\datay-\frac{N}{2}\log 2 \pi, \nonumber
\end{equation}
and
\begin{equation}
\uppertwo\coloneqq -\frac{1}{2}\log\det(\Qff+\noisevariance\bfI) -\frac{1}{2} y\transpose(\Qff+t\bfI+\noisevariance\bfI)^{-1}y-\frac{N}{2}\log 2 \pi.\label{eqn:upper2}
\end{equation}
\end{restatable}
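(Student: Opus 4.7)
The plan is to start from the exact log marginal likelihood in \cref{eqn:full-marginal-likelihood} and to bound its two data-dependent terms separately using the positive semidefinite (PSD) ordering of symmetric matrices. The key structural fact I would invoke is that $\Kff - \Qff \succeq 0$: this follows from viewing $\Qff = \Kuf^\top \Kuu^{-1}\Kuf$ as the Schur complement of $\Kuu$ in the joint prior covariance matrix of $(\fdata,\bfU)$, which is itself PSD. Given this, the proof reduces to two one-sided matrix inequalities together with monotonicity of $\log\det$ and of the quadratic form $y^\top (\cdot)^{-1} y$ under PSD ordering.

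For the log-determinant term, I would use the immediate inequality $\Kff + \noisevariance \bfI \succeq \Qff + \noisevariance\bfI$, obtained from $\Kff - \Qff \succeq 0$; applying the fact that $A \succeq B \succ 0$ implies $\log\det A \geq \log\det B$ yields $-\tfrac{1}{2}\log\det(\Kff+\noisevariance\bfI) \leq -\tfrac{1}{2}\log\det(\Qff+\noisevariance\bfI)$. Note this log-determinant term is identical in $\upperone$ and $\uppertwo$, so it need not be revisited. For the quadratic term, I would go in the opposite direction: since $\Kff - \Qff$ is symmetric PSD, its operator norm equals its largest eigenvalue, so $\Kff - \Qff \preceq \|\Kff-\Qff\|_{\textup{op}}\bfI$ and hence $\Kff + \noisevariance\bfI \preceq \Qff + \|\Kff-\Qff\|_{\textup{op}}\bfI + \noisevariance\bfI$. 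Inverting reverses the PSD ordering, and sandwiching with $y$ gives $-\tfrac{1}{2}y^\top(\Kff+\noisevariance\bfI)^{-1}y \leq -\tfrac{1}{2}y^\top(\Qff+\|\Kff-\Qff\|_{\textup{op}}\bfI+\noisevariance\bfI)^{-1}y$. Combining the two bounds delivers $\log p(\datay) \leq \upperone$.

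To obtain the second inequality $\upperone \leq \uppertwo$, I would use the elementary fact that for any PSD matrix $A$, $\|A\|_{\textup{op}} \leq \Tr(A)$, since the operator norm is the largest eigenvalue and the trace is the sum of all (non-negative) eigenvalues. Applied to $A = \Kff - \Qff$, this gives $\Qff + \|\Kff-\Qff\|_{\textup{op}}\bfI + \noisevariance\bfI \preceq \Qff + t\bfI + \noisevariance\bfI$. Inverting again reverses the ordering, so the quadratic form is correspondingly loosened, which is precisely the passage from $\upperone$ to $\uppertwo$.

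There is no substantive obstacle here; the only subtlety is keeping track of directions in the PSD ordering, since one piece of the log marginal likelihood benefits from the lower bound $\Kff+\noisevariance\bfI \succeq \Qff+\noisevariance\bfI$ while the other benefits from upper bounds on $\Kff-\Qff$. Both bounds follow from the single fact that $\Kff-\Qff$ is a symmetric PSD matrix, which is what makes the argument clean.
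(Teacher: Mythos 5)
Your proposal is correct and matches the paper's proof (\cref{app:proofs-section3}) essentially step for step: both arguments hinge on $\Kff-\Qff\succeq 0$, use PSD monotonicity of $\log\det$ for one term and order-reversal under inversion for the quadratic term, and pass from $\upperone$ to $\uppertwo$ via $\|\Kff-\Qff\|_{\textup{op}}\le\Tr(\Kff-\Qff)$.
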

For completeness we give a brief derivation of \cref{lem:titsias-aposteriori} in \cref{app:proofs-section3}, which essentially follows the derivation of \citet{titsias_variational_2014}.

In problems where sparse GP regression is applied, computing the largest eigenvalue of $\Kff-\Qff$ in order to compute $\mcU_1$ is computationally prohibitive. However, $\Tr(\Kff-\Qff)$ can be computed in $\mcO(NM^2)$, so that $\mcU_2$ can be computed efficiently.

As $\ELBO = \log p(\datay) - \KL{Q}{P}$, we have
\begin{align}\label{eqn:kl-bounded-upper}
\KL{Q}{P} = \log p(\datay) - \ELBO \leq \mcU_1  - \ELBO\leq \mcU_2  - \ELBO.
\end{align}
If the difference between the upper and lower bounds is small, we can therefore be sure that sufficiently many inducing points are being used for the KL-divergence to be small. This suggests a refinement of the method for selecting the number of inducing points discussed earlier: continue to place more inducing points until the difference between the upper and lower bounds is small. 

This raises the question: how many inducing variables do we need for the KL-divergence to be small in a typical problem? The upper bounds discussed above assess the approximation \emph{a posteriori}, i.e.~for a given data set and a given approximation. We would like to characterize the required number of inducing variables for a whole class of problems, \emph{before} observing any data. This allows us to understand \emph{a priori} how much computation is needed to solve a particular problem. For example, if the number of inducing variables $M$ needs to grow linearly with the number of observations $N$, then the $\mcO(NM^2)$ cost of the approximation effectively scales cubically in $N$, i.e.~in the same way as the exact implementation. In \cref{sec:regression-rates}, we show that under intuitive assumptions, the number of inducing points can be taken to be much smaller than the size of the data set, while still giving approximations with small KL-divergences.

\section{Convergence of Sparse Variational Inference in Gaussian Processes}\label{sec:regression-rates}

In this section, we prove upper bounds on the KL-divergence between the approximate posterior and the exact posterior that depend on the number of inducing points used in inference, properties of the prior and distributional assumptions on the training covariates. The proof proceeds in three parts:
\begin{enumerate}
    \item Derive an upper bound on the KL-divergence for a fixed data set and fixed set of inducing points that only depends on the quality of the approximation of $\Kff$ by $\Qff$. In order to do this we make assumptions about the data generating process for $\datay$. 
    \item Suggest a method for selecting inducing inputs that obtains a high quality low-rank approximation to $\Kff$. This yields an upper bound on the KL-divergence depending only on the eigenvalues of $\Kff$. We consider using a $k$-determinantal point process or ridge leverage scores as the initialization method. 
    \item Relate eigenvalues of the kernel matrix back to those of the corresponding kernel operator, \cref{eqn:operator_definition}, through assumptions on the distribution of the covariates. 
\end{enumerate}
The second step has precedent in the literature on sparse kernel ridge regression. For example, \citet{li2016fast} consider using a $k$-DPP to select the sparse regressors. Meanwhile ridge leverage scores have been studied in the setting of sparse kernel ridge regression and Gaussian process regression \citep{AlaouiFast2015,rudi2015less,calandriello2017distributed,calandriello2019gaussian}, and have been shown to lead to strong statistical guarantees.

The third step in our analysis is similar to the analysis carried out when studying generalization and approximation bounds for Gaussian processes and other kernel methods. We use a generalization of a lemma proven in \citet{shawe2005eigenspectrum} for this step.

In order to carry out our analysis, especially steps 2 and 3, we will treat $X,y$ and $Z$ as realizations of random variables $\bfX$, $\bfy$ and $\bfZ$ and make distributional assumptions about these random variables. This will allow us to make statements about bounds that hold in expectation or with fixed probability.

\subsection{A-Posteriori Upper Bounds on the KL-divergence Revisited}\label{sec:a-posteriori-bounds-revisited}
In \cref{sec:practical-considerations}, we considered bounds on the KL-divergence that can be computed for a specific data set. In this section, we first derive an upper bound on the KL-divergence that only depends on the squared norm of $\bfy$, with no additional assumptions on the distribution of the $\bfy$ (\cref{lem:agnostic-upper-bound}). We then derive a second bound, given in \cref{lem:kl-averaged}, that improves on \cref{lem:agnostic-upper-bound} in expectation, under the stronger assumption that $\bfy|\bfZ,\bfX\sim \mcN(0, \Kff+\noisevariance\bfI)$. This assumption is satisfied if $\bfy$ is distributed according to the prior model and the distributions of $\bfZ$ and $\bfy$ are independent, i.e. the inducing inputs are chosen without reference to $y$. While our results are stated in terms of inducing points, the proofs generalize without modification to other inducing variables of the form discussed in \cref{sec:variational-formulation}.

\subsubsection{Upper bounds on the KL-divergence}
We first consider the case where we make few assumptions on the distribution of $\bfy$. 

\begin{restatable}{lem}{uppergeneraly}
\label{lem:agnostic-upper-bound}
For any $y \in \R^N, X \in \mcX^N$, and any $Z \in \mcX^M$
\begin{equation*}
    \KL{Q}{P} \leq \upperone - \ELBO \leq \frac{1}{2\noisevariance}\left(t+\frac{\zeta\|y\|_2^2}{\zeta+\noisevariance}\right) \leq \frac{1}{2\noisevariance}\left(t+\frac{t\|y\|_2^2}{t+\noisevariance}\right),
\end{equation*}
with $t=\text{\emph{tr}}(\Kff-\Qff)$ and $\zeta = \|\Kff-\Qff\|_{\textup{op}}$.
\end{restatable}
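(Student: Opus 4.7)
The first inequality $\KL{Q}{P} \le \upperone - \ELBO$ is immediate: combine $\KL{Q}{P} = \log p(y) - \ELBO$ with the upper bound $\log p(y) \le \upperone$ given by \cref{lem:titsias-aposteriori}. So the real content is bounding $\upperone - \ELBO$ in terms of $t$, $\zeta$ and $\|y\|_2^2$.

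The plan is to write the difference $\upperone - \ELBO$ explicitly using the formulas from \cref{eqn:elbo} and \cref{lem:titsias-aposteriori}. The $-\frac{1}{2}\log\det(\Qff + \noisevariance\bfI)$ and $-\frac{N}{2}\log 2\pi$ terms cancel, the trace term $\frac{1}{2\noisevariance}\,t$ survives from the ELBO side with the opposite sign, and the quadratic-form terms combine into $\tfrac{1}{2} y^\top\!\bigl[(\Qff + \noisevariance\bfI)^{-1} - (\Qff + \zeta\bfI + \noisevariance\bfI)^{-1}\bigr]y$. So the task reduces to bounding this quadratic form.

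Next I would diagonalize $\Qff$ (it is symmetric positive semidefinite). In this basis the matrix in brackets is diagonal with entries $\tfrac{1}{\lambda_i + \noisevariance} - \tfrac{1}{\lambda_i + \zeta + \noisevariance} = \tfrac{\zeta}{(\lambda_i + \noisevariance)(\lambda_i + \zeta + \noisevariance)}$, which is monotone decreasing in $\lambda_i \ge 0$ and hence maximized at $\lambda_i = 0$, giving the operator-norm bound $\tfrac{\zeta}{\noisevariance(\zeta + \noisevariance)}$. Multiplying by $\|y\|_2^2$ and assembling everything yields $\upperone - \ELBO \le \tfrac{1}{2\noisevariance}\bigl(t + \tfrac{\zeta \|y\|_2^2}{\zeta + \noisevariance}\bigr)$, which is the middle inequality.

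For the final inequality, I would use that $\Kff - \Qff$ is positive semidefinite (the standard fact that $\Qff$ is a Nystr\"om approximation to $\Kff$), so its eigenvalues are nonnegative and $\zeta \le t$. Since the function $x \mapsto x/(x+\noisevariance)$ is increasing on $[0,\infty)$, this gives $\tfrac{\zeta}{\zeta+\noisevariance} \le \tfrac{t}{t+\noisevariance}$ and the result follows. No step is really an obstacle here; the only subtlety is noticing that the operator-norm bound on $(\Qff+\noisevariance\bfI)^{-1}-(\Qff+\zeta\bfI+\noisevariance\bfI)^{-1}$ is attained in the limit as the eigenvalues of $\Qff$ tend to zero, which is why the denominator is $\noisevariance(\zeta+\noisevariance)$ rather than something depending on the eigenvalues of $\Qff$.
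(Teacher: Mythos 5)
Your proof is correct and is essentially the same argument as the paper's: the paper arrives at the same quadratic form $\tfrac{\zeta}{2}\,y^\top\bigl[(\Qff+\noisevariance\bfI)(\Qff+(\zeta+\noisevariance)\bfI)\bigr]^{-1}y$ via the resolvent identity and then bounds the smallest eigenvalue of the product by $\noisevariance(\zeta+\noisevariance)$, which is exactly your diagonalization bound. Your last step (using $\zeta\le t$ and monotonicity of $x\mapsto x/(x+\noisevariance)$) is a slightly cleaner way to get the final inequality than the paper's repetition of the argument with $\uppertwo$, but both rely on the same underlying fact that $\Kff-\Qff\succeq 0$ so its operator norm is at most its trace.
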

The first inequality has already been established (Eq.~\ref{eqn:kl-bounded-upper}). The remainder of the proof, given in \cref{app:proofs-section4} relies on properties of symmetric positive semi-definite (SPSD) matrices.

 In most applications, it is reasonable to assume that the data generating process for $\bfy$ is such that $\|\bfy\|_2^2 \leq RN$ almost surely, or at least $\conditionalexp{\|\bfy\|^2_2}{\bfX, \bfZ} \leq RN$, for some constant $R>0$. For example, if $\bfy$ is formed by evaluating a bounded function corrupted by Gaussian or bounded noise and the location of the inducing inputs is independent of $\bfy$ then \cref{lem:agnostic-upper-bound} allows us to bound the conditional expectation $\conditionalexp{\KL{Q}{P}}{\bfX,\bfZ}$.

\subsubsection{Average Case Analysis for the Prior Model}
In \cref{lem:agnostic-upper-bound}, we did not make any assumption on the distribution of $\bfy$. From the Bayesian perspective, it is natural to make stronger distributional assumptions on $\bfy|\bfX$. We will see that in some instances stronger assumptions can lead to a much tighter upper bound than \cref{lem:agnostic-upper-bound} that holds in expectation.

The natural candidate distribution for $\bfy$ is the prior distribution, that is $\bfy|\bfX \sim \mcN(0,\Kff+\noisevariance\bfI)$; if we additionally assume that the distributions of $\bfy|\bfX$ and $\bfZ|\bfX$ are independent, then this implies $\bfy|\bfX,\bfZ \sim \mcN(0,\Kff+\noisevariance\bfI)$. In this case we can derive upper and lower bounds on the conditional expectation of the KL-divergence conditioned on $\bfX$ and $\bfZ$.

\begin{restatable}{lem}{upperavgy}\label{lem:kl-averaged}
Suppose $\bfy|\bfX,\bfZ \sim \mcN(0,\Kff+\noisevariance\bfI)$. For any $X \in \mcX^N$ and $Z \in \mcX^M$,
\begin{equation*}
t/(2\noisevariance) \leq \conditionalexp{\KL{Q}{P}}{\bfZ=Z,\bfX=X} 
\leq t/\noisevariance
\end{equation*}
where $t=\text{\emph{tr}}(\Kff-\Qff)$ and $\Kff$ and $\Qff$ are defined with respect to this $X,Z$ as in \cref{sec:background}.
\end{restatable}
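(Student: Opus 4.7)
The plan is to compute the conditional expectation $\mathbb{E}[\KL{Q}{P}\mid \bfX=X, \bfZ=Z]$ in closed form, exploiting that both $\log p(y)$ (from \cref{eqn:full-marginal-likelihood}) and $\ELBO$ (from \cref{eqn:elbo}) are quadratic forms in $y$, and then bound the resulting deterministic expression from above and below using two simple scalar inequalities for $\log(1+x)$.

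First, I would start from $\KL{Q}{P} = \log p(y) - \ELBO$ and subtract the two expressions directly. All non-quadratic terms cancel except for $\tfrac{1}{2\sigma^2}t$ and the difference of the log-determinants. Taking the expectation of the quadratic forms under $\bfy \mid \bfX,\bfZ \sim \mcN(0,\Kff+\sigma^2 I)$ via $\mathbb{E}[y^\top A y] = \Tr(A(\Kff+\sigma^2 I))$, one gets
\begin{equation*}
\mathbb{E}[y^\top (\Kff+\sigma^2 I)^{-1} y] = N, \qquad \mathbb{E}[y^\top (\Qff+\sigma^2 I)^{-1} y] = N + \Tr\bigl((\Qff+\sigma^2 I)^{-1}(\Kff-\Qff)\bigr).
\end{equation*}
After the $N$'s cancel, writing $A = \Qff+\sigma^2 I$ and $B = \Kff-\Qff$, so that $\Kff+\sigma^2 I = A + B$, the conditional expectation becomes
\begin{equation*}
\mathbb{E}[\KL{Q}{P}\mid X,Z] \;=\; \frac{t}{2\sigma^2} + \frac{1}{2}\Tr(A^{-1}B) - \frac{1}{2}\log\det\!\bigl(I + A^{-1}B\bigr).
\end{equation*}

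Next I would diagonalize by symmetrizing: since $B \succeq 0$ (as the Schur complement of $\Kuu$ in the joint kernel matrix over $X \cup Z$) and $A \succ 0$, the matrix $A^{-1/2} B A^{-1/2}$ is SPSD with some eigenvalues $\nu_i \geq 0$, and the same eigenvalues control both $\Tr(A^{-1}B) = \sum_i \nu_i$ and $\log\det(I + A^{-1}B) = \sum_i \log(1+\nu_i)$. The extra term in the display above thus reads $\tfrac{1}{2}\sum_i(\nu_i - \log(1+\nu_i))$. The lower bound is immediate from $\nu_i - \log(1+\nu_i) \geq 0$ for $\nu_i \geq 0$, yielding $\mathbb{E}[\KL{Q}{P}\mid X,Z] \geq t/(2\sigma^2)$.

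For the upper bound I would use the complementary inequality $\nu_i - \log(1+\nu_i) \leq \nu_i$, which gives $\tfrac{1}{2}\sum_i(\nu_i - \log(1+\nu_i)) \leq \tfrac{1}{2}\Tr(A^{-1}B)$. Then, since $\Qff \succeq 0$ implies $A^{-1} = (\Qff+\sigma^2 I)^{-1} \preceq \sigma^{-2} I$ and $B \succeq 0$, the trace inequality $\Tr(A^{-1}B) \leq \sigma^{-2}\Tr(B) = t/\sigma^2$ holds, and combining gives $\mathbb{E}[\KL{Q}{P}\mid X,Z] \leq t/(2\sigma^2) + t/(2\sigma^2) = t/\sigma^2$.

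I do not anticipate a serious obstacle: the only points to handle with care are (i) justifying $\Kff - \Qff \succeq 0$ via the Schur complement representation so that the eigenvalue argument is valid, and (ii) being explicit about the symmetric square root step to avoid any appeal to eigenvalues of a non-symmetric product. Both are standard.
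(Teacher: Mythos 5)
Your proof is correct and is essentially the same argument the paper gives, just presented with a different bookkeeping. The paper identifies the expectation of $\log\bigl(n(y;0,\Kff+\noisevariance\bfI)/n(y;0,\Qff+\noisevariance\bfI)\bigr)$ as a KL-divergence between two mean-zero multivariate Gaussians, uses non-negativity of that KL for the lower bound, and bounds it above by $\tfrac{1}{2}\Tr\bigl((\Qff+\noisevariance\bfI)^{-1}(\Kff-\Qff)\bigr)\leq t/(2\noisevariance)$ via the H\"older-type inequality $\Tr(AB)\leq \Tr(A)\|B\|_{\textup{op}}$; you obtain the identical intermediate quantity by computing $\mathbb{E}[y^\top A y]$ directly and diagonalizing $A^{-1/2}BA^{-1/2}$, after which your scalar inequalities $\nu-\log(1+\nu)\geq 0$ and $\nu-\log(1+\nu)\leq\nu$ are exactly ``Gaussian KL $\geq 0$'' and ``drop the negative log-determinant term'' in disguise, and your observation $(\Qff+\noisevariance\bfI)^{-1}\preceq\noisevariance^{-2}\bfI$ is exactly $\|(\Qff+\noisevariance\bfI)^{-1}\|_{\textup{op}}\leq\noisevariance^{-2}$. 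The one thing you correctly flag as needing care --- that $\Kff-\Qff\succeq 0$ --- is handled in the paper by the same Schur-complement fact you cite.
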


\begin{rem}
Note that if $\bfy \sim \mcN(0,\Kff+\noisevariance \bfI)$,
\begin{align*}
    \conditionalexp{\|\bfy\|^2}{\bfX=X,\bfZ=Z} &= \conditionalexp{\text{\emph{tr}}(\bfy\transpose \bfy)}{\bfX=X,\bfZ=Z} \\
    &= \text{\emph{tr}}(\conditionalexp{\bfy \bfy\transpose}{\bfX=X,\bfZ=Z})
    = \text{\emph{tr}}(\Kff+\noisevariance\bfI).
\end{align*}
Therefore, under the strong assumption that $\bfy$ is sampled from the prior model, \cref{lem:kl-averaged} gives a significantly stronger bound on the expected KL-divergence as compared to \cref{lem:agnostic-upper-bound}.
\end{rem}

\begin{proof}[Proof Sketch of \cref{lem:kl-averaged}]
Recall that $\KL{Q}{P}= \log p(\bfy) - \ELBO$. Taking conditional expectations on both sides,
\begin{align}\label{eqn:expected-kl-1}
    \conditionalexp{\KL{Q}{P}}{\bfX=X,\bfZ=Z} & = \conditionalexp{\log p(\bfy) - \ELBO}{\bfX=X,\bfZ=Z}.
\end{align}
Let $n(\datay; m, S)$ denote the density of a (multivariate) Gaussian random variable with mean $m$ and covariance matrix $S$ evaluated at $\datay$. Then, 
\[
\log p(\bfy) = \log n(\bfy;0, \Kff+\noisevariance \bfI)
\]
and 
\[
\ELBO = \log n(\bfy;0, \Qff+\noisevariance \bfI) - \frac{1}{2 \noisevariance}\Tr(\Kff-\Qff).
\]
Using this in \cref{eqn:expected-kl-1},
\begin{align}
    \conditionalexp{\KL{Q}{P}}{\bfX=X,\bfZ=Z} & =  \conditionalexp{\log \frac{n(\bfy;0, \Kff+\noisevariance \bfI)}{n(\bfy;0, \Qff+\noisevariance \bfI)}}{\bfX=X,\bfZ=Z} + \frac{t}{2 \noisevariance}\nonumber \\
    & = \KL{\mcN(0, \Kff+\noisevariance\bfI)}{\mcN(0, \Qff+\noisevariance\bfI)} + \frac{t}{2 \noisevariance}\label{eqn:average-kl-exact}.
\end{align}
The lower bound follows from the non-negativity of KL-divergence. The proof of the upper bound (\cref{app:proofs-section4}) relies on the formula for the KL-divergence between multivariate Gaussian distributions as well as the identity $|\Tr(AB)| \leq \Tr(A)\|B\|_{\mathrm{op}}$ for SPSD matrices $A$ and $B$ \citep[Exercise 1.3.26]{tao2011topics}.
\end{proof}

\begin{rem}
The lower bound in \cref{lem:kl-averaged} holds in expectation \emph{conditioned} on $\bfX=X$ and $\bfZ=Z$, with $\bfy$ distributed according to our prior. Common practice is to optimize the inducing inputs with respect to the ELBO, which depends on $\datay$. We may therefore expect that the KL-divergence will be somewhat smaller than predicted by the average case lower bound in \cref{lem:kl-averaged} after this optimization. This is shown in \cref{fig:kl_trace_comparison}, where we generate a data set satisfying the conditions of the lemma, and look at the trace and KL-divergence before and after gradient based optimization of the ELBO with respect to the inducing inputs. In \cref{sec:lower-bounds}, we establish lower bounds that hold for any $y \in \R^N$ conditioned on $\bfX=X$ and $\bfZ=Z$ and are therefore applicable to the case when inducing points are selected via gradient-based methods.
\end{rem}

\begin{figure}
    \centering
    \includegraphics[width=\textwidth]{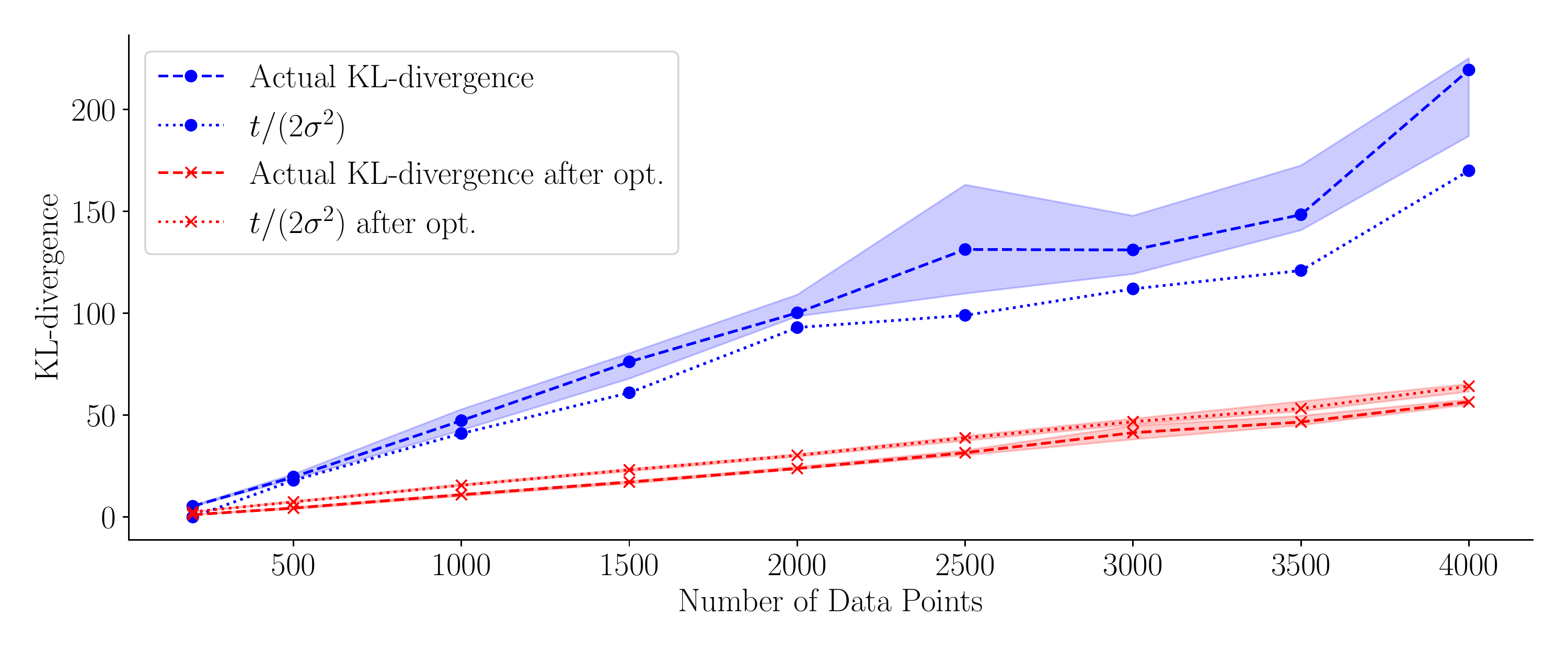}
    \caption{A comparison of the KL-divergence and the lower bound in \cref{lem:kl-averaged}. For each value of $N$ we first fix a set of covariates and $M=30$ inducing points and then compute both the trace and KL-divergence (shown in blue), with $y\sim \mcN(0,\Kff+\noisevariance \bfI)$. The dashed line shows the median of 20 random $y$'s, while the shaded region represents 20-80 percentile regions. We then optimize the locations of the inducing points via gradient descent on the ELBO. As $N$ increases, both the trace and KL-divergence increase. When the inducing points are selected via optimizing the ELBO, the KL-divergence is typically somewhat lower than the lower bound in \cref{lem:kl-averaged}, while if $Z$ is chosen without reference to $\datay$ the lower bound on the expected value of the KL-divergence holds.}
    \label{fig:kl_trace_comparison}
\end{figure}

\subsection{Initialization of Inducing Points}
In the previous section, we derived upper bounds on $\mathbb{E}[\mathrm{KL}[Q||P]|\bfX,\bfZ]$ that depend on assumptions about the distribution of $\bfy$. These bounds depend on either the trace or the largest eigenvalue of $\Kff-\Qff$, and will therefore be small if $\Kff \approx \Qff$. We begin this section with a brief overview of inducing variable selection methods, of which we will analyze two in the context of sparse variational inference. Using known results on the quality of the resulting matrix approximations, we can then obtain bounds on the KL-divergence for a fixed set of training inputs $\bfX=X$.

\subsubsection{Minimizing the upper bounds}\label{sec:eigenvec-features}
We take a brief detour from discussing initializations of inducing inputs to discuss the set of inducing variables that minimize the upper bounds in \cref{lem:kl-averaged,lem:agnostic-upper-bound}. 

Let $\Kff =\matV \Lambda \matV\transpose$, where $\matV = [ v_1, v_2, \cdots v_N]$ is an $N \times N$ orthogonal matrix and $\Lambda= \text{diag}(\tilde{\lambda}_1, \dotsc, \tilde{\lambda}_N)$ is a diagonal matrix of eigenvalues of $\Kff$ ordered such that $\tilde{\lambda}_1 \geq \tilde{\lambda}_2 \geq \dots \geq \tilde{\lambda}_N \geq 0$. As $\Kff$ is SPSD, such a decomposition exists. Define $\textup{K}_{M}=\matV_M\Lambda_M\matV_M\transpose$, where $\bfV_M$ is an $N\times M$ matrix containing the first $M$ columns of $\bfV$ and $\Lambda_M$ is an $M \times M$ diagonal matrix with entries, $\tilde{\lambda}_1, \dotsc, \tilde{\lambda}_M$, in other words $\textup{K}_M$ is the rank-$M$ truncated singular value decomposition of $\Kff$.

Both the trace and the operator norm are unitarily invariant, so $\textup{K}_M$ is the optimal rank-$M$ approximation to $\Kff$ according to either of these norms.\footnote{While the trace is not generally a matrix norm, it agrees with the norm $\|\cdot\|_1$ as $\Kff-\Qff$ is SPSD.} In particular, for any rank $M$ $N\times N$ SPSD matrix $\bfA$ satisfying $\bfA \prec \Kff$ (i.e.~$\Kff-A$ is SPSD), $\Tr(\Kff-\textup{K}_M) \leq \Tr(\Kff-\bfA)$ and $\|\Kff-\textup{K}_M\|_{\textup{op}} \leq \|\Kff-\bfA\|_{\textup{op}}$ \citep[see][Theorem 7.4.9.1]{horn_matrix_1990}.

As any subset of $M$ inducing variables will lead to a rank-$M$ matrix $\Qff \prec \Kff$ this implies
\begin{equation*}
    \|\Kff-\Qff\|_{\textup{op}} \geq \|\Kff-\textup{K}_M\|_{\textup{op}} = \tilde{\lambda}_{M+1} \,\, \text{and} \,\, \Tr(\Kff-\Qff) \geq \Tr(\Kff-\textup{K}_M) = \!\!\!\sum_{m=M+1}^N\!\!\! \tilde{\lambda}_{m}.
\end{equation*}

 Consider the inducing features defined as linear combinations of the random variables associated to evaluating the latent function at each observed input location, with weights coming from the eigenvectors of $\Kff$, i.e.
\[
\bfu_m= \frac{1}{\tilde{\lambda}_m} \sum_{i=1}^N v_{i,m} \bff(x_i).
\]
Then,
\[
\cov(\bfu_m,\bfu_{m'})\!=\! \frac{1}{\tilde{\lambda}_m\tilde{\lambda}_{m'}}\Exp{}{\sum_{i=1}^N \sum_{j=1}^N v_{m,j}v_{m',i} \bff(x_i) \bff(x_{j})} \!=\! \frac{1}{\tilde{\lambda}_m\tilde{\lambda}_{m'}}\sum_{i=1}^N \sum_{j=1}^N v_{m,i}v_{m',j} k(x_i,x_j).
\]
The final two sums are the quadratic form, $v_m\transpose\Kff v_{m'}$. As $v_m$ is an eigenvector of $\Kff$ and $v_m$ is orthogonal to $v_m'$ unless $m=m'$, this simplifies to $\cov(u_m,u_{m'})= \frac{\delta_{m,m'}}{\tilde{\lambda}_m}$. Similarly,
\[
\cov(\bfu_m,\bff(x_n)) = \frac{1}{\tilde{\lambda}_m}\Exp{}{\sum_{i=1}^N v_{m,n}\bff(x_i) \bff(x_n)} = \frac{1}{\tilde{\lambda}_m}\sum_{i=1}^N v_{m,n}k(x_i,x_n) = \frac{1}{\tilde{\lambda}_m}[\Kff v_m]_n = v_{m,n}.
\]
From these expressions, it follows that for these features $\Kuf = \bfV_M$ and $\Kuu^{-1}=\Lambda_M$. Therefore, $\Qff=\textup{K}_M$, and these features minimize our upper bounds among any set of $M$ inducing variables. Unfortunately, computing the matrices $\Kuf$ and $\Kuu$ in this case involves computing the first $M$ eigenvalues and eigenvectors of $\Kff$, which lies outside of our desired computational budget of $\BigO(N\mathrm{poly}(M)\mathrm{polylog}(N))$.

\subsubsection{M-Determinantal point processes}\label{sec:kdpp}

We now return to the more practical case of using inducing points for sparse variational inference. In order to derive non-trivial upper bounds on $\Tr(\Kff-\Qff)$ and $\|\Kff-\Qff\|_{\textup{op}}$, we need a sufficiently good method for placing inducing points. When using differentiable kernel functions, many practitioners select the locations of the inducing points with gradient-based methods by maximizing the ELBO. As this is a high-dimensional, non-convex optimization algorithm, directly analyzing the result of this procedure is beyond our analysis.

In this section, we assume $M$ inducing points are subsampled from data according to an approximate \emph{M-determinantal point process} ($M$-DPP) \citep{kulesza2011k} and use known bounds on the expected value of $\Tr(\Kff-\Qff)$.\footnote{The standard terminology is $k$-DPP. We use $M$ as this determines the number of inducing points and to avoid confusion with the kernel function.} We note that if this scheme is used as an initialization prior to a gradient-based optimization of the evidence lower bound with respect to the inducing inputs, the resulting KL-divergence will be at least as small, so our bounds still apply after optimization of variational parameters.

Given an SPSD matrix $\textup{L}$, an $M$-determinantal point process \citep{kulesza2011k} with kernel matrix $\textup{L}$ defines a discrete probability distribution over subsets of the $N$ columns of $\textup{L}$, with positive probability only assigned to subsets of cardinality $M$. The probability of any subset of cardinality $M$ is proportional to the determinant of the principal submatrix formed by selecting those columns and the corresponding rows, that is for any set $Z$ of $M$ columns of $L$
\[
\text{Pr}(\bfZ=Z) = \frac{\det(\textup{L}_{Z,Z})}{\sum_{ |Z'|=M}\det(\textup{L}_{Z',Z'})}.
\]
 where $\textup{L}_{Z,Z}$ is the principal submatrix of $\textup{L}$ with columns in $Z$. For a thorough introduction to determinantal point processes, as well as an implementation of many sampling methods, see \citet{GPBV19}.
 
As the determinant of $\textup{L}_{Z,Z}$ corresponds to the volume of the parallelepiped in $\R^M$ formed by the columns in $Z$, $M$-determinantal point processes introduce strong negative correlations between points sampled. This leads to samples that are more dispersed than subsets selected uniformly (\cref{fig:dpp_init}). This intuition, as well as the following result due to \citet{belabbas_spectral_2009} serves as motivation for using an $M$-DPP in order to select the location of inducing points.
\begin{lem}[\citealp{belabbas_spectral_2009}, Theorem 1]\label{lem:belabbas}
Let $\textup{L}$ be a SPSD $N \times N$ matrix with eigenvalues $\eta_1 \geq \dots \geq \eta_N \geq 0$. Suppose a set of points are sampled according to an $M$-determinantal point process with kernel matrix $\textup{L}$. Define the (random) matrix $\textup{L}_\bfZ=\textup{L}_{\bfZ,N}\transpose \textup{L}_{\bfZ,\bfZ}^{-1}\textup{L}_{\bfZ,N}$ where $\textup{L}_{\bfZ,\bfZ}$ is the $M \times M$ principal submatrix of $L$ with columns in $\bfZ$ and $\textup{L}_{N, \bfZ}$ is the $N \times M$ matrix with columns $\bfZ$. Then,
\begin{equation}
    \Exp{}{\text{\emph{tr}}(\textup{L}-\textup{L}_\bfZ)} \leq (M+1) \sum_{m=M+1}^N \eta_m.
\end{equation}
\end{lem}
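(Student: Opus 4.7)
The plan is to exploit the determinantal structure of the $M$-DPP together with a Schur-complement identity to rewrite $\mathbb{E}[\operatorname{tr}(\textup{L}-\textup{L}_{\bfZ})]$ as a ratio of elementary symmetric polynomials in the eigenvalues $\eta_1,\dots,\eta_N$, and then control that ratio.

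First I would fix a subset $Z$ of size $M$ with $\det(\textup{L}_{Z,Z})>0$ and rewrite $\operatorname{tr}(\textup{L}-\textup{L}_{Z}) = \sum_{i\notin Z}\bigl(\textup{L}_{ii}-\textup{L}_{i,Z}\textup{L}_{Z,Z}^{-1}\textup{L}_{Z,i}\bigr)$, using that $[\textup{L}_{Z}]_{ii}$ is exactly the $(i,i)$ entry of the Nystr\"om approximation. The key observation is that each summand is the Schur complement of the $(M{+}1)\times(M{+}1)$ principal submatrix indexed by $Z\cup\{i\}$, so
\begin{equation*}
\textup{L}_{ii}-\textup{L}_{i,Z}\textup{L}_{Z,Z}^{-1}\textup{L}_{Z,i} = \frac{\det(\textup{L}_{Z\cup\{i\},Z\cup\{i\}})}{\det(\textup{L}_{Z,Z})}.
\end{equation*}
Multiplying by the $M$-DPP weight $\det(\textup{L}_{Z,Z})/\sum_{|Z'|=M}\det(\textup{L}_{Z',Z'})$ cancels the denominator, yielding
\begin{equation*}
\mathbb{E}[\operatorname{tr}(\textup{L}-\textup{L}_{\bfZ})]
= \frac{\sum_{|Z|=M}\sum_{i\notin Z}\det(\textup{L}_{Z\cup\{i\},Z\cup\{i\}})}{\sum_{|Z|=M}\det(\textup{L}_{Z,Z})}.
\end{equation*}

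Next, I would recognise the numerator by double counting: every $(M{+}1)$-subset $W$ appears as $Z\cup\{i\}$ in exactly $M{+}1$ ways (one for each choice of the distinguished element $i\in W$), so the numerator equals $(M+1)\sum_{|W|=M+1}\det(\textup{L}_{W,W})$. By the Cauchy--Binet formula applied to an eigendecomposition of $\textup{L}$ (equivalently, from the coefficients of $\det(\textup{L}+xI)$), the sum of all $k\times k$ principal minors of an SPSD matrix equals the $k$th elementary symmetric polynomial of its eigenvalues. Hence
\begin{equation*}
\mathbb{E}[\operatorname{tr}(\textup{L}-\textup{L}_{\bfZ})]
= (M+1)\,\frac{e_{M+1}(\eta_1,\dots,\eta_N)}{e_{M}(\eta_1,\dots,\eta_N)}.
\end{equation*}

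Finally I need the inequality $e_{M+1}(\eta)/e_{M}(\eta)\le \sum_{m=M+1}^{N}\eta_m$. The plan here is to partition the $(M{+}1)$-subsets $W\subset\{1,\dots,N\}$ according to $j=\max(W)$; since $|W|=M+1$ and the eigenvalues are labelled in non-increasing order, $j\ge M+1$ always, and
\begin{equation*}
e_{M+1}(\eta) = \sum_{j=M+1}^{N}\eta_j\,e_{M}(\eta_1,\dots,\eta_{j-1}) \le \Bigl(\sum_{j=M+1}^{N}\eta_j\Bigr) e_{M}(\eta_1,\dots,\eta_N),
\end{equation*}
using non-negativity of the $\eta_m$ to monotonically extend each $e_{M}(\eta_1,\dots,\eta_{j-1})$ to the full tuple. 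Combining with the previous display gives the claimed bound. The degenerate case where $\det(\textup{L}_{Z,Z})=0$ almost surely (i.e.\ $\operatorname{rank}(\textup{L})<M$) is handled by noting the $M$-DPP assigns zero mass there, and that both sides of the inequality are $0$ whenever $e_M(\eta)=0$.

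The main obstacle I anticipate is the combinatorial bookkeeping in the double-counting step and, to a lesser extent, establishing the elementary symmetric polynomial inequality cleanly; the Schur-complement identity and Cauchy--Binet are standard. Careful handling of the case $e_M(\eta)=0$, so that the ratio is well-defined, is the one subtlety worth flagging.
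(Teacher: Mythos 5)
Your proof is correct, and it is essentially the argument of \citet{belabbas_spectral_2009}, which the paper cites without reproducing: the Schur-complement determinant identity to collapse each Nystr\"om diagonal residual into a ratio of principal minors, cancellation against the $M$-DPP weights, double counting to identify the numerator with $(M+1)\,e_{M+1}(\eta)$, and the max-index partition to bound $e_{M+1}/e_M$ by the tail eigenvalue sum. Your handling of the degenerate case $e_M(\eta)=0$ (where the $M$-DPP is not well-defined and both sides vanish) is the right caveat to flag, and the rest is airtight.
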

If $\textup{L}=\Kff$ and the inducing points are selected as a subset of data points corresponding to the columns selected by the $M$-DPP, then $\textup{L}_\bfZ=\RQff$. This tells us that using an $M$-DPP to choose inducing inputs will make $\conditionalexp{\Tr(\Kff-\RQff)}{\bfX=X}$ relatively close to its optimal value of $\sum_{m=M+1}^N \kffeigenvalue$.

\begin{figure}
    \centering
    \includegraphics[width=0.7\textwidth]{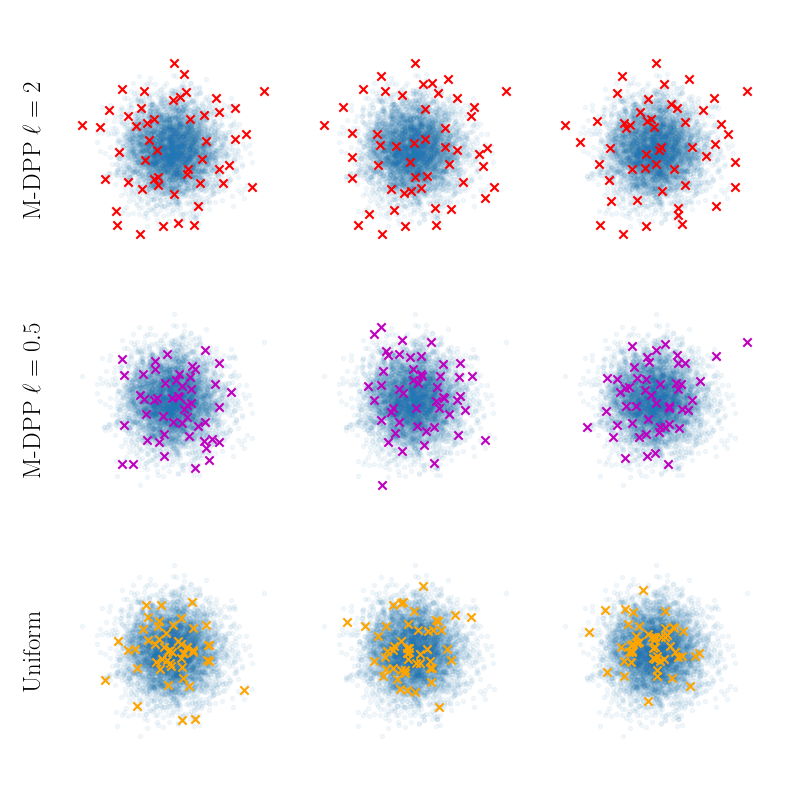}
    \caption{$M$-Determinantal point processes introduce a strong form of negative correlation between points, leading to samples that appear over-dispersed compared to uniform sampling. In the top two rows, we show (approximate) samples from a $M$-DPP with kernel matrix determined by a SE-kernel with two different length scales and Gaussian distributed covariates, with $50$ points drawn for each sample. Sampling is performed via MCMC after a greedy initialization. In the bottom row, we show subsets of size 50 selected uniformly from the covariates. }
    \label{fig:dpp_init}
\end{figure}

The next important question to address is whether a $M$-DPP can be sampled with sufficiently low computational complexity for this to be a practical method for selecting inducing inputs. Naively computing the probability distribution over all $\binom{N}{M}$ subsets of size $M$ is prohibitively expensive. \citet{kulesza2011k} gave an algorithm that runs in polynomial time and yields exact samples from an $M$-DPP.  Unfortunately, this algorithm involves computing an eigendecomposition of the $N\times N$ kernel matrix ($\Kff$ in our case), which is computationally prohibitive. 

Recently, \cite{NIPS2019_9330} gave an algorithm for obtaining an exact sample from an $M$-DPP in time that is polynomial in $M$ and nearly-linear in $N$. However, the polynomial in $M$ is high. We instead consider an approximate algorithm and therefore derive the following simple corollary of \cref{lem:belabbas}.
\begin{cor}\label{cor:approximate-kdpp}
Let $\rho$ denote an $M$-DPP with kernel matrix $L$, satisfying $L_{i,i}\leq v$. Let $\rho'$ denote a measure on subsets of columns of $L$ with cardinality $M$ such that $\mathrm{TV}(\rho, \rho') \leq \epsilon$ for some $\epsilon>0$, where $\mathrm{TV}(\rho, \rho') \coloneqq \frac{1}{2} \sum_{|Z|=M} \rho(Z) - \rho'(Z)$. Then
\[
\Exp{\rho'}{\text{\emph{tr}}(\textup{L}-\textup{L}_\bfZ)} \leq 2Nv \epsilon + (M+1) \sum_{m=M+1}^N \eta_m,
\]
where $\eta_m$ is the $m^{th}$ largest eigenvalue of $L$.
\end{cor}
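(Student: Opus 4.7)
The plan is to reduce the corollary to a direct comparison between expectations under $\rho$ and $\rho'$, then apply \cref{lem:belabbas} to the $\rho$-expectation and control the discrepancy using total variation.

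First I would write the $\rho'$-expectation as a sum over subsets and add and subtract the corresponding $\rho$-expectation, giving
\[
\Exp{\rho'}{\text{tr}(\textup{L}-\textup{L}_\bfZ)} = \Exp{\rho}{\text{tr}(\textup{L}-\textup{L}_\bfZ)} + \sum_{|Z|=M} \bigl(\rho'(Z)-\rho(Z)\bigr)\,\text{tr}(\textup{L}-\textup{L}_Z).
\]
The first term is handled immediately by \cref{lem:belabbas}, producing the $(M+1)\sum_{m=M+1}^N \eta_m$ piece. So everything reduces to bounding the second summation by $2Nv\epsilon$.

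Next I would observe that $\text{tr}(\textup{L}-\textup{L}_Z)$ is uniformly bounded over all size-$M$ subsets $Z$. Since $\textup{L}_Z = \textup{L}_{Z,N}\transpose \textup{L}_{Z,Z}^{-1}\textup{L}_{Z,N}$ is SPSD and $\textup{L}-\textup{L}_Z \preceq \textup{L}$, we have $0 \leq \text{tr}(\textup{L}-\textup{L}_Z) \leq \text{tr}(\textup{L}) = \sum_{i=1}^N \textup{L}_{i,i} \leq Nv$, using the hypothesis $\textup{L}_{i,i}\leq v$. This is the only place the diagonal bound enters.

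Plugging this uniform bound into the discrepancy term and using the standard identity $\sum_{|Z|=M}|\rho(Z)-\rho'(Z)| = 2\,\mathrm{TV}(\rho,\rho') \leq 2\epsilon$ gives
\[
\left|\sum_{|Z|=M} \bigl(\rho'(Z)-\rho(Z)\bigr)\,\text{tr}(\textup{L}-\textup{L}_Z)\right| \leq Nv \sum_{|Z|=M}|\rho(Z)-\rho'(Z)| \leq 2Nv\epsilon,
\]
which combines with the $\rho$-expectation bound to yield the claim. There is no real obstacle here: the proof is essentially a coupling/robustness argument showing that any sampling scheme close in total variation to an exact $M$-DPP inherits its expected trace bound up to an additive term scaling linearly in $N$, $v$, and $\epsilon$. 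The only mild subtlety is verifying $\text{tr}(\textup{L}-\textup{L}_Z)\geq 0$ so that signs do not have to be tracked carefully, which follows from the Schur complement / projection interpretation of $\textup{L}_Z$.
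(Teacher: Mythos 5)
Your proposal is correct and follows essentially the same route as the paper: decompose the $\rho'$-expectation into the $\rho$-expectation (handled by \cref{lem:belabbas}) plus a discrepancy term, then bound that term by $\max_Z \Tr(\textup{L}-\textup{L}_Z) \cdot 2\,\mathrm{TV}(\rho,\rho')$ together with the uniform estimate $\Tr(\textup{L}-\textup{L}_Z)\leq\Tr(\textup{L})\leq Nv$. Your remark that $\Tr(\textup{L}-\textup{L}_Z)\geq 0$ (so the uniform bound on the absolute value equals $Nv$) is a small clarification that the paper leaves implicit.
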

\begin{proof}
\begin{align*}
\Exp{\rho'}{\Tr(\textup{L}-\textup{L}_\bfZ)} & = \sum_{|Z| =M} \Tr(\textup{L}-\textup{L}_Z) (\rho'(Z)+ \rho(Z)-\rho(Z)) \\
&= \Exp{\rho}{\Tr(\textup{L}-\textup{L}_\bfZ)} + \sum_{|Z| =M} \Tr(\textup{L}-\textup{L}_Z) (\rho'(Z)-\rho(Z)) \\
& \leq  \Exp{\rho}{\Tr(\textup{L}-\textup{L}_\bfZ)} + 2\mathrm{TV}(\rho, \rho')\max_{\substack{Z: |Z|=M}}\left( \Tr(\textup{L}-\textup{L}_Z)\right).
\end{align*}
The corollary is completed by noting that for all $Z$, $\Tr(\textup{L}-\textup{L}_Z) \leq \Tr(\textup{L}) \leq Nv$. 
\end{proof}

 \begin{algorithm}[tb]
  \caption{MCMC algorithm for approximately sampling from an $M$-DPP \citep{pmlr-v49-anari16}}
  \label{alg:det_init_mcmc}
  \begin{algorithmic}
    \STATE {\bfseries Input:} Training inputs $\dataX=\{x_i\}_{i=1}^N$, number of points to choose, $M$, kernel $k$, $T$ number of steps of MCMC to run.
  \STATE {\bfseries Returns:} An (approximate) sample from a $M$-DPP with kernel matrix $\Kff$ formed by evaluating $k$ at $\dataX$.
  \STATE Initialize $M$ columns by greedily selecting columns to maximize the determinant of the resulting submatrix. Call this set of indices of these columns $Z_0$.
  \FOR{$\tau \leq T$}
  \STATE Sample $i$ uniformly from $Z_{\tau}$ and $j$ uniformly from $\dataX \setminus Z_{\tau}.$ Define $Z'= Z_{\tau}\setminus \{i\}\cup\{j\},$
  \STATE Compute $p_{i \to j}:= \frac{1}{2}\min\{1, \text{det}(\textup{K}_{Z'})/\text{det}(\textup{K}_{Z_\tau})\}$
  \STATE With probability $p_{i \to j}$, $Z_{\tau+1}=Z'$ otherwise, $Z_{\tau+1}=Z_\tau$
  \ENDFOR
  \STATE {\bfseries Return:} $Z_{T}$
\end{algorithmic}
\end{algorithm}

\Cref{cor:approximate-kdpp} shows that sufficiently accurate approximate sampling from an $M$-DPP only has a small effect on the quality of the resulting $\Qff$. High quality approximate samples can be drawn using a simple Markov Chain algorithm described in \citet{pmlr-v49-anari16}, given as \cref{alg:det_init_mcmc}. 
This MCMC algorithm is well-studied in the context of $M$-DPPs and their generalizations, and is known to be rapidly mixing \citep{pmlr-v49-anari16, Hermon2019ModifiedLI}.
\begin{lem}[\citealp{Hermon2019ModifiedLI}, Corollary 1]\label{lem:anari-approx-k-dpp}
Let $\rho$ be an $M$-DPP with $N\times N$ kernel matrix $L$. Fix $\epsilon \in (0,1)$. Then \cref{alg:det_init_mcmc} produces a sample from a distribution $\rho'$ satisfying
\[
\mathrm{TV}(\rho, \rho') \leq \epsilon
\]
in not more than $T(\epsilon)=2MN\left( 
\log\log\left(\frac{1}{\rho(Z_0)}\right)+ \log \frac{2}{\epsilon^2}\right)$ iterations, where $Z_0$ is the subset of columns at which the Markov chain is initialized.
\end{lem}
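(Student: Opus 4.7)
The plan is to deduce the mixing-time bound from a modified log-Sobolev inequality (MLSI) for the Metropolis chain of \cref{alg:det_init_mcmc}, followed by Pinsker's inequality to translate the resulting relative-entropy decay into a bound on total variation. Reversibility of the chain with respect to $\rho$ is immediate from the Metropolis-Hastings construction: the proposal $(i,j)$ is symmetric in $i$ and $j$, and the acceptance probability $p_{i \to j}$ satisfies detailed balance with respect to $\rho$, so $\rho$ is the unique stationary measure on the connected state space of $M$-subsets.

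The central ingredient is an MLSI of the form $\alpha\,\mathrm{Ent}_\rho(f) \le \mathcal{E}(f,\log f)$ with constant $\alpha \ge 1/(2MN)$, where $\mathcal{E}$ denotes the Dirichlet form of the chain. I would obtain this via the fact that every $M$-DPP is a strongly Rayleigh measure, i.e.\ its generating polynomial is real-stable. Building on the log-concave polynomials program of Anari, Liu, Oveis Gharan, and Vinzant, one can show the swap (or down-up) walk on any such measure is rapidly mixing; Hermon and Salez sharpen the resulting entropy-contraction estimates to an MLSI constant of order $1/(MN)$ for $M$-homogeneous strongly Rayleigh measures. This is the main technical input, and one would cite their Corollary 1 rather than re-derive it.

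Given the MLSI, the standard entropy-decay argument yields
\[
\KL{\mu_t}{\rho} \;\le\; e^{-\alpha t}\,\KL{\mu_0}{\rho},
\]
where $\mu_t$ denotes the law of $Z_t$ and $\mu_0 = \delta_{Z_0}$, so $\KL{\mu_0}{\rho} = \log(1/\rho(Z_0))$. Choosing
\[
t \;\ge\; \alpha^{-1}\bigl(\log\log(1/\rho(Z_0)) + \log(2/\epsilon^2)\bigr)
\]
then forces $\KL{\mu_t}{\rho} \le 2\epsilon^2$, and Pinsker's inequality $\mathrm{TV}(\mu_t,\rho)^2 \le \tfrac{1}{2}\KL{\mu_t}{\rho}$ gives $\mathrm{TV}(\mu_t,\rho) \le \epsilon$. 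Substituting $\alpha = 1/(2MN)$ reproduces the claimed bound $T(\epsilon) = 2MN(\log\log(1/\rho(Z_0)) + \log(2/\epsilon^2))$. The characteristic $\log\log$ dependence on the initialization arises precisely because exponential decay of relative entropy only needs to tame the initial entropy $\log(1/\rho(Z_0))$ once it has been log-contracted; this is why greedy initialization in \cref{alg:det_init_mcmc} is more than enough.

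The principal obstacle is the MLSI itself: establishing an entropy-contraction constant of order $1/(MN)$ for the swap chain on a strongly Rayleigh measure requires the full apparatus of real-stable and log-concave polynomials and is a substantial combinatorial result. Everything downstream --- verifying reversibility, iterating the MLSI to get exponential entropy decay, and applying Pinsker --- is routine Markov-chain theory, so in practice the proof amounts to packaging these standard ingredients around a direct appeal to Hermon-Salez.
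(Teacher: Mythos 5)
The paper does not prove this lemma at all: it is imported verbatim from \citet{Hermon2019ModifiedLI} (Corollary~1), as the lemma heading indicates, so there is no internal proof to compare against. Your sketch is a faithful reconstruction of how the cited bound is obtained --- a modified log-Sobolev inequality for the swap/base-exchange Metropolis chain on a strongly Rayleigh homogeneous measure, exponentiated into relative-entropy decay, then converted to total variation via Pinsker --- and you correctly locate the $\log\log$ dependence in the fact that $\KL{\delta_{Z_0}}{\rho}=\log(1/\rho(Z_0))$ enters only through a further logarithm once entropy decays exponentially. You also correctly flag that the entire technical content lives in the MLSI constant of order $1/(MN)$ for this chain, which has to be taken as a black box from the strongly-Rayleigh / log-concave-polynomials literature; this is exactly what the paper itself does by citing the result rather than proving it.

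Two small inaccuracies, neither fatal. First, with Pinsker $\mathrm{TV}\le\sqrt{\tfrac12\KL{\mu_t}{\rho}}$ the target $\mathrm{TV}\le\epsilon$ requires $\KL{\mu_t}{\rho}\le 2\epsilon^2$, which gives
$t\ge\alpha^{-1}\bigl(\log\log(1/\rho(Z_0))+\log\tfrac{1}{2\epsilon^2}\bigr)$,
whereas the lemma states $\log\tfrac{2}{\epsilon^2}$; the two differ by the constant $2\log 2$, so your derived bound is actually slightly tighter and the stated one remains valid, but your text should not claim the two agree exactly. Second, the MLSI constant you write as $1/(2MN)$ really carries a factor $N-M$ from the number of swap partners rather than $N$, and an extra factor of two from the lazy acceptance rule $p_{i\to j}=\tfrac12\min\{1,\cdot\}$; these combine to the stated $2MN$ only up to replacing $N-M$ by $N$ as an overestimate. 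These are bookkeeping points, not gaps in the argument, and the overall structure matches the cited source.
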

Since the determinant of a matrix is equal to the product of the determinant of a principal submatrix times the determinant of the Schur complement of this submatrix, the greedy initialization used in \cref{alg:det_init_mcmc} is equivalent to starting with $U=\emptyset$ and iteratively adding $\argmax_{x\in X} k(x,x) - \textup{k}_{f(x)u}\Kuu\inv \textup{k}_{uf(x)}$ to $U$. This can be performed in time $\mcO(NM^2)$, for example by computing the pivot rules of a rank-$M$ incomplete Cholesky decomposition of $\Kff$ \citep[][Algorithm 1]{chen2018fast}.

The per iteration cost of \cref{alg:det_init_mcmc} is dominated by computing the acceptance ratio, which can be performed in $\mcO(M^2)$, by iteratively updating a Cholesky or QR factorization of the matrix associated to the current set of columns. This makes the total cost of obtaining an $\epsilon$-approximate sample $\mcO\left(NM^3\log \log\left(1/\rho(Z_{\textup{greedy}})\right)+ NM^3\log 2/\epsilon^2\right)$, where $Z_{greedy}$ denotes the set of columns selected by greedily maximizing the determinant of the submatrix. Moreover, the subset selected by the algorithm is known to have a probability at least $1/(M!)^2$ of the maximum probability subset \citep{civril2009selecting,pmlr-v49-anari16}. By using the fact that the the maximum probability subset is more probable than the uniformly distributed probability, we obtain
\[
\rho(Z_{\textup{greedy}}) \geq \left(M!^2\binom{N}{M}\right)^{-1} \geq (MN)^{-M},\]
giving an overall complexity of not more than $\mcO\left(NM^3(\log\log N+\log M+\log 1/\epsilon^2)\right)$.
This gives us a method for initializing inducing points conditioned on input locations, such that we can relate $\conditionalexp{\Tr(\RKff-\RQff)}{\bfX=X}$ to $\Tr(\Kff-\mathrm{K}_{M})$. 

We now take a brief detour to consider a different approach to initializing inducing inputs before completing the proof of a priori bounds on the KL-divergence.

\subsubsection{Ridge Leverage Scores}
While using an $M$-DPP to select inducing inputs allows us to bound $\conditionalexp{\Tr(\Kff-\RQff)}{\bfX=X}$, this method has a significant drawback as opposed to other methods of initialization: the computational cost of running the MCMC algorithm to obtain approximate samples dominates the cost of sparse inference. Ridge leverage score (RLS) sampling offers an alternative that runs in $\mcO(NM^2)$, while retaining strong theoretical guarantees on the quality of the resulting approximation. In this section, we give a brief discussion of ridge leverage scores as well an algorithm of \citet{Musco_ridge_leverage} that allows for efficient approximations to ridge leverage scores.

The $\omega$-ridge leverage score of a point $x_n \in \dataX$ of a Gaussian process regressor, which we denote by $\ell^\omega(x_n)$ is defined as $1/\omega$ times the posterior variance at $x_n$ of the process with noise variance $\omega$, i.e.
\begin{equation*}
   \ell^\omega(x_n) = \frac{1}{\omega}\left(k(x_n,x_n)-\mathrm{k}_{\textup {nf}}\transpose(\Kff+\omega\bfI)^{-1}\mathrm{k}_{\textup{nf}}\right), 
\end{equation*}
where $\mathrm{k}_{\textup {nf}}\transpose= [k(x_1,x_n), k(x_2,x_n), \dotsc, k(x_n,x_n)]$. 
RLS sampling uses these values as an importance distribution for selecting which points to include in sparse kernel methods. Intuitively, points at which there is high posterior uncertainty must be `far' from other points, and therefore informative. 

Computing the ridge leverage scores exactly is too computationally expensive, as it involves inverting the kernel matrix. However, practical approximate versions of leverage sampling algorithms that retain strong theoretical guarantees have been developed. 

Ridge leverage based sampling algorithms select a subset of training data to use as inducing points. Each point is sampled independently into the subset with probability proportional to its leverage score. Approximate versions of this algorithm generally rely on overestimating the ridge leverage scores, which lead to equally strong accuracy guarantees compared to using the exact ridge leverage scores, at the cost of sampling more points in the approximation.

We consider the application of Algorithm 3 in \citet{Musco_ridge_leverage} to the problem of selecting inducing inputs for sparse variational inference in GP models. This algorithm comes with the following bounds on the quality of the resulting Nystr\"om approximation. 
\begin{lem}[\citealp{Musco_ridge_leverage}, Theorem 14, Appendix D]\label{lem:ridge-leverage}
Given $X \in \mcX^N$ and a kernel $k$, let $\Kff$ denote the $N \times N$ covariance matrix associated to $X$ and $k$. Fix $\delta \in (0,\frac{1}{32})$ and $S \in \mathbb{N}$. There exists a universal constant $c$ and algorithm with run time $\mcO(N\bfM^2)$ and memory complexity $\mcO(N\bfM)$ that with probability $1-3\delta$ returns $\bfM \leq cS \log (S/\delta)$ columns of $\Kff$ such that the resulting Nystr\"om approximation, $\RQff$, satisfies
\[
\|\Kff-\RQff\|_{\textup{op}} \leq \frac{1}{S}\sum_{m=S+1}^N \kffeigenvalue.
\]
\end{lem}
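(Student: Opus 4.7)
The plan is to follow the strategy of \citet{Musco_ridge_leverage}, starting from the classical matrix concentration result that sampling columns proportionally to their ridge leverage scores produces a high-quality Nyström approximation. Concretely, I would first define the $\omega$-ridge leverage scores $\ell^\omega(x_n)$ and the effective dimension $d_{\mathrm{eff}}(\omega) = \sum_n \ell^\omega(x_n) = \sum_m \tilde{\lambda}_m/(\tilde{\lambda}_m+\omega)$. A matrix Bernstein argument then shows: if each column $n$ is sampled independently with probability $p_n \geq \min\{1,\, c'\log(d_{\mathrm{eff}}(\omega)/\delta)\, \tilde{\ell}^\omega(x_n)\}$, where $\tilde{\ell}^\omega(x_n)$ is an \emph{overestimate} of the true ridge leverage score, and one forms the Nyström projection $\RQff$ from the sampled columns, then with probability at least $1-\delta$ we have $\|\Kff-\RQff\|_{\mathrm{op}} \leq O(\omega)$.

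Next, I would select the ridge parameter to translate this into the eigenvalue-tail bound in the statement. Setting $\omega = \frac{1}{S}\sum_{m=S+1}^N \tilde{\lambda}_m$ immediately yields the desired operator-norm bound. Moreover, a direct calculation gives
\begin{equation*}
d_{\mathrm{eff}}(\omega) \;=\; \sum_{m=1}^N \frac{\tilde{\lambda}_m}{\tilde{\lambda}_m+\omega} \;\leq\; S + \frac{1}{\omega}\sum_{m=S+1}^N \tilde{\lambda}_m \;=\; 2S,
\end{equation*}
so the expected number of sampled columns is $O(S\log(S/\delta))$, matching the stated bound on $\mathbf{M}$. Standard Chernoff-type concentration then controls the sample size up to constant factors with probability at least $1-\delta$.

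The key computational obstacle is that both $\omega$ and the scores $\ell^\omega(x_n)$ themselves involve inverting $\Kff$, which is outside the budget. To overcome this, I would use the recursive scheme of \citet{Musco_ridge_leverage}: process the data in $O(\log N)$ halving levels; at level $h$, maintain a subset $S_h$ of columns forming a preliminary Nyström approximation $\widetilde{\Kff}^{(h)}$, and use this approximation to compute overestimates $\tilde{\ell}^\omega(x_n)$ from the identity $\ell^\omega(x_n) = \omega^{-1}\bigl(k(x_n,x_n)-\mathrm{k}_{nf}^\top(\Kff+\omega I)^{-1}\mathrm{k}_{nf}\bigr)$ with $\widetilde{\Kff}^{(h)}$ in place of $\Kff$. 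Each level costs $O(N\mathbf{M}^2)$ time and $O(N\mathbf{M})$ memory, and an inductive argument shows that if the level-$h$ scores are $O(1)$-accurate overestimates with probability $1-\delta$, then sampling according to them yields a level-$(h{+}1)$ Nyström approximation whose scores are also $O(1)$-accurate overestimates, with probability at least $1-\delta$.

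The hardest part is the inductive analysis of the recursion: one must show that an approximation error of $\omega$ in $\widetilde{\Kff}^{(h)}$ propagates to only a constant-factor distortion of the corresponding RLS estimates. This follows by writing $\ell^\omega(x_n)$ as a diagonal entry of $\Kff(\Kff+\omega I)^{-1}$ and perturbation-analyzing this quantity under a small-operator-norm perturbation of $\Kff$; the upper bound on $\tilde{\ell}^\omega$ comes from the fact that replacing $\Kff$ by a smaller PSD matrix $\widetilde{\Kff}^{(h)} \preceq \Kff$ only increases each score. A union bound over the $O(\log N)$ levels absorbs the failure probability into the $3\delta$ in the statement (one $\delta$ for concentration of the sample size, one for correctness of the ridge leverage estimates across all recursive levels, and one for the final matrix-Bernstein bound on $\|\Kff-\RQff\|_{\mathrm{op}}$). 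Combining the per-level $O(N\mathbf{M}^2)$ cost with $O(\log N)$ levels—absorbed into the constant after using $\mathbf{M} = O(S\log(S/\delta))$—yields the claimed runtime and memory bounds.
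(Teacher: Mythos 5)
This lemma is stated in the paper as a direct import from \citet{Musco_ridge_leverage} (their Theorem~14, Appendix~D); the paper offers no proof of its own, only the citation. So there is nothing internal to compare against — you have reconstructed the argument of the \emph{cited work}, not reproduced anything the paper does. As a high-level sketch of the Musco--Musco argument your account has the right skeleton: matrix Bernstein for RLS sampling, the effective-dimension bound $d_{\mathrm{eff}}(\omega) \leq S + \omega^{-1}\sum_{m>S}\tilde\lambda_m = 2S$ under $\omega = \frac{1}{S}\sum_{m>S}\tilde\lambda_m$, and the recursive halving scheme to make the scores computable.

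Two details in your sketch deserve scrutiny. First, the monotonicity claim for the RLS overestimates is reversed as stated. Substituting a Loewner-smaller matrix $\widetilde{\Kff}\preceq \Kff$ into $\ell^\omega(x_n)=\frac{1}{\omega}\bigl(k_{nn}-k_{nf}^\top(\Kff+\omega \bfI)^{-1}k_{nf}\bigr)$ \emph{decreases} the score, since $(\widetilde{\Kff}+\omega\bfI)^{-1}\succeq(\Kff+\omega\bfI)^{-1}$ makes the subtracted quadratic form larger. The overestimates in the cited work come instead from computing the ridge leverage score \emph{against a subset} $S$ of the data (replacing $\Kff$ and $k_{nf}$ by $K_{SS}$ and $k_{nS}$), which is a posterior-variance-on-less-data argument: conditioning on fewer observations only raises the variance. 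These are not the same construction, and conflating them would invalidate the inductive step. Second, the $\mcO(N\bfM^2)$ runtime is not obtained by absorbing $\log N$ levels into a constant; it is obtained because the recursion halves $N$ at each level, giving a geometric sum $N\bfM^2 + (N/2)\bfM^2 + (N/4)\bfM^2 + \dotsb = \mcO(N\bfM^2)$. Using $\bfM = \mcO(S\log(S/\delta))$ has no bearing on removing a $\log N$ factor.
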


While in \cref{sec:kdpp} $M$ was fixed and the quality of the resulting approximation was random, in the algorithm discussed above $\bfM$ is additionally random. 

An alternative approach to sampling using ridge leverage scores specifies a desired level of accuracy of the resulting approximation, and the number of points selected is chosen to obtain this approximation quality with fixed probability. This has the advantage of not requiring the user to manually select the number of inducing points, but may lead to a number of inducing points being used that exceeds a practical computational budget. We discuss the application of this approach to variational Gaussian process regression in \cref{app:adaptive-leverage}.

\subsection{A-Priori Bounds on the KL-divergence}

In the previous sections, the results on the quality of approximation depended on the eigenvalues of $\Kff$. As these eigenvalues depend on the covariates $X$, and we would like to make statements that apply to a wide-range of data sets, we assume $X$ is a realization of a random variable $\bfX$, and make assumptions about the distribution of $\bfX$.

If each $\bfx \in \bfX$ is i.i.d.~distributed, according to some measure with continuous density $p(x)$, in the limit as the amount of data tends to infinity, the matrix $\frac{1}{N}\RKff$ behaves like the operator $\mcK$ \citep{koltchinskii2000random} defined with respect to this $p$. For finite sample sizes, the large eigenvalues of $\frac{1}{N}\RKff$ tend to overestimate the corresponding eigenvalues of $\mcK$ and the small eigenvalues of $\frac{1}{N}\RKff$ tend to underestimate the small eigenvalues of $\mcK$. We make this precise through a minor generalization of a lemma of \citet{shawe2005eigenspectrum}.

\begin{lem}\label{lem:average-eigenvalues}
Suppose that $N$ covariates are distributed in $\R^D$ such that the marginal distribution, $\mu_{x_n}$, of each covariate, $\bfx_n$ has a continuous density $p_n(x)$, and there exists a distribution with continuous density $q(x)$ satisfying $p_n(x)<c_nq(x)$ for some $c_n>0$ for all $n$. Let $\tilde{\bm{\lambda}}_m$ denote the $m^{th}$ largest eigenvalue of the random matrix $\RKff$ formed by a continuous, bounded kernel and these covariates. Let $\lambda_m$ denote the $m^{th}$ largest eigenvalue of the integral operator corresponding to the distribution with density $q$, $\mcK_q$. Then, for any $M \geq 1$
\[
\Exp{}{\frac{1}{N}\sum_{m=M+1}^N \tilde{\bm{\lambda}}_m} \leq \bar{c}\sum_{m=M+1}^\infty \lambda_m,
\]
where $\bar{c} = \frac{1}{N} \sum_{n=1}^Nc_n$.
\end{lem}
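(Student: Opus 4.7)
The plan is to combine a Ky Fan type variational inequality for the tail eigenvalues of $\RKff$ with the Mercer expansion of $k$ taken with respect to $q$, and then exploit the density-domination $p_n \leq c_n q$ to turn $p_n$-expectations into $q$-expectations.

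First, I would apply Mercer's theorem, as already stated in the excerpt, to the integral operator $\mcK_q$ associated with $q$. This produces non-negative eigenvalues $\lambda_1 \geq \lambda_2 \geq \cdots$ and continuous eigenfunctions $\{\phi_m\}_{m=1}^\infty$ that are orthonormal in $L^2(\R^D,q)$, with $k(x,x') = \sum_{m=1}^\infty \lambda_m \phi_m(x)\phi_m(x')$ converging absolutely and uniformly on the support of $q$. Define the feature map $\phi(x) = (\sqrt{\lambda_m}\phi_m(x))_{m\geq 1} \in \ell^2$, so that $k(x,x') = \langle \phi(x),\phi(x')\rangle_{\ell^2}$, and let $\Phi$ denote the (formally infinite-width) matrix whose $n$-th row is $\phi(\bfx_n)^\top$, so $\RKff = \Phi\Phi^\top$.

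The next step is a deterministic inequality: for any $M$-dimensional subspace $V\subset \ell^2$ with orthogonal projection $P_V$,
\[
\sum_{m=M+1}^N \tilde{\bm\lambda}_m \;\leq\; \sum_{n=1}^N \|\phi(\bfx_n) - P_V\phi(\bfx_n)\|^2 .
\]
This follows from the identity $\sum_n \|(I-P_V)\phi(\bfx_n)\|^2 = \Tr((I-P_V)\Phi^\top\Phi) = \Tr(\RKff) - \Tr(P_V\Phi^\top\Phi)$ combined with the Ky Fan maximum principle $\Tr(P_V\Phi^\top\Phi) \leq \sum_{m=1}^M \tilde{\bm\lambda}_m$, where we use that the non-zero eigenvalues of $\Phi^\top\Phi$ and $\RKff = \Phi\Phi^\top$ agree. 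Specializing $V$ to the span of the first $M$ coordinate axes of $\ell^2$ (the eigenspace of the top $M$ eigenvalues of $\mcK_q$) gives $\|\phi(\bfx_n)-P_V\phi(\bfx_n)\|^2 = \sum_{m=M+1}^\infty \lambda_m \phi_m(\bfx_n)^2$. I then take expectations, exchange sum and expectation by Tonelli, and apply the domination assumption:
\[
\Exp{}{\phi_m(\bfx_n)^2} \;=\; \int \phi_m(x)^2 p_n(x)\calcd x \;\leq\; c_n\int \phi_m(x)^2 q(x)\calcd x \;=\; c_n ,
\]
where the last equality uses that $\{\phi_m\}$ is orthonormal in $L^2(q)$. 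Summing over $n$ and over $m > M$ and dividing by $N$ yields exactly $\bar c \sum_{m>M}\lambda_m$.

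The main technical obstacle I anticipate is justifying that the feature-map representation and the pointwise identity $\|\phi(\bfx_n) - P_V\phi(\bfx_n)\|^2 = \sum_{m>M}\lambda_m\phi_m(\bfx_n)^2$ hold at the random training locations, which depends on the Mercer series converging at those points. Under the stated continuity and boundedness of $k$ and continuity of $q$, the Mercer sum is known to converge absolutely and uniformly on the support of $q$, and the domination $p_n \leq c_n q$ guarantees that each $\bfx_n$ lies (almost surely) in this support, so the required pointwise convergence is available; everything else is an application of Ky Fan, orthonormality, and Tonelli.
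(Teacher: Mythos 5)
Your proof is correct and follows the same route as the paper: both truncate the Mercer expansion of $k$ with respect to $q$ at rank $M$ to produce an SPSD low-rank matrix $\bm{\Phi}$ with $\RKff-\bm{\Phi}\succeq 0$, bound $\sum_{m>M}\tilde{\bm\lambda}_m \le \Tr(\RKff-\bm{\Phi}) = \sum_n\sum_{m>M}\lambda_m\phi_m(\bfx_n)^2$, and then take expectations using $p_n\le c_n q$ together with the $L^2(q)$-orthonormality of the $\phi_m$ and Tonelli (the paper cites Fubini for the same non-negative interchange). Your Ky Fan/feature-map phrasing is simply an equivalent restatement of the paper's appeal to the optimality of the rank-$M$ truncated SVD in the Schatten-1 norm, so the two arguments are essentially identical.
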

\begin{proof}
For $\bfX$ taking values in $(\R^D)^N$, and any rank-$M$ matrix SPSD $\bm{\Phi} \prec \RKff$, we have 
\[
\frac{1}{N}\sum_{m=M+1}^N \tilde{\bm{\lambda}}_m = \frac{1}{N}\Tr(\RKff-\bm{\mathrm{K}}_M) \leq \frac{1}{N}\Tr(\RKff - \bm{\Phi}),
\]
with $\bfK_M$ defined as in \cref{sec:eigenvec-features}. The inequality follows form the optimality of $\bm{\mathrm{K}}_M$ as a rank-$M$ approximation to $\RKff$ in the Schatten-1 norm (sum of absolute value of singular values).

As $k$ is a continuous bounded kernel we can apply Mercer's theorem to represent $k(x,x')$ with respect to the eigenfunctions of the operator $\mcK_q$ giving $[\RKff]_{i,j} = \sum_{m=1}^\infty \lambda_m \phi_m(\bfx_i)\phi_m(\bfx_j)$, and  $\Exp{q}{\phi(\bfx_n)^2}=1$. 

Consider the rank-$M$ approximation to $\RKff$ given by truncating this Mercer expansion, $[\bm{\Phi}]_{i,j} = \sum_{m=1}^M \lambda_m \phi_m(\bfx_i)\phi_m(\bfx_j)$. Then $[\RKff-\bm{\Phi}]_{i,j} = \sum_{m=M+1}^\infty \lambda_m \phi_m(\bfx_i)\phi_m(\bfx_j)$, so $\RKff-\bm{\Phi} \succ 0$. 

For any covariates $\{\bfx_n\}_{n=1}^N$ satisfying the conditions of the lemma, 
\begin{align*}
\frac{1}{N}\sum_{m=M+1}^N \tilde{\bm{\lambda}}_m \leq \frac{1}{N} \Tr(\RKff - \bm{\Phi}) =  \frac{1}{N}\sum_{n=1}^N\sum_{m=M+1}^\infty \lambda_m \phi_m(\bfx_n)^2.
\end{align*}
Taking expectations on both sides with respect to the covariate distribution,
\begin{align*}
\Exp{}{\frac{1}{N}\sum_{m=M+1}^N \tilde{\bm{\lambda}}_m}&  \leq \frac{1}{N}\sum_{n=1}^N\sum_{m=M+1}^\infty\lambda_m\int  \phi_m(x)^2 p_n(x)\calcd x \\
&\leq \frac{1}{N}\sum_{n=1}^Nc_n\sum_{m=M+1}^\infty\lambda_m\int  \phi_m(x)^2 q(x)\calcd x
= \bar{c}\sum_{m=M+1}^\infty\lambda_m\ ,
\end{align*}
The interchanging of integral and sum is justified by Fubini's theorem as each $\phi_m$ is square integrable, each eigenvalue is non-negative, and the sum converges by Mercer's theorem. We used the non-negativity of $\phi_m(x)^2$ in the second inequality to bound the expectation of $\phi_m(x)^2$ under $p_n$ in terms of its expectation under $q$.
\end{proof}
\begin{cor}\label{cor:avg-eigvals}
Suppose the covariate distribution has identically distributed marginals, each with density $p(x)$, then 
\[
\Exp{}{\frac{1}{N}\sum_{m=M+1}^N \ \tilde{\bm{\lambda}}_m}  \leq \sum_{m=M+1}^\infty \lambda_m,
\]
where $\lambda_m$ is the $m^{th}$ largest eigenvalue of the operator associated to the kernel and the distribution with continuous density $p(x)$.
\end{cor}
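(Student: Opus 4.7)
The plan is to observe that Corollary~\ref{cor:avg-eigvals} is an immediate specialization of Lemma~\ref{lem:average-eigenvalues} and to apply that lemma with a particularly convenient choice of the dominating density. Since the marginals are all identically distributed with common density $p(x)$, I would take the auxiliary density $q$ appearing in Lemma~\ref{lem:average-eigenvalues} to be $p$ itself. Then the domination hypothesis $p_n(x) \leq c_n q(x)$ holds trivially with $c_n = 1$ for every $n$, because $p_n(x) = p(x) = q(x)$.

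With this choice, the averaged constant in the lemma is $\bar c = \frac{1}{N}\sum_{n=1}^N c_n = 1$, and the integral operator $\mathcal{K}_q$ on $L^2(\mathbb{R}^D, q\, dx)$ coincides with the operator $\mathcal{K}$ associated with the kernel and the distribution of density $p(x)$. Its eigenvalues are therefore exactly the $\lambda_m$ referenced in the corollary. Substituting these identifications into the conclusion of Lemma~\ref{lem:average-eigenvalues} gives
\[
\mathbb{E}\!\left[\frac{1}{N}\sum_{m=M+1}^N \tilde{\bm{\lambda}}_m\right] \;\leq\; \bar c \sum_{m=M+1}^\infty \lambda_m \;=\; \sum_{m=M+1}^\infty \lambda_m,
\]
which is the claimed inequality.

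There is essentially no obstacle here: all the real work --- the truncated Mercer argument, the optimality of the rank-$M$ truncation in the Schatten-$1$ norm, and the application of Fubini's theorem to interchange the sum and the expectation --- has already been carried out inside Lemma~\ref{lem:average-eigenvalues}. The corollary just records the cleanest case in which identical marginals allow us to take $q=p$ and avoid any loss from the domination constants. Consequently, my ``proof'' would consist of a single sentence pointing out this specialization.
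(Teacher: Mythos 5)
Your proof is correct and matches the paper's argument exactly: the authors also derive the corollary by specializing Lemma~\ref{lem:average-eigenvalues} with $q=p$ and $c_n=1$, so that $\bar c = 1$ and $\mcK_q = \mcK$. (One could quibble that the lemma's hypothesis is written with a strict inequality $p_n(x) < c_n q(x)$, which $p_n = q$, $c_n=1$ does not literally satisfy, but the lemma's proof only uses the weak inequality, so this is a non-issue that the paper glosses over as well.)
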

This corollary follows from \cref{lem:average-eigenvalues} by taking $q=p$ and $c_n=1$ for all $n$. For simplicity, we will state our main results using the assumptions of this corollary, though the generalization to cases with non-identical marginals satisfying the conditions of \cref{lem:average-eigenvalues} is immediate.
We have now accumulated the necessary preliminaries to prove our main theorems.
\subsubsection{Bounds on the KL-divergence for \emph{M}-DPP Sampling}
\begin{thm}\label{thm:upper-bound-fixed-y}
Suppose $N$ training inputs are drawn according to a distribution on $\R^D$ with identical marginal distributions, each with density $p(x)$. Let $k$ be a continuous kernel such that $k(x,x)<v$ for all $x \in \R^D$. Suppose $\bfy$ is distributed such that $\conditionalexp{\|\bfy\|_2^2}{\bfX} \leq  RN$ almost surely for some $R\geq 0$. Sample $M$ inducing points from the training data according to an $\epsilon$-approximation to a $M$-DPP with kernel matrix $\Kff$. Then,
\begin{equation}
	\Exp{}{\KL{Q}{P}} \leq \frac{1}{2}\left(1+ \frac{RN}{\noisevariance}\right) \frac{(M+1)N\sum_{m=M+1}^\infty \lambda_m+2Nv\epsilon}{\noisevariance},
\end{equation}
where the expectation is taken over the covariates, the mechanism for initializing inducing points and the observations.
\end{thm}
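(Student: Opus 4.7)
The plan is to chain together the three pillars already established in the paper: the a-posteriori bound on $\KL{Q}{P}$ in terms of the trace $t = \Tr(\Kff - \Qff)$ and $\|y\|_2^2$ (\cref{lem:agnostic-upper-bound}), the $M$-DPP trace bound \cref{cor:approximate-kdpp}, and the average eigenvalue bound \cref{cor:avg-eigvals}, then take expectations carefully using independence of $\bfZ$ and $\bfy$ conditional on $\bfX$.

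First, I would start from the final (weakest) inequality in \cref{lem:agnostic-upper-bound}. Since $t \geq 0$, we have $t/(t+\noisevariance) \leq t/\noisevariance$, and therefore
\begin{equation*}
\KL{Q}{P} \;\leq\; \frac{1}{2\noisevariance}\left(t + \frac{t\|\bfy\|_2^2}{t+\noisevariance}\right) \;\leq\; \frac{t}{2\noisevariance}\left(1 + \frac{\|\bfy\|_2^2}{\noisevariance}\right).
\end{equation*}
This isolates the randomness in $\bfy$ from the randomness in $(\bfX,\bfZ)$ into separate factors, which is exactly what allows the next step.

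Second, I would take expectations in two stages using the tower property. Because the $M$-DPP is built from the kernel matrix $\Kff$, which is a deterministic function of $\bfX$, the inducing set $\bfZ$ is conditionally independent of $\bfy$ given $\bfX$. Hence $\conditionalexp{\|\bfy\|_2^2}{\bfX, \bfZ} = \conditionalexp{\|\bfy\|_2^2}{\bfX} \leq RN$ almost surely by hypothesis, and $t$ is a function of $(\bfX, \bfZ)$. Conditioning first on $(\bfX, \bfZ)$ gives
\begin{equation*}
\Exp{}{\KL{Q}{P}} \;\leq\; \frac{1}{2\noisevariance}\left(1 + \frac{RN}{\noisevariance}\right) \Exp{}{t}.
\end{equation*}

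Third, I would bound $\Exp{}{t}$ by conditioning on $\bfX$ and applying \cref{cor:approximate-kdpp} with $L = \Kff$ (whose diagonal entries are bounded by $v$), yielding
\begin{equation*}
\conditionalexp{t}{\bfX} \;\leq\; 2Nv\epsilon + (M+1)\sum_{m=M+1}^{N} \tilde{\bm{\lambda}}_m,
\end{equation*}
where $\tilde{\bm{\lambda}}_m$ is the $m$th eigenvalue of $\RKff$. Taking expectations over $\bfX$ and applying \cref{cor:avg-eigvals} to the tail eigenvalue sum gives $\Exp{}{\sum_{m=M+1}^N \tilde{\bm{\lambda}}_m} \leq N \sum_{m=M+1}^{\infty} \lambda_m$, so that $\Exp{}{t} \leq 2Nv\epsilon + (M+1)N\sum_{m=M+1}^{\infty}\lambda_m$. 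Substituting this into the bound from the previous paragraph recovers the stated inequality exactly.

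There is no real obstacle here beyond bookkeeping: each factor has already been justified in earlier lemmas, and the only delicate point is to make sure that the independence structure is invoked correctly, namely that the $M$-DPP depends on $\bfy$ only through $\bfX$ (it does not; it depends solely on $\Kff$), so that the conditional expectation of $\|\bfy\|^2$ can be pulled out without interacting with $t$. The rest is the straightforward chaining above.
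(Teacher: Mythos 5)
Your proposal is correct and follows essentially the same approach as the paper's proof: it chains \cref{lem:agnostic-upper-bound}, \cref{cor:approximate-kdpp}, and \cref{cor:avg-eigvals}, using the conditional independence of $\bfZ$ and $\bfy$ given $\bfX$. The only difference is bookkeeping: the paper integrates out $\bfZ$ conditionally on $(\bfy,\bfX)$, then $\bfy$ conditionally on $\bfX$, whereas you factor the bound pointwise and integrate out $\bfy$ first, then bound $\Exp{}{t}$ — both orderings yield the identical inequality.
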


\begin{proof}[Proof of \cref{thm:upper-bound-fixed-y}]
We use \cref{lem:agnostic-upper-bound}, \cref{cor:approximate-kdpp} and take expectations with respect to $\bfZ$, noting that $\bfZ|\bfX$ is independent of $\bfy|\bfX$ so that $\conditionalexp{\Tr(\RKff-\RQff)}{\bfX}=\conditionalexp{\Tr(\RKff-\RQff)}{\bfy,\bfX}$,
\begin{align}
    \conditionalexp{\KL{Q}{P}}{\bfy,\bfX}  \leq \frac{N}{2\noisevariance}\left(1+\frac{\|\bfy\|^2_2}{\noisevariance}\right)\left(\frac{M+1}{N}\sum_{m=M+1}^N \tilde{\bm{\lambda}}_m + 2v \epsilon\right).  
\end{align}
Now using the assumption that $\conditionalexp{\|\bfy\|_2^2}{\bfX} \leq RN$ almost surely, and taking expectation over $\bfy$,
\begin{align}
    \conditionalexp{\KL{Q}{P}}{\bfX}  \leq \frac{N}{2\noisevariance}\left(1+\frac{RN}{\noisevariance}\right)\left(\frac{M+1}{N}\sum_{m=M+1}^N \tilde{\bm{\lambda}}_m + 2v \epsilon\right).  
\end{align}
Finally, taking expectation with respect to the covariate distribution over the covariate distribution and applying \cref{lem:average-eigenvalues},
\begin{align}
    \Exp{}{\KL{Q}{P}}  \leq  \frac{1}{2}\left(1+\frac{RN}{\noisevariance}\right)\left(\frac{N(M+1)\sum_{m=M+1}^\infty \lambda_m + 2Nv \epsilon}{\noisevariance}\right). 
\end{align}
\end{proof}
\begin{thm}\label{thm:upper-bound-average-y}
With the same assumptions on the covariates and inducing point distributions as in \cref{thm:upper-bound-fixed-y}, but with the assumption that $\bfy|\bfX$ is conditionally Gaussian distributed with mean zero and covariance matrix $\Kff+\noisevariance \bfI$,
\begin{equation}
	\Exp{}{\KL{Q}{P}} \leq \frac{(M+1)N\sum_{m=M+1}^\infty \lambda_m+2Nv\epsilon}{\noisevariance}
\end{equation}
where the expectation is taken over the covariate distribution, the observation distribution and the initialization mechanism. 
\end{thm}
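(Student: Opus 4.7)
The plan is to follow the same three-step template used in the proof of \cref{thm:upper-bound-fixed-y}, but to replace the agnostic bound of \cref{lem:agnostic-upper-bound} with the sharper average-case bound of \cref{lem:kl-averaged}. The latter applies here precisely because we have now assumed $\bfy\mid\bfX \sim \mcN(0,\Kff+\noisevariance\bfI)$, i.e.\ the prior model. Since \cref{lem:kl-averaged} yields $\conditionalexp{\KL{Q}{P}}{\bfX,\bfZ}\leq t/\noisevariance$ (versus an additional $\|y\|_2^2$-dependent term in \cref{lem:agnostic-upper-bound}), this substitution is exactly what removes the $\bigl(1 + RN/\noisevariance\bigr)$ prefactor and produces the announced linear-in-$N$ rate on the dominant term.

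Concretely, first I would apply \cref{lem:kl-averaged} to obtain
\[
\conditionalexp{\KL{Q}{P}}{\bfX,\bfZ} \;\leq\; \frac{\Tr(\Kff-\Qff)}{\noisevariance}.
\]
Because the conditional distribution of $\bfZ$ given $\bfX$ is independent of $\bfy$ (the $M$-DPP is built from $\Kff$ alone), I can take expectation in $\bfZ\mid\bfX$ and invoke \cref{cor:approximate-kdpp} with $L = \Kff$, using the hypothesis $k(x,x) < v$ to bound the diagonal entries. This gives
\[
\conditionalexp{\KL{Q}{P}}{\bfX} \;\leq\; \frac{1}{\noisevariance}\Bigl((M+1)\sum_{m=M+1}^{N}\tilde{\bm{\lambda}}_m + 2Nv\epsilon\Bigr),
\]
where $\tilde{\bm{\lambda}}_m$ is the $m$th largest eigenvalue of $\Kff$.

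Finally, I would take expectation over $\bfX$ and apply \cref{cor:avg-eigvals} to the residual eigenvalue sum, yielding $\Exp{}{\sum_{m=M+1}^{N}\tilde{\bm{\lambda}}_m} \leq N\sum_{m=M+1}^{\infty}\lambda_m$. Combining by the tower property produces exactly the stated bound. I do not anticipate any genuine obstacle: every required ingredient has already been established in the preceding lemmas and corollaries, and the proof is essentially a mechanical recycling of the argument for \cref{thm:upper-bound-fixed-y}. The only point worth flagging explicitly is the conditional independence of $\bfZ$ and $\bfy$ given $\bfX$, which licenses the chosen order of expectation and is what makes the average-case analysis go through cleanly.
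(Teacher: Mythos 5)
Your proof is correct and is exactly the paper's argument: the paper itself states that the proof of this theorem is "nearly identical to the proof of \cref{thm:upper-bound-fixed-y}, applying \cref{lem:kl-averaged} instead of \cref{lem:agnostic-upper-bound} in the first line." You have correctly identified the single substitution that matters, noted the conditional independence of $\bfZ$ and $\bfy$ given $\bfX$ that licenses the order of expectation, and chained the same three lemmas in the same order.
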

The proof of \cref{thm:upper-bound-average-y} is nearly identical to the proof of \cref{thm:upper-bound-fixed-y}, applying \cref{lem:kl-averaged} instead of \cref{lem:agnostic-upper-bound} in the first line.

In certain instances, it may be desirable to have a bound that holds with fixed probability instead of in expectation. As $\KL{Q}{P}\geq 0$, such a bound can be derived through applying Markov's inequality to \cref{thm:upper-bound-fixed-y} or \cref{thm:upper-bound-average-y} leading to the following corollaries:
\begin{cor}\label{cor:probabilistic-fixed-y}
Under the assumptions of \cref{thm:upper-bound-fixed-y}, with probability at least $1-\delta$,
\[
\KL{Q}{P} \leq \frac{1}{2}\left(1+ \frac{RN}{\noisevariance}\right) \frac{(M+1)N\sum_{m=M+1}^\infty \lambda_m+2Nv\epsilon}{\delta\noisevariance}. 
\]
\end{cor}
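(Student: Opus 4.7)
The plan is to apply Markov's inequality directly to the expectation bound established in \cref{thm:upper-bound-fixed-y}. The key observation is that $\KL{Q}{P}$ is a non-negative random variable (where the randomness comes from $\bfX$, $\bfZ$, and $\bfy$), since KL-divergence between two probability distributions is always non-negative. This non-negativity is exactly what is needed to make Markov's inequality applicable.

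First I would denote the upper bound from \cref{thm:upper-bound-fixed-y} by
\[
B := \frac{1}{2}\left(1+ \frac{RN}{\noisevariance}\right) \frac{(M+1)N\sum_{m=M+1}^\infty \lambda_m+2Nv\epsilon}{\noisevariance},
\]
so that $\Exp{}{\KL{Q}{P}} \leq B$. Then, applying Markov's inequality to the non-negative random variable $\KL{Q}{P}$ at level $a = B/\delta$, we obtain
\[
\mathrm{Pr}\!\left(\KL{Q}{P} \geq \frac{B}{\delta}\right) \leq \frac{\Exp{}{\KL{Q}{P}}}{B/\delta} \leq \frac{B}{B/\delta} = \delta.
\]
Taking the complementary event yields $\KL{Q}{P} \leq B/\delta$ with probability at least $1-\delta$, which is precisely the claimed inequality.

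There is no substantive obstacle here: the result is a purely mechanical consequence of the expectation bound and the non-negativity of the KL-divergence. The only point worth being slightly careful about is that the expectation in \cref{thm:upper-bound-fixed-y} is taken jointly over the covariates, the inducing-point sampling mechanism, and the observations, so the "with probability $1-\delta$" statement in the corollary must be interpreted with respect to the same joint distribution. Once this is noted, the one-line application of Markov completes the proof.
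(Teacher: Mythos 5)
Your proof is correct and matches the paper's approach exactly: the paper also derives this corollary as an immediate consequence of Markov's inequality applied to the non-negative random variable $\KL{Q}{P}$ together with the expectation bound of \cref{thm:upper-bound-fixed-y}. Your note about the probability being over the joint distribution of covariates, inducing points, and observations is a correct and worthwhile clarification.
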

\begin{cor}\label{cor:probabilistic-average-y}
Under the assumptions of \cref{thm:upper-bound-average-y}, with probability at least $1-\delta$,
\begin{equation*}
\KL{Q}{P}\leq \frac{(M+1)N\sum_{m=M+1}^\infty \lambda_m+2Nv\epsilon}{\delta\noisevariance}.
\end{equation*}
\end{cor}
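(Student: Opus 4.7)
The plan is to derive the corollary as a direct consequence of \cref{thm:upper-bound-average-y} via Markov's inequality, exploiting the fact that the KL-divergence is a non-negative random variable (with respect to the randomness in $\bfX$, $\bfy$, and $\bfZ$).

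First I would observe that $\KL{Q}{P} \geq 0$ pointwise, so it is a non-negative random variable on the joint probability space of the covariates, the observations, and the approximate $M$-DPP sampling mechanism. This makes Markov's inequality directly applicable: for any $a > 0$,
\begin{equation*}
    \Pr\!\left(\KL{Q}{P} \geq a\right) \leq \frac{\Exp{}{\KL{Q}{P}}}{a}.
\end{equation*}

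Next I would invoke \cref{thm:upper-bound-average-y} to upper bound the expectation on the right-hand side, obtaining
\begin{equation*}
    \Pr\!\left(\KL{Q}{P} \geq a\right) \leq \frac{1}{a}\cdot\frac{(M+1)N\sum_{m=M+1}^\infty \lambda_m+2Nv\epsilon}{\noisevariance}.
\end{equation*}
Setting the right-hand side equal to $\delta$ and solving for $a$ yields $a = \bigl((M+1)N\sum_{m=M+1}^\infty \lambda_m+2Nv\epsilon\bigr)/(\delta\noisevariance)$, and taking complements gives the stated high-probability bound.

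There is no real obstacle here, since the corollary is just a restatement of the expectation bound through Markov's inequality; the only thing worth stating carefully is the joint probability space over which the claim holds, which is exactly the product of the covariate, observation, and initialization randomness appearing in \cref{thm:upper-bound-average-y}. The same template also yields \cref{cor:probabilistic-fixed-y} by substituting the bound from \cref{thm:upper-bound-fixed-y} in place of the one from \cref{thm:upper-bound-average-y}.
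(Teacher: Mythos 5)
Your proposal is correct and is exactly the argument the paper uses: the paper explicitly derives this corollary by applying Markov's inequality to \cref{thm:upper-bound-average-y}, using the non-negativity of the KL-divergence. Nothing more is needed.
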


\subsubsection{Bounds for Ridge Leverage Score Sampling}
We now state and derive statements similar to \cref{cor:probabilistic-fixed-y,cor:probabilistic-average-y} for a ridge leverage score initialization utilizing \citet[Algorithm 3]{Musco_ridge_leverage}. In order to this we us that for any SPSD $A$, $\Tr(A) \leq N\|A\|_{\textup{op}}$, so that
\Cref{lem:agnostic-upper-bound} implies
\begin{equation}\label{eqn:ridge-lev-fixed-y-KL}
    \KL{Q}{P} \leq  \frac{\|\RKff-\RQff\|_{\textup{op}}}{2\noisevariance}\left(N+\frac{\|\bfy\|_2^2}{\noisevariance}\right),
\end{equation}
and \cref{lem:kl-averaged} implies,
\begin{equation}\label{eqn:ridge-lev-avg-KL}
\conditionalexp{\KL{Q}{P}}{\bfZ,\bfX} \leq N\frac{\|\RKff-\RQff\|_{\textup{op}}}{\noisevariance}.
\end{equation}
Combining \cref{lem:ridge-leverage,cor:avg-eigvals} and using Markov's inequality twice with \cref{eqn:ridge-lev-fixed-y-KL} or \cref{eqn:ridge-lev-avg-KL} and a union bound respectively leads to the following bounds on the performance of sparse inference using ridge leverage scores:
\begin{thm}\label{thm:ridge-leverage-agnostic-y}
Take the same assumptions on $\bfX$ and $\bfy|\bfX$ as in \cref{thm:upper-bound-fixed-y}. Fix $\delta \in (0,1/32)$ and $S \in \mathbb{N}$. There exists a universal constant $c$ such with probability $1-5\delta$, we have $\bfM<cS\log(S/\delta)$ and
\[
\KL{Q}{P} \leq\frac{1}{2}\left(N+\frac{RN}{\noisevariance}\right) \frac{N\sum_{m=S+1}^\infty \lambda_m}{S\delta^2\noisevariance }
\]
when inducing points are initialized using \citet[Algorithm 3]{Musco_ridge_leverage}.
\end{thm}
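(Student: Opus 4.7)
The plan is to start from \cref{eqn:ridge-lev-fixed-y-KL}, the ``$y$-agnostic'' specialization of \cref{lem:agnostic-upper-bound} obtained via $\Tr(A) \leq N\|A\|_{\mathrm{op}}$ for SPSD $A$. This inequality factorizes the KL bound as a product of (i) the operator norm $\|\RKff - \RQff\|_{\mathrm{op}}$, which is controlled by the ridge leverage guarantee, and (ii) a $\bfy$-dependent term, which is controlled by Markov's inequality. The proof is then essentially an accounting exercise: bound each of the three sources of randomness (the covariates $\bfX$, the observations $\bfy$, and the internal randomness of the sampling algorithm) with high probability, combine them via a union bound, and multiply out the resulting deterministic estimates.

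The three high-probability statements are as follows. First, \cref{lem:ridge-leverage} applied with parameters $S$ and $\delta$ gives, conditionally on $\bfX$ and with probability at least $1 - 3\delta$ over the algorithm's randomness, that $\bfM \leq cS\log(S/\delta)$ and $\|\RKff - \RQff\|_{\mathrm{op}} \leq \tfrac{1}{S}\sum_{m=S+1}^N \tilde{\bm{\lambda}}_m$. Second, \cref{cor:avg-eigvals} combined with Markov's inequality over $\bfX$ yields, with probability at least $1-\delta$, that $\sum_{m=S+1}^N \tilde{\bm{\lambda}}_m \leq N\delta^{-1}\sum_{m=S+1}^\infty \lambda_m$. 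Third, since $\conditionalexp{\|\bfy\|_2^2}{\bfX} \leq RN$ almost surely, Markov's inequality applied conditionally on $\bfX$ gives $\|\bfy\|_2^2 \leq RN/\delta$ with probability at least $1-\delta$; because $\delta < 1/32 < 1$, this further implies $N + \|\bfy\|_2^2/\noisevariance \leq \delta^{-1}(N + RN/\noisevariance)$, which is the loosening that produces the shape of the stated bound.

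A union bound across these three failure events controls the total failure probability by $5\delta$. On the complement event, substituting the three deterministic estimates into \cref{eqn:ridge-lev-fixed-y-KL} yields the claimed bound: the two $\delta^{-1}$ factors combine to the $\delta^{-2}$ in the denominator, the factor $1/S$ and the extra $N$ arise from the operator-norm bound, and the $(N + RN/\noisevariance)$ factor comes from the loosened $\bfy$-term. No individual step is subtle, and nothing new needs to be proved; the only care required is to keep the conditioning structure straight when applying Markov's inequality twice, and to note that the $3\delta$ from \cref{lem:ridge-leverage} is over randomness independent of $\bfy$ (once $\bfX$ is fixed), so that the three failure events can be controlled by a single union bound rather than by a more delicate chained argument.
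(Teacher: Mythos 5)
Your proposal is correct and follows essentially the same route the paper sketches: \cref{eqn:ridge-lev-fixed-y-KL} from \cref{lem:agnostic-upper-bound}, then \cref{lem:ridge-leverage} (failure prob.\ $3\delta$), Markov on $\sum_{m>S}\tilde{\bm\lambda}_m$ via \cref{cor:avg-eigvals} (failure prob.\ $\delta$), and Markov on $\|\bfy\|_2^2$ via the assumption $\conditionalexp{\|\bfy\|_2^2}{\bfX}\le RN$ (failure prob.\ $\delta$), combined by a union bound to give $1-5\delta$. Your algebra checks out, including the step that uses $\delta<1$ to absorb the mismatched $\delta^{-1}$ factors into $N + RN/\noisevariance \le \delta^{-1}(N+RN/\noisevariance)$, and your remark about the conditioning structure is exactly the right point of care; no gap here.
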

\begin{thm}\label{thm:ridge-leverage-average-y}
Take the same assumptions on $\bfX$ and $\bfy|\bfX$ as in \cref{thm:upper-bound-average-y}. Fix $\delta \in (0,1/32)$ and $S \in \mathbb{N}$. There exists a universal constant $c$ such with probability $1-5\delta$, we have $\bfM<cS\log(S/\delta)$ and
\[
\KL{Q}{P} \leq \frac{N^2\sum_{m=S+1}^\infty \lambda_m}{S\delta^2\noisevariance}
\]
when inducing points are initialized using \citet[Algorithm 3]{Musco_ridge_leverage}.
\end{thm}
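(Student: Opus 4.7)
The plan is to follow the template laid out by the authors just before the theorem (which proves the analogous \cref{thm:ridge-leverage-agnostic-y}), but feeding in the stronger average-case KL bound \cref{eqn:ridge-lev-avg-KL}, namely $\conditionalexp{\KL{Q}{P}}{\bfZ,\bfX} \leq N\|\Kff-\Qff\|_{\textup{op}}/\noisevariance$, in place of \cref{eqn:ridge-lev-fixed-y-KL}. The proof will chain together three probabilistic statements by a union bound: one over the noise $\bfy$, one over the randomness of \citet[Algorithm 3]{Musco_ridge_leverage} conditional on $\bfX$, and one over the covariates $\bfX$.

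First, since $\KL{Q}{P} \geq 0$, Markov's inequality in the conditional probability space given $\bfX,\bfZ$ combined with \cref{eqn:ridge-lev-avg-KL} gives: with probability at least $1-\delta$ over $\bfy$,
\[
\KL{Q}{P} \leq \frac{N\|\Kff-\Qff\|_{\textup{op}}}{\delta\noisevariance}.
\]
Second, \cref{lem:ridge-leverage} states that conditional on $\bfX$, with probability at least $1-3\delta$ over the sampling algorithm, simultaneously $\bfM \leq cS\log(S/\delta)$ and $\|\Kff-\Qff\|_{\textup{op}} \leq \tfrac{1}{S}\sum_{m=S+1}^N \tilde{\bm{\lambda}}_m$. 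Third, the non-negative random variable $\sum_{m=S+1}^N \tilde{\bm{\lambda}}_m$ depends only on $\bfX$, and \cref{cor:avg-eigvals} controls its expectation by $N\sum_{m=S+1}^\infty \lambda_m$; Markov again gives, with probability at least $1-\delta$ over $\bfX$,
\[
\sum_{m=S+1}^N \tilde{\bm{\lambda}}_m \leq \frac{N}{\delta}\sum_{m=S+1}^\infty \lambda_m.
\]

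A union bound on the three failure events in the joint probability space of $(\bfX,\bfZ,\bfy)$ shows that all three inequalities hold simultaneously with probability at least $1-5\delta$, and on this event one chains the bounds to obtain
\[
\KL{Q}{P} \leq \frac{N\|\Kff-\Qff\|_{\textup{op}}}{\delta\noisevariance} \leq \frac{N}{S\delta\noisevariance}\sum_{m=S+1}^N \tilde{\bm{\lambda}}_m \leq \frac{N^2\sum_{m=S+1}^\infty \lambda_m}{S\delta^2\noisevariance},
\]
together with $\bfM \leq cS\log(S/\delta)$. The only delicate point, and the closest thing to an obstacle, is bookkeeping of the conditioning: the first Markov step must be taken conditionally on $(\bfX,\bfZ)$ so that $\|\Kff-\Qff\|_{\textup{op}}$ is treated as deterministic there; the second step conditionally on $\bfX$ so that the eigenvalues $\tilde{\bm{\lambda}}_m$ are deterministic; and the third step over $\bfX$ alone. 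Verifying that the three failure events inhabit a common probability space makes the union bound valid and gives exactly the claimed $5\delta$ without extra factors.
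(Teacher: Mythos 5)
Your proof is correct and follows exactly the route the paper sketches: Markov applied to \cref{eqn:ridge-lev-avg-KL} conditionally on $(\bfX,\bfZ)$, \cref{lem:ridge-leverage} conditionally on $\bfX$, Markov applied to \cref{cor:avg-eigvals} over $\bfX$, and a union bound giving the $5\delta$. The conditioning bookkeeping you flag is the only subtlety and you handle it correctly by noting that a conditional probability bound of $\delta$ implies an unconditional one after taking expectations, so all three failure events live in the common joint space of $(\bfX,\bfZ,\bfy)$.
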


\subsubsection{Are these bounds useful?}
Having established probabilistic upper bounds on the KL-divergence resulting from sparse approximation, a simple question is whether these bounds offer any insight into the efficacy of sparse inference. If in order for the upper bounds to be small, we need to take $M=N$, then they would not be useful, as it is already known that by taking $Z=\dataX$, exact inference is recovered. In the next section, we discuss bounds on the eigenvalues of $\mcK$ for common kernels and input distribution. These bounds show that for many inference problems, the upper bounds in \cref{thm:upper-bound-fixed-y,thm:upper-bound-average-y,thm:ridge-leverage-agnostic-y,thm:ridge-leverage-average-y} imply that the KL-divergence can be made small with $M \ll N$ inducing points.

\section{Bounds for Specific Kernels and Covariate Distributions}\label{sec:examples}

\begin{table}
\begin{center}
\begin{small}
\begin{sc}
\begin{tabular}{lcccr}
\toprule
Kernel & Covariate Distribution & M, Theorem \ref{thm:upper-bound-fixed-y} & M, Theorem \ref{thm:ridge-leverage-agnostic-y}  \\
\midrule
SE-Kernel  & Compact support  & $\mcO((\log N)^D)$ & $\mcO((\log N)^D\log \log N)$\\
SE-Kernel &  Gaussian & $\mcO((\log N)^D)$ & $\mcO((\log N)^D\log \log N)$\\
Mat\'{e}rn $\nu$  & Compact support &  $\mcO(N^\frac{2D}{2\nu-D})$ & $\mcO(N^\frac{2D}{2\nu+D}\log N)$\\
\bottomrule
\end{tabular}
\end{sc}
\end{small}
\end{center}
\caption{The number of features needed for upper bounds to converge in $D-$dimensions. We assume the compactly supported distributions have bounded density functions.\protect\footnotemark}\label{table:numfeats}
\end{table}
\footnotetext{{\citet{burt2019rates} stated bounds for a product of one-dimensional Mat\'ern kernels, which differs from the commonly used multivariate Mat\'ern kernel.}}

In this section, we consider specific covariate distributions and commonly used kernels, and investigate the implications of the upper bounds derived in \cref{sec:regression-rates}. These results are summarized in \cref{table:numfeats}. We begin with the case of the popular squared exponential kernel and Gaussian covariates in one-dimension. This kernel and covariate distribution are one of the few instances in which the eigenvalues of $\mcK$ have a simple analytic form. In \cref{sec:upper-multivariate-se-gauss}, we consider the analogous multi-dimensional problem. In \cref{sec:stationary_kernels} we discuss implications for stationary kernels with compactly supported inputs, including the well-studied Mat\'ern kernels.

\subsection{Squared Exponential Kernel and Gaussian Covariate Distribution}\label{sec:se-gaussian}
\begin{figure}
    \centering
    \includegraphics[width=.92\textwidth]{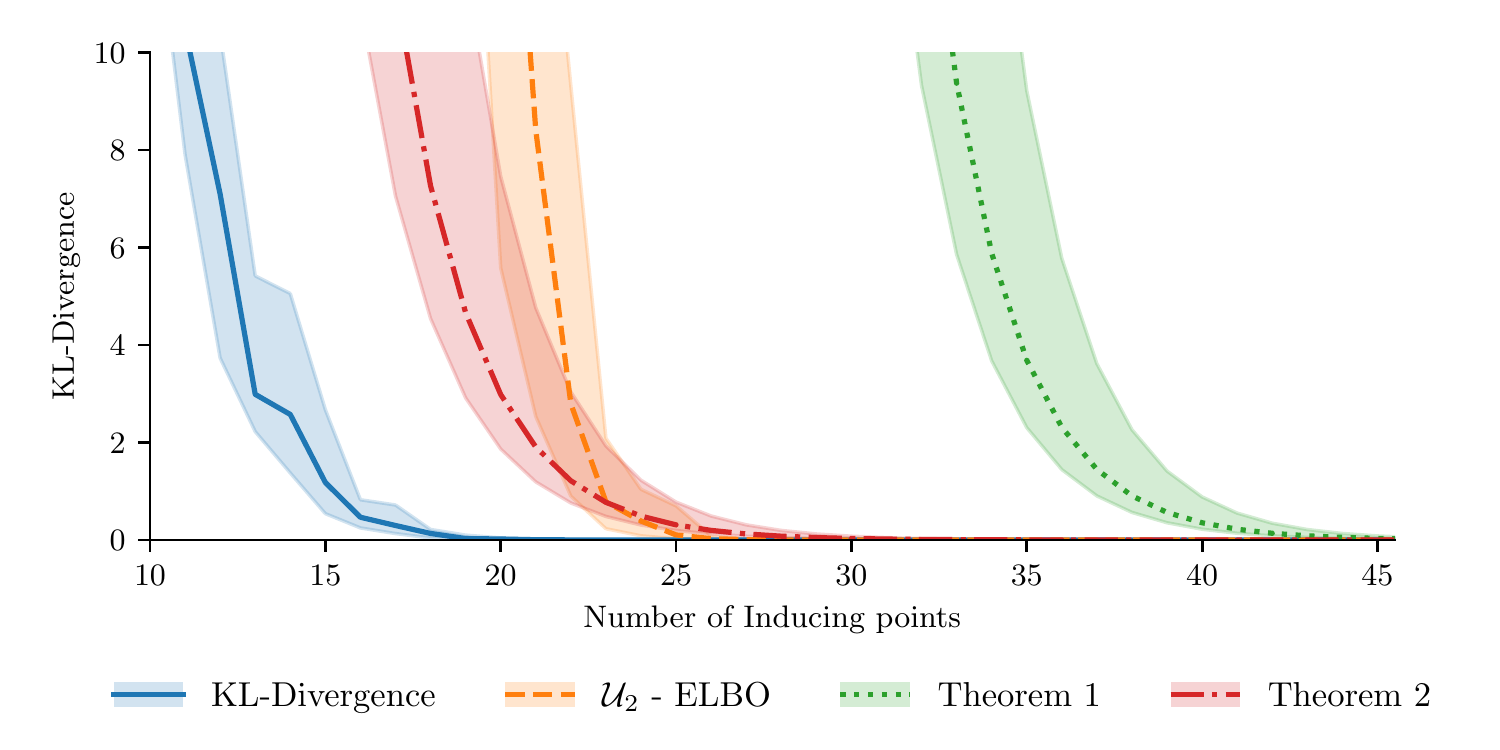}
    \caption{A comparison of the actual KL-divergence achieved, the bound given by \cref{lem:titsias-aposteriori} and the bounds derived in \cref{cor:probabilistic-fixed-y,cor:probabilistic-average-y} squared exponential kernel. For the actual KL-divegence and the bound given by \cref{lem:titsias-aposteriori}, the dotted line shows the median of 20 independent trials and the shaded region shows the $20\%-80\%$ regions, while for the bounds, the dotted line represents $\delta=.5$ and the shaded region $\delta \in [.2,.8]$.}
    \label{fig:se-gaussian1d-fixedn}
\end{figure}

In the case of the squared exponential kernel, with lengthscale parameter $\ell$ and variance $v$, that is
\[
k_{\textup{SE}}(x,x') = v \exp\left(-\frac{(x-x')^2}{2\ell^2}\right)
\]
and one-dimensional covariates distributed according to  $\mcN(0,\beta^2)$, the eigenvalues of $\mcK$ are \citep{zhu_gaussian_1997}
\begin{equation}
    \lambda_m = \sqrt{\frac{2a}{A}}B^{m-1} \label{eqn:se-1d-eigvals} \,,
\end{equation}
where $a=(4\beta^2)^{-1}\!, b = (2\ell^2)^{-1}\!, A = a+ b + \sqrt{a^2+4ab}$ and $B=b/A$. Note that $B<1$ for any $\ell^2,\beta^2>0$, so the eigenvalues of this operator decay geometrically. However, the exact value of $B$ depends on the lengthscale of the kernel and the variance of the covariate distribution. Short lengthscales and high standard deviations lead to values of $B$ close to $1$, which means that the eigenvalues decay more slowly. From a practical perspective, it is important to keep this in mind, as while the particular rates we obtain on how $M$ should grow as a function of $N$ do not depend on the model hyperparameters, the implicit constants do.

\begin{restatable}[]{cor}{seoned}\label{cor:gaussian-1D}
Let $k$ be a squared exponential kernel. Suppose that $N$ real-valued (one-dimensional) covariates are observed, with identical Gaussian marginal distributions. Suppose the conditions of \cref{thm:upper-bound-fixed-y} are satisfied for some $R>0$. Fix any $\gamma\in (0,1]$. Then there exists an $M=\mcO(\log (N^3/\gamma))$ and an $\epsilon= \Theta(\gamma/N^2)$ such if inducing points are distributed according to an $\epsilon$-approximate $M$-DPP with kernel matrix $\Kff$,
\[
\Exp{}{\KL{Q}{P}} \leq \gamma.
\]
Similarly, for any $\delta \in (0,1/32)$ using the ridge leverage algorithm of \cite{Musco_ridge_leverage} and choosing $S$ appropriately, with probability $1-5\delta$, $\bfM=\mcO\left(\log \frac{N^2}{\delta^2\gamma}\log \frac{\log (N^2/\delta^2\gamma)}{\delta}\right)$ and
\[
\KL{Q}{P} \leq \gamma.
\]
The implicit constants depend on the kernel hyperparameters, the likelihood variance, the variance of the covariate distribution and $R$. 
\end{restatable}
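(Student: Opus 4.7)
The whole argument is a substitution of the explicit SE--Gaussian spectrum into the a priori bounds. By \eqref{eqn:se-1d-eigvals} the eigenvalues decay geometrically with ratio $B = b/A \in (0,1)$ determined solely by the lengthscale $\ell$ and covariate standard deviation $\beta$, so the tail sum is a geometric series
\[
	\sum_{m=M+1}^\infty \lambda_m \;=\; \sqrt{\tfrac{2a}{A}} \,\frac{B^M}{1-B} \;=:\; C_0\, B^M,
\]
where $C_0$ depends only on $\ell$ and $\beta$. This is the only kernel-specific input that is needed.

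For the $M$-DPP claim I would invoke Theorem~\ref{thm:upper-bound-fixed-y}, which bounds $\mathbb{E}[\KL{Q}{P}]$ by a sum of two terms. I would pick $M$ and $\epsilon$ so that each term is at most $\gamma/2$. The $\epsilon$ term $\tfrac{1}{2}(1+RN/\sigma_n^2)\cdot 2Nv\epsilon/\sigma_n^2$ is linear in $\epsilon$ and scales like $N^2\epsilon$ (for large $N$), so choosing $\epsilon = c_1\gamma/N^2$ for a suitable constant $c_1$ depending on $R,v,\sigma_n^2$ suffices; this establishes the claimed $\epsilon=\Theta(\gamma/N^2)$. The geometric tail term requires
\[
	(M+1)\,B^M \;\leq\; \frac{c_2\,\gamma}{N^2},
\]
for a suitable constant $c_2 = c_2(R,\sigma_n^2,C_0)$. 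Taking logarithms gives $M\lvert\log B\rvert \geq \log(N^2/\gamma) + \log(M+1) + O(1)$; since $\log(M+1)$ grows much slower than $M$, a one-step bootstrap (plug $M = \Theta(\log(N/\gamma))$ into the $\log(M+1)$ term) shows that $M = O(\log(N^3/\gamma))$ suffices, with implicit constants depending on $1/\lvert\log B\rvert$, $R$, $v$, and $\sigma_n^2$.

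For the ridge-leverage claim I would substitute the same geometric tail into Theorem~\ref{thm:ridge-leverage-agnostic-y}. The inequality
\[
	\frac{1}{2}\Bigl(N + \frac{RN}{\sigma_n^2}\Bigr)\frac{N\,C_0\,B^S}{S\,\delta^2\,\sigma_n^2} \;\leq\; \gamma
\]
reduces (again by taking logs and a one-step bootstrap to absorb the $\log S$ from the $S^{-1}$ factor) to $S = O\!\left(\log\bigl(N^2/(\delta^2\gamma)\bigr)\right)$. Lemma~\ref{lem:ridge-leverage} then yields the random number of inducing points $\bfM \le c\,S\log(S/\delta)$, which gives the stated $\bfM = O\bigl(\log(N^2/(\delta^2\gamma))\,\log(\log(N^2/(\delta^2\gamma))/\delta)\bigr)$. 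The probability $1-5\delta$ comes from combining the $1-3\delta$ success probability of the ridge-leverage algorithm with two applications of Markov's inequality (one to bound $\conditionalexp{\|\bfy\|_2^2}{\bfX} \leq RN/\delta$ and one to translate the expected bound on $\|\RKff-\RQff\|_{\textup{op}}$ into a high-probability one), exactly as in the derivation preceding Theorem~\ref{thm:ridge-leverage-agnostic-y}.

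There is no real obstacle; the only mildly delicate step is the logarithmic bootstrap used to absorb the $\log(M+1)$ (respectively $\log S$) factor when solving $(M+1)B^M \lesssim \gamma/N^2$, and keeping track of which constants depend on $\ell,\beta,v,\sigma_n^2,R$ but not on $N,\gamma,\delta$ in order to justify the final $O(\cdot)$ statements.
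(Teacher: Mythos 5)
Your proposal is correct and follows essentially the same route as the paper's proof: evaluate the geometric tail sum for the SE--Gaussian spectrum using \cref{eqn:se-1d-eigvals}, substitute it into \cref{thm:upper-bound-fixed-y} and \cref{thm:ridge-leverage-agnostic-y}, and then solve for $M$ (resp.\ $S$) and $\epsilon$. The one cosmetic difference is in how the $(M+1)$ factor is absorbed: the paper avoids your logarithmic bootstrap by simply invoking $M+1\leq N$ (justified since for $M\geq N$ one could take $Z=X$ and the KL vanishes), which is what produces the $N^3$ inside the logarithm; your bootstrap actually yields the slightly sharper $M=\mcO(\log(N^2/\gamma))$, which is still consistent with the claimed $\mcO(\log(N^3/\gamma))$ and so does not affect the result.
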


\begin{rem}
If we consider $\gamma$ and $\delta$ as fixed constants (independent of $N$), this implies that if inducing points are placed using an approximate $M$-DPP we can choose $M=\mcO(\log(N))$ inducing points leading to a computational cost of $\mcO(N(\log N)^4)$ while for approximate ridge leverage scores sampling $\mcO(\log N \log \log N)$ inducing points suffice leading to a cost at most $\mcO(N(\log N)^2 (\log \log N)^2)$.
\end{rem}

The proof (\cref{app:sqexp-gaussian}) consists of applying the geometric series formula to evaluate the sum of eigenvalues and choosing $M, \epsilon$ and $S$ appropriately. All dependencies of the implicit constants on hyperparameters can be made explicit. \Cref{fig:se-gaussian1d-fixedn} illustrates the KL-divergence, the a posteriori bound given by $\mcU_2 -\mathrm{ELBO}$ and the bounds from \cref{thm:upper-bound-fixed-y,thm:upper-bound-average-y} in the case of a SE kernel and synthetic 1D distributed covariates.

\Cref{cor:gaussian-1D} is illustrated in \cref{fig:se_gaussian_1D}, in which we increase $N$ and increase $M$ logarithmically as a function of $N$ in such a way that $\KL{Q}{P}$ can be bounded above by a decreasing function in $N$. 

\begin{figure}
    \centering
    \includegraphics[width=.85\textwidth]{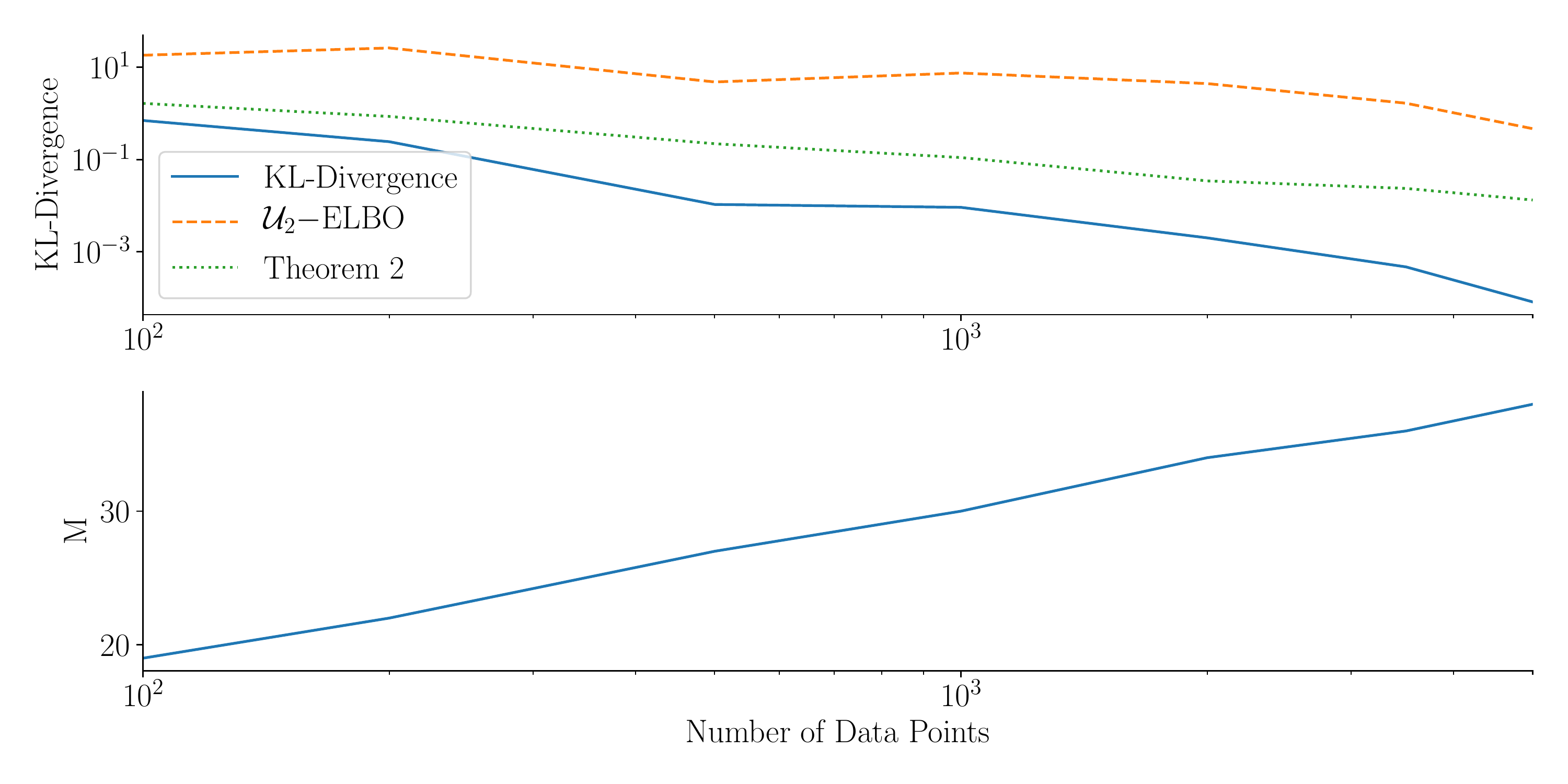}
    \caption{To illustrate \cref{cor:gaussian-1D}, we incrementally increase the size of the data set, and set the number of inducing points to grow as the log of the data set size. The top plot shows the KL-divergence (blue) for each data set as $N$ increases, plotted on a log-log scale. The bound in \cref{thm:upper-bound-average-y} (in green) tends to $0$. We also compute the a posteriori upper bound, \cref{lem:titsias-aposteriori} (yellow). The bottom plot shows the number of inducing points, plotted against the number $N$ with $N$ on a log-scale.}
    \label{fig:se_gaussian_1D}
\end{figure}

\subsubsection{The Multivariate Case}\label{sec:upper-multivariate-se-gauss}
 The generalization of \cref{cor:gaussian-1D} to the case of multi-dimensional input distributions is relatively straightforward. The multi-dimensional version of the squared exponential kernel can be written as a product of one dimensional kernels, i.e.
\[
k_{\mathrm{SEARD}}(x,x') = v\exp\left(-\sum_{d=1}^D\frac{(x_d-x'_d)^2}{\ell^2_d}\right) = v\prod_{d=1}^D \exp\left(-\frac{(x_d-x'_d)^2}{\ell^2_d}\right),
\]
where $\ell_d>0$ for all $d$.

For any kernel that can be expressed as a product of one-dimensional kernels, and for any covariate distribution that is a product of one-dimensional covariate distributions, the eigenvalues of the multi-dimensional covariance operator is the product of the one-dimensional analogues. 
When obtaining rates of convergence, we lose no generality in assuming that the kernel is isotropic as is the covariate distribution. Otherwise, consider the direction with the shortest lengthscale, and the covariate distribution with the largest standard deviation and the eigenvalues of this operator are larger than a constant multiple of the corresponding eigenvalues of the non-isotropic operator.

In the isotropic case, each eigenvalue is of the form,
\[
\lambda_m = \left(2a/A\right)^{D/2} B^{m'},
\]
for some integer $m'$ with $a,A$ and $B$ defined as in the one-dimensional case. Note that $m$ and $m'$ are no longer equal. The number of times each eigenvalue with $m'$ in the exponent is repeated is equal to the number of ways to write $m'$ as a sum of $D$ non-negative integers. 
By counting the multiplicity of each eigenvalue,  \citet{seeger2008information} arrived at the bound
\[
\lambda_{m+D-1} \leq \left(2a/A\right)^{D/2} B^{m^{1/D}}.
\]
In order to prove a multi-dimensional analogue of \cref{cor:gaussian-1D} we need an upper bound on $\sum_{m=M+1}^\infty \lambda_m$. This can be derived with following an argument made by \citet[Appendix II]{seeger2008information}.
\begin{restatable}{prop}{taileigvalsgauss}\label{prop:tail-eigenvalues-gaussian}
For a SE-kernel and Gaussian distributed covariates in $\R^D$, for $M\geq \frac{1}{\alpha}D^D+D-1$, $\sum_{m=M+1}^\infty \lambda_m = \mcO(M\exp(-\alpha M^{1/D})),$ where $\alpha = -\log B > 0$ and the implicit constant depends on the dimension of the covariates, the kernel parameters and the covariance matrix of the covariate distribution.
\end{restatable}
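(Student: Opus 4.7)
The plan is to combine the product-form spectrum of the isotropic SE operator with Seeger's polynomial counting bound (already quoted in the excerpt) and then bound the resulting superexponentially decaying sum via an upper incomplete gamma integral. We may reduce to the isotropic case, since in a product-of-factors setting the worst factor (shortest lengthscale, largest marginal variance) dominates each eigenvalue and only changes constants. In the isotropic case, Seeger's inequality gives $\lambda_{m+D-1} \leq C\, e^{-\alpha m^{1/D}}$ with $C = (2a/A)^{D/2}$ and $\alpha = -\log B$. Reindexing $n = m + D - 1$,
\[
\sum_{n=M+1}^\infty \lambda_n \;\leq\; C \sum_{m = M-D+2}^\infty e^{-\alpha m^{1/D}}.
\]

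Next I would bound this discrete sum by a monotone integral, and substitute $u = t^{1/D}$:
\[
\sum_{m=K}^\infty e^{-\alpha m^{1/D}} \;\leq\; e^{-\alpha K^{1/D}} + D \int_{K^{1/D}}^\infty u^{D-1} e^{-\alpha u}\, du,
\]
with $K = M-D+2$. The remaining integral is a standard incomplete gamma, which I would control by repeated integration by parts:
\[
\int_a^\infty u^{D-1} e^{-\alpha u}\, du \;=\; \frac{a^{D-1} e^{-\alpha a}}{\alpha} \sum_{j=0}^{D-1} \frac{(D-1)!/(D-1-j)!}{(\alpha a)^j}.
\]
For $\alpha a \geq D$, successive terms in the sum shrink by a factor of at most $(D-1)/D$, so the partial sum is bounded by $D$, yielding $\int_a^\infty u^{D-1} e^{-\alpha u}\, du \leq (D/\alpha)\, a^{D-1} e^{-\alpha a}$. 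The hypothesis $M \geq D^D/\alpha + D - 1$ is chosen precisely so that $a = (M-D+1)^{1/D} \geq D/\alpha$ and this bound applies.

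Combining these ingredients, $\sum_{n=M+1}^\infty \lambda_n \leq C' (M-D+1)^{(D-1)/D} e^{-\alpha (M-D+1)^{1/D}}$ for a constant $C'$ depending only on $D$, $\alpha$, and the kernel parameters. To pass to the target form, note that $(M-D+1)^{(D-1)/D} \leq M^{(D-1)/D} \leq M$, and the mean value theorem applied to $x \mapsto x^{1/D}$ gives $M^{1/D} - (M-D+1)^{1/D} \leq (D-1)/D$ for $M \geq D$, so $\exp(-\alpha(M-D+1)^{1/D}) \leq e^{\alpha} \exp(-\alpha M^{1/D})$. These two reductions absorb into the constant and yield the advertised bound $\mcO(M \exp(-\alpha M^{1/D}))$.

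The main obstacle is purely bookkeeping: matching the various index shifts ($m$ vs.\ $m+D-1$, $K = M-D+2$), carrying out the change of variables in the integral, and verifying the geometric-ratio estimate in the integration-by-parts expansion. Each step is elementary, but one must be careful that the constants ultimately absorbed into $C'$ and the $\mcO(\cdot)$ depend only on $D$, $\alpha$, and the kernel hyperparameters, not on $M$ or $N$. No deeper spectral asymptotics are required beyond Seeger's counting bound and the classical incomplete-gamma tail estimate.
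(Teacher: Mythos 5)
Your proof follows essentially the same route as the paper's: reduce to the isotropic case, apply Seeger's counting bound $\lambda_{m+D-1}\leq (2a/A)^{D/2}B^{m^{1/D}}$, bound the resulting tail sum by an integral, change variables to recognize an upper incomplete gamma function, and bound that gamma function by its dominant term under the stated size hypothesis on $M$. The only differences are presentational: you derive the finite-sum expansion of $\Gamma(D,r)$ by integration by parts where the paper cites Gradshteyn--Ryzhik, your geometric-ratio bound $\sum_j T_j\leq D$ is the same observation phrased slightly differently from the paper's ``final term dominates,'' and you make the last cleanup step (absorbing the index shift $M-D+1\mapsto M$ into the implicit constant via the mean value theorem) explicit where the paper leaves it implicit.
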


The proof of \cref{prop:tail-eigenvalues-gaussian} is in \cref{app:sqexp-gaussian}. 

\begin{restatable}{cor}{multivariategaussiansecor}\label{cor:gaussian-multi-D}
Let $k$ be a SE-ARD kernel in $D$-dimensions. Suppose that $N$ $D$-dimensional covariates are observed, so that each covariate has an identical multivariate Gaussian distribution, and that the distribution of training outputs satisfies $\conditionalexp{\|\bfy\|^2}{\bfX} \leq RN$. Fix any $\gamma \in (0,1]$. Then there exists an $M=\mcO((\log N/\gamma)^D)$ and an $\epsilon= \mcO(N^2/\gamma)$ such if inducing inputs are distributed according to an $\epsilon$-approximate $M$-DPP with kernel matrix $\Kff$,
\[
\Exp{}{\KL{Q}{P}} \leq \gamma.
\]
The implicit constant depends on the kernel hyperparameters, the variance matrix of the covariate distribution, $D$ and $R$. 
With the same assumptions but applying the RLS algorithm of \citet{Musco_ridge_leverage} to selecting inducing inputs, for any $\delta \in (0,1/32)$ there exists a choice of $S$ such that with probability $1-5 \delta$, $\bfM =  \mcO\left(\left(\log \frac{N^2}{\delta \gamma}\right)^D(\log\log \frac{N^2}{\delta \gamma} + \log (1/\delta) \right)$ and 
\[
\KL{Q}{P} \leq \gamma.
\]
\end{restatable}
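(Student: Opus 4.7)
My plan is to mirror the proof of \cref{cor:gaussian-1D} and combine the eigenvalue tail bound of \cref{prop:tail-eigenvalues-gaussian} with the appropriate upper bounds from \cref{sec:regression-rates}. First, I would reduce to the isotropic case, as explained in the paragraph preceding \cref{prop:tail-eigenvalues-gaussian}: an SE-ARD kernel with lengthscales $\ell_d$ and a product-of-Gaussians covariate distribution with variances $\beta_d^2$ factorizes into a product of one-dimensional kernels/operators, so the multi-dimensional eigenvalues are products of the one-dimensional ones. Replacing each $(\ell_d,\beta_d)$ by the pair $(\ell_{\min},\beta_{\max})$ only enlarges every eigenvalue, so it suffices to prove the bound in the isotropic case, where \cref{prop:tail-eigenvalues-gaussian} gives $\sum_{m=M+1}^\infty \lambda_m = \mcO(M\exp(-\alpha M^{1/D}))$ with $\alpha = -\log B > 0$ depending only on the kernel parameters and the covariate variance.

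For the $M$-DPP part, I would invoke \cref{thm:upper-bound-fixed-y}, which yields
\[
\Exp{}{\KL{Q}{P}} \leq \tfrac{1}{2}\bigl(1+RN/\noisevariance\bigr)\bigl[(M+1)N\textstyle\sum_{m=M+1}^\infty \lambda_m + 2Nv\epsilon\bigr]/\noisevariance.
\]
Plugging in the tail bound, the non-$\epsilon$ term is $\mcO\!\bigl(N^2 M^2 \exp(-\alpha M^{1/D})\bigr)$ with an implicit constant depending on $R,\noisevariance$ and the kernel. Requiring this to be at most $\gamma/2$ leads to the transcendental condition $\alpha M^{1/D} \geq \log(N^2 M^2/\gamma) + \mcO(1)$; solving by iteration (using that $\log M$ is lower order than $M^{1/D}$) gives $M = \mcO((\log(N/\gamma))^D)$, which one can absorb into $\mcO((\log(N/\gamma))^D)$ after adjusting constants. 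For the TV-error term, one chooses $\epsilon$ small enough to make $Nv\epsilon \cdot (1+RN/\noisevariance) \leq \gamma/2$; this gives $\epsilon = \Theta(\gamma/N^2)$ (as in \cref{cor:gaussian-1D}; I would read the statement's $\mcO(N^2/\gamma)$ as a typo for this reciprocal).

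For the ridge leverage part, I would instead start from the version of \cref{lem:agnostic-upper-bound} that is bounded in terms of the operator norm of $\Kff-\Qff$, and apply \cref{thm:ridge-leverage-agnostic-y}, whose bound on $\KL{Q}{P}$ is proportional to $N^2 \sum_{m=S+1}^\infty \lambda_m / (S\delta^2 \noisevariance)$ (absorbing $R/\noisevariance$ into the constant). Using \cref{prop:tail-eigenvalues-gaussian} again, the right-hand side becomes $\mcO\!\bigl(N^2 \exp(-\alpha S^{1/D})/\delta^2\bigr)$, and requiring this to be at most $\gamma$ yields $S = \Theta\!\bigl((\log(N^2/(\delta^2 \gamma)))^D\bigr)$ by the same transcendental inversion as above. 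The number of inducing points returned by \citet[Algorithm 3]{Musco_ridge_leverage} is then $\bfM \leq cS\log(S/\delta) = \mcO\!\bigl((\log(N^2/(\delta\gamma)))^D(\log\log(N^2/(\delta\gamma)) + \log(1/\delta))\bigr)$, matching the claim.

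The main obstacle is the bookkeeping in solving $\alpha M^{1/D} \geq \log(N^2 M^2/\gamma) + c$ cleanly so that the leading order is $M=(\text{const}\cdot \log(N/\gamma))^D$ without losing a logarithmic factor: the $M^2$ inside the logarithm contributes only $2D\log\log(\cdot)$ to $M^{1/D}$, which is subdominant and can be absorbed by slightly enlarging the constant. The rest — the isotropic reduction, the $\epsilon$ bookkeeping, and the application of Markov's inequality implicit in \cref{thm:upper-bound-fixed-y} / \cref{thm:ridge-leverage-agnostic-y} — is essentially identical to the one-dimensional case, with the exponential-in-$M$ decay simply replaced by the stretched-exponential-in-$M^{1/D}$ decay from \cref{prop:tail-eigenvalues-gaussian}.
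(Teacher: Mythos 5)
Your proof is correct and takes essentially the same approach as the paper: combine \cref{thm:upper-bound-fixed-y} (respectively \cref{thm:ridge-leverage-agnostic-y}) with the stretched-exponential tail bound of \cref{prop:tail-eigenvalues-gaussian}, pick $\epsilon=\Theta(\gamma/N^2)$ to kill the TV-error term, and invert the resulting transcendental inequality for $M$ (respectively $S$). You also correctly diagnose the $\epsilon=\mcO(N^2/\gamma)$ in the statement as a typo for its reciprocal, which is what the paper's own proof uses.
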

 The proof follows from \cref{prop:tail-eigenvalues-gaussian} and \cref{thm:upper-bound-fixed-y} or \cref{thm:ridge-leverage-agnostic-y}, by choosing parameters appropriately.
\begin{rem}
If we allow the implicit constant to depend on $\gamma$ and $\delta$, this implies that for inducing ploints distributed accoding to an approximate $M$-DPP we can choose $M=\mcO((\log N)^D)$ inducing points leading to a computational cost of $\mcO(N(\log N)^{3D+1})$ while for approximate ridge leverage scores sampling $\mcO((\log N)^D \log \log N)$ inducing points suffice leading to a $\mcO(N(\log N)^{2D} (\log \log N)^2)$ computational cost.
\end{rem}

In order for the KL-divergence to be less than a fixed constant, the exponential scaling of the number of inducing points in the dimensions of the covariates is inevitable, as we will show in \cref{sec:lower-bounds}. However, practically the situation may not be quite so dire. First, many practioners use a SE-ARD kernel. If the data is essentially constant over many dimensions, then when training with empirical Bayes, the lengthscales of these dimensions tends to become large, effectively reducing the dimensionality of the inference problem. Additionally, in the case when covariates fall on a smooth, low-dimensional manifold, the decay of the eigenvalues only depends on the dimensionality and smoothness properties of this manifold, see \citet[Theorem 4]{Altschuler_sinkhorn_2019}. In addition, for a given problem, the dimensionality $D$ is fixed, meaning that the dependence of the number of inducing points $M$ depends polylogarithmically on $N$. This growth is slower than any polynomial, i.e.~$(\log N)^D = o(N^\epsilon)$ for $\epsilon > 0$.

 We also note that \cref{cor:gaussian-multi-D} can easily be adapted using \cref{lem:average-eigenvalues} to show that if all of the $\bfx_n$ are drawn from any compactly supported distributions with continuous densities that are all bounded by some universal constant, the same asymptotic bound on the number of inducing points applies. This follows from noting that under these assumptions, $p_n(x)$, satisfies $p_n(x)<cq(x)$ where $q(x)$ is a Gaussian density for some $c>0$, so we can apply \cref{lem:average-eigenvalues} to bound the expectation of the sum of the matrix eigenvalues associated to $p_n$ in terms of the eigenvalues associated to $q$.

\subsection{Compactly Supported Inputs and Stationary Kernels}\label{sec:stationary_kernels}

For most kernels and covariate distributions, solutions to the eigenfunction problem, $\mcK \phi = \lambda \phi$ cannot be found in closed form. In the case of stationary kernels defined on $\R^D$, that is kernels satisfying $k(x,x') = \kappa(x-x')$ for some $\kappa: \R^D \to \R$, the asymptotic properties of the eigenvalues are often understood \citep{widom_asymptotic_1963,widom1964asymptotic}.

Stationary, continuous kernels can be characterized through Bochner's theorem, which states that any such kernel is the Fourier transform of a positive measure, i.e.
\[
\kappa(x-x') =  \int_{\R^D} s(\omega)\exp( i \omega \cdot (x-x')) d \omega. 
\]
We will refer to $s(\omega)$ as the spectral density of $k$.\footnote{We assume $\kappa(x-x')$ decays sufficiently rapidly so that such a continuous spectral density exists.} The decay of the spectral density conveys information about how smooth the kernel function is. 

Widom's theorem \citep{widom_asymptotic_1963} relates the decay of the eigenvalues of $\mcK$ to the decay of $s$.
Widom's theorem applies to input distributions with compact support and stationary kernels with spectral density satisfying several regularit conditions (stated in \cref{app:examples}). \Citet{seeger2008information} give a corollary of Widom's theorem, which is sufficient in many instances to obtain bounds on the number of inducing points needed for \cref{thm:upper-bound-average-y,thm:upper-bound-fixed-y} to converge.
\begin{lem}[\citealp{seeger2008information}, Theorem 2]\label{lem:simplified-widom}
Let $k$ be an isotropic kernel (i.e. $\kappa(\alpha)= \kappa(\alpha')$ if $\|\alpha\| = \|\alpha'\|$). Suppose $k$ satisfies the criteria of Widom's theorem, the covariate distribution has density zero outside a ball of radius $T$ around the origin, and is bounded above by $\tau$, then
\[
\lambda_m \leq \tau (2\pi)^D s\left(\frac{2\Gamma(D/2+1)^{2/D}}{T}m^{1/D}\right)(1+o(1)).
\]
\end{lem}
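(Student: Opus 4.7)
The plan is to reduce the statement to a single direct application of Widom's theorem on the Lebesgue-measure operator over a ball. First, since $p(x) \le \tau \mathbf{1}_{B(0,T)}(x)$, I would use measure monotonicity of the eigenvalues of $\mcK$: passing from $L^2(p\,dx)$ to the symmetrized operator $M_{\sqrt{p}}\,\mcK_{\mathrm{Leb}}\,M_{\sqrt{p}}$ on $L^2(\R^D,dx)$ (where $M_f$ is multiplication by $f$) preserves the nonzero spectrum via the similarity $g\mapsto \sqrt{p}\,g$. Since $\mcK_{\mathrm{Leb}}^{1/2}\,M_p\,\mcK_{\mathrm{Leb}}^{1/2}$ has the same nonzero eigenvalues as $M_{\sqrt{p}}\,\mcK_{\mathrm{Leb}}\,M_{\sqrt{p}}$, and $M_p \preceq M_{\tau\mathbf{1}_{B(0,T)}}$ as multiplication operators, Weyl's monotonicity theorem then yields $\lambda_m \le \tau\,\mu_m$, where $\mu_m$ denotes the $m$-th eigenvalue of the integral operator on $L^2(B(0,T),dx)$ with stationary kernel $k$.

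Second, I would invoke Widom's theorem on this last (uniform density, compactly supported) operator. Under the regularity assumptions that the hypothesis imposes on $s$, Widom's theorem gives the asymptotic
\[
\mu_m \;=\; (2\pi)^D\, s(\omega_m^*)\,(1+o(1)),
\]
where the threshold frequency $\omega_m^*$ is defined implicitly by the Weyl-type phase-space counting condition
\[
\lvert B(0,T)\rvert \cdot \lvert\{\omega\in\R^D : s(\omega)\ge s(\omega_m^*)\}\rvert \;=\; m(2\pi)^D.
\]

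Third, I would compute $\omega_m^*$ explicitly. Since $k$ is isotropic, so is $s$, and the super-level set $\{\omega : s(\omega)\ge s(\omega_m^*)\}$ is a Euclidean ball of radius $\|\omega_m^*\|$. Substituting the standard volume formula $\lvert B(0,r)\rvert = \pi^{D/2}r^D/\Gamma(D/2+1)$ into the counting condition yields
\[
\frac{\pi^{D/2}T^D}{\Gamma(D/2+1)} \cdot \frac{\pi^{D/2}\|\omega_m^*\|^D}{\Gamma(D/2+1)} \;=\; m(2\pi)^D,
\]
which rearranges to $\|\omega_m^*\| = 2\,\Gamma(D/2+1)^{2/D}\,m^{1/D}/T$. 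Combining the three steps and collecting the $\tau$ from the domination step gives the claimed bound.

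The main obstacle is verifying that the hypotheses of Widom's theorem are preserved by the reduction to the Lebesgue-measure operator on the ball and that the isotropic spectral density satisfies the regularity required for the phase-space formula to be usable (e.g., that super-level sets are balls and that $s$ is sufficiently well-behaved at infinity for the asymptotic to apply). A secondary subtlety is the measure-monotonicity step: one must be careful that the correct sandwich form is used, because the naive operator difference $M_{\sqrt{q}}\,\mcK\,M_{\sqrt{q}} - M_{\sqrt{p}}\,\mcK\,M_{\sqrt{p}}$ need not be positive semidefinite, but passing to $\mcK^{1/2}M_p\mcK^{1/2}$ via the identity $\lambda_i(AB)=\lambda_i(BA)$ on the nonzero spectrum lets one invoke Weyl monotonicity directly.
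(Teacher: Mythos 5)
Your proof is correct, and the paper does not supply its own argument for this lemma --- it is cited to \citet{seeger2008information} and the regularity conditions are merely recorded in the appendix; your three-step reconstruction (operator domination via the sandwich form $\mcK_{\mathrm{Leb}}^{1/2}M_p\mcK_{\mathrm{Leb}}^{1/2}$ and Weyl monotonicity, Widom's phase-space asymptotic on the uniform-density ball, explicit inversion using the volume formula with the Fourier convention $\hat\kappa = (2\pi)^D s$) is exactly the standard derivation underlying the cited result. The one point worth making a touch more explicit in the domination step is that you are comparing $M_p\preceq M_{\tau\mathbf{1}_{B(0,T)}}$ and then linearly factoring out $\tau$ from the integral operator before invoking Widom on the Lebesgue-measure ball operator; you handle this correctly, and your flagged subtleties (inverting the counting function $N(\lambda)$, needing the spectral density superlevel sets to be Euclidean balls eventually, which is guaranteed by the isotropy plus the eventual-monotonicity condition in Widom's hypotheses) are precisely the right places to be careful but do not present any actual obstruction under the stated assumptions.
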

\subsubsection{Mat\'ern kernels and compactly supported input distribution}
Mat\'ern kernels are widely applied to problems where the data generating process is believed to lead to non-smooth functions, and are known to satisfy the conditions of Widom's theorem \citep{seeger2008information}. These kernels are defined as \citep{gpml},
\begin{equation}
    k_{\text{Mat}}(x,x') = \frac{2^{1-\nu}}{\Gamma(\nu)}\left(\frac{\|x-x'\|_2}{\ell}\right)^\nu K_{\nu}\left(\frac{\|x-x'\|_2}{\ell}\right),
\end{equation}
where $K_{\nu}$ is a modified Bessel function. The spectral density of the Mat\'ern kernel is 
\[
s(\omega) = \frac{\ell^D\Gamma(\nu+D/2)}{\pi^{D/2}\Gamma(\nu)}\left(1 + (\ell\omega)^2\right)^{-(\nu+\frac{D}{2})}
\]
which is proportional to a Student's t-distribution with $2\nu+D$ degrees of freedom. This spectral density only decays polynomially, with the degree of the polynomial depending on $\nu$ and $D$. Here $\ell>0$ is the lengthscale and $\nu>0$ is a `smoothness' parameter often chosen as $\nu \in \{\frac{1}{2},\frac{3}{2},\frac{5}{2}\}$. The posterior mean is $\lfloor \nu \rfloor$-times differentiable, which relates to the slower decay of the spectral density.

 \Cref{lem:simplified-widom} tells us that for compactly supported covariates with bounded density and the Mat\'ern kernel with smoothness paramater $\nu$
\[
\lambda_m = \mcO(m^\frac{-2\nu-D}{D}).
\]
It follows that $\sum_{m=M+1}^D \lambda_m = \mcO(M^\frac{-2\nu}{D})$. From this, we can derive a result of the same form as \cref{cor:gaussian-multi-D} for Mat\'ern kernels and compactly supported input distributions. 
\begin{cor}
Suppose the conditions of \cref{thm:upper-bound-fixed-y} on $\bfX$ and $\bfy$ are satisfied for some $R>0$. Let $k$ be a Mat\'ern kernel with smoothness parameter $\nu$. Suppose that $N$ covariates are observed, each with an identical distribution with bounded density and compact support on $\R^D$. Fix any $\gamma\in (0,1]$. Then for $\nu >D/2$ if inducing points are initialized using an $\epsilon$-approximate $M$-DPP with and $\epsilon= \mcO(N^2/\gamma)$ there exists an $M=\mcO(N^\frac{2D}{2\nu-D}\gamma^{\frac{D}{2\nu-D}})$ such that 
\[
\Exp{}{\KL{Q}{P}} \leq \gamma.
\]
Under the same assumptions if inducing points are initialized using the RLS algorithm of \citet{Musco_ridge_leverage} with $\delta \in (0,1/32)$ there exists an $S=\mcO\left(N^\frac{2D}{2\nu+D}(\gamma\delta^2)^\frac{-D}{2\nu+D}\right)$ such that with probability at least $1-5\delta$,
\[
\KL{Q}{P} \leq \gamma,
\]
and $\bfM \leq S\log \frac{S}{\delta}$.
\end{cor}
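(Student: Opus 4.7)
The plan is to combine the eigenvalue decay estimate for the Matérn kernel operator, obtained from the simplified Widom bound (\cref{lem:simplified-widom}), with the general KL-divergence bounds of \cref{thm:upper-bound-fixed-y} and \cref{thm:ridge-leverage-agnostic-y}, and then solve for the required $M$ (respectively $S$) as a function of $N$, $\gamma$, and $\delta$.

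First, I would verify that the Matérn kernel satisfies the hypotheses of Widom's theorem (this is discussed in \citet{seeger2008information}): it is isotropic, and its spectral density $s(\omega) \propto (1+(\ell\omega)^2)^{-(\nu+D/2)}$ decays polynomially of order $2\nu+D$. Plugging this spectral density into \cref{lem:simplified-widom} and absorbing constants depending on $\ell$, $\nu$, $D$, the support radius and the density bound $\tau$ into an implicit constant gives
\[
\lambda_m \;=\; \mcO\!\left(m^{-(2\nu+D)/D}\right).
\]
Since $\nu > D/2$, the exponent $(2\nu+D)/D$ exceeds $2$, so comparison with an integral yields the tail bound
\[
\sum_{m=M+1}^{\infty}\lambda_m \;=\; \mcO\!\left(M^{-2\nu/D}\right).
\]

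For the $M$-DPP claim, I would substitute this tail estimate into \cref{thm:upper-bound-fixed-y}. Choosing $\epsilon = \Theta(\gamma/N^2)$ (the ``$\mcO(N^2/\gamma)$'' in the statement appears to be a typo, mirroring the analogous choice in \cref{cor:gaussian-1D}) makes the $2Nv\epsilon$ contribution $\mcO(\gamma)$. The remaining dominant term is proportional to $N^2 M^{1-2\nu/D}/\noisevariance^2$, and requiring this to be at most a constant multiple of $\gamma$ yields
\[
M \;\gtrsim\; \bigl(N^2/\gamma\bigr)^{D/(2\nu-D)} \;=\; N^{2D/(2\nu-D)}\,\gamma^{-D/(2\nu-D)},
\]
which matches the stated scaling and is well-defined exactly when $2\nu > D$.

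For the RLS claim, I would substitute the same tail estimate into \cref{thm:ridge-leverage-agnostic-y}. The right-hand side becomes $\mcO\!\left(N^{2}\,S^{-(2\nu+D)/D}/(\delta^{2}\noisevariance^{2})\right)$, and setting this $\le \gamma$ and solving for $S$ gives
\[
S \;\gtrsim\; \bigl(N^{2}/(\gamma\delta^{2})\bigr)^{D/(2\nu+D)} \;=\; N^{2D/(2\nu+D)}\,(\gamma\delta^{2})^{-D/(2\nu+D)}.
\]
\Cref{thm:ridge-leverage-agnostic-y} then guarantees, with probability at least $1-5\delta$, that the number of sampled inducing points satisfies $\bfM \le c\,S\log(S/\delta)$ and that the KL bound holds simultaneously.

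The only mildly delicate step is checking the regularity conditions of Widom's theorem for the Matérn spectral density and absorbing the $(1+o(1))$ factor of \cref{lem:simplified-widom} into the implicit constants uniformly in $M$; both are standard and are treated in \citet{seeger2008information}. The rest of the argument is a direct substitution of the eigenvalue tail into the already-established probabilistic KL bounds and an algebraic inversion to solve for the required number of inducing variables.
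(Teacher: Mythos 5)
Your proposal is correct and follows the paper's own (brief) proof exactly: the Widom/Seeger bound gives $\lambda_m=\mcO(m^{-(2\nu+D)/D})$, hence $\sum_{m>M}\lambda_m=\mcO(M^{-2\nu/D})$, which you substitute into \cref{thm:upper-bound-fixed-y} (resp.\ \cref{thm:ridge-leverage-agnostic-y}) and algebraically invert to obtain $M$ (resp.\ $S$). You also correctly flag $\epsilon=\mcO(N^2/\gamma)$ as a typo for $\mcO(\gamma/N^2)$; note the companion typo in the stated $M$, where the $\gamma$ exponent should be $-D/(2\nu-D)$ exactly as your derivation gives, not $+D/(2\nu-D)$.
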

\begin{rem}
If we consider $\gamma$ and $\delta$ as fixed constants (independent of $N$), this implies that for an initialization with $M$-DPP we can choose $M=\mcO(N^{\frac{2D}{2\nu-D}})$ inducing points leading to a computational cost of $\mcO(N^{\frac{2\nu+5D}{2\nu-D}}\log(N))$ while for approximate ridge leverage scores sampling $\mcO(N^\frac{2D}{2\nu+D} \log N)$ inducing points suffice leading to a cost at most $\mcO(N^\frac{2\nu+5D}{2\nu+D} (\log N)^2)$.
\end{rem}
The first part corollary follows from \cref{thm:upper-bound-fixed-y}, noting that we need to choose $M$ such that
\[
CN^2M\sum_{m=M+1}^\infty \lambda_m \leq C'N^2M^\frac{D-2\nu}{D} \leq \gamma\delta/2
\]
for some constants $C,C'$. The second part follows from similar considerations applied to \cref{thm:ridge-leverage-agnostic-y}.

These bounds on $M$ are vacuous (i.e.~are no smaller than $M=\mcO(N)$) for Mat\'ern kernels in high dimensional spaces or with low smoothness parameters. Additionally, the cost of sampling the $M$-DPP using \cref{alg:det_init_mcmc} makes this inference scheme less expensive than exact GP inference only when $\nu > 2D$. If we instead make the stronger assumptions required by \cref{thm:upper-bound-average-y}, we can choose $M=\mcO(N^\frac{D}{2\nu-D})$, which implies a computational complexity less than exact GP regression if $\nu > \frac{5}{4}D$. 

The bounds for the RLS initialization are generally sharper, and are non-vacuous for all $\nu > D/2$ with the weaker assumptions on $\bfy$. Additionally, the computational complexity of choosing inducing points using the RLS algorithm is the same as the cost of inference up to logarithmic factors, so that for $\nu > D/2$ the cost of sparse inference with the RLS initialization is (asymptotically) smaller than the cubic cost of exact GP regression.

\section{Lower Bounds on the Number of Inducing Points Needed}\label{sec:lower-bounds}

In \cref{sec:regression-rates,sec:examples}, we showed that for many problems the number of inducing points can grow sub-linearly with the number of data points, while maintaining a small KL-divergence between the approximate and exact posteriors. In this section we consider the inverse question, i.e.~how many inducing points are necessary to avoid having the KL-divergence grow as the amount of data increases? In this section, we prove a-priori lower bounds on the KL-divergence under similar assumptions to those used in proving the upper bounds in \cref{sec:regression-rates}. 

Naively, it appears that the lower bound in \cref{lem:kl-averaged} gives us a starting place for a lower bound on the KL-divergence. From this bound, 
\begin{align}
    \conditionalexp{\KL{Q}{P}}{\bfZ,\bfX} \geq \frac{\bft}{2\noisevariance}.
\end{align}
While lower bounding this quantity can be done using the approach taken in this section, it is not the most interesting quantity to study, as we average over $\bfy$ \emph{conditioned} on $\bfX$ and $\bfZ$. This would not give a valid lower bound if the locations of the inducing points depend on $\bfy$, as illustrated in \cref{fig:kl_trace_comparison}. This approach would establish a lower bound for initialization schemes considered in the previous sections, as well as any initialization scheme that does not take the observed $y$ into account, but not the common practice of performing gradient ascent on the ELBO with respect to inducing inputs. 

In this section, we establish a lower bound on the number of inducing variables needed for the KL-divergence not to become large, which is valid regardless of the method for selecting inducing variables or the distribution of $\bfy$. These bounds assume that the covariates are independent and identically distributed (in contrast to the upper bounds, which do not require independence and require a slightly weaker condition than identical marginals). The independence assumption is necessary in order to lower bound the eigenvalues of the covariance matrix. For example, if all of the covariates were identically distributed and equal, the covariance matrix would be rank-1 and so a single inducing point could be used regardless of the size of the data set.

The proof of the lower bounds proceeds in two parts:
\begin{enumerate}
    \item First, we derive a lower bound on $\KL{Q}{P}$ that holds for any $\datay$ and $Z$, but depends on the eigenvalues of $\Kff$.
    \item Second, we use a result on the concentration of eigenvalues of the kernel matrix to those of the corresponding operator due to \citet{braun2006accurate} to derive a lower bound that holds with fixed probability under the assumption that the covariates are independent and identically distributed.
\end{enumerate}

\begin{table}
\begin{center}
\begin{small}
\begin{sc}
\begin{tabular}{lcc}
\toprule
Kernel & Input Distribution & M  \\
\midrule
SE-Kernel &  Gaussian &  $\Omega((\log N)^D)$\\
Mat\'{e}rn $\nu$  & Uniform &  $\Omega\left(N^{\frac{2\nu D}{(2\nu+5D)(2\nu+D)}-\epsilon}\right)$\\
\bottomrule
\end{tabular}
\end{sc}
\end{small}
\end{center}
\caption{The lower bounds we establish on the number of features needed so that the KL-divergence does not increase as a function of $N$. Here $\epsilon$ is a positive constant that can be chosen arbitrarily close to $0$. While the bound for the the SE-kernel matches our upper bounds up to terms that are constant in $N$, we expect the lower bound for the Mat\'ern kernel can be raised significantly.}\label{table:numfeats_lower}
\end{table}

In the case of SE-kernel and Gaussian covariates, we establish a lower bound with the same dependence on $N$ as our upper bounds, that is we need $M = \Omega((\log N)^D)$. In the case of Mat\'ern kernels with uniform covariates and $\nu>1$, we establish a lower bound that increases as a power of $N$. However, there is a large gap between our upper and lower bounds for Mat\'ern kernels, indicating room for improvement. These results are summarized in \cref{table:numfeats_lower}. While our results are stated in terms of inducing points, they hold for more general inducing variables.

\subsection{A Lower Bound}
In this section, we derive a lower bound on the KL-divergence that holds for any $y$ and $Z$ and depends on $\bfX$. 
\begin{restatable}{lem}{firstlowerbound}\label{lem:min-min-lower-bound}
Given a kernel $k$, likelihood model with variance $\noisevariance$ and random covariates $\bfX$. Then,
\[
 \min_{Z \in \mcX^M}\min_{y \in \R^N} \KL{Q}{P} \geq  \frac{1}{2}\sum_{m=M+1}^N \frac{\tilde{\bm{\lambda}}_m }{\noisevariance} - \log \left(1+ \frac{\tilde{\bm{\lambda}}_m}{\noisevariance}\right)
\]
where $\tilde{\bm{\lambda}}_m$ denotes the $m^{th}$ largest eigenvalue of the matrix $\RKff$ determined by the covariates and kernel.
\end{restatable}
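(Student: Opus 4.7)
The plan is to first eliminate $y$ by explicit minimization, then reduce the remaining problem to a scalar inequality per eigenvalue of $\Kff$ that no choice of $Z$ can defeat. Writing $\KL{Q}{P} = \log p(y) - \ELBO$ and taking the difference of the closed forms in \cref{eqn:full-marginal-likelihood,eqn:elbo}, the $y$-dependent part is the quadratic form $\tfrac{1}{2}\, y^{\top}\bigl[(\Qff+\noisevariance\bfI)^{-1} - (\Kff+\noisevariance\bfI)^{-1}\bigr] y$. Because $\Qff \preceq \Kff$ (the matrix $\Kff-\Qff$ is the Schur complement of $\Kuu$ in a joint covariance) and matrix inversion reverses the PSD order on positive definite matrices, the bracketed matrix is positive semi-definite; hence this quadratic form is non-negative and is minimized at $y=0$ with value zero. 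Therefore
\begin{equation*}
\min_{y \in \R^N} \KL{Q}{P} = \tfrac{1}{2}\log\det(\Qff + \noisevariance \bfI) - \tfrac{1}{2}\log\det(\Kff + \noisevariance \bfI) + \tfrac{1}{2\noisevariance}\Tr(\Kff - \Qff).
\end{equation*}

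Next I would pass to spectra. Let $\tilde{\bm{\lambda}}_1 \geq \dots \geq \tilde{\bm{\lambda}}_N \geq 0$ be the ordered eigenvalues of $\Kff$ and $q_1 \geq \dots \geq q_N \geq 0$ the ordered eigenvalues of $\Qff$, with $q_m = 0$ for $m > M$ since $\Qff$ has rank at most $M$. Both log-determinants and the trace depend only on these spectra, so the display above rewrites as
\begin{equation*}
\min_{y} \KL{Q}{P} \;=\; \tfrac{1}{2}\sum_{m=1}^{N} \left[\, \log \frac{q_m + \noisevariance}{\tilde{\bm{\lambda}}_m + \noisevariance} \;+\; \frac{\tilde{\bm{\lambda}}_m - q_m}{\noisevariance}\,\right].
\end{equation*}
From $\Qff \preceq \Kff$ and the Courant--Fischer min-max characterization I get $q_m \leq \tilde{\bm{\lambda}}_m$ for every $m$, so each summand can be controlled on the interval $0 \leq q_m \leq \tilde{\bm{\lambda}}_m$.

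The last step is a one-line calculus check. The function $g(q) = \log\tfrac{q+\noisevariance}{\lambda+\noisevariance} + \tfrac{\lambda-q}{\noisevariance}$ has derivative $-q/[\noisevariance(q+\noisevariance)] \leq 0$ on $[0,\lambda]$ and satisfies $g(\lambda)=0$, so $g(q) \geq 0$ on $[0,\lambda]$. Taking $\lambda=\tilde{\bm{\lambda}}_m$ shows that each of the first $M$ summands is non-negative regardless of the particular $q_m$ realized by $Z$, while each $m > M$ summand, where $q_m=0$ is forced, equals $g(0) = \tilde{\bm{\lambda}}_m/\noisevariance - \log(1+\tilde{\bm{\lambda}}_m/\noisevariance)$. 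Dropping the non-negative head and keeping the tail yields the claimed bound, and since the right-hand side no longer mentions $Z$, the minimization over $Z \in \mcX^M$ is inherited trivially. The only mildly delicate step is the eigenvalue comparison $q_m \leq \tilde{\bm{\lambda}}_m$ via the min-max principle; the rest is routine manipulation of Gaussian densities and scalar convexity.
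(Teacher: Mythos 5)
Your proof is correct and follows essentially the same route as the paper's: eliminate $y$ by noting the quadratic form $y^{\top}\bigl[(\Qff+\noisevariance\bfI)^{-1}-(\Kff+\noisevariance\bfI)^{-1}\bigr]y$ is non-negative, pass to spectra, invoke the eigenvalue monotonicity from $\Qff \preceq \Kff$, and drop the first $M$ non-negative summands while noting the rank bound forces $q_m = 0$ for $m>M$. The only cosmetic difference is that you show each summand is non-negative via a direct monotonicity argument on $g$, whereas the paper first applies $\log(1+a/(\psi_m+\noisevariance)) \le \log(1+a/\noisevariance)$ and then $\log(1+a)\le a$; the underlying content is the same.
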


The proof (\cref{app:lower-bounds}) follows from noting that for any $X,y,Z$ we have $\KL{Q}{P} = \log p(\datay) - \ELBO(\datay,Z) \geq \log p(0) - \ELBO(0,Z)$, where we have defined $\ELBO(\datay,Z)$ to be the evidence bound resulting from the triple $X,y,Z$ (and suppressed dependence on $X$). The bound follows from writing both the trace and log determinant in terms of eigenvalues and using that $\RKff \succ \RQff$, so that the $m^{th}$ largest eigenvalue of $\RKff$ is greater than the $m^{th}$ largest eigenvalue of $\RQff$ for all $1\leq m \leq N$.

In order to establish lower bounds on the number of inducing variables needed to ensure the KL-divergence does not grow as a function of $N$, it suffices to analyze the behavior of the lower bound in \cref{lem:min-min-lower-bound} for random covariates as a function of both $M$ and $N$. 

\subsection{Structure of the Argument}

In order to derive a lower bound on the KL-divergence \cref{lem:min-min-lower-bound}, we can consider just the largest term in the sum appearing in \cref{lem:min-min-lower-bound}, as all the terms are non-negative. For $a>3$, $\log (1+a) \leq a/2$. Therefore, if $\rkffeigenvalueM/\noisevariance > 3$, we have $\KL{Q}{P} \geq \frac{\rkffeigenvalueM}{4\noisevariance}$.

Under the supposition that $\rkffeigenvalueM/\noisevariance > 3$, we can apply the triangle inequality to \cref{lem:min-min-lower-bound} to give,

\[
\KL{Q}{P} \geq \frac{N(\operatoreigenvalueM-|\operatoreigenvalueM-\frac{1}{N}\rkffeigenvalueM|)}{4\noisevariance}
\]

Therefore, for any $M=M(N)$ such that:
\begin{enumerate}
    \item We can show a relative error bound on the approximation of matrix eigenvalues with operator eigenvalues of the form $\frac{|\operatoreigenvalueM-\frac{1}{N}\rkffeigenvalueM|}{\operatoreigenvalueM}<1-\gamma_N$ for some $\gamma_N \in (0,1)$,
    \item $N\gamma_N\operatoreigenvalueM$ tends to infinity as $N$ tends to infinity,
\end{enumerate}
it must be the case that the KL-divergence tends to infinity as a function of $N$ (at a rate $\Omega(N\gamma_N\lambda_{M+1})$).

\subsection{Concentration of Eigenvalues}

In order to complete the argument in the previous section, we need a more fine-grained understanding of the behavior of eigenvalues of $\RKff$ than given in \cref{lem:average-eigenvalues}. For this, we rely on the following result:

\begin{lem}[\citealp{braun2006accurate}, Theorem 4]\label{lem:braun-boundedkernel}
Let $k$ be a continuous kernel with $k(x,x)\leq v$ for all $x \in \R^D$. Fix $\delta \in (0,1)$. Suppose $\bfx_1, \cdots, \bfx_N$ are realizations of i.i.d.~random variables sampled according to some measure on $\R^D$ with density $p(x)$. Then for all $m$ and any $1\leq r \leq N$, with probability at least $1-\delta$,
\begin{align}
    |\lambda_{m} - \frac{1}{N}\rkffeigenvalue| &\leq \lambda_mr\sqrt{\frac{r(r+1)v}{\lambda_rN\delta}} + \sum_{s=r}^\infty \lambda_s + \sqrt{\frac{2v\sum_{s=r+1}^\infty \lambda_s}{N\delta}} \label{eqn:braun-bound-explicit}\\ 
    &= \mcO\left(\lambda_mr^2\lambda_{r}^{-1/2}N^{-1/2}\delta^{-1/2}+\sum_{s=r}^\infty \lambda_s + \sqrt{\frac{\sum_{s=r+1}^\infty \lambda_s}{N\delta}}\right) \nonumber.
\end{align}
where $\operatoreigenvalue$ is the $m^{th}$ eigenvalue of the integral operator $\mcK: (\mcK g)(x') =\int g(x)k(x,x')p(x)dx$ and $\rkffeigenvalue$ is the $m^{th}$ eigenvalue of $\RKff$.
\end{lem}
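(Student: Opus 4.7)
The plan is to prove the lemma by combining Mercer's decomposition with a truncation argument and a Weyl-type eigenvalue perturbation bound. By Mercer's theorem, we may write $k(x,x') = \sum_{s=1}^\infty \lambda_s \phi_s(x)\phi_s(x')$ with $\mathbb{E}_p[\phi_s(\mathbf{x})\phi_{s'}(\mathbf{x})] = \delta_{s,s'}$. For a truncation level $r$ (to be chosen later), write the kernel matrix as $\mathbf{K}_{\mathbf{ff}} = \mathbf{K}^{\le r} + \mathbf{K}^{>r}$ where $[\mathbf{K}^{\le r}]_{ij} = \sum_{s \le r}\lambda_s\phi_s(\mathbf{x}_i)\phi_s(\mathbf{x}_j)$ and $[\mathbf{K}^{>r}]_{ij}$ is the tail. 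Writing $\boldsymbol{\Phi}_r \in \mathbb{R}^{N \times r}$ with $[\boldsymbol{\Phi}_r]_{i,s}=\phi_s(\mathbf{x}_i)$ and $\Lambda_r = \mathrm{diag}(\lambda_1,\ldots,\lambda_r)$ gives $\mathbf{K}^{\le r} = \boldsymbol{\Phi}_r\Lambda_r\boldsymbol{\Phi}_r^\top$.

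For the head, the nonzero eigenvalues of $\frac{1}{N}\mathbf{K}^{\le r}$ coincide with those of $\Lambda_r^{1/2}(\tfrac{1}{N}\boldsymbol{\Phi}_r^\top\boldsymbol{\Phi}_r)\Lambda_r^{1/2}$. Orthonormality of the $\phi_s$ in $L^2(p)$ implies $\mathbb{E}[\tfrac{1}{N}\boldsymbol{\Phi}_r^\top\boldsymbol{\Phi}_r] = I_r$; a second-moment (Chebyshev/Markov) argument, combined with the uniform bound $\sum_s \lambda_s \phi_s(x)^2 = k(x,x)\le v$, controls the Frobenius-type fluctuations by $\mathcal{O}(r\sqrt{v/(N\delta)})$ with probability $1-\delta$. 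Multiplying by $\Lambda_r^{1/2}$ on both sides inflates this by $\sqrt{\lambda_1\lambda_r}$ in operator norm but only by $\lambda_m$ times a factor $\sqrt{r/\lambda_r}$ in the relevant $m$-th eigenvalue perturbation, matching the first term $\lambda_m r\sqrt{r(r+1)v/(\lambda_r N\delta)}$ after bookkeeping.

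For the tail, $\|\mathbf{K}^{>r}\|_{\mathrm{op}} \le \mathrm{Tr}(\mathbf{K}^{>r})$ and $\mathbb{E}[\tfrac{1}{N}\mathrm{Tr}(\mathbf{K}^{>r})] = \sum_{s>r}\lambda_s$; a Markov/Chebyshev bound on $\tfrac{1}{N}\mathrm{Tr}(\mathbf{K}^{>r})$ around its mean, with variance controlled again by $v\sum_{s>r}\lambda_s/N$, yields the last two terms $\sum_{s\ge r}\lambda_s + \sqrt{2v\sum_{s>r}\lambda_s/(N\delta)}$. Combining the two estimates via Weyl's inequality $|\lambda_m(A) - \lambda_m(A+E)| \le \|E\|_{\mathrm{op}}$ applied to $A = \tfrac{1}{N}\mathbf{K}^{\le r}$ and $E = \tfrac{1}{N}\mathbf{K}^{>r}$, together with the approximation of $\lambda_m(\tfrac{1}{N}\mathbf{K}^{\le r})$ by $\lambda_m$ from the head analysis, gives the claimed bound with an overall failure probability absorbed into $\delta$ by a union bound.

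The main obstacle will be obtaining the sharp $\lambda_m r\lambda_r^{-1/2}$ dependence rather than a weaker bound like $\lambda_1 r \lambda_r^{-1/2}$. This requires not simply bounding $\|\Lambda_r^{1/2}(\tfrac{1}{N}\boldsymbol{\Phi}_r^\top\boldsymbol{\Phi}_r - I_r)\Lambda_r^{1/2}\|_{\mathrm{op}}$ crudely, but exploiting that the $m$-th eigenvalue of a perturbation of $\Lambda_r$ by a multiplicative factor $(I + O(r\sqrt{v/(N\delta)}/\sqrt{\lambda_r}))$-type term moves by roughly $\lambda_m$ times the relative perturbation, which is what yields the $\lambda_m$ (not $\lambda_1$) prefactor. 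A careful invocation of a Lidskii/Ostrowski-style inequality for congruences $\Lambda_r^{1/2} M \Lambda_r^{1/2}$, rather than a blunt Weyl bound, is what I expect to be the delicate step; once that is in place, the rest is variance estimation using the uniform bound $k(x,x) \le v$.
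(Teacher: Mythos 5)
The paper does not supply a proof of this lemma; it is cited directly from \citet{braun2006accurate} (their Theorem~4), so there is no in-paper argument for your reconstruction to be checked against. That said, your sketch is a faithful reconstruction of Braun's actual strategy --- Mercer decomposition, head/tail split at truncation rank $r$, a relative (congruence/Ostrowski-type) perturbation bound for the head driven by concentration of $\tfrac{1}{N}\Phi_r^\top\Phi_r$ around $I_r$, an absolute trace--Markov bound for the tail, and Weyl's inequality to assemble --- and you correctly identify the crux: the $\lambda_m$ (rather than $\lambda_1$) prefactor arises from viewing the head as a congruence of $\Lambda_r$. Concretely, the nonzero eigenvalues of $\Lambda_r^{1/2} M \Lambda_r^{1/2}$ coincide with those of $M^{1/2}\Lambda_r M^{1/2}$, to which Ostrowski's theorem applies, giving a multiplicative error of at most $\lambda_m\,\|M - I_r\|$; this is exactly Braun's ``relative--absolute'' bound. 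The one soft spot in your write-up is bookkeeping in the head estimate: the uniform bound $k(x,x)\le v$ gives $\phi_s(x)^2 \le v/\lambda_s \le v/\lambda_r$ for $s\le r$, so a Frobenius/Markov estimate for $\|\tfrac{1}{N}\Phi_r^\top\Phi_r - I_r\|$ already carries the $\lambda_r^{-1/2}$ factor (order $r\sqrt{v/(\lambda_r N\delta)}$), whereas you quote $\mathcal{O}\bigl(r\sqrt{v/(N\delta)}\bigr)$ and reintroduce $\lambda_r^{-1/2}$ only after the congruence step; the extra $\sqrt{r(r+1)}$ in Braun's constant reflects his particular aggregation of the entrywise variances. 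These are accounting issues, not gaps in the argument.
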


For the remainder of this section, we consider specific cases of kernel and input distributions for which we know properties of the spectrum, and derive lower bounds on the number of features needed so that the KL-divergence is not an increasing function of $N$.
\subsection{Squared Exponential Kernel and Gaussian Covariates}
We begin with the one-dimensional SE kernel and Gaussian covariates. The multivariate case follows a similar, though more involved argument and will be discussed in \cref{sec:multidim-gauss-se-lower}. Recall, if the covariates have variance $\beta^2$, then for any $r\in \mathbb{N}$
\[
\lambda_r = v\sqrt{\frac{2a}{A}}B^{r-1} \text{ \quad and \quad} \sum_{s=r}^\infty\lambda_s =  \frac{\lambda_r}{1-B}, 
\]

\noindent where $a=(4\beta^2)^{-1}, b = (2\ell^2)^{-1}, A = a+ b + \sqrt{a^2+4ab}$ and $B=b/A$. For some $\eta \in (0,1)$, choose $r = 1+\lceil\log_{B} (1-B)\sqrt{A/(2av^2)}N^{-\eta} \rceil$, so that $\sum_{s=r}^\infty \lambda_r \leq N^{-\eta}$ and $\lambda_r \geq B(1-B)N^{-\eta}$. Hence \cref{eqn:braun-bound-explicit} implies that for all $m$ with probability at least $1-\delta$,
\begin{align}
    \frac{|\lambda_{m} - \frac{1}{N}\rkffeigenvalue|}{\lambda_m} \leq  r\sqrt{\frac{r(r+1)v}{B(1-B)N^{1-\eta}\delta}} + \lambda_m^{-1}N^{-\eta} + \sqrt{\frac{2v}{\lambda_m^2N^{1+\eta}\delta}}.
\end{align}
For any fixed $\delta \in (0,1)$ the first term on the right hand side tends to zero with $N$ since $\eta <1$. For any $M \leq\log_{B} (\sqrt{A/(2av^2)}N^{-\eta}\sqrt{\delta})$, 
\begin{align}
    \frac{|\lambda_{M+1} - \frac{1}{N}\rkffeigenvalueM|}{\lambda_{M+1}} \leq  r\sqrt{\frac{r(r+1)v}{B(1-B)N^{1-\eta}\delta}} + \frac{\sqrt{\delta}}{2}+ \sqrt{\frac{v}{2N^{1-\eta}}}.
\end{align}
The second term is less than $1/2$ and the last term tends to $0$ for large $N$. We conclude that for any such $M$, the KL-divergence is bounded below by $c_N\frac{N\lambda_{M+1}}{8\noisevariance}=\Omega(N^{1-\eta})$, where $\lim_{N\to\infty} c_N=1$. 
\begin{rem}
For univariate Gaussian kernels, if we choose $\eta= .01$ in the above argument, we get that for any $M \leq \frac{1}{\log(1/B)}\log(N^{.01}\sqrt{\frac{2av^2}{\delta A}})=\Omega(\log N)$, the KL-divergence is $\Omega(N^{.99})$, and will therefore be large as $N$ increases. We therefore need $M$ to grow faster than this to avoid this if we want the KL-divergence to be small for large $N$.
\end{rem}

\subsection{The Isotropic SE-kernel and Multidimensional Gaussian Covariates }\label{sec:multidim-gauss-se-lower}
In order to obtain lower bounds in the multivariate case, we first obtain a lower bound on the individual eigenvalues of the operator $\mcK$.
\begin{restatable}[]{prop}{lbgauss}\label{prop:lower-bound-eigenvals-gauss} 
Suppose $k$ is an isotropic SE-kernel in $D$ dimensions with lengthscale $\ell$ and variance $v$. Suppose the training covariates are independently identically distributed according to an isotropic Gaussian measure, $\mu$, on $\R^D$ with covariance matrix $\beta^2\bfI$. For any $r \in \N$, we have 
\[
\lambda_r \geq \left(\frac{2a}{A}\right)^{D/2}B^{Dr^{1/D}}.
\]
where $\lambda_r$ denotes the $r^{th}$ largest eigenvalue of the operator $\mcK: L^2(\R^D,\mu) \to L^2(\R^D,\mu)$ defined by $(\mcK g)(x') = \int g(x)k(x,x')p(x)dx$ with $p(x)$ the density of the multivariate Gaussian at $x$.
\end{restatable}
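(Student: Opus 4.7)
The plan is to leverage the product structure of the isotropic SE kernel together with the isotropic Gaussian density. Both $k(x,x') = v\prod_{d=1}^D \exp(-(x_d-x_d')^2/(2\ell^2))$ and the density $p(x) = \prod_{d=1}^D p_1(x_d)$ factor across coordinates, so $\mathcal{K}$ on $L^2(\R^D,\mu)$ is the $D$-fold tensor product of the one-dimensional operator analyzed in \Cref{sec:se-gaussian}. Writing the one-dimensional eigenvalues as $\tilde{\lambda}_j = \sqrt{2a/A}\,B^j$ for $j \in \{0,1,2,\ldots\}$, the spectrum of $\mathcal{K}$ then consists of the products $(2a/A)^{D/2} B^{j_1+\cdots+j_D}$ indexed by $(j_1,\ldots,j_D) \in \{0,1,\ldots\}^D$, with tensor products of the $\phi_j$ as eigenfunctions. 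Completeness of this tensor basis follows from Fubini and the completeness of $\{\phi_j\}$ in each factor, so no eigenvalue is missed by this enumeration.

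Since $B \in (0,1)$, listing the eigenvalues in non-increasing order amounts to ordering multi-indices by increasing coordinate sum $s = j_1+\cdots+j_D$. Let $s^\star(r)$ denote the smallest $s$ such that at least $r$ multi-indices satisfy $j_1+\cdots+j_D \le s$. Then the $r$-th largest eigenvalue satisfies $\lambda_r \ge (2a/A)^{D/2} B^{s^\star(r)}$, because $B^{s^\star(r)}$ is the smallest eigenvalue attained among the top $r$. The number of multi-indices with coordinate sum at most $s$ is the stars-and-bars count $\binom{s+D}{D}$, so the claim reduces to the purely combinatorial statement
\[
s^\star(r) \le D r^{1/D}, \qquad\text{equivalently}\qquad \binom{\lfloor D r^{1/D}\rfloor + D}{D} \ge r.
\]

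This combinatorial estimate is the only real work. I would prove it by noting that $\binom{n+D}{D} = \prod_{i=1}^D (n+i)/i$, and verifying term-by-term that $(n+i)D \ge (n+D) i$ for $1 \le i \le D$ (which reduces to $D \ge i$), giving $\binom{n+D}{D} \ge ((n+D)/D)^D$. Applying this with $n = \lfloor D r^{1/D}\rfloor$ yields $(n+D)/D \ge r^{1/D} + 1 - 1/D \ge r^{1/D}$ for $D \ge 1$, so $\binom{n+D}{D} \ge r$ as required. The only place requiring care is the bookkeeping that justifies identifying the spectrum of $\mathcal{K}$ with the products of one-dimensional eigenvalues; beyond that, the argument is a short counting exercise and no serious obstacle is anticipated.
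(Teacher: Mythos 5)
Your proposal is correct and follows essentially the same route as the paper: decompose the operator as a $D$-fold tensor product of the one-dimensional SE/Gaussian operator, enumerate eigenvalues by the coordinate-sum $s$ of multi-indices, count $\binom{s+D}{D}$ multi-indices with sum at most $s$, and invert this count to bound the exponent. The only (welcome) difference is at the final combinatorial step: you use $\binom{n+D}{D} \ge \bigl((n+D)/D\bigr)^D$, which is valid for all $n\ge 0$ and avoids the paper's appeal to $\binom{\tilde s}{D}\ge(\tilde s/D)^D$, a bound that is vacuously problematic when $\tilde s < D$ (though the paper's conclusion still holds in that regime since $\tilde s < D \le Dr^{1/D}$).
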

The proof (\cref{app:lower-bounds}) relies on a counting argument and standard bounds on binomial coefficients.  We can now combine \cref{lem:braun-boundedkernel,prop:tail-eigenvalues-gaussian,prop:lower-bound-eigenvals-gauss} in order to bound the multivariate SE-kernel with Gaussian inputs. 

\begin{restatable}[]{prop}{lbMgauss}\label{prop:lower-bound-features-gauss} 
Let $k$ be an isotropic SE-kernel. Suppose $N$ covariates are sampled independent and identically from an isotropic Gaussian density with variance $\beta^2$ along each dimension. Define $M(N)$ to be any function of $N$ such that $\lim_{N \to \infty} M(N)/(\log N)^D = 0$; i.e. $M(N)=o((\log N)^D)$. Suppose inference is performed using any set of inducing inputs, $Z$ such that $|Z|=M(N)$. Then for any $\datay \in \R^N$, for any $\epsilon>0$ and for any $\delta \in (0,1)$, with probability at least $1-\delta$, $\KL{Q}{P} = \Omega(N^{1-\epsilon})$.
\end{restatable}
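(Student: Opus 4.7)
The plan is to instantiate the three-step strategy outlined immediately after Lemma \ref{lem:min-min-lower-bound}, specializing the ingredients to the isotropic SE-kernel with isotropic Gaussian covariates and leveraging Propositions \ref{prop:tail-eigenvalues-gaussian} and \ref{prop:lower-bound-eigenvals-gauss} to control operator eigenvalues and their tails, together with Lemma \ref{lem:braun-boundedkernel} for concentration. Since every summand in Lemma \ref{lem:min-min-lower-bound} is non-negative, I would retain only the $(M+1)$-th term and, provided $\tilde{\bm{\lambda}}_{M+1}/\sigma^2>3$ (a consequence of the concentration step), apply the inequality $a-\log(1+a)\geq a/2$ stated in the excerpt to reduce the task to showing $\tilde{\bm{\lambda}}_{M+1}\geq N^{1-\epsilon}$ with probability $1-\delta$.

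For the operator eigenvalue, Proposition \ref{prop:lower-bound-eigenvals-gauss} gives $\lambda_{M+1}\geq (2a/A)^{D/2}\exp\bigl(-D\alpha(M+1)^{1/D}\bigr)$ with $\alpha=-\log B>0$. The hypothesis $M(N)=o((\log N)^D)$ forces $(M+1)^{1/D}=o(\log N)$, so $D\alpha(M+1)^{1/D}$ is eventually smaller than $(\epsilon/2)\log N$ and hence $\lambda_{M+1}\geq cN^{-\epsilon/2}$ for all large $N$. It therefore suffices to establish $\tilde{\bm{\lambda}}_{M+1}/N\geq \tfrac{1}{2}\lambda_{M+1}$ with probability $1-\delta$: combined with the previous step, this yields $\tilde{\bm{\lambda}}_{M+1}\geq (c/2)N^{1-\epsilon/2}$, which exceeds $3\sigma^2$ eventually and produces $\KL{Q}{P}=\Omega(N^{1-\epsilon/2})=\Omega(N^{1-\epsilon})$.

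For the concentration step I would apply Lemma \ref{lem:braun-boundedkernel} with $m=M+1$ and the truncation index $r=\lceil C(\log N)^D\rceil$, where $C$ is a constant depending on $\epsilon, D$, and the kernel hyperparameters. I would pick $C$ inside the interval $\bigl((\epsilon/(2\alpha))^D,\,(1/(D\alpha))^D\bigr)$, which is non-empty precisely when $\epsilon<2/D$; the claim for larger $\epsilon$ follows a fortiori from a smaller one. With this choice, Proposition \ref{prop:tail-eigenvalues-gaussian} gives $\sum_{s\geq r}\lambda_s=O\bigl((\log N)^D N^{-\alpha C^{1/D}}\bigr)$, which is $o(\lambda_{M+1})$ because $\alpha C^{1/D}>\epsilon/2$. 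The lower bound $\lambda_r\geq (2a/A)^{D/2}N^{-D\alpha C^{1/D}}$ from Proposition \ref{prop:lower-bound-eigenvals-gauss} makes the first term in Braun's bound, of order $\lambda_{M+1}\,r^{3/2}\lambda_r^{-1/2}N^{-1/2}\delta^{-1/2}$, essentially $\lambda_{M+1}\cdot\mathrm{polylog}(N)\cdot N^{(D\alpha C^{1/D}-1)/2}$, which is $o(\lambda_{M+1})$ since $D\alpha C^{1/D}<1$; the third term $\sqrt{2v(\sum_{s>r}\lambda_s)/(N\delta)}$ is even smaller. Dividing Braun's bound by $\lambda_{M+1}$ then shows the relative error $|\tilde{\bm{\lambda}}_{M+1}/N-\lambda_{M+1}|/\lambda_{M+1}$ tends to zero, so the desired $\tilde{\bm{\lambda}}_{M+1}\geq N\lambda_{M+1}/2$ holds eventually with probability at least $1-\delta$.

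The main obstacle is in the concentration step: the truncation parameter $r$ must simultaneously be small enough that $\lambda_r$ does not decay too fast (so the ``variance'' term in Braun's bound vanishes) and large enough that the residual mass $\sum_{s>r}\lambda_s$ is negligible relative to $\lambda_{M+1}$. Propositions \ref{prop:tail-eigenvalues-gaussian} and \ref{prop:lower-bound-eigenvals-gauss} give matching stretched-exponential rates in $r^{1/D}$ with constants $\alpha$ and $D\alpha$ respectively, and it is exactly this factor-of-$D$ gap that keeps the admissible window for $C$ non-empty; any looser tail bound or tighter lower bound would close the window and force a different concentration inequality. The restriction $\epsilon<2/D$ coming from this window is harmless because the conclusion is monotone in $\epsilon$.
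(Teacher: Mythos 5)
Your proposal is correct and follows essentially the same route as the paper: keep only the $(M+1)$-st summand in \cref{lem:min-min-lower-bound}, invoke \cref{lem:braun-boundedkernel} to control $\bigl|\lambda_{M+1}-\tfrac{1}{N}\tilde{\bm\lambda}_{M+1}\bigr|/\lambda_{M+1}$, and use \cref{prop:tail-eigenvalues-gaussian} and \cref{prop:lower-bound-eigenvals-gauss} to bound the residual mass and the eigenvalue denominators. The paper parameterizes the truncation level as $r=\lceil(\tfrac{1}{\alpha D}\log N^{\gamma})^D\rceil$ for $\gamma\in(0,1/2)$ and works with an explicit reference index $M^*+1=\lfloor(\tfrac{1}{D}\log_B N^{-\zeta/D})^D\rfloor$, $\zeta\in(0,\gamma)$, then chooses $\gamma=1/4$ and $\zeta=\min\{\gamma/2,D\epsilon\}$ at the end; your version writes $r=\lceil C(\log N)^D\rceil$ with $C$ sitting in the window $\bigl((\epsilon/(2\alpha))^D,(1/(D\alpha))^D\bigr)$, which is a transparent rephrasing of the same trade-off (variance term needs $D\alpha C^{1/D}<1$, residual mass needs $\alpha C^{1/D}>\epsilon/2$). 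Both treat large $\epsilon$ by monotonicity. One small transcription slip: the first Braun term carries a factor $r\sqrt{r(r+1)}\sim r^2$, not $r^{3/2}$ as you wrote, but since $r$ is polylogarithmic in $N$ this does not affect the argument.
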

\begin{rem}
To illustrate the meaning of this lower bound, consider the case when $M(N) = (\log N)^{D-1}$. Then for $N$ sufficiently large, we have $\KL{Q}{P} > N^{.99}$ implying for large $N$ the KL-divergence must be large. On the other hand from our upper bounds in \cref{sec:examples}, we know that if we fix any positive constant $\gamma$ there exists a constant $C$ (that does not depend on $N$) such that if inference is performed with $M(N) = c(\log N)^{D}$ inducing inputs placed according to an approximate $M$-DPP, the KL-divergence is less than $\gamma$ in expectation.
\end{rem}
The proof follows from \cref{lem:min-min-lower-bound} by choosing $r$ appropriately in \cref{lem:braun-boundedkernel} to bound the empirical eigenvalues and using \Cref{prop:tail-eigenvalues-gaussian,prop:lower-bound-eigenvals-gauss} to bound eigenvalues in the appropriate directions to control the error term. Details are given in \cref{app:lower-bounds}.
\subsection{Lower Bounds for Kernels with Polynomial Decay}
As discussed in \cref{sec:examples} some popular choices of kernels lead to eigenvalues that decay polynomially instead of exponentially. For example, \citet[Theorem 2.1]{widom_asymptotic_1963} implies that the eigenvalues of the operator associated to the Mat\'ern kernel with smoothness parameter $\nu$ and covariates uniformly distributed in the unit cube has eigenvalues satisfying $C_1 m^\frac{-2\nu+D}{D}\leq \lambda_m\leq C_2 m^\frac{-2\nu+D}{D}$ for some constant $C_1$ and $C_2$ independent of $m$ i.e. $\lambda_m= \Theta(m^\frac{-2\nu+D}{D})$.\footnote{See \citet{seeger2008information} for more details on the derivation of this from Widom's Theorem.}

\begin{restatable}{prop}{lbpolykernels}
Let $k$ be a continuous kernel, and $\mu$ a measure on $\R^D$ with density $p$ such that the associated operator $\mcK$ has eigenvalue satisfying $C_1 m^{-\eta}\leq \lambda_m\leq C_2 m^{-\eta}$ for all $m \geq 1$, some $\eta>1$ and constants $C_1,C_2>0$. Suppose inference is performed using any set of inducing inputs $Z$ such that $|Z|=M(N)$ with $M(N)$ any function such that $\lim_{N\to \infty}\frac{M(N)}{N^\zeta}=c$ (i.e.~$M=O(N^\zeta))$ for some $c < \infty$ and  $\zeta\in (0,\frac{\eta-1}{\eta(4+\eta)})$. Then for any $\delta \in (0,1)$ with probability at least $1-\delta$, $\KL{Q}{P}=\Omega(N^{1-\eta\zeta})$.
\end{restatable}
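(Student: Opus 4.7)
The plan is to follow exactly the template laid out in the multivariate SE case (\cref{prop:lower-bound-features-gauss}) but now calibrate the free parameter $r$ in Braun's concentration bound (\cref{lem:braun-boundedkernel}) to the polynomial decay rate $\lambda_m = \Theta(m^{-\eta})$. The starting point is \cref{lem:min-min-lower-bound}: since every summand is non-negative, we may retain only the $(M{+}1)$-th term, and using $\log(1+a)\le a/2$ whenever $a>3$, we obtain the elementary estimate
\begin{equation*}
\KL{Q}{P} \;\ge\; \frac{\tilde{\bm\lambda}_{M+1}}{4\noisevariance}\qquad\text{whenever } \tilde{\bm\lambda}_{M+1}/\noisevariance > 3.
\end{equation*}
Thus it suffices to show that, with probability at least $1-\delta$, we have $\tilde{\bm\lambda}_{M+1}\ge c\,N\lambda_{M+1}$ for some absolute constant $c\in(0,1)$; combined with the assumed lower bound $\lambda_{M+1}\ge C_1(M+1)^{-\eta}$ and the hypothesis $M=O(N^\zeta)$, this yields $\tilde{\bm\lambda}_{M+1}=\Omega(N\cdot N^{-\eta\zeta})=\Omega(N^{1-\eta\zeta})$, which both verifies the premise $\tilde{\bm\lambda}_{M+1}/\noisevariance>3$ for $N$ large and delivers the claimed rate.

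To produce the required relative error bound, I apply \cref{lem:braun-boundedkernel} at index $m=M+1$ with a cutoff $r=r(N)$ still to be chosen. Using $\lambda_r=\Theta(r^{-\eta})$ and $\sum_{s\ge r}\lambda_s=\Theta(r^{1-\eta})$ (valid since $\eta>1$), and dividing through by $\lambda_{M+1}\asymp M^{-\eta}$, the three error terms in \cref{eqn:braun-bound-explicit} become, up to constants depending only on $C_1,C_2,v,\delta$,
\begin{equation*}
\frac{|\lambda_{M+1}-\tfrac{1}{N}\tilde{\bm\lambda}_{M+1}|}{\lambda_{M+1}} \;\lesssim\; \frac{r^{(4+\eta)/2}}{\sqrt{N}} \;+\; \Bigl(\frac{M}{r}\Bigr)^{\!\eta}\!\cdot r \;+\; M^{\eta}\sqrt{\frac{r^{1-\eta}}{N}}.
\end{equation*}
The balancing step is to pick $r$ so that each term is at most $1/4$ (say). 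The first term forces $r\lesssim N^{1/(4+\eta)}$, and the second forces $r\gtrsim M^{\eta/(\eta-1)}$; the third is then automatic. These two constraints are simultaneously satisfiable precisely when $M^{\eta/(\eta-1)}\lesssim N^{1/(4+\eta)}$, i.e.\ when $M=O(N^\zeta)$ with $\zeta<\tfrac{\eta-1}{\eta(4+\eta)}$, which is exactly the hypothesis. Choosing $r=\lfloor N^{1/(4+\eta)}\rfloor$ then gives $\tilde{\bm\lambda}_{M+1}\ge \tfrac{N}{2}\lambda_{M+1}$ with probability $\ge 1-\delta$.

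The main obstacle is the bookkeeping in the previous paragraph: Braun's bound mixes $\lambda_m$, $\lambda_r$, and a tail sum, and one must be careful that the implied constants in $\lambda_r=\Theta(r^{-\eta})$ and $\sum_{s\ge r}\lambda_s=\Theta(r^{1-\eta})$ genuinely do not depend on $N$ or $M$, so that the relative error bound can be made smaller than $1/2$ uniformly. This is where the two-sided assumption $C_1 m^{-\eta}\le\lambda_m\le C_2 m^{-\eta}$ (as opposed to the one-sided upper bound used in the upper-bound proofs) is essential. Combining the two displayed inequalities above and absorbing constants yields, for all sufficiently large $N$,
\begin{equation*}
\KL{Q}{P} \;\ge\; \frac{\tilde{\bm\lambda}_{M+1}}{4\noisevariance} \;\ge\; \frac{C_1}{8\noisevariance}\,N\,(M+1)^{-\eta} \;=\; \Omega(N^{1-\eta\zeta}),
\end{equation*}
as desired. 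The probability $1-\delta$ arises from the single application of \cref{lem:braun-boundedkernel}, and no union bound over $m$ is needed because only the index $M+1$ enters the final estimate.
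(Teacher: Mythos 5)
Your strategy is the same as the paper's: invoke \cref{lem:min-min-lower-bound}, keep the $(M{+}1)$-th term, and use \cref{lem:braun-boundedkernel} to turn the matrix eigenvalue $\tilde{\bm\lambda}_{M+1}$ into the operator eigenvalue $\lambda_{M+1}$ up to a constant-factor relative error. Your three displayed error terms and the resulting constraint $\zeta < \tfrac{\eta-1}{\eta(4+\eta)}$ also agree with the paper.

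There is, however, a genuine slip in the final choice of the tuning parameter. You take $r=\lfloor N^{1/(4+\eta)}\rfloor$, which sits exactly at the boundary of the constraint $r\lesssim N^{1/(4+\eta)}$ that you derived from the first error term. Substituting this $r$ into that term (before absorbing $\delta$ into the implicit constant) gives
\[
r\sqrt{\frac{r(r+1)v}{\lambda_r N\delta}} \;\asymp\; \frac{r^{(4+\eta)/2}}{\sqrt{C_1 N\delta/v}} \;\asymp\; \sqrt{\frac{v}{C_1\delta}},
\]
which is a \emph{constant} in $N$, not a vanishing quantity, and it is certainly not $\le 1/4$ once $\delta$ is small (which is precisely the interesting regime of ``with probability at least $1-\delta$''). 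So the conclusion $\tilde{\bm\lambda}_{M+1}\ge \tfrac{N}{2}\lambda_{M+1}$ does not follow from this choice. The standard fix (and the one the paper uses) is to take $r=N^\gamma$ for $\gamma$ \emph{strictly} inside the admissible window, i.e. any $\gamma$ with $\frac{\zeta\eta}{\eta-1}<\gamma<\frac{1}{4+\eta}$ — nonempty precisely because $\zeta<\frac{\eta-1}{\eta(4+\eta)}$ — so that all three error terms are genuinely $o(1)$, after which the rest of your argument goes through verbatim.
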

\begin{proof}
We have $\lambda_r = \Theta(r^{-\eta})$ so $\sum_{s=r}^\infty \lambda_s = \Theta(r^{1-\eta})$. Choose $r=N^\gamma$ for some $\gamma \in (0,1)$ and $M+1=N^{\zeta}$. In this case the error term in \cref{lem:braun-boundedkernel} becomes:
\[
    \frac{|\operatoreigenvalueM - \frac{1}{N}\rkffeigenvalueM|}{\operatoreigenvalueM} = \mcO\left(\delta^{-1/2}(N^{2\gamma+\gamma\eta/2-1/2}+N^{\zeta\eta+\gamma(1-\eta)} +N^{\zeta\eta+\gamma(1-\eta)/2-1/2})\right).
\]
Following the earlier proof sketch, we must show the RHS tends to a value less than $1$. To ensure the first of the three summands is small, we choose $\gamma\in (0,\frac{1}{4+\eta})$. Given this choice, for large $N$ the third summand in the error term is always smaller than the second, so that this entire term is $o(1)$ given the supposition that $\zeta\leq \gamma\frac{\eta-1}{\eta}$. We conclude that if $M=N^{-\zeta}$ for $\zeta\in (0,\gamma\frac{\eta-1}{\eta})$ with probability $1-\delta$, the lower bound in \cref{lem:min-min-lower-bound} is at least $N\lambda_{M+1} = \Omega(N^{1-\zeta\eta})$.
\end{proof}
In the case of $D$-dimensional Mat\'ern kernels and a uniform covariate distribution ($\eta=\frac{2\nu+D}{D}$),  by choosing $\zeta$ as large as possible, this means that for an arbitrary $\epsilon>0$, the KL-divergence is lower bounded by an increasing function of  $N$ if fewer than $\Omega\left(N^{\frac{2\nu D}{(2\nu+5D)(2\nu+D)}-\epsilon}\right)$ inducing variables are used. This lower bound on the number of inducing variables becomes vacuous (i.e.~the exponent tends to $0$) as $\eta \to 1$ from above, meaning it is not useful when applied to many kernels that we expect would be very difficult to approximate. There is a large gap between the upper and lower bound, particularly when $\eta$ is near $1$ (i.e. for non-smooth kernels). The gap  between the bounds is in part introduced by needing to choose $M$ so that the error term from \cref{lem:braun-boundedkernel} remains lower order. If we heuristically allow ourselves to replace matrix eigenvalues with the corresponding scaled operator eigenvalues and neglect the error term, we obtain a lower bound of $\Omega(N^{\frac{1}{\eta}})$, bringing the lower bound more closely in line with the upper bound of $\mcO(N^{\frac{1}{\eta -1}})$. The remaining gap between these bounds is essentially due to only bounding a single eigenvalue in the lower bound, while bounding the sum of eigenvalues in the upper bound. Improving the analysis to close the gap between the upper and lower bounds is important for better understanding the efficacy of sparse methods with non-smooth kernels.

\section{Practical Considerations}\label{sec:experiments}

Up to this point, we proved statements about the \emph{asymptotic} scaling properties of variational sparse inference. Our results indicated which models could be well-approximated with relatively few inducing points for sufficiently large data sets. In this section, we investigate the limitations and practical implications of our results to real situations with finite amounts of data. We consider the applicability of our results to practical implementations, and perform empirical analyzes on how marginal likelihood bounds converge. Additionally, our proof suggests a specific procedure for choosing inducing points that differs from methods that are currently commonly applied. We empirically investigate this procedure, and provide recommendations on how to initialize inducing points.

\subsection{Finite Precision in Practical Implementations}
Any practical implementation of a Gaussian process method will be influenced by the finite precision with which floating-point numbers are represented in a computer. These issues are not explicitly addressed in our mathematical analysis, which assume calculations are in exact arithmetic. Here, we briefly discuss the effects of this finite precision on 1) the implementation, 2) the precision to which we can expect convergence in practice compared to our analysis, and 3) the way that this is quantified by marginal likelihood bounds.

\subsubsection{Ill-conditioning \& Cholesky Decomposition Failure}
Finding various quantities for Gaussian process regression requires computing log determinants and matrix inverses. When the smallest and largest eigenvalues of the kernel matrix are many orders of magnitude apart, these computations become \emph{ill-conditioned}, meaning that small changes on the input can lead to large changes to the output. For example, tiny changes in the elements of the vector $\vf_X$ can lead to huge variations in the vector $\Kff\inv\vf_X$ when $\Kff$ has an eigenvalue close to zero \citep[see][\S 6.2 for a visual illustration]{deisenroth2019distill}. This typically occurs when considering many highly-correlated inputs to the GP (e.g~ clusters of nearby points with similar input values). These points have a high probability of having very similar function outputs under the prior. This ill-conditioning arises naturally in GPs when considering e.g.~evaluating the prior density on function values: small differences in the function values result in huge changes to the value of the probability density. If the sensitivity of the calculations becomes too large, then the finite precision with which numbers are represented can lead to considerable error.

In the variational methods we consider, determinants and inverses are found based on the Cholesky decomposition of the kernel matrix: $\Kff + \sigma^2\bfI$ for exact implementations, and $\Kuu$ for sparse approximations.\footnote{Conjugate gradient and Lanczos methods also give exact answers when they are run for sufficient iterations, and have been successfully applied in practice \citep{gibbs1997efficient,davies2016thesis,gardner2018gpytorch}.} When faced with a problem that is too ill-conditioned, most Cholesky implementations terminate with an exception. This can be seen as desirable from the point of view that a successful run usually indicates an accurate result.

Even in cases when the data set can be well-described by a GP model with hyperparameters that lead to reasonably well-conditioned matrices, conditioning problems frequently arise during training, when the log marginal likelihood or ELBO is values for other candidate hyperparameter values. For example, for stationary kernels, large lengthscales contribute to conditioning problems, as they increase the correlation between distant points. Hyperparameters are typically found by (approximately) maximizing the log marginal likelihood (Eqs.~\ref{eqn:full-marginal-likelihood} and \ref{eqn:elbo}). Since these objective and their derivatives can be evaluated in closed-form, fast-converging quasi-Newton methods such as (L-)BFGS are commonly used. These methods often propose large steps, which lead to the evaluation of hyperparameter settings where the Cholesky decomposition raises an exception. Even though these hyperparameter settings are often of poor quality, (L-)BFGS still requires an evaluation of the objective function to continue the search. The Cholesky errors must therefore be avoided to successfully complete the entire optimization procedure.

\subsubsection{Improving Matrix Conditioning}
Increasing the smallest eigenvalue of the kernel matrix improves the conditioning. In exact implementations this can be done by increasing the likelihood noise variance, as we need to decompose $\Kff + \sigma^2\bfI$ which has eigenvalues that are lower-bounded by $\sigma^2$\footnote{This can be done by reparameterizing the noise to have a lower bound.}. On the other hand, the sparse variational approximation requires inverting $\Kuu$ \emph{without any noise}. However, it is important to note that the conditioning of $\Kuu$ is better than $\Kff$ for two reasons. Firstly, it is a smaller matrix, and often issues of conditioning are less severe for smaller matrices. Secondly, if inducing points are selected using a method that introduces negative correlations clusters of highly-correlated points are unlikely to appear in $\Kuu$. Nevertheless, it is still possible for the Cholesky decomposition to fail, particularly when trying different hyperparameter settings when maximizing the ELBO.

To improve robustness in the sparse approximation, a small diagonal ``jitter'' matrix $\epsilon\bfI$, with $\epsilon$ commonly around $10^{-6}$, is added to $\Kuu$, introducing a lower bound the on its eigenvalues. This change is often enough to avoid decomposition errors during optimization. While this modification changes the problem that is solved, the effect is typically small. Some software packages \citep[e.g.~][]{gpy2014} increase jitter adaptively by catching exceptions inside the optimization loop to only introduce bias where it is necessary.

\subsubsection{Quantifying the Effect of Jitter}
Adding jitter to the covariance matrix $\Kuu$ corresponds to defining the inducing variables as noisy observations of the GP. While this still produces a valid approximation to the posterior and ELBO \citep[]{titsias2009variationaltech}, the approximation obtained is generally of marginally lower quality and there is a small amount of corresponding slack in the ELBO \citep[][Theorem 4]{matthews_scalable_2016}. We summarize the effect on the ELBO and upper bound $\mcU_2$ in the following proposition. Define $\Qff(\epsilon) \coloneqq \Kuf\transpose(\Kuu+\epsilon\bfI)^{-1}\Kuf$, so that $\Qff(0) = \Qff$.
\newcommand{\Qjitter}{\Qff(\epsilon)}
\begin{restatable}[]{prop}{jitter}\label{prop:jitter-bound}
Let $\mcL_\epsilon$ denote the evidence lower bound computed with jitter $\epsilon \geq 0$ added to $\Kuu$, that is
\[ 
\mcL_\epsilon = -\frac{1}{2}\log\det(\Qjitter+\noisevariance\bfI) -\frac{1}{2} \datay\transpose(\Qjitter+\noisevariance\bfI)^{-1}\datay-\frac{N}{2}\log 2 \pi-\frac{1}{2\noisevariance}\Tr(\Kff-\Qjitter).
\]
 Then $\mcL_\epsilon$ is monotonically decreasing in $\epsilon$. Similarly if $\mcU_\epsilon$ denotes the upper bound \cref{eqn:upper2} computed with added jitter to $\Kuu$, that is
\[
\mcU_\epsilon \coloneqq -\frac{1}{2}\log\det(\Qjitter+\noisevariance\bfI) -\frac{1}{2} y\transpose(\Qjitter+\Tr(\Kff-\Qjitter)\bfI+\noisevariance\bfI)^{-1}y-\frac{N}{2}\log 2 \pi.
\]
Then $\mcU_\epsilon$ is monotonically increasing in $\epsilon$. In particular, adding jitter can only make the upper bound on the log marginal likelihood larger and the ELBO smaller. 
\end{restatable}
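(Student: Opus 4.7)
\textbf{Proof plan for \Cref{prop:jitter-bound}.}

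The plan is to prove both monotonicity statements by differentiating with respect to $\epsilon$, and showing that the resulting expressions have the appropriate sign via standard Loewner order and trace inequalities. The key preliminary computation is:
\[
\frac{d}{d\epsilon}\Qjitter = \frac{d}{d\epsilon}\Kuf\transpose(\Kuu+\epsilon\bfI)^{-1}\Kuf = -C, \qquad C \coloneqq \Kuf\transpose(\Kuu+\epsilon\bfI)^{-2}\Kuf,
\]
with $C \succeq 0$ (it is a congruence of a PSD matrix). Consequently $\frac{d}{d\epsilon}\Tr(\Kff - \Qjitter) = \Tr(C) \geq 0$, which already confirms the monotonicity direction intuitively: as $\epsilon$ grows, $\Qjitter$ worsens as an approximation to $\Kff$ in the Loewner order.

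For the ELBO, write $W = \Qjitter + \noisevariance\bfI$ and apply the chain rule termwise to obtain
\[
\frac{d\mcL_\epsilon}{d\epsilon} = \tfrac{1}{2}\Tr\!\left[\left(W^{-1} - \tfrac{1}{\noisevariance}\bfI\right) C\right] - \tfrac{1}{2}\, y\transpose W^{-1} C W^{-1} y.
\]
The second term is non-positive because $W^{-1}CW^{-1}$ is PSD (congruence of $C$). For the first term, since $\Qjitter \succeq 0$ we have $W \succeq \noisevariance \bfI$, so $W^{-1} \preceq \frac{1}{\noisevariance}\bfI$, i.e.\ $W^{-1} - \frac{1}{\noisevariance}\bfI \preceq 0$; combined with $C \succeq 0$ and the fact that $\Tr(AB) \leq 0$ whenever $A \preceq 0$ and $B \succeq 0$, the first term is non-positive as well. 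Hence $\mcL_\epsilon$ is monotonically decreasing.

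For the upper bound, write $V = \Qjitter + t(\epsilon)\bfI + \noisevariance\bfI$ with $t(\epsilon) = \Tr(\Kff - \Qjitter)$, so $dV/d\epsilon = -C + \Tr(C)\bfI$. Differentiation yields
\[
\frac{d\mcU_\epsilon}{d\epsilon} = \tfrac{1}{2}\Tr(W^{-1} C) + \tfrac{1}{2}\, y\transpose V^{-1}\bigl(\Tr(C)\bfI - C\bigr) V^{-1} y.
\]
The first summand is non-negative since both $W^{-1}$ and $C$ are PSD. For the second, the crucial fact is that for any PSD matrix $C$ we have $C \preceq \Tr(C)\bfI$ (all eigenvalues of $C$ are bounded by $\Tr(C)$); therefore $\Tr(C)\bfI - C \succeq 0$, its congruence by $V^{-1}$ remains PSD, and the quadratic form is non-negative. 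Thus $\mcU_\epsilon$ is monotonically increasing.

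The only step that requires any real care is recognizing the correct inequality to apply in each bound: $W \succeq \noisevariance\bfI$ (which handles the ELBO's log-det and trace terms together) and $C \preceq \Tr(C)\bfI$ (which handles the shift by $t(\epsilon)$ in the quadratic term of $\mcU_\epsilon$). Everything else is routine matrix calculus, so I do not anticipate any serious obstacle.
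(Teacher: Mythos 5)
Your proof is correct, and it takes a genuinely different route from the paper. The paper compares $\mcL_\epsilon$ and $\mcL_{\epsilon'}$ (and $\mcU_\epsilon,\mcU_{\epsilon'}$) directly for two finite jitter values $\epsilon < \epsilon'$, starting from the Loewner inequality $\Qff(\epsilon') \prec \Qff(\epsilon)$ and then applying \cref{prop:spsdmatrices} to the quadratic forms; for the log-det$+$trace combination in $\mcL$ it uses an intermediate estimate of the form $\log\det A \leq \log\det B + \noisevariance^{-1}\Tr(A-B)$ for $A \succ B \succ \noisevariance\bfI$ (which encodes the same $\log(1+a)\le a$ idea you implicitly use). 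You instead differentiate in $\epsilon$ and check the sign of $d\mcL_\epsilon/d\epsilon$ and $d\mcU_\epsilon/d\epsilon$, which centralizes the bookkeeping into a single PSD object $C = \Kuf\transpose(\Kuu+\epsilon\bfI)^{-2}\Kuf$. Your calculus is right, and the two pivotal inequalities you isolate --- $W^{-1} \preceq \tfrac{1}{\noisevariance}\bfI$ (from $\Qjitter\succeq 0$) for the ELBO, and $C \preceq \Tr(C)\bfI$ for the quadratic term of $\mcU_\epsilon$ --- are precisely the same structural facts driving the paper's argument, just appearing in infinitesimal rather than finite form. The paper's finite comparison avoids matrix calculus and sidesteps any smoothness caveats at $\epsilon=0$ when $\Kuu$ is singular, while your derivative approach is arguably tidier to write and makes the sign of each contribution more visibly localized.
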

The proof (\cref{app:experiments}) is a consequence of $\Qff(\epsilon) + \noisevariance \bfI \succ \Qff(\epsilon') + \noisevariance \bfI$ for $\epsilon'>\epsilon\geq 0$.
\Cref{prop:jitter-bound} shows that even with jitter the upper and lower bounds are still valid. However, they are not exactly equal, even when $M=N$, due to the additional gap caused by the jitter. From a practical point of view, the impact is typically very small, with a gap being introduced on the order of a few nats. 

To summarize, we saw 1) that jitter was needed to stabilize the computation of the hyperparameter objective functions using standard implementations of Cholesky decomposition, and 2) that jitter and finite floating-point precision prevented the approximate posterior and bounds from converging to their exact values. Notably, we can quantify the effect of the finite precision calculations \emph{using the same bounds} as what is used to determine the effect of using an approximate inducing point posterior (\cref{prop:jitter-bound}). The variational bounds we analyzed in this work therefore provide a unified way of measuring the effect of both exact arithmetic approximate posteriors and the impact of finite precision arithmetic on the quality of the approximation.

\subsection{Placement of Inducing Inputs}
When training a sparse Gaussian process regression \citep{titsias_variational_2009} model, we need to select the kernel hyperparameters as well as the inducing inputs, with the hyperparameters determining the generalization characteristics of the model, and the inducing inputs the quality of the sparse approximation. In the time since \citet{snelson_sparse_2006} and \citet{titsias_variational_2009} introduced joint objective functions for all parameters, it has become commonplace to find the final set of inducing variables by optimizing the objective function together with the hyperparameters. Because this makes the inducing input initialization procedure less critical for final performance, less attention has been placed on it in recent years than in e.g.~the kernel ridge regression literature.\footnote{Kernel ridge regression lacks a joint objective function for the approximation and hyperparameters. Hyperparameters are commonly selected through cross-validation.} However, the number of optimization parameters added by the inducing inputs is often large, and convergence can be slow, which makes the optimization cumbersome.

Our results suggest that gradient-based optimization of the ELBO is not necessary for setting the inducing variables. Selecting enough of inducing points is sufficient for obtaining an arbitrarily good approximation. For the common squared exponential kernel if hyperparameters are fixed, our upper and lower bounds imply that optimization of inducing points leads to at most a constant factor fewer inducing points than a good initialization. In this section, we investigate the performance of various inducing point selection methods in practice. We consider commonly used methods (uniform sampling, K-means, and gradient-based optimization), and methods that our proofs are based on ($M$-DPP and RLS). In addition, we propose using the intialization used for approximately sampling the $M$-DPP (\cref{alg:det_init_mcmc}) as an inducing point selection method. While our theoretical results do not prove anything for this method, it avoids the additional cost and complexity of running a Markov chain, while still being leading to negative correlations between inducing point locations. We refer to this method as \emph{greedy variance selection}, since it greedily selects the next inducing point based on which has the highest marginal variance in the conditioned prior $p(\vf\given\vu)$, i.e.~$\argmax \diag[\Kff - \Kuf\Kuu\inv\Kuf]$.\footnote{An equivalent approach, derived through different motivations, has been previously used for approximating the kernel matrix in SVMs \citep{fine2001efficient} and applied to sparse GP approximations \citep{foster2009stable}.}

We set the free parameters for each of the methods as follows. For K-means, we run the Scipy implementation of K-means++ with $M$ centres. Gradient-based optimization is initialized using greedy variance selection. This choice was made since it was found to perform better than uniform selection, and our goal is to quantify how much can be gained by doing gradient-based optimization, and whether it is worth the cost. We ran $10^4$ steps of L-BFGS, at which point any improvement was negligible compared to adding more inducing variables. Approximate $M$-DPP sampling was done following \cref{alg:det_init_mcmc}, using $10^4$ iterations of MCMC. For RLS, we use an adaptation of the public implementation of \citet[Algorithm 3]{Musco_ridge_leverage}, which omits many of the constants derived in the proofs, and therefore loses theoretical guarantees.\footnote{Their implementation is available at: \url{https://github.com/cnmusco/recursive-nystrom}.} We additionally modify the algorithm to ensure that it selects exactly $M$ inducing points.

We consider 3 data sets from the UCI repository that are commonly used in benchmarking regression algorithms, ``Naval" ($N_{train}=10740,N_{test}=1194,D=14$) , ``Elevators" ($N_{train}=14939,N_{test}=1660,D=18$) and ``Energy" ($N_{train}=691,N_{test}=77,D=8$). These data sets were chosen as near-exact sparse approximations could be found, so convergence could be illustrated.\footnote{Not all data sets exhibit this property. For instance, the ``kin40k'' data set still isn't near convergence when $M = \frac{N}{2}$ due to very short optimal lengthscales. A step functions being present would cause this, and would indicate that squared exponential kernels are inappropriate.} Naval is the result of a physical simulation and the observations are essentially noiseless. To make statistical estimation more difficult, we add independent Gaussian noise with standard deviation $0.0068$ to each observation. For all experiments, we use a squared exponential kernel with automatic relevance determination (ARD), i.e.~a separate lengthscale per input dimension. 

\subsubsection{Fixed Hyperparameters}
We first consider regression with fixed hyperparameters to illustrate convergence in a situation that is directly comparable to our theoretical results. We investigate which of the inducing point selection methods recovers the exact model with the fewest inducing points. The hyperparameters are set to the optimal values for an exact GP model, or for ``Naval" a sparse GP with 1000 inducing points. We find the hyperparameters by maximizing the exact GP log marginal likelihood using L-BFGS. This setting is for illustrative purposes only, as computing exact log marginal likelihood is not feasible in practical situations where sparse methods are of actual interest. In the next section we consider hyperparameters that are learned using the ELBO (Eq.~\ref{eqn:elbo}).

\begin{figure}[ht]
    \centering
    \includegraphics[width=.35\textwidth,trim=.5cm  .5cm .5cm .8cm, clip]{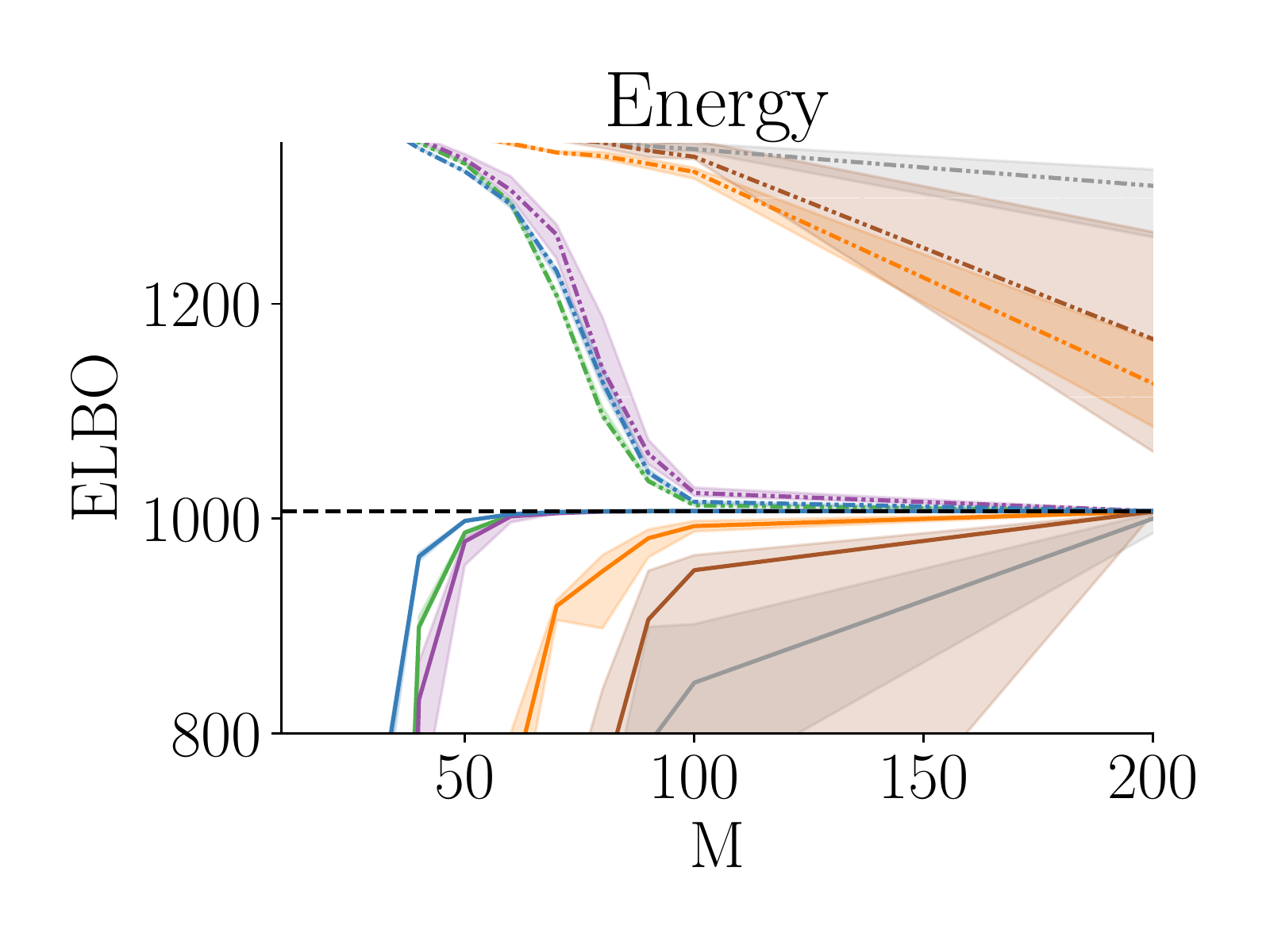}
    \includegraphics[width=.315\textwidth,trim=1.7cm  .5cm .5cm .8cm, clip]{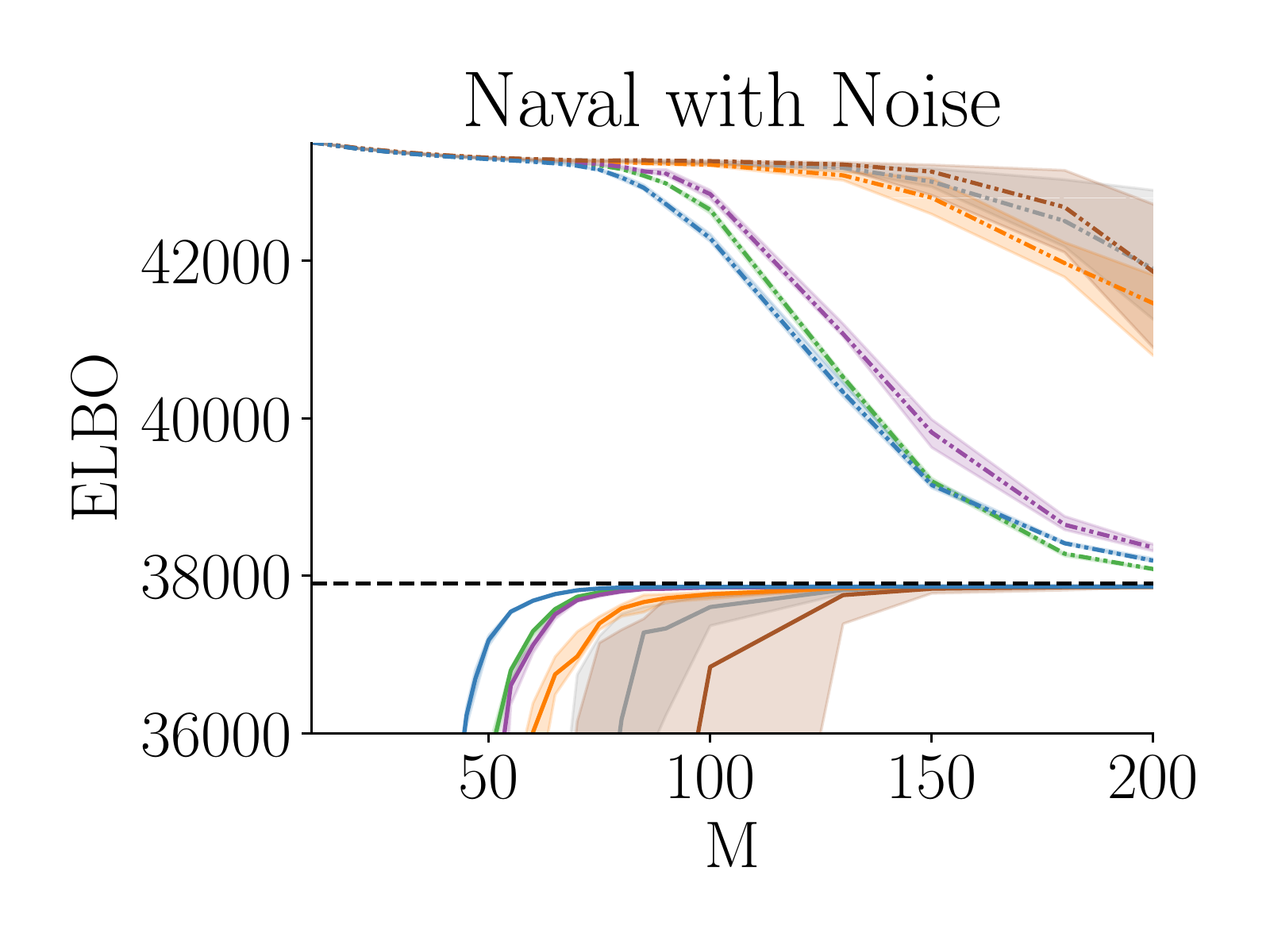}
    \includegraphics[width=.315\textwidth, trim=1.7cm  .5cm .5cm .8cm, clip]{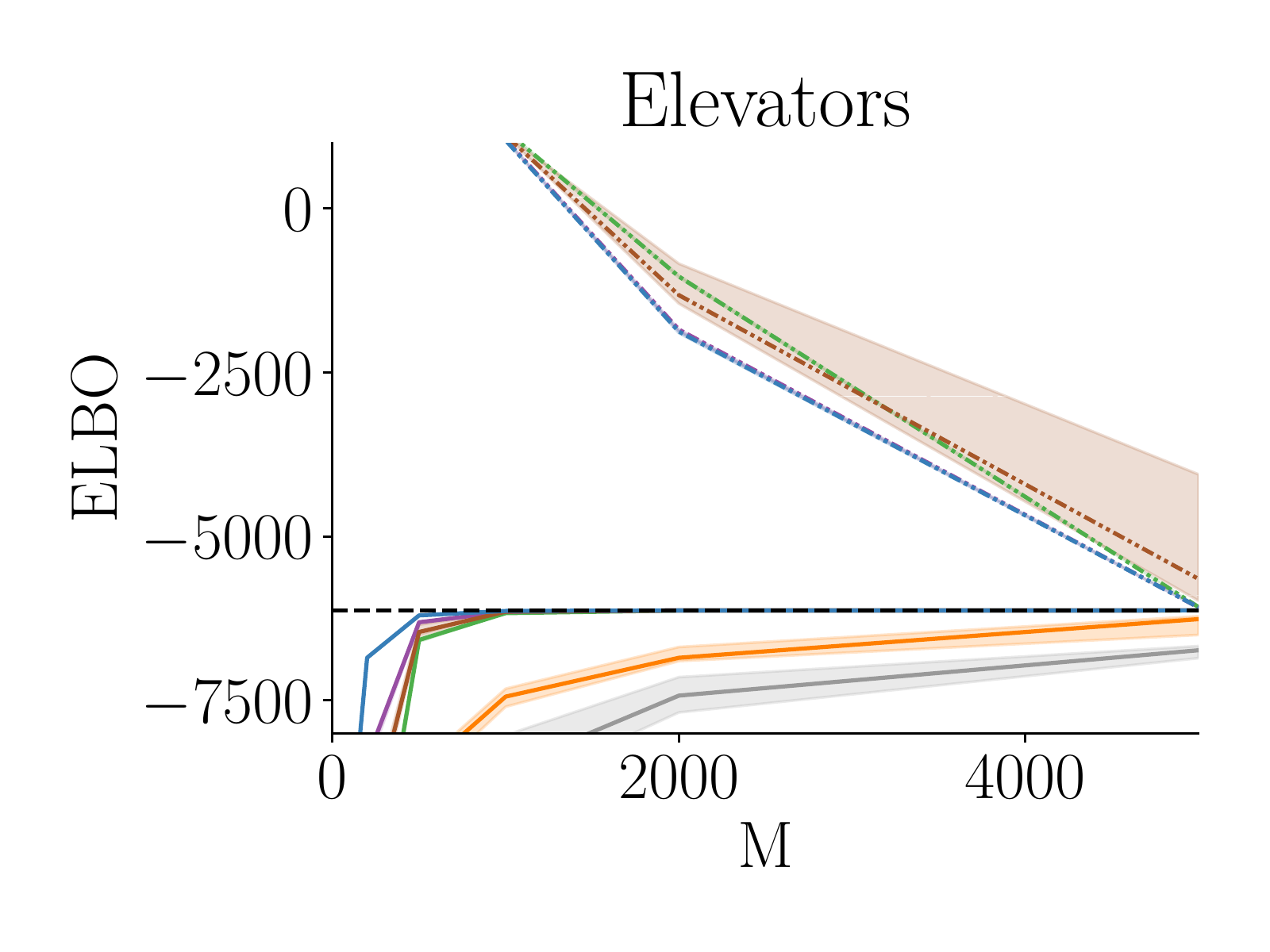}
    \includegraphics[width=.35\textwidth,trim=.6cm  .5cm .5cm 1.1cm, clip]{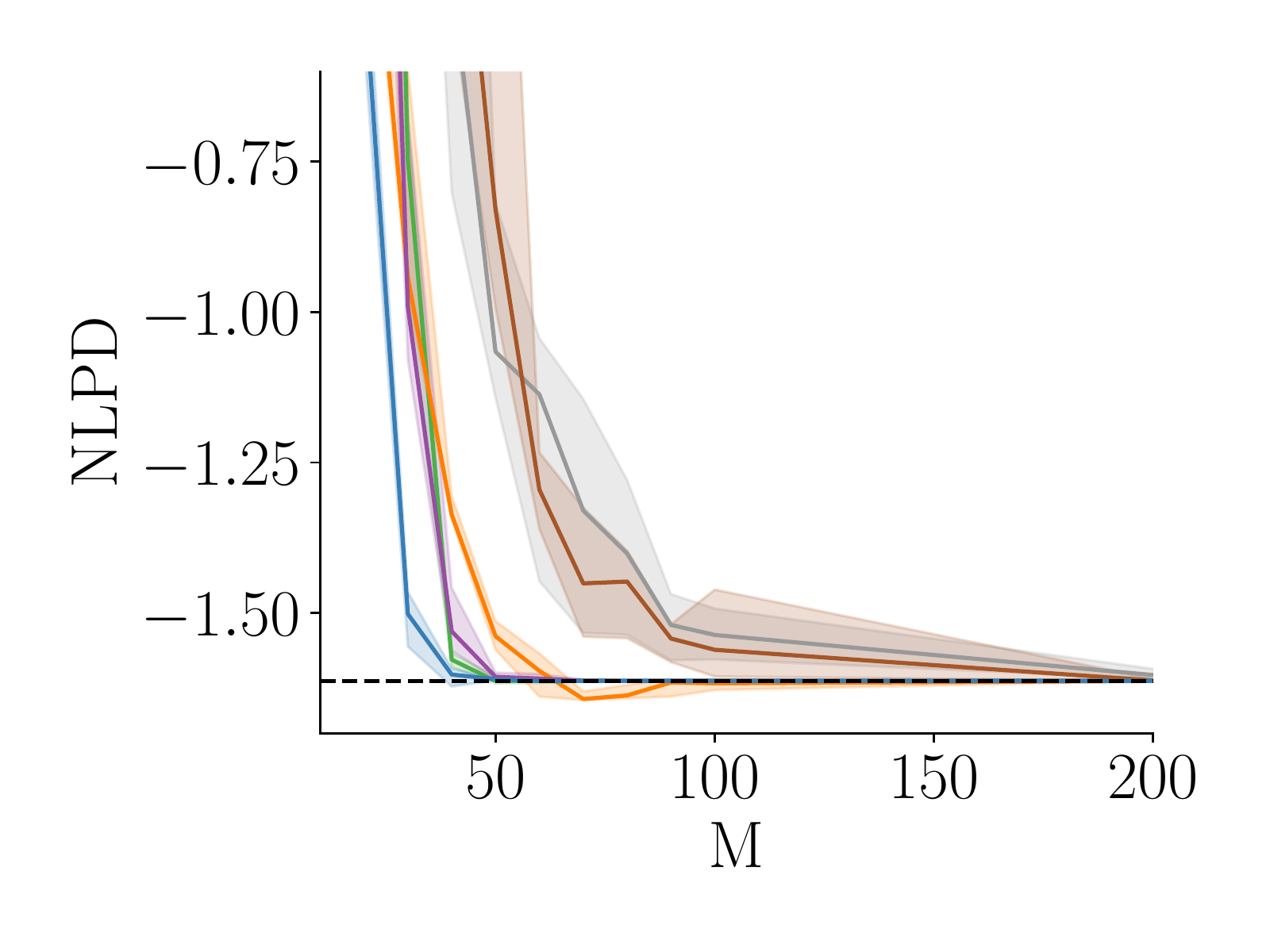}
    \includegraphics[width=.31\textwidth,trim=1.7cm  .5cm .5cm 1.1cm, clip]{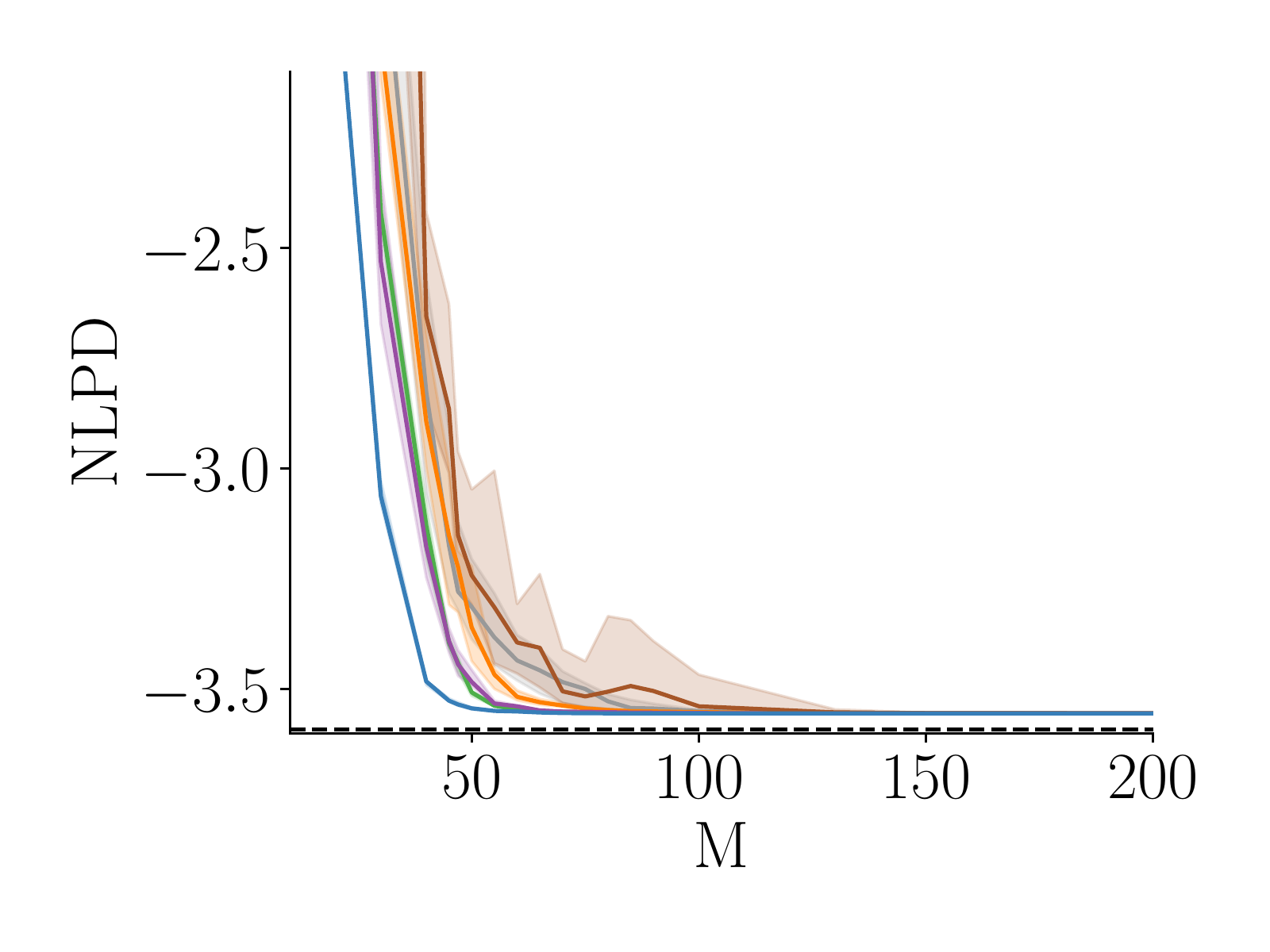}
    \includegraphics[width=.31\textwidth,trim=1.7cm  .5cm .5cm 1.1cm, clip]{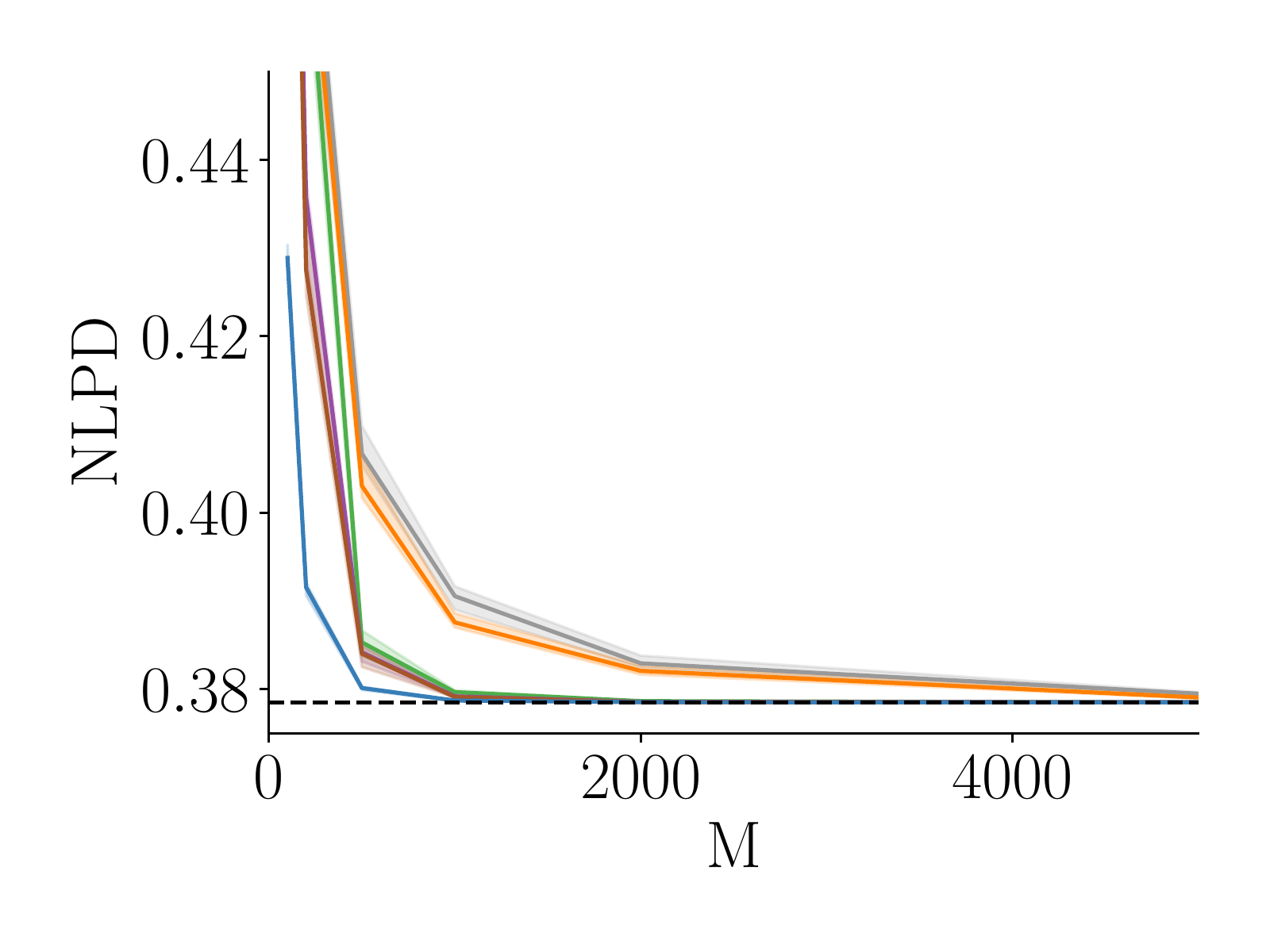}
    \includegraphics[width=.35\textwidth,trim=.6cm  .5cm .5cm .9cm, clip]{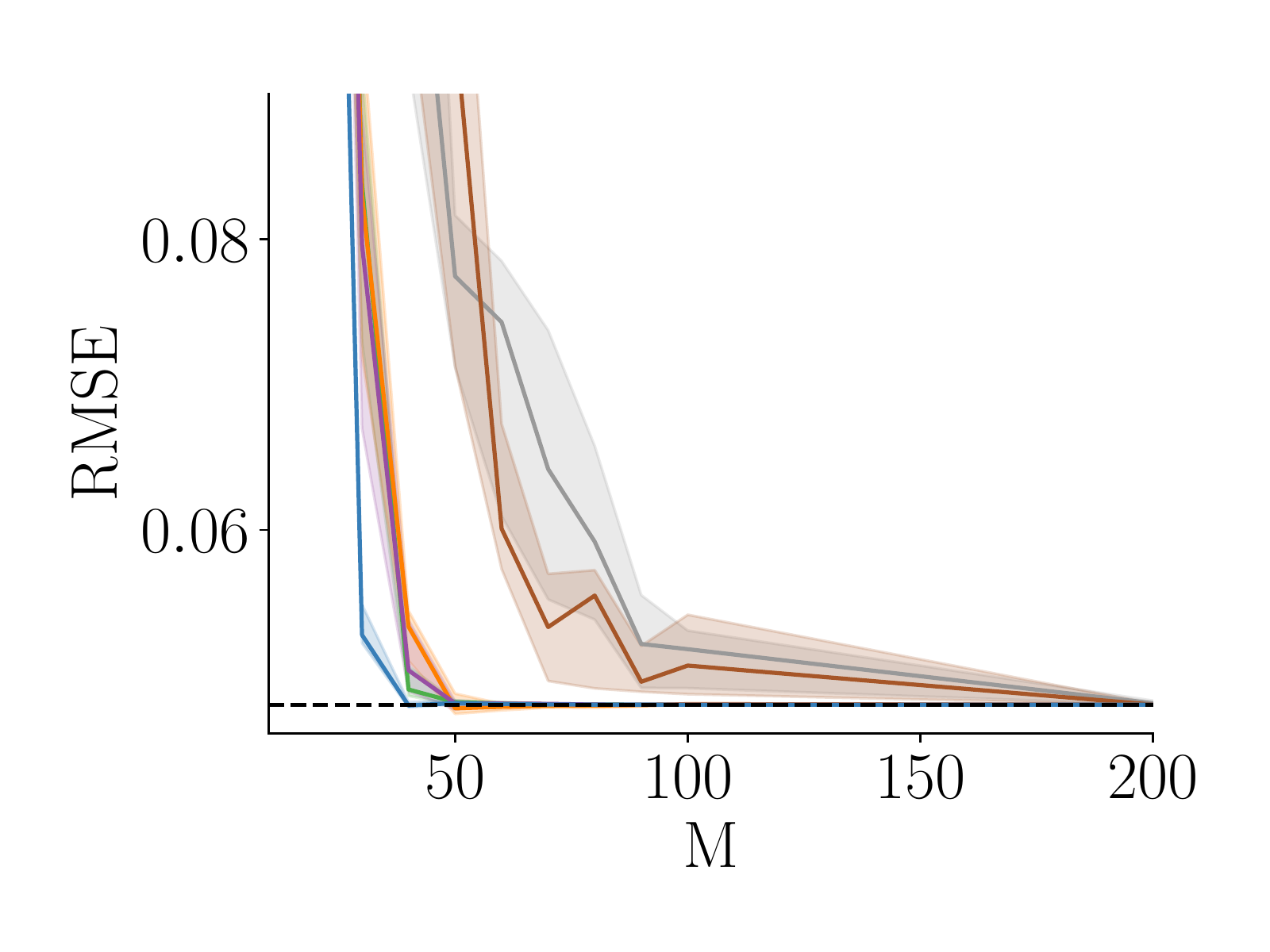}
    \includegraphics[width=.31\textwidth,trim=1.7cm  .5cm .5cm .9cm, clip]{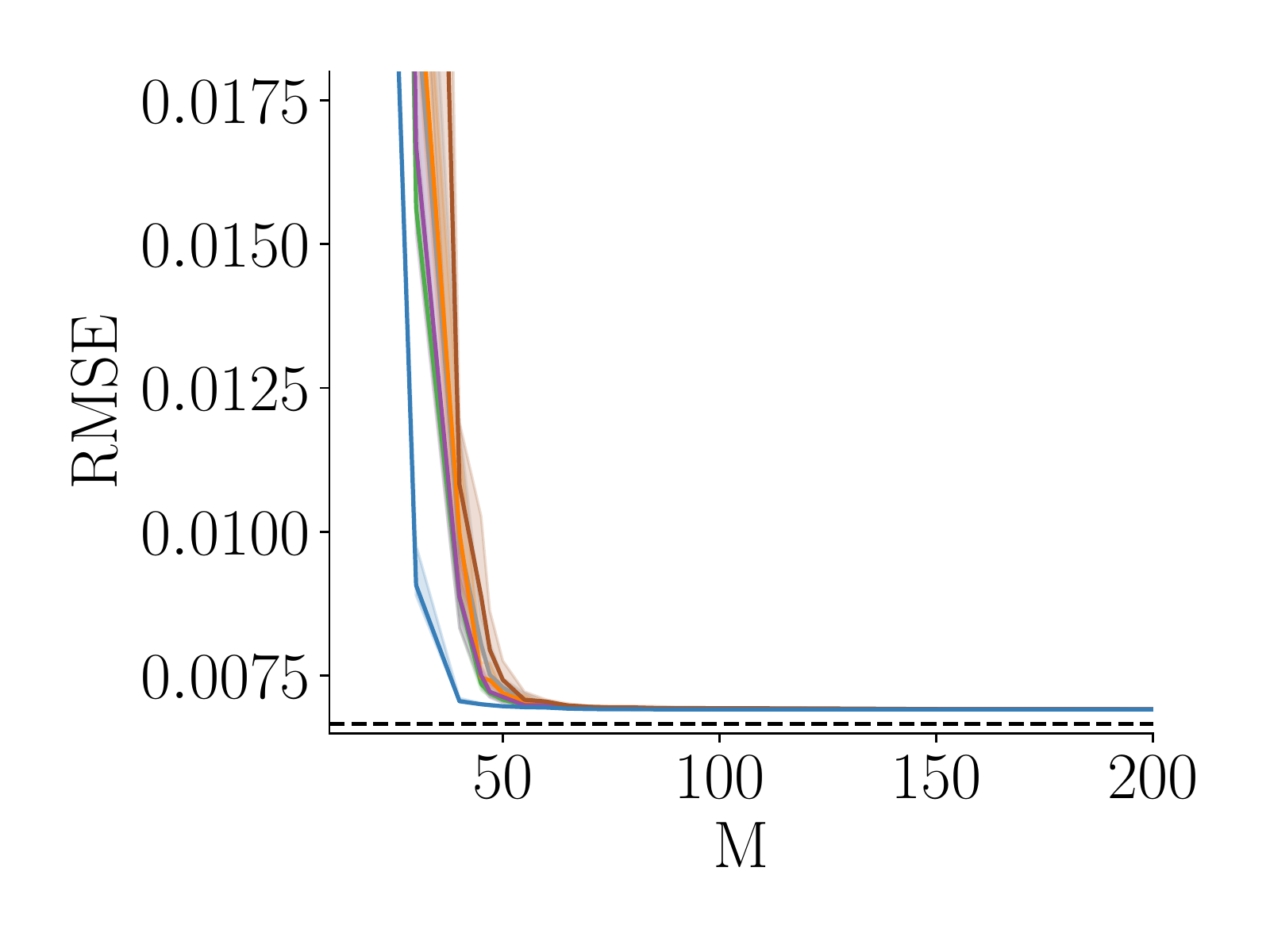}
    \includegraphics[width=.31\textwidth,trim=1.7cm  .5cm .5cm .9cm, clip]{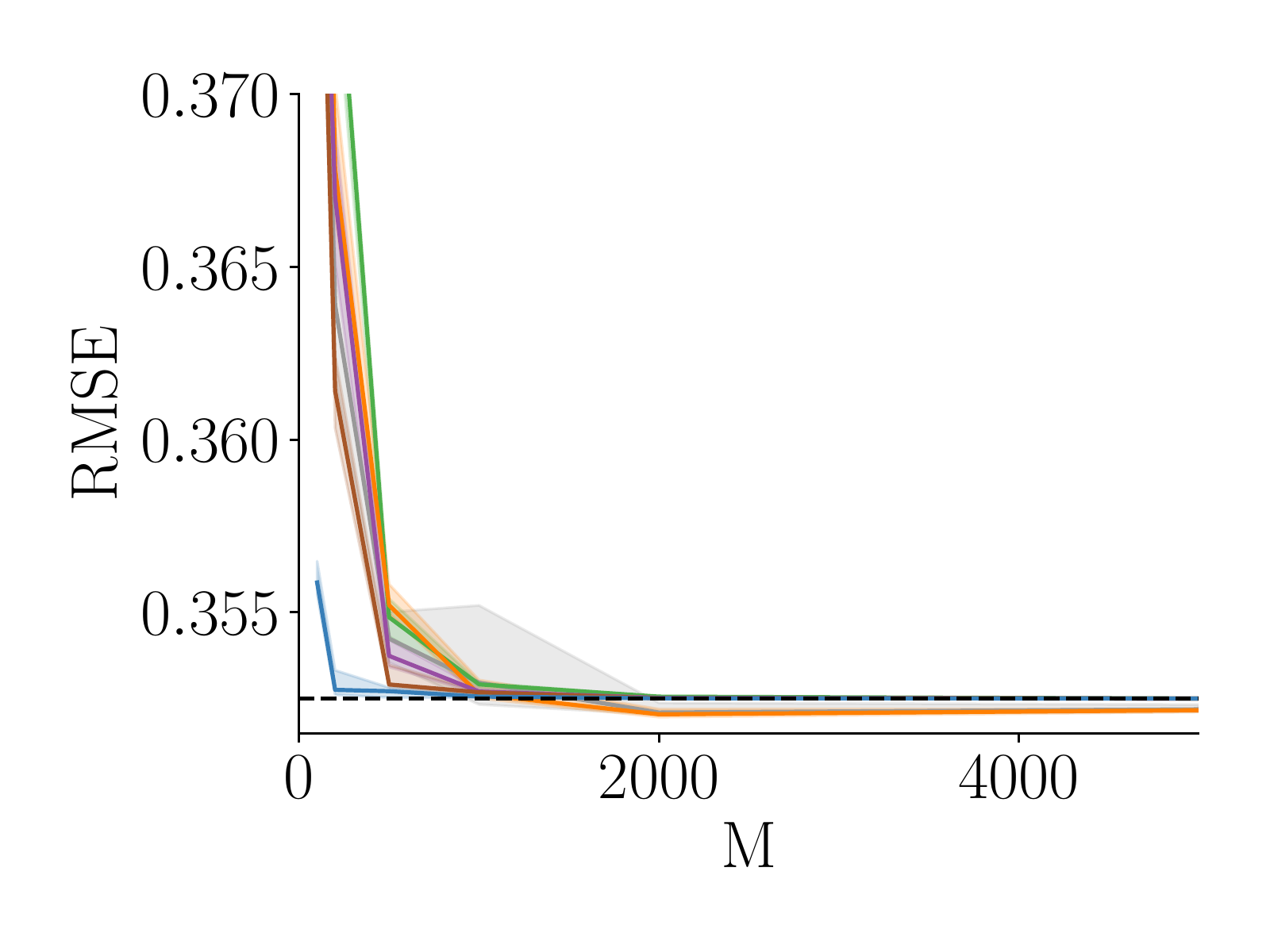}
    \includegraphics[width=\textwidth]{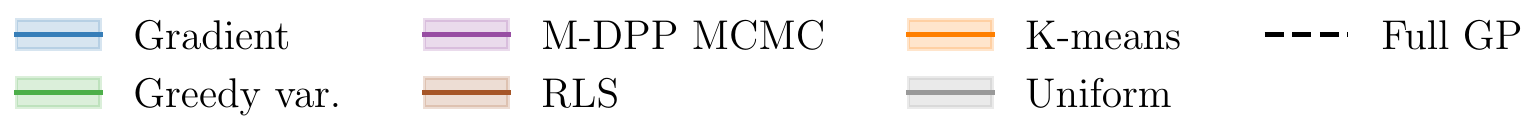}
    \caption{Performance of various methods for selecting inducing inputs on 3 data sets (each column corresponds to one data set) with fixed model hyperparameters. The top row shows the evidence lower bounds as well as an upper bound on the log marginal likelihood ($\mcU_2$ in \cref{lem:titsias-aposteriori}), the middle row the per datapoint negative log predictive density on held-out test points and the bottom row the root mean square error on test data. The solid lines show the median of 10 initializations using the given method for selecting inducing inputs, while the shaded region represents the $20-80\%$. The dashed black line shows the performance of the exact GP regressor on the given data set.}
    \label{fig:fixed-hyp}
\end{figure}

\Cref{fig:fixed-hyp} shows the performance of various methods of selecting inducing points as we vary $M$, as measured by the evidence lower bound, test root mean squared error and per data point test negative log predictive density. From the results, we can observe the following:
\begin{itemize}
    \item For very sparse models where the ELBO is considerably lower than the true marginal likelihood, gradient-based tuning of the inducing inputs consistently performs best in all metrics.
    \item The benefit of gradient-based tuning is small when many inducing points are added, \emph{provided they are added in the good locations}. Greedy variance selection and $M$-DPP find these good locations, as they consistently recover the true GP's performance with only a small number of additional inducing variables.
    \item K-means, uniform subsampling, and RLS tend to underperform, and require far more inducing variables to converge to the exact solution. In our experiment, they never converge quicker than greedy variance selection.
    \item In terms of the upper bound, greedy variance selection and $M$-DPP sampling both provide the best results.
\end{itemize}

Greedy variance selection seems to provide all the desirable properties in this case: convergence to the exact results with few inducing variables, simple to implement, and fast since it does not require as many expensive operations as optimization or sampling. The approximate ridge leverage score algorithm is also reasonably fast, and perhaps careful tuning of hyperparameters or different algorithms for approximate ridge leverage scores could lead to improved performance in practice.

\subsubsection{Training procedure and hyperparameter optimization}
In the previous section, the hyperparameters were fixed to values maximizing the log marginal likelihood. When sparse GP approximations are applied in practice, these optimal values are unknown, and they are instead found via maximizing the ELBO, as an approximation to maximizing the exact log marginal likelihood. This comes at the cost of introducing a bias in the hyperparameters towards models where $\KL{Q}{P}$ is small \citep{turner_sahani_2011}, as implied by \cref{eqn:lb-kl-relation}. The most noticeable effect in sparse GP regression is the overestimation of $\noisevariance$ and a bias toward models with smoother sample functions \citep{bauer_understanding_2016}.

Our results imply that for large enough data sets, an approximation with high sparsity can be found for the optimal hyperparameter setting that has a small KL-divergence to the posterior. This implies that the bias in the hyperparameter selection also is likely to be small. An impediment to finding the high-quality approximation for the optimal hyperparameters, is that our results depends on the inducing inputs being chosen based on properties of the kernel with the same hyperparameters that are used for inference. Here, we investigate several procedures for jointly choosing the hyperparameters and inducing inputs, in the regime where enough inducing variables are used to recover a close to exact model.

We propose a new procedure based on the greedy variance selection discussed in the previous section. To account for the changing hyperparameters, we alternately optimize the hyperparameters, and reinitialize the inducing inputs with greedy variance selection \emph{using the updated hyperparameters}. This avoids the high-dimensional non-convex optimization of the inducing inputs, while still being able to tailor the inducing inputs to the kernel. In effect, the method behaves a bit like variational Expectation-Maximization (EM) \citep{beal2003variational}, with the inducing input selection taking the place of finding the posterior. When enough inducing points are used, the reinitialization is good enough to make the ELBO almost tight for the current setting of hyperparameters. We terminate when the reinitialization does not improve the ELBO. We note that reinitialization would not benefit K-means or uniform initializations (beyond random chance), as the inducing points that are selected do not depend on the setting of the kernel hyperparameters. 

\begin{figure}[htb]
    \centering
    \includegraphics[width=.35\textwidth,trim=.5cm .5cm .5cm .5cm, clip]{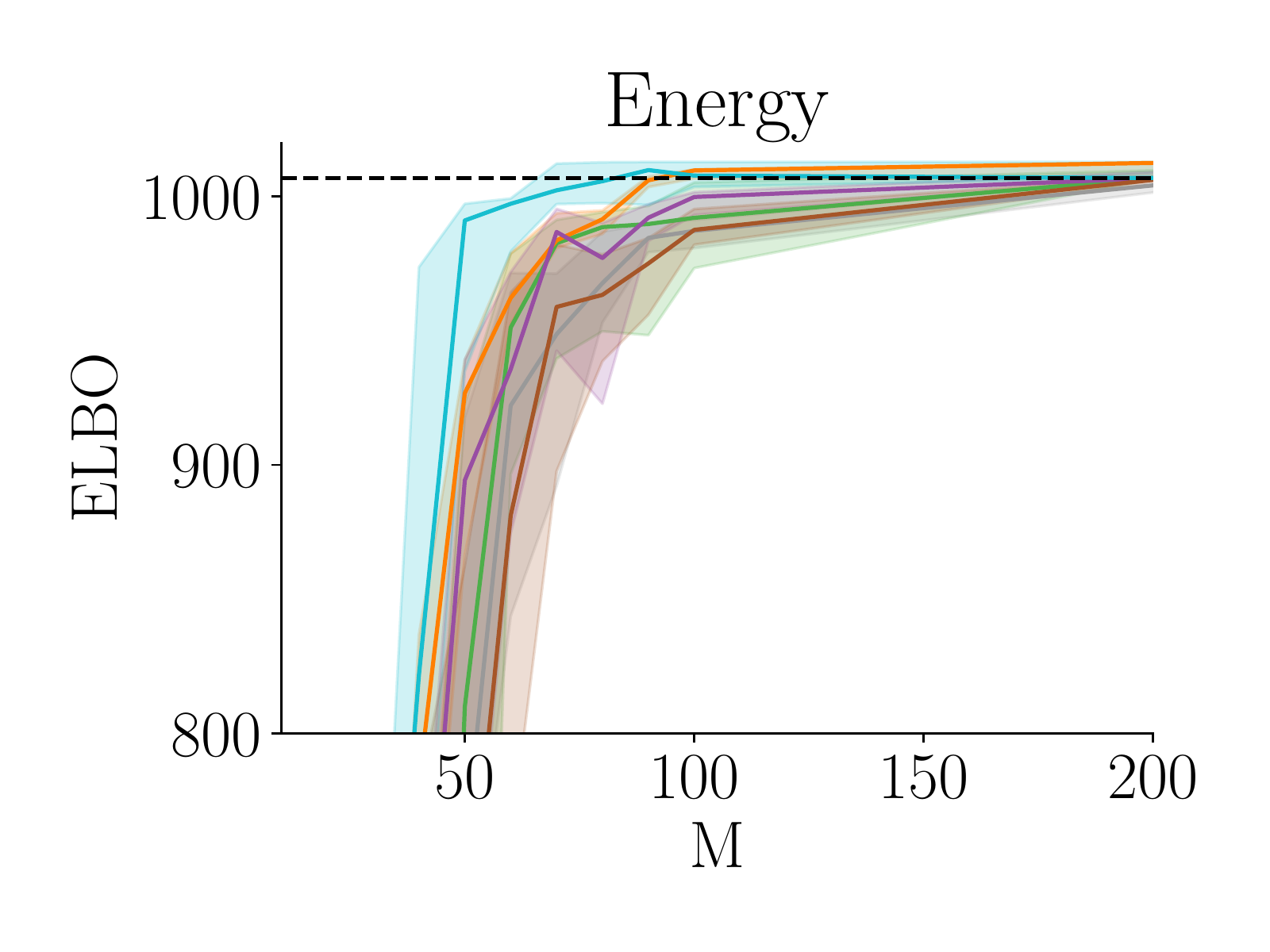}
    \includegraphics[width=.315\textwidth, trim=1.7cm .5cm .5cm .5cm, clip]{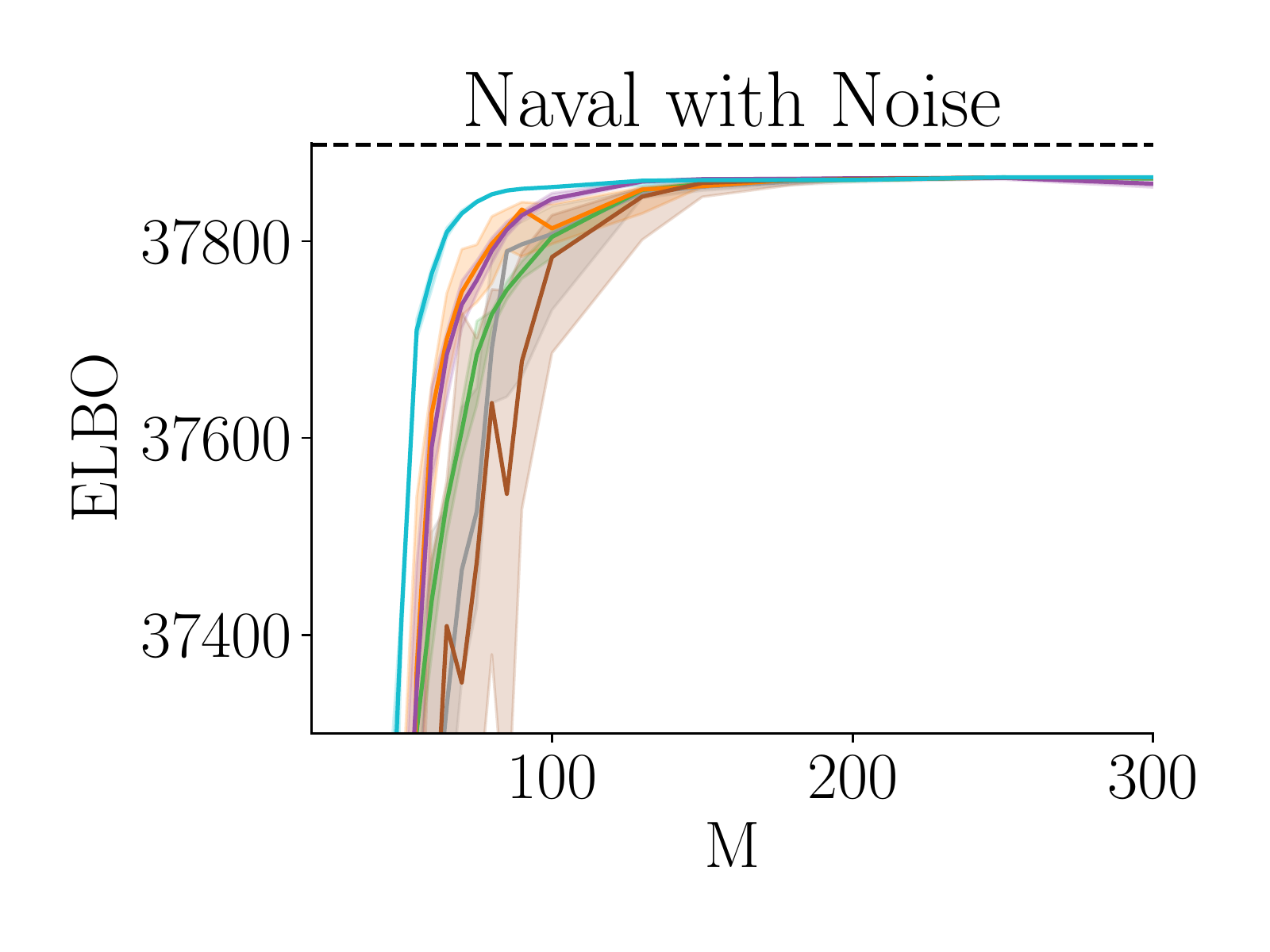}
    \includegraphics[width=.315\textwidth, trim=1.7cm .5cm .5cm .5cm, clip]{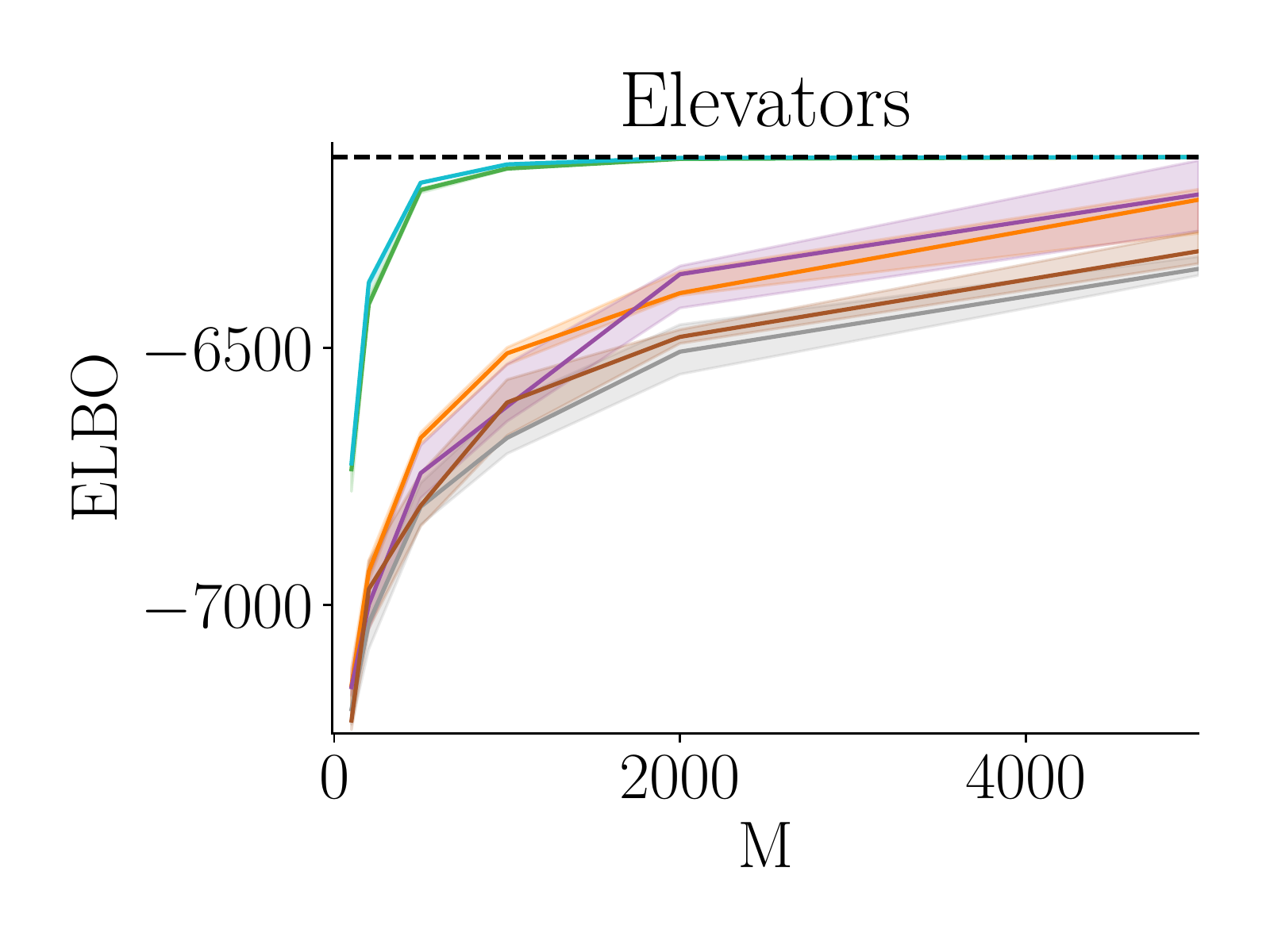}
    \includegraphics[width=.35\textwidth, trim=.5cm .5cm .5cm .5cm, clip]{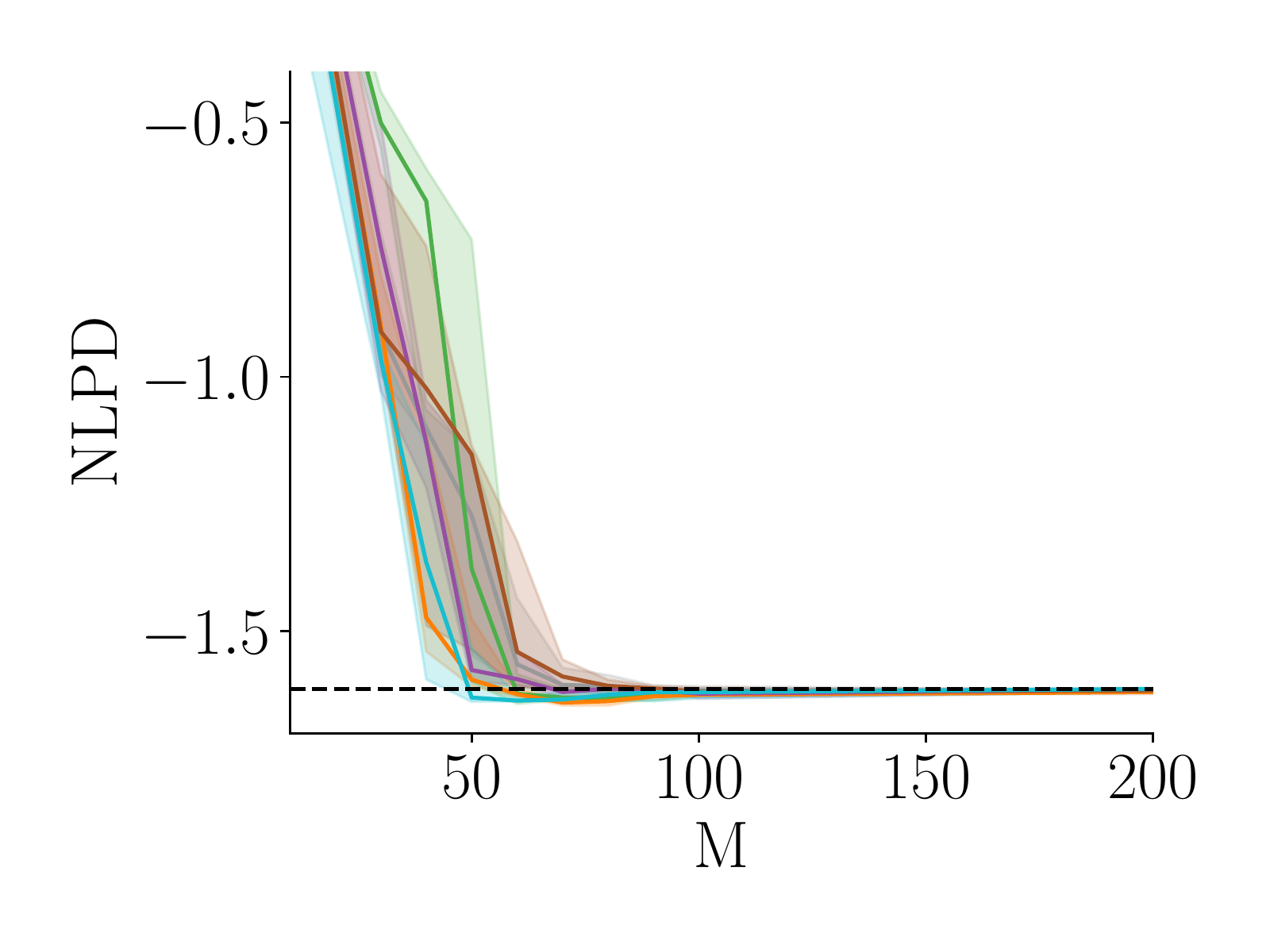}
    \includegraphics[width=.315\textwidth, trim=1.95cm .5cm .5cm .5cm, clip]{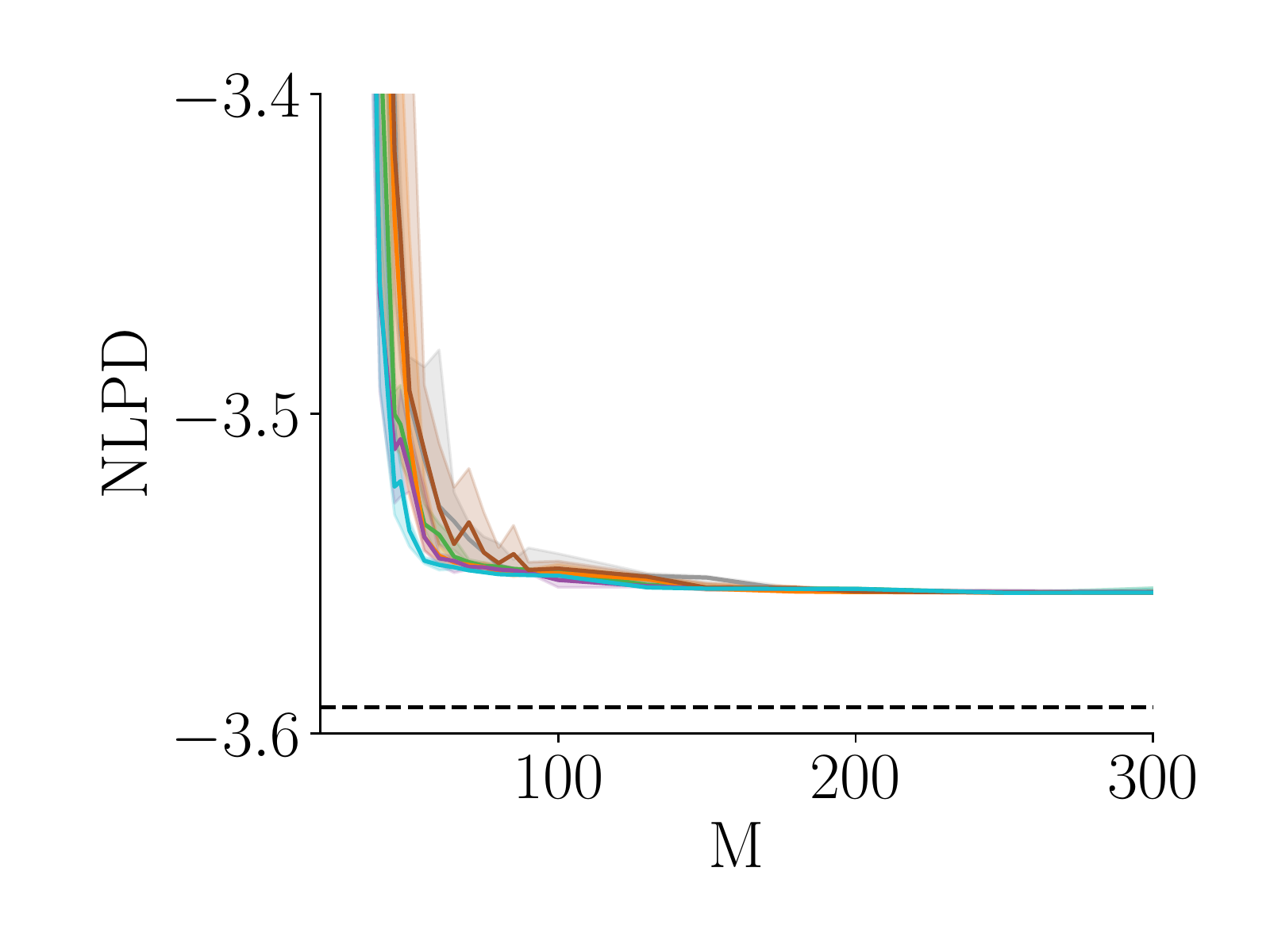}
    \includegraphics[width=.315\textwidth, trim=1.45cm .5cm .5cm .5cm, clip]{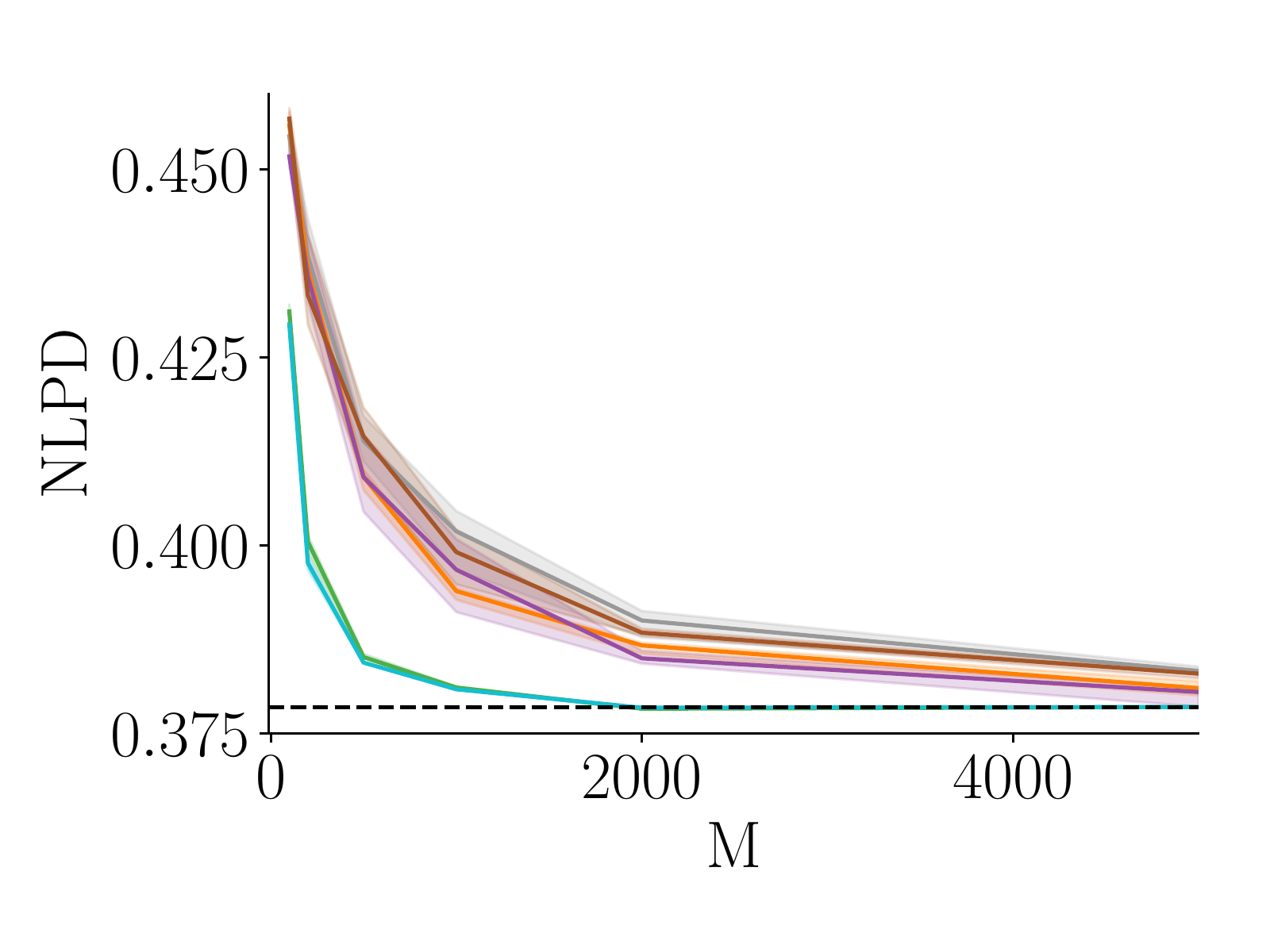}
    \includegraphics[width=.35\textwidth, trim=.5cm .5cm .5cm .5cm, clip]{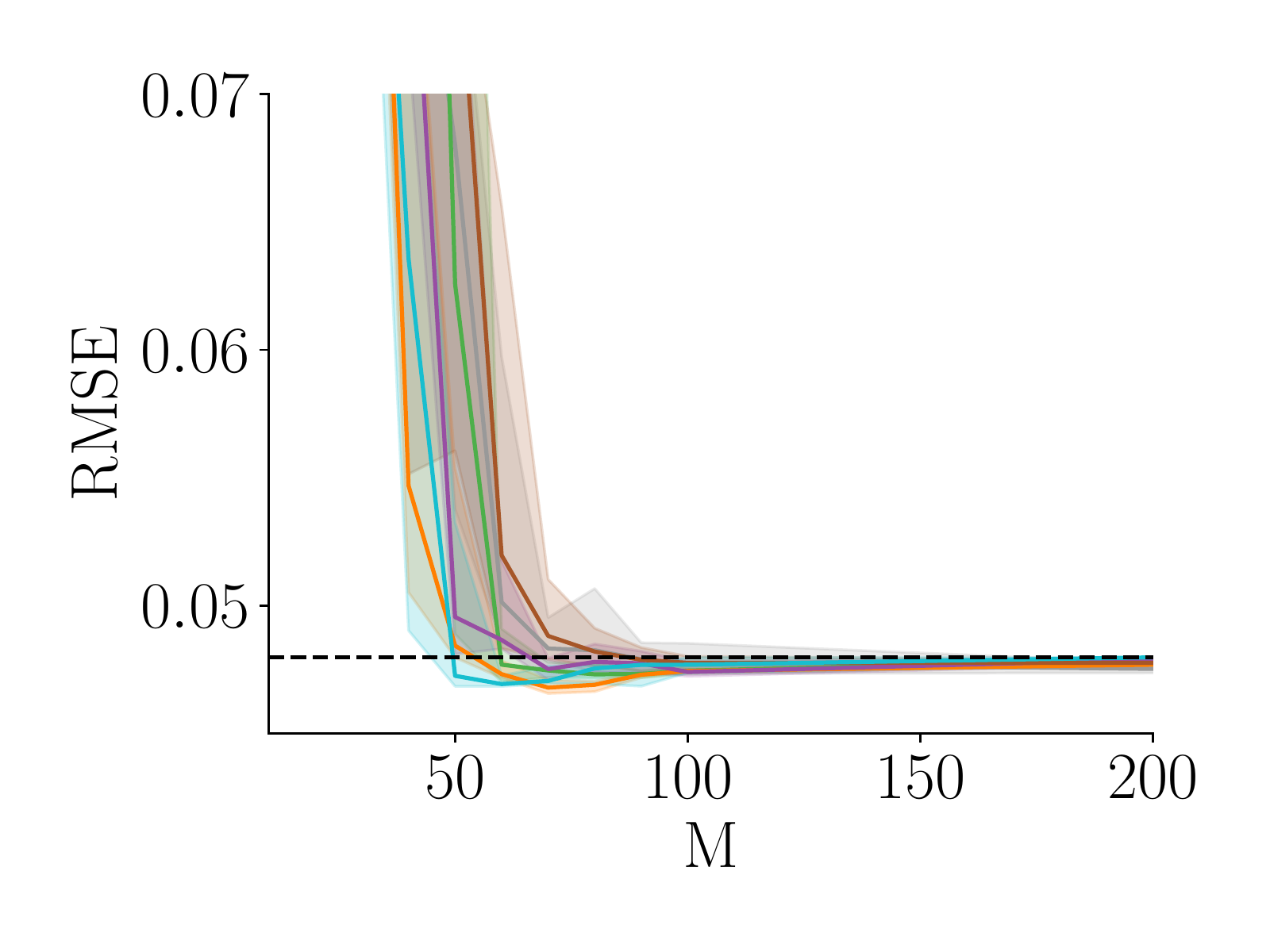}
    \includegraphics[width=.315\textwidth, trim=1.7cm .5cm .5cm .5cm, clip]{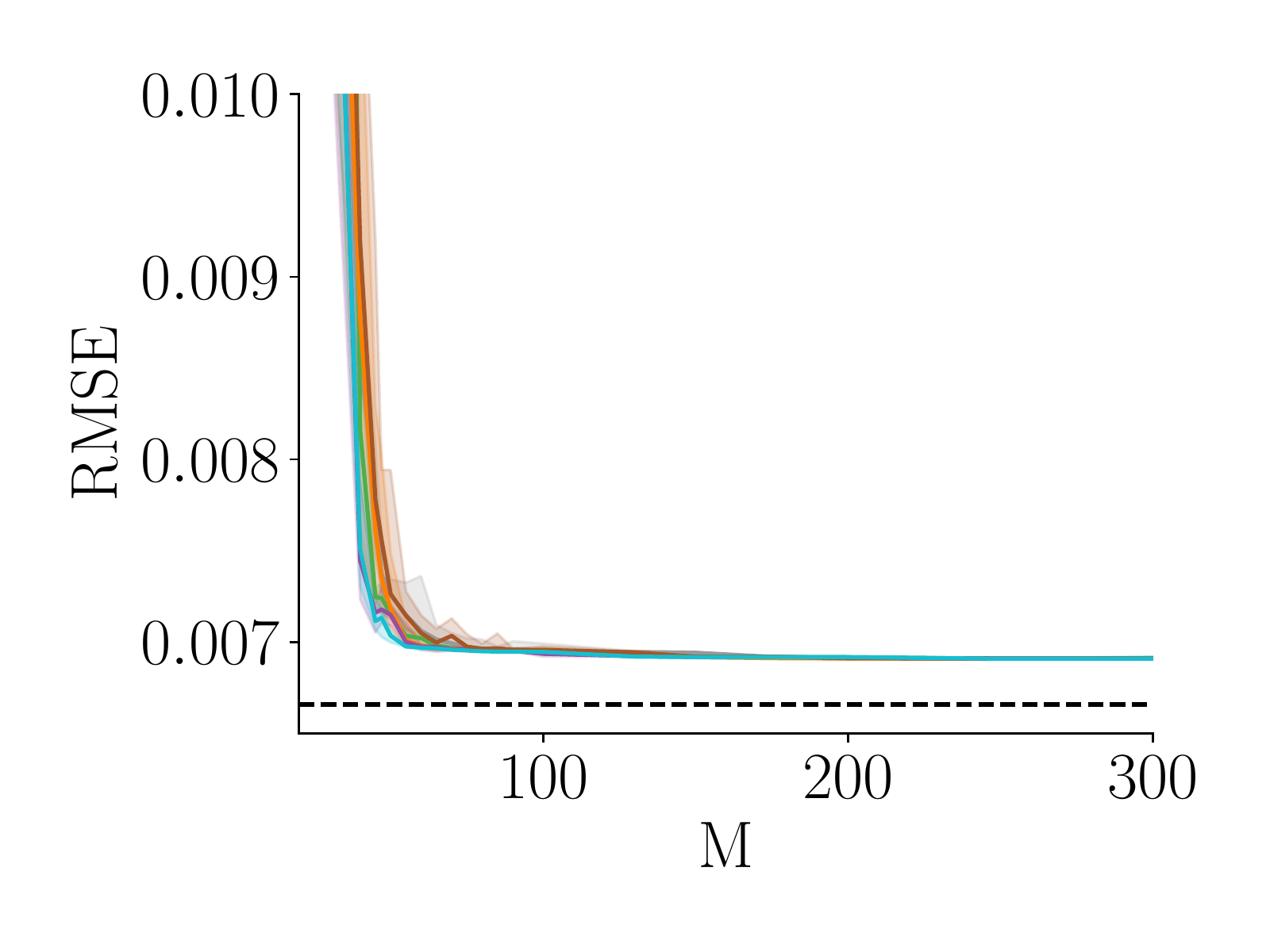}
    \includegraphics[width=.315\textwidth, trim=1.7cm .5cm .5cm .5cm, clip]{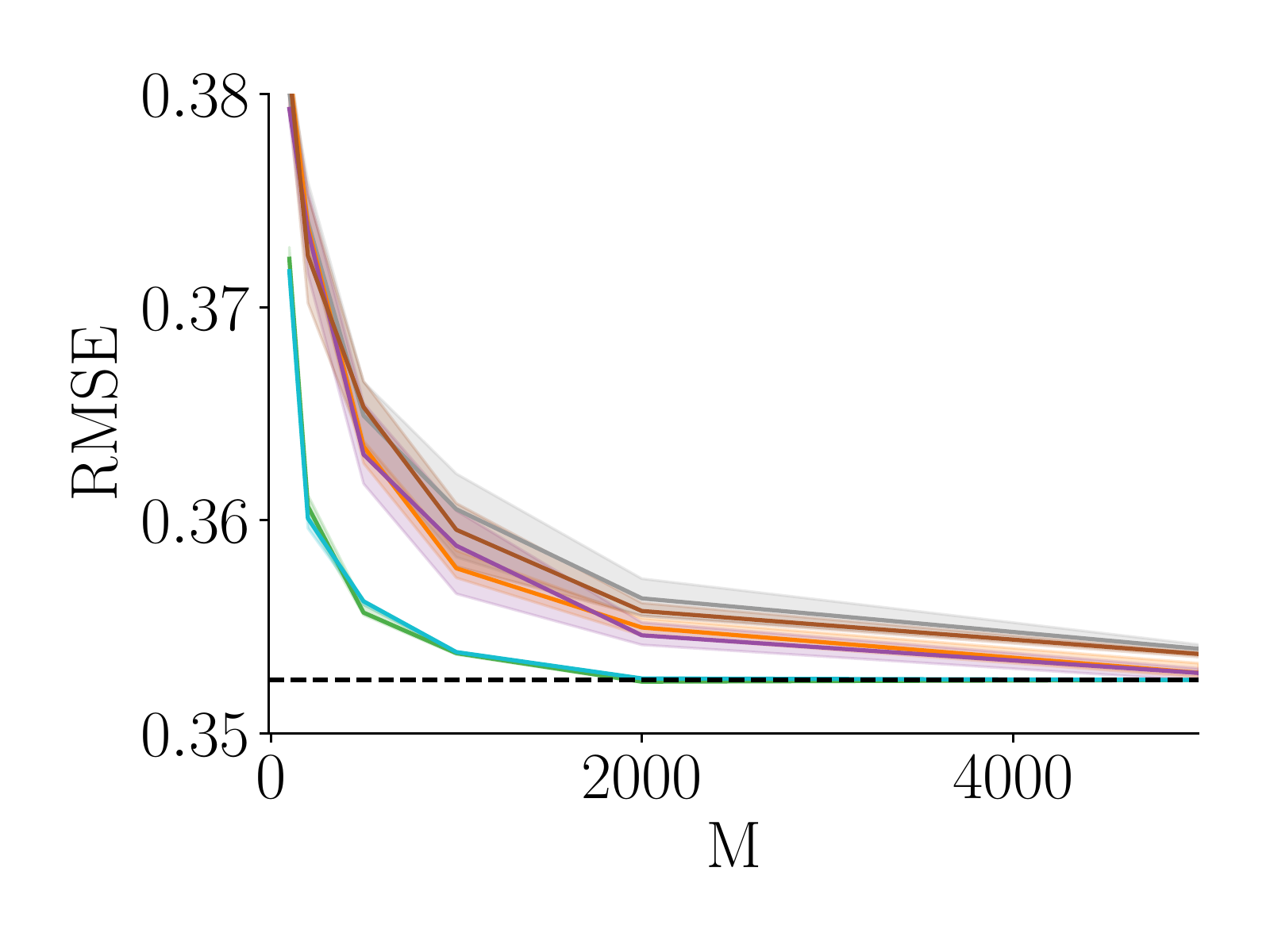}
    \includegraphics[width=\textwidth]{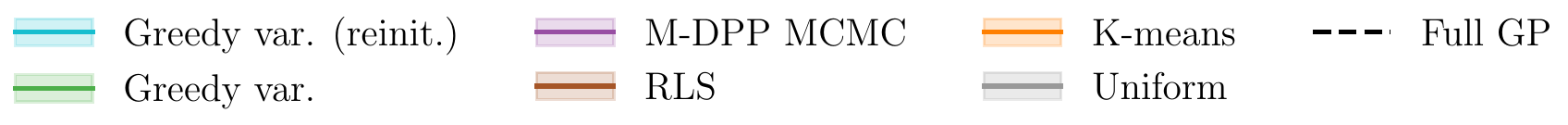}
    \caption{Performance of various methods for selecting inducing inputs on 3 data sets (each column corresponds to one data set) with model parameters learned via L-BFGS. The top row shows the evidence lower bounds, the middle row the per datapoint negative log predictive density on held-out test points and the bottom row the root mean square error on test data. The solid lines show the median of 10 initializations using the given method for selecting inducing inputs, while the shaded region represents the $20-80\%$. The dashed black line shows the performance of the exact GP regressor on the given data set.}
    \label{fig:opt-hyp}
\end{figure}
We start our evaluation by running all methods from the previous section in addition to greedy variance selection with reinitialization (\cref{fig:opt-hyp}). The initial inducing inputs are set with the untrained initialized hyperparameters, after which the hyperparameters are maximized w.r.t.~ELBO (Eq.~\ref{eqn:elbo}) using L-BFGS, with the reinitialization being applied for ``Greedy variance (reinit.)''. We observe that the reinitialized greedy variance method provides consistent fast convergence to the exact model.

To evaluate the benefit of gradient-based optimization, we compare it to the reinitialized greedy variance method (the best from \cref{fig:opt-hyp}), as well as K-means. For the initial setting of the inducing inputs when optimizing inducing inputs, we use the greedy variance selection (denoted ``gradient''). Since \cref{fig:opt-hyp} shows that optimization of the inducing inputs is not needed to converge to the exact solution, the question becomes whether it is \emph{faster} to perform gradient-based optimization. We choose $M$ to be the smallest value for which the ELBO given by the gradient method converges to within a few nats of the exact marginal likelihood based on \cref{fig:fixed-hyp}. We plot the optimization traces in \cref{fig:opt-trace} for several runs to account for random variation in the initializations.

In this constrained setting, we see different behaviours on the different data sets. One constant is that placing inducing points using K-means leads to sub-optimal performance compared to the best method. For the Energy data set, ``greedy var (reinit)'' suffers from convergence to local optima. This is caused by the low sparsity, and disappears if more inducing points are used (see \cref{fig:opt-hyp}). For the Naval data set, we see \emph{very} slow convergence when using gradient-based optimization initialized with greedy variance selection. K-means underperforms and also suffers from local optima, with reinitialization reliably reaching the best ELBO. For elevators, reinitialization reaches the optimal ELBO fastest.

We note that in the reinitialization method the hyperparameter optimization step was terminated when L-BFGS had determined convergence according to the default Scipy settings. This leads to a characteristic ``step'' pattern in the optimization traces, where progress halts for many iterations towards the end of a hyperparameter optimization phase, followed by large gains after a reinitialization of the inducing inputs. By terminating the hyperparameter optimization earlier after signs of stagnation, the reinitialization method could be significantly sped up. In addition, we measure computational cost through the number of function evaluations. This does not take into account the additional cost of computing the gradients for the inducing inputs, which make up the bulk of parameters that are to be optimized. As the amount of computation needed to reinitialization the inducing points is comparable to the computation required in as single iteration of gradient descent, \cref{fig:opt-trace} likely understates the computational savings of the reinitialization method.

\begin{figure}[t]
    \centering
    \includegraphics[width=.34\textwidth,trim=.5cm  .2cm .2cm .2cm, clip]{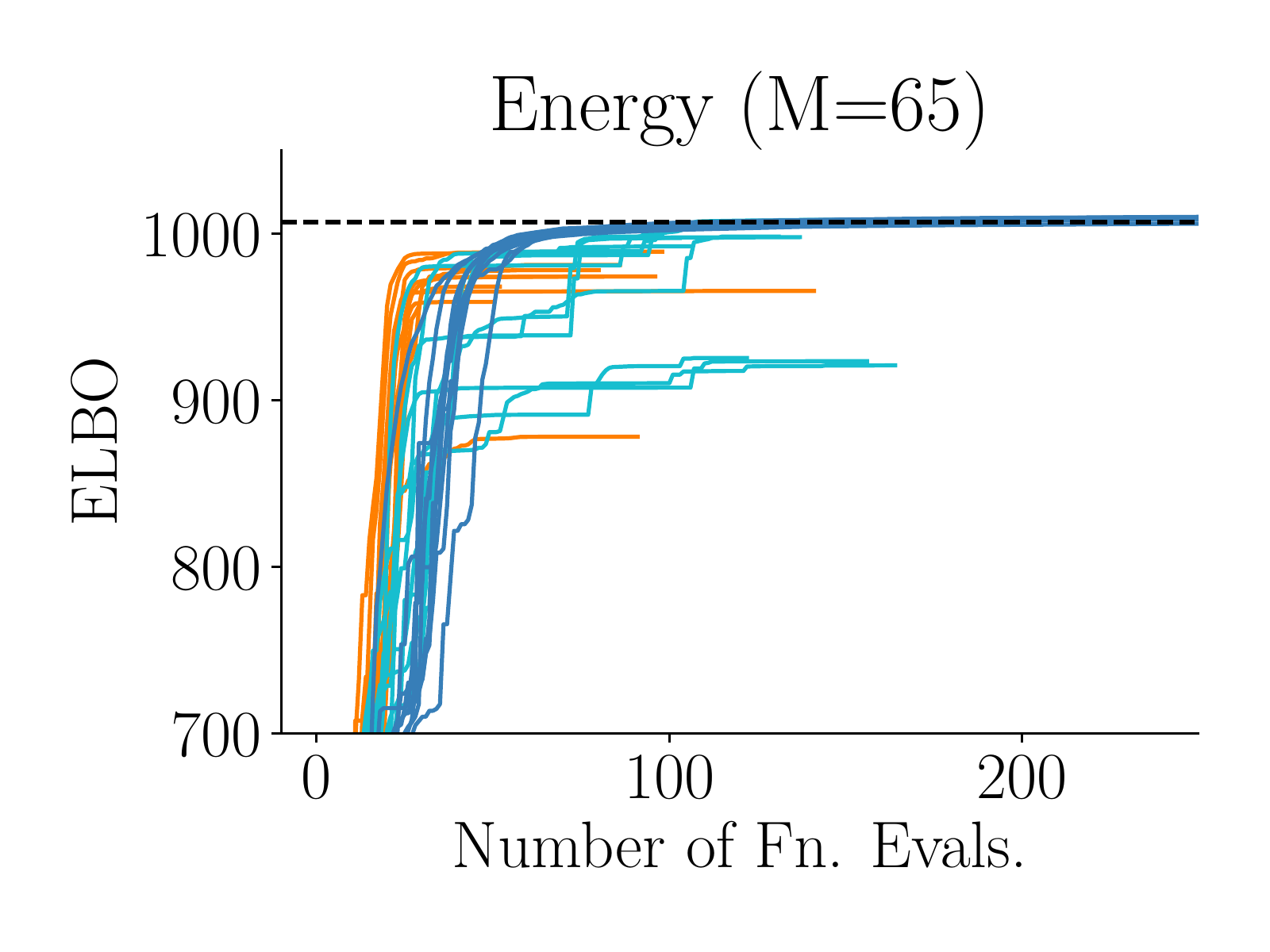}
    \includegraphics[width=.31\textwidth, trim=1.6cm  .2cm .2cm .2cm, clip]{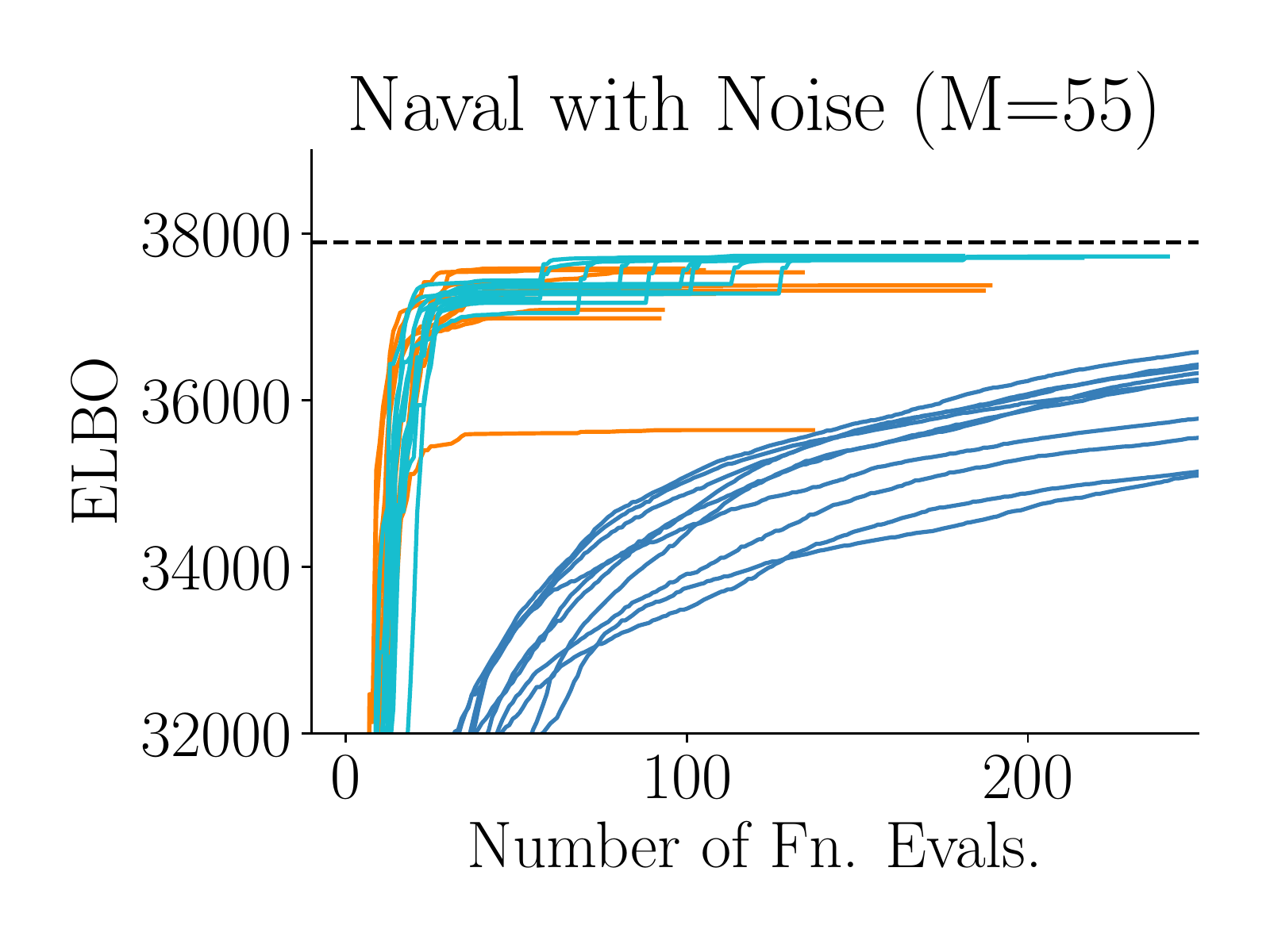}
     \includegraphics[width=.31\textwidth, trim=1.6cm  .2cm .2cm .2cm, clip]{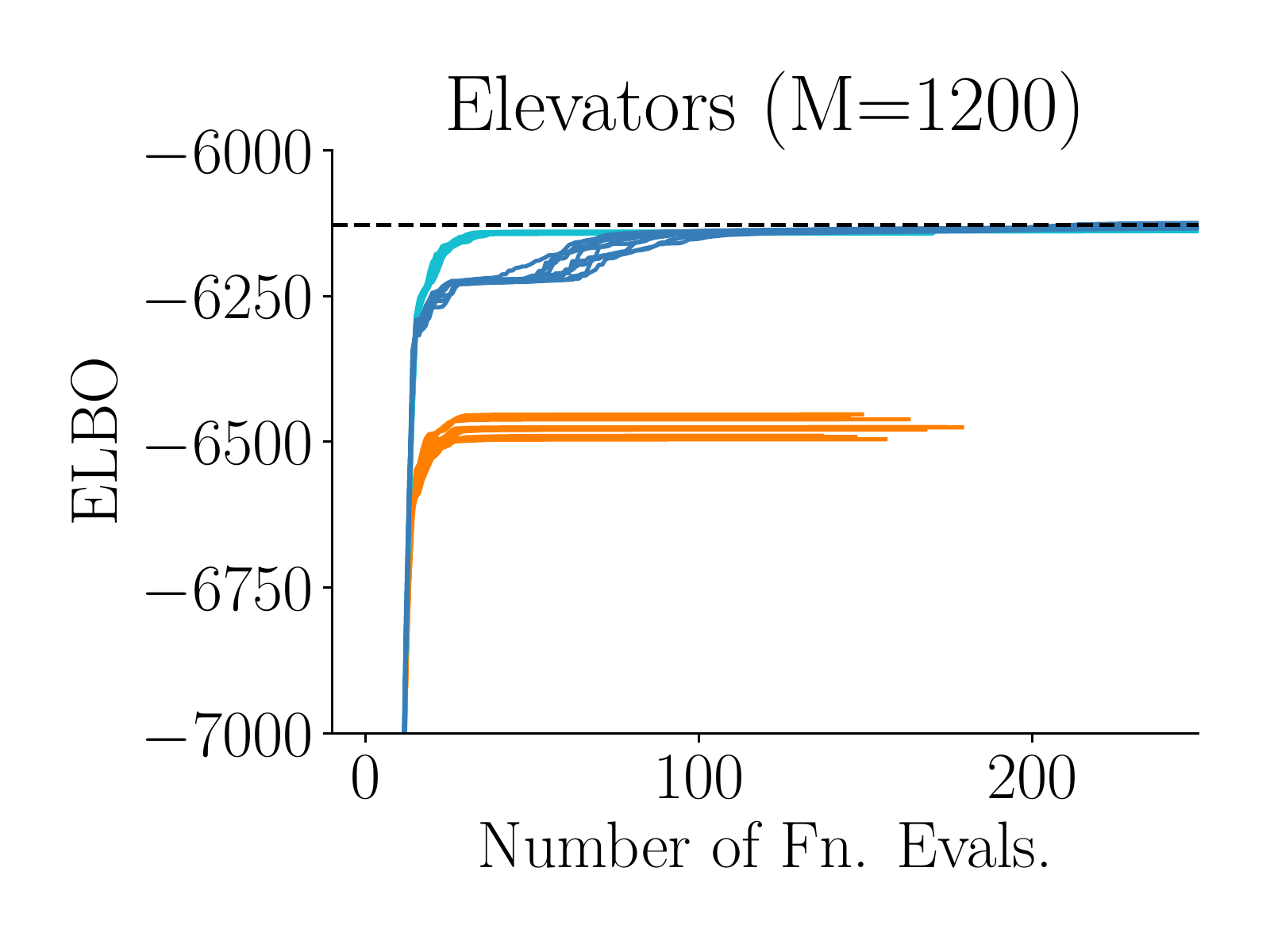} \includegraphics[width=\textwidth, trim=.5cm  .2cm .2cm .2cm, clip]{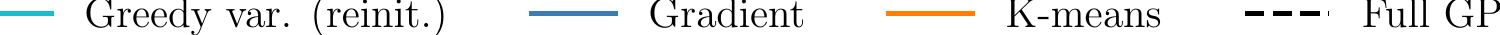}
    \caption{ELBO against the number of function evaluations called during a run of L-BFGS. ``Gradient'' uses gradient-based optimization for the inducing inputs as well as the hyperparameters, with the inducing inputs initialized using greedy variance selection.}
    \label{fig:opt-trace}
\end{figure}

\subsubsection{Recommendation for Inducing Input Selection}
The main conclusion from our empirical results is that well-chosen inducing inputs that are reinitialized during hyperparameter optimization give highly accurate variational approximations to the results of exact GPs. While performing gradient-based optimization of the inducing inputs may lead to improved performance in settings that are constrained to be very sparse, in some instances it is not worth the additional effort. We provide a GPflow-based \citep{GPflow2017} implementation of the initialization methods and experiments that builds on other open source software \citep{coelho2017jug,2020SciPy-NMeth}, available at \url{https://github.com/markvdw/RobustGP}.

It is important to note that we only considered data sets where sparse approximations were practically possible. The ``kin40k'' UCI data set is a notable example where a squared exponential GP regression model with learned hyperparameters could not accurately be approximated, due to a lengthscale that continuously decreased with increasing $M$. Given the underfitting and significant hyperparameter bias \citep{bauer_understanding_2016}, one can question whether variational approximations are appropriate. In cases where the covariates are less heavily correlated under the prior, conjugate gradient approaches \citep{gibbs1997efficient,davies2016thesis,gardner2018gpytorch} may be better. We choose to not make a recommendation for how to choose inducing variables in cases where the variational approximation is poor.

\section{Conclusions}

We provide guarantees on the quality of variational sparse Gaussian process regression when many fewer inducing variables are used than data points. We also consider lower bounds on the number of inducing variables needed in order to ensure that the KL-divergence between the approximate posterior and the full posterior is not large. These bounds provide insight into the number of inducing points that should be used for a variety of tasks, as well as suggest the sorts of problems to which sparse variational inference is well-suited. We also include an empirical results comparing the efficacy of different methods for selecting inducing inputs, which is of practical importance to the Gaussian process community. We believe that there is a great deal of interesting future research to be done on the role of sparsity in variational Gaussian process inference; both in refining the bounds given in this work and in better understanding non-conjugate inference schemes, such as  those developed in \cite{hensman_scalable_2015}.

\acks{
We would particularly like to thank Guillaume Gautier for pointing out an error in the exact k-DPP sampling algorithm cited in an earlier version of this work, and for guiding us through recent work on sampling k-DPPs that led to an amended proof. MvdW would additionally like to thank James Hensman for his guidance, and PROWLER.io for providing an excellent research environment while this work was developed.
}

\renewcommand{\theHsection}{A\arabic{section}}

\appendix
\section{Proof of Bound on Mean and Variance of One-dimensional Marginal Distributions}\label{app:proof-marginals}
\marginalbound*
\begin{proof}
By the chain rule of KL-divergence, we have 
\begin{align}\label{eqn:bound-chain-rule}
 2\KL{q(f(x^*))}{p(f(x^*)|\data)} \leq 2\KL{Q}{P} \leq \gamma
\end{align}
for any $x^\star \in \mcX$. 

For any $x^\star \in \mcX$, the KL-divergence on the right hand side of \cref{eqn:bound-chain-rule} is a KL-divergence between one-dimensional Gaussian distributions, and has the form,
\begin{equation}\label{eqn:univariate-gaussian-kl}
   \gamma \geq2\KL{q(f(x^*))}{p(f(x^*)| \mcD)} = \frac{\sigma_1^2}{\sigma_2^2}  - 1 - \log \frac{\sigma_1^2}{\sigma_2^2} +  \frac{(\mu_1-\mu_2)^2}{\sigma_2^2} \geq \frac{\sigma_1^2}{\sigma_2^2}  - 1 - \log \frac{\sigma_1^2}{\sigma_2^2}.
\end{equation}
 Define $r= \sigma_1^2/\sigma_2^2$, so \cref{eqn:univariate-gaussian-kl} becomes $\gamma \geq r- 1- \log r$. For $\gamma < \frac{1}{5}$, we have $r-\log(r)< 1.2$, so $r \in [.493,1.78]$. For $r$ in this range, we have, $\gamma \geq r- 1- \log r \geq (r-1)^2/3$.
Solving, for $r$, we obtain the bound,
\begin{equation}\label{eqn:bound-on-marginal-variance}
\left|1-\frac{\sigma_1^2}{\sigma_2^2} \right| \leq \sqrt{3 \gamma}.
\end{equation}

We now turn to the proof of the bound relating $\mu_1$ and $\mu_2$. From \cref{eqn:univariate-gaussian-kl} and because $r-1-\log r>0$ for $r>0$, $\frac{(\mu_1-\mu_2)^2}{\sigma_2^2} \leq \gamma$. Rearranging, $|\mu_1-\mu_2| \leq  \sigma_2 \sqrt{\gamma}$.
The final bound on the mean follows from \cref{eqn:bound-on-marginal-variance}, which implies that,
\[
\sigma_2 \leq \frac{\sigma_1}{\sqrt{1-\sqrt{3\gamma}}}. 
\]
\end{proof}
\section{Proofs of A-Posteriori Bounds}\label{app:proofs-section3}
In this section, we restate and prove the upper bound on the marginal likelihood given in \citet{titsias_variational_2014}.
\upperbound*

This result relies on several properties of symmetric positive semi-definite (SPSD) matrices, which we state in \cref{prop:spsdmatrices}.
\begin{prop}[\cite{horn_matrix_1990}, Corollary 7.7.4]\label{prop:spsdmatrices}
Let $\succ$ denote the partial order on SPSD matrices induced by $A \succ B \iff A-B$ is SPSD. Then if $A \succ B$ are $N \times N$ SPSD matrices,
\begin{enumerate}
    \item $\det(A) \geq \det(B)$,
    \item If $A^{-1}, B^{-1}$ exist, then $A^{-1} \prec B^{-1}$.
    \item If $\lambda_1(A) \geq \dotsc \geq \lambda_N(A)$, $\lambda_1(B) \geq \dotsc \geq \lambda_N(B)$ denote the eigenvalues of $A$ and $B$ respectively, $\lambda_i(A) \geq \lambda_i(B)$ for all $1 \leq i \leq N$.
\end{enumerate}
\end{prop}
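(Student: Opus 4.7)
The plan is to prove the three parts in an order that lets later parts reuse earlier ones: part 3 (Weyl monotonicity of eigenvalues) is the most fundamental, part 1 is an immediate consequence, and part 2 requires a short congruence argument.

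First I would establish part 3 using the Courant--Fischer min-max characterization of eigenvalues of a symmetric matrix $M$,
\[
\lambda_i(M) \;=\; \max_{\substack{S \subseteq \R^N \\ \dim S = i}} \; \min_{\substack{x \in S \\ \|x\|_2 = 1}} x\transpose M x.
\]
Because $A - B$ is SPSD, $x\transpose A x \geq x\transpose B x$ for every $x \in \R^N$. Applying this pointwise inside the inner minimum, and then taking the outer maximum over the same collection of $i$-dimensional subspaces, yields $\lambda_i(A) \geq \lambda_i(B)$ for every $1 \leq i \leq N$.

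Part 1 then follows with no additional work: both $A$ and $B$ are SPSD (since $B \succeq 0$ and $A = B + (A-B) \succeq 0$), so all eigenvalues are nonnegative, and
\[
\det(A) = \prod_{i=1}^N \lambda_i(A) \;\geq\; \prod_{i=1}^N \lambda_i(B) = \det(B).
\]
For part 2, since $B$ is an invertible SPSD matrix it has a unique invertible SPSD square root $B^{1/2}$. Conjugating $A \succeq B$ by $B^{-1/2}$ preserves the SPSD order, giving $B^{-1/2} A B^{-1/2} \succeq I$. Calling this matrix $C$, part 3 implies every eigenvalue of $C$ is at least $1$; since $C$ is symmetric and invertible, every eigenvalue of $C^{-1}$ is at most $1$, so $C^{-1} \preceq I$. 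Unpacking, $B^{1/2} A^{-1} B^{1/2} \preceq I$, and conjugating back by $B^{-1/2}$ gives $A^{-1} \preceq B^{-1}$, i.e.\ $A^{-1} \prec B^{-1}$ in the notation of the proposition.

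There is no real obstacle: all three facts are standard. The only subtleties worth flagging are (i) ensuring the min-max statement is applied at the level of the \emph{same} family of subspaces so the pointwise inequality $x\transpose A x \geq x\transpose B x$ really does transfer through both the $\min$ and the $\max$, and (ii) correctly tracking the order reversal under inversion in part 2, which is precisely where the congruence by $B^{-1/2}$ does the work.
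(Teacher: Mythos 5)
The paper does not supply a proof of this proposition; it is cited directly to Horn and Johnson (1990), Corollary 7.7.4, and used as a black box. Your proof is correct and follows the standard textbook route for this corollary: Courant--Fischer monotonicity for the eigenvalue comparison, nonnegativity of eigenvalues plus the product formula for the determinant, and congruence by $B^{-1/2}$ (reducing to the case $C \succeq I$) to handle the order reversal under inversion. The two subtleties you flag — preserving the inequality through both the inner min and outer max over the \emph{same} family of subspaces, and using congruence rather than similarity so that positive semidefiniteness (but not the spectrum) is preserved when passing to $B^{-1/2}AB^{-1/2}$ — are exactly the right points to be careful about, and you handle both correctly. One cosmetic note: the paper's $\prec$ denotes the non-strict (semidefinite) order, which you correctly identify, but it would be cleaner to use the paper's notation throughout rather than switching between $\preceq$ and $\prec$.
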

We also use that $\Qff \prec \Kff$, which follows from properties of Schur complements of PSD matrices \cite[Proposition 2.1]{gallier2010schur}.
\begin{proof}[Proof of \cref{lem:titsias-aposteriori}]
This proof follows that of \citet{titsias_variational_2014}. Recall \cref{eqn:full-marginal-likelihood},
\begin{align}
\log p(y) &=-\frac{1}{2}\log\det(\Kff+\noisevariance\bfI) -\frac{1}{2} y\transpose(\Kff+\noisevariance\bfI)^{-1}y-\frac{N}{2}\log 2 \pi \nonumber\\
&\leq-\frac{1}{2}\log\det(\Qff+\noisevariance\bfI) -\frac{1}{2} y\transpose(\Kff+\noisevariance\bfI)^{-1}y-\frac{N}{2}\log 2 \pi \nonumber\\
&\leq -\frac{1}{2}\log\det(\Qff+\noisevariance\bfI) -\frac{1}{2} y\transpose(\Qff+\|\Kff-\Qff\|_{\textup{op}}\bfI+\noisevariance\bfI)^{-1}y-\frac{N}{2}\log 2 \pi \nonumber\\ &= \upperone.
\end{align}
 The first inequality uses $\Kff +\noisevariance \bfI \succ \Qff + +\noisevariance \bfI$, which implies $\log\det(\Kff+\noisevariance\bfI)\geq \log\det(\Qff+\noisevariance\bfI)$. The second inequality uses that $\Kff \prec \Qff+ \|\Kff-\Qff\|_{\textup{op}}\bfI$ and the second part of \cref{prop:spsdmatrices}.

In problems where sparse GP regression is applied, computing the largest eigenvalue of $\Kff-\Qff$ is computationally prohibitive. However, we can use the upper bound $\|\Kff-\Qff\|_{\textup{op}} \leq \Tr(\Kff-\Qff)$, yielding
\[
\upperone \leq -\frac{1}{2}\log\det(\Qff+\noisevariance\bfI) -\frac{1}{2} y\transpose(\Qff+\Tr(\Kff-\Qff)\bfI+\noisevariance\bfI)^{-1}y-\frac{N}{2}\log 2 \pi = \uppertwo.
\]
The bound $\uppertwo$ can be computed in time $\mcO(NM^2)$ with memory $\mcO(NM)$ in much the same way as the ELBO is computed, as it only depends on the low-rank matrix $\Qff$ and the diagonal entries of $\Kff$.
\end{proof}

\section{Proofs for Results Leading to Upper bounds on the KL-divergence}\label{app:proofs-section4}
In this appendix, we restate and provide proofs for the results in \cref{sec:regression-rates}.
\uppergeneraly*
\begin{proof}
 We apply the matrix identity $(A+B)^{-1} = A^{-1} - A^{-1}B(A+B)^{-1}$ to the expression
\begin{equation*}
    \mcU_1 - \ELBO = \frac{t}{2\noisevariance}+\frac{1}{2} y\transpose\left((\Qff+\noisevariance\bfI)^{-1}-(\Qff+\zeta\bfI+\noisevariance\bfI)^{-1}\right)y,
\end{equation*}
with $A = \Qff+\noisevariance\bfI$ and $B= \zeta\bfI$. This gives
\begin{align*}
    \mcU_1 - \ELBO &= \frac{t}{2\noisevariance}+\frac{\zeta}{2} y\transpose\left((\Qff+\noisevariance\bfI)^{-1}(\Qff+(\zeta+\noisevariance)\bfI)^{-1}\right)y \nonumber \\
     &= \frac{t}{2\noisevariance}+\frac{\zeta}{2} y\transpose(\Qff^2+(\zeta+\noisevariance)\Qff + \noisevariance(\zeta+\noisevariance)\bfI))^{-1}y. \nonumber 
\end{align*}
The matrix $\Qff^2+(\zeta+\noisevariance)\Qff$ is SPSD, as it is the product of SPSD matrices that commute. This implies that the eigenvalues of $\Qff^2+(\zeta+\noisevariance)\Qff + \noisevariance(\zeta+\noisevariance)\bfI$ are bounded below by  $\noisevariance(\zeta+\noisevariance)$. As the eigenvalues of the inverse of a SPSD matrix are the inverse of the eigenvalues of the original matrix, the largest eigenvalue of $(\Qff^2+(\zeta+\noisevariance)\Qff + \noisevariance(\zeta+\noisevariance)\bfI))^{-1}$ is bounded above by $(\noisevariance(\zeta+\noisevariance))^{-1}$. Therefore, 
\begin{align}
    \KL{Q}{P} \leq \mcU_1 - \ELBO 
     \leq\frac{t}{2\noisevariance}+\frac{\zeta\|y\|_2^2}{2\noisevariance(\zeta+\noisevariance)}.
\end{align}
This proves the second inequality in \cref{lem:agnostic-upper-bound}. The same argument using $\uppertwo$ in place of $\upperone$ yields,
\begin{equation*}
    \KL{Q}{P} 
     \leq \frac{t}{2\noisevariance}+\frac{t\|y\|_2^2}{2\noisevariance(t+\noisevariance)}. 
\end{equation*}
\end{proof}

\upperavgy*
We have already proven the lower bound in the main body. In order to prove the upper bound in \cref{lem:kl-averaged}, we use a H\"older-type inequality, \citet[Exercise 1.3.26]{tao2011topics}.
\begin{prop}\label{prop:holders-trace}
For any matrix, let $\|A\|_p:= (\sum_i |\sigma_i(A)|^p)^{1/p}$ if $p$ is finite and $\|A\|_{\infty}= \max_i |\sigma_i(A)|$, where $\sigma_i(A)$ are singular values of $A$. Then for $A, B \in \R^{n \times n}$ and any $1 \leq p,q \leq \infty$ such that $\frac{1}{p}+\frac{1}{q}=1$,
\[
\Tr(AB) \leq \|A\|_p \|B\|_q.
\]
\end{prop}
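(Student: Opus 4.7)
The plan is to reduce the matrix inequality to the classical Hölder inequality on sequences of singular values, using the intermediate bound
\[
|\Tr(AB)| \leq \sum_{i=1}^n \sigma_i(A)\sigma_i(B),
\]
which is von Neumann's trace inequality. Once this is in hand, applying the classical Hölder inequality to the finite sequences $\{\sigma_i(A)\}_{i=1}^n$ and $\{\sigma_i(B)\}_{i=1}^n$ with conjugate exponents $p,q$ gives
\[
\sum_i \sigma_i(A)\sigma_i(B) \leq \Bigl(\sum_i \sigma_i(A)^p\Bigr)^{1/p}\Bigl(\sum_i \sigma_i(B)^q\Bigr)^{1/q} = \|A\|_p\|B\|_q,
\]
which is the desired statement. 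The endpoint case $p=1, q=\infty$ (and its transpose) is handled by factoring the maximum singular value outside the sum.

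To establish the von Neumann bound, I would write $A = U\Sigma V^\top$ in singular value decomposition with $\Sigma = \mathrm{diag}(\sigma_1(A),\dots,\sigma_n(A))$, and use cyclicity of the trace to write $\Tr(AB) = \Tr(\Sigma V^\top B U) = \Tr(\Sigma C)$, where $C := V^\top B U$ has the same singular values as $B$ (since unitary conjugation preserves singular values). Expanding along the diagonal yields $|\Tr(\Sigma C)| \leq \sum_{i=1}^n \sigma_i(A)\,|C_{ii}|$. The remaining ingredient is the classical majorization inequality that the sequence $\{|C_{ii}|\}$ is weakly majorized by the singular values of $C$, i.e.
\[
\sum_{i=1}^k |C_{ii}| \leq \sum_{i=1}^k \sigma_i(C) = \sum_{i=1}^k \sigma_i(B) \quad \text{for every } 1 \leq k \leq n.
\]
Since $\sigma_1(A) \geq \cdots \geq \sigma_n(A) \geq 0$ is non-increasing and non-negative, a short Abel summation argument then upgrades the partial-sum bound to $\sum_i \sigma_i(A)\,|C_{ii}| \leq \sum_i \sigma_i(A)\sigma_i(B)$, completing the proof of von Neumann's inequality.

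The main obstacle is the majorization step $\sum_{i=1}^k |C_{ii}| \leq \sum_{i=1}^k \sigma_i(C)$. The cleanest way to obtain it is via the Ky Fan extremal characterization $\sum_{i=1}^k \sigma_i(C) = \max |\Tr(P^\top C Q)|$ over $P,Q \in \R^{n \times k}$ with orthonormal columns; specialising to $P = Q = [e_1,\dots,e_k]$ and adjusting signs yields the claim. An alternative route reduces to Birkhoff's theorem on doubly stochastic matrices. Either way this is a standard fact, so our proof essentially amounts to assembling the SVD reduction, the majorization lemma, and Abel summation, and then applying classical Hölder.
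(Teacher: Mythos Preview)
Your argument is correct and follows the standard route: von Neumann's trace inequality via the SVD reduction, the Ky Fan majorization of diagonal entries by singular values, Abel summation, and then classical H\"older on the singular-value sequences. One small point you glossed over: after bounding $|\Tr(\Sigma C)| \leq \sum_i \sigma_i(A)\,|C_{ii}|$, the diagonal entries $|C_{ii}|$ are not sorted, whereas the majorization statement and the Abel-summation step are cleanest for the decreasing rearrangement. The rearrangement inequality (pairing the decreasing $\sigma_i(A)$ with the decreasing rearrangement of $|C_{ii}|$ only increases the sum) fills this gap immediately, so the argument goes through.

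As for comparison with the paper: the paper does not prove this proposition at all. It is stated with a citation to \citet[Exercise 1.3.26]{tao2011topics} and used as a black box in the proof of Lemma~\ref{lem:kl-averaged} (in fact only the special case $p=1$, $q=\infty$ for SPSD matrices is needed there). Your write-up therefore supplies a self-contained proof where the paper simply quotes the result.
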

In particular, if $A$ and $B$ are SPSD (so that the singular values agree with the eigenvalues), taking $p=1,$ $q=\infty$,
\[
\Tr(AB\transpose) \leq \Tr(A)\|B\|_{op}.
\]
where $\|B\|_{\mathrm{op}}$ is the largest eigenvalue of $B$. 

\begin{proof}[Proof of \cref{lem:kl-averaged}]
 
For the upper bound, it remains to bound \[\mathrm{kl}(\Kff, \Qff) \coloneqq \KL{\mcN(0, \Kff+\noisevariance\bfI)}{\mcN(0, \Qff+\noisevariance\bfI)}.\]
\begin{align}\label{eqn:kl-avg-upper}
 \mathrm{kl}(\Kff, \Qff) &= \frac{1}{2} \Big(\log \det( \Qff + \noisevariance \bfI)-\log \det( \Kff + \noisevariance \bfI)  - N + \Tr((\Qff+\noisevariance \bfI)^{-1} (\Kff+\noisevariance \bfI))\Big) \nonumber \\
& \leq \frac{1}{2} \Big(- N + \Tr((\Qff+\noisevariance \bfI)^{-1} (\RKff+\noisevariance \bfI)\Big) \nonumber \\
& = \frac{1}{2} \Big(- N + \Tr((\Qff+\noisevariance\bfI)^{-1} ((
\Qff+\noisevariance I)+(\Kff-\Qff))\Big) \nonumber \\
& = \frac{1}{2}\Tr((\Qff+\noisevariance \bfI)^{-1}(\Kff-\Qff)). 
\end{align}
The inequality uses that $\Qff + \noisevariance \bfI \prec \Kff + \noisevariance\bfI$, so $\det(\Qff + \noisevariance\bfI) \leq \det(\Kff + \noisevariance\bfI)$ by \cref{prop:spsdmatrices}.
We can now apply \cref{prop:holders-trace} with $p=1, q=\infty$ to \cref{eqn:kl-avg-upper} giving,
\[
\frac{1}{2}\Tr((\Qff+\noisevariance \bfI)^{-1}(\Kff-\Qff)) \leq \frac{t}{2}\|(\Qff+\noisevariance \bfI)^{-1}\|_{\textup{op}}  \leq \frac{t}{2\noisevariance}.  
\]
Using this bound in \cref{eqn:kl-avg-upper} and combining with \cref{eqn:average-kl-exact} completes the proof of the upper bound.
\end{proof}

\section{Derivations of bounds for specific kernels and covariate distributions}\label{app:examples}
In this appendix, we restate and provide proofs for the results in \cref{sec:examples}.
\subsection{Bounds for Univariate Gaussian distributions and Squared Exponential Kernel}\label{app:sqexp-gaussian}
\seoned*
\begin{proof}[Proof of \cref{cor:gaussian-1D}]
Using \cref{eqn:se-1d-eigvals} and applying the geometric series formula, 
\[
\sum_{m=M+1}^\infty \lambda_m = \sqrt{\frac{2a}{A}}\frac{B^M}{1-B}.
\]
We can use this equation in \cref{thm:upper-bound-fixed-y} (a similar result could be obtained using \cref{thm:upper-bound-average-y}) yielding,
\[
	\Exp{}{\KL{Q}{P}} \leq \left(\sqrt{\frac{2a}{A}}\frac{(M+1)NB^M}{2\noisevariance(1-B)}+\frac{Nv\epsilon}{\noisevariance}\right)\left(1+ \frac{RN}{\noisevariance}\right). 
\]

Choose $\epsilon=\frac{\gamma\noisevariance}{2Nv(1+RN/\noisevariance)}=\Theta(\gamma/N^2)$. By \cref{lem:anari-approx-k-dpp}, an $M$-DPP can be sampled to this level of accuracy using not more than $\mcO(NM(\log \frac{N^2}{\gamma\delta}))$ iterations of MCMC, making the computational cost of selecting inducing inputs $\mcO(NM^3(\log \frac{N^2}{\gamma\delta}))$. We may assume that $M<N$, otherwise by choosing $Z=X$ the KL-divergence is zero and nothing more needs to be shown. Then,
\[
	\Exp{}{\KL{Q}{P}} \leq \sqrt{\frac{2a}{A}}\frac{N^2B^M}{2\noisevariance(1-B)}\left(1+ \frac{RN}{\noisevariance}\right)+\frac{\gamma}{2} 
\]
Take $M = \log_B \sqrt{\frac{A}{2a}}\frac{\gamma\delta \noisevariance(1-B)}{N^2(1+RN/\noisevariance)}=\mcO(\log (N^3/\gamma\delta))$, then
\[
	\Exp{}{\KL{Q}{P}} \leq \gamma.
\]
In the case of ridge leverage score initializations, from \cref{thm:ridge-leverage-agnostic-y} we have with probability $1-5\delta$,
\[
\KL{Q}{P} \leq \sqrt{\frac{2a}{A}}\frac{N^2B^S}{S(1-B)\delta^2\noisevariance}(1+R/\noisevariance)
\]
and $\bfM\leq S \log \frac{S}{\delta}$. Choose $S = \log_{B} \sqrt{\frac{A}{2a}}\frac{\gamma \noisevariance(1-B)\delta^2}{N^2(1+R/\noisevariance)}$. Then on the event where these bounds hold, $\KL{Q}{P} \leq \frac{\gamma}{S}\leq \gamma$ and $\bfM \leq S \log \frac{S}{\delta} = \mcO\left(\log \frac{N^2}{\delta^2\gamma}\log \frac{\log (N^2/\delta^2\gamma)}{\delta}\right)$. If we allow the implicit constant to depend on $\delta$ and $\gamma$ as well this becomes $\mcO(\log N \log \log N)$.
\end{proof}
\subsection{Bounds for Multivariate Gaussian distributions and Squared Exponential Kernel}
\taileigvalsgauss*
\begin{proof}[Proof of \cref{prop:tail-eigenvalues-gaussian}]
The proof of this proposition is nearly identical to an argument in \cite{seeger2008information}. Consider the upper bound, 
\[
\lambda_{M+D-1} \leq \left(\frac{2a}{A}\right)^{\frac{D}{2}}B^{M^{1/D}}. 
\]
Define $\tilde{M}=M-D+1$, then for $M>D-1$,
\begin{align*}
\sum_{m=M+1}^\infty \lambda_m &\leq \left(\frac{2a}{A}\right)^{\frac{D}{2}} \sum_{m=\widetilde{M}+1}^\infty B^{m^{1/D}}\leq \left(\frac{2a}{A}\right)^{\frac{D}{2}}\int_{s=\widetilde{M}}^\infty B^{s^{1/D}} \calcd s\\
& =  \left(\frac{2a}{A}\right)^{\frac{D}{2}}D\alpha^{-D}\int_{t=\alpha\widetilde{M}^{1/D}}^\infty \exp(-t) t^{D-1}\calcd t\\
& =\left(\frac{2a}{A}\right)^{\frac{D}{2}}D\alpha^{-D} \Gamma(D, \alpha(M-D+1)^{1/D}) 
\end{align*}
where in the second to last line we make the substitution $t = \alpha s^{1/D}$ and in the final line we recognized the integral as an incomplete $\Gamma$-function. 

From \citet[8.352]{gradshteyn2014table} for integer $D$ and $r>0$, 
\[
\Gamma(D,r) = (D-1)!e^{-r}\sum_{k=0}^{D-1}\frac{r^k}{k!}. 
\]
For fixed $D$ and $r$ large (which is satisfied by the condition $M\geq \frac{1}{\alpha}D^D+D-1$), we have that the final term in the above sum is the largest, so that 
\[
\Gamma(D,r) \leq D! e^{-r} \frac{r^{D-1}}{(D-1)!} = D e^{-r}y^{D-1}
\]
Using this bound, we arrive at 
\begin{align*}
\sum_{m=M+1}^\infty \lambda_m &\leq \left(\frac{2a}{A}\right)^{\frac{D}{2}}D^2\alpha^{-D} \exp(-\alpha(M-D+1)^{1/D})(\alpha(M-D+1)^{1/D})^{D-1}\\
& \leq \left(\frac{2a}{A}\right)^{\frac{D}{2}}\frac{D^2(M-D+1)}{\alpha}\exp(-\alpha(M-D)^{1/D})= \mcO(M\exp(-\alpha M^{1/D})). 
\end{align*}
\end{proof}
\multivariategaussiansecor*
\begin{proof}
\Cref{cor:gaussian-multi-D} is a consequence of \cref{thm:upper-bound-fixed-y} and \cref{prop:tail-eigenvalues-gaussian}. In the case of the $M$-DPP, we take $\epsilon=\frac{\gamma\noisevariance}{2Nv(1+RN/\noisevariance)}=\Theta(\gamma/N^2)$ as in the proof of \cref{cor:gaussian-1D}. It then remains to choose $M$ so that \[
\left(\frac{N^2}{2\noisevariance(1-B)}\left(1+ \frac{RN}{\noisevariance}\right)\sum_{m=M+1}^\infty\lambda_m\right) \leq \gamma/2.\]
From \cref{prop:tail-eigenvalues-gaussian}, there exists an $M = \mcO((\log \frac{N^3}{\gamma})^D)$ that satisfies this criteria. In the case of ridge leverage scores, it is sufficient to choose $S= \mcO\left(\left(\log \frac{N^2}{\delta \gamma}\right)^D\right)$, which means that with probability at least $1-5\delta$, $\bfM =  \mcO\left(\left(\log \frac{N^2}{\delta \gamma}\right)^D(\log\log \frac{N^2}{\delta \gamma} + \log (1/\delta)) \right)$.
\end{proof}
\subsection{Conditions for Widom's Theorem}
Widom's Theorem \citep{widom_asymptotic_1963}, states that for stationary kernels on compact subsets of Euclidean space, the eigenvalues of the operator $\mcK$ are closely linked to the decay of the spectral density of the kernel function. The theorem applies to any compactly supported covariate distribution with Lebesgue density and stationary kernel with spectral density satisfying the following three conditions:
\begin{enumerate}
    \item For all $i\in \{1,\cdots,D\}$, fixing all $\omega^{(j)}, j \neq i$, there exists an $\omega_0^{(i)}\in \R$ such that $s(\omega)$ is monotonically increasing as a function of $\omega^{(i)}$ for all $\omega^{(i)}<\omega_0^{(i)}$ and is monotonically decreasing as a function of $\omega^{(i)}$ for $\omega^{(i)} \geq \omega_0^{(i)}$.
    \item Let $\{\xi_i\}_{i=1}^\infty, \{\eta_i\}_{i=1}^\infty$, be sequences in $\R^D$ such that $\lim\limits_{i \to \infty} \frac{\|\eta_i-\xi_i\|}{\|\eta_i\|}=0$ and $\lim\limits_{i \to \infty} \|\xi_i\|=\infty,$ then $\lim\limits_{i \to \infty} \frac{|s(\xi_i)|}{|s(\eta_i)|}=1$. 
    \item Let $\{\xi_i\}_{i=1}^\infty, \{\eta_i\}_{i=1}^\infty$, be sequences in $\R^D$ such that $\lim\limits_{i \to \infty} \|\xi_i\|, \|\eta_i\|=\infty$ and $\lim\limits_{i \to \infty}\frac{\|\xi_i\|}{\|\eta_i\|}=0$, then $\lim\limits_{i \to \infty}\frac{|s(\xi_i)|}{|s(\eta_i)|}=0$.
\end{enumerate}

If the kernel and spectral density satisfy these conditions, the number of eigenvalues of $\mcK$ greater than $\epsilon$ is asymptotic (as $\epsilon \to 0$) to the volume of the collection of points in $\R^d \times \R^d$ such that $p(x)s(\omega)> \epsilon$. A precise statement of the result can be found in \citet{widom_asymptotic_1963}, and more discussion of the result is given in \citet{seeger2008information}. Because of the second condition, Widom's theorem cannot be applied to kernels with rapidly decaying spectral densities, such as the SE-kernel (though more stationary kernels are analyzed in \citet{widom1964asymptotic} for uniformly distributed covariates).

\section{Lower bounds on the number of features}\label{app:lower-bounds}
In this appendix, we restate and prove the results stated in \cref{sec:lower-bounds}.
\subsection{General Lower Bound on KL-divergence}
\firstlowerbound*
\begin{proof}[Proof of \cref{lem:min-min-lower-bound}]
Define $\ELBO(\datay,Z)$ to be the evidence lower bound assuming $y$ are the observations and inducing points are placed at locations $Z$. Then for any $\datay \in \R^N$ and $Z \in \mcX^M$,
\begin{align}
    \log p(\datay) - \ELBO(\datay,Z) &\geq \log p(0) - \ELBO(0,Z) \nonumber\\ &= \frac{1}{2}\left(\frac{1}{\noisevariance}\Tr(\RKff-\RQff)-\log\frac{\det(\RKff+\noisevariance\bfI)}{\det(\RQff+\noisevariance\bfI)}\right). \label{eqn:lower-bound-tr-logdet}
\end{align}
The inequality uses that the only term in  $\log p(\datay) - \ELBO(\datay,Z)$ that depends on $y$ is the quadratic $\frac{1}{2}\datay\transpose \left((\RQff+\noisevariance\bfI)^{-1} - (\RKff+\noisevariance\bfI)^{-1}\right) \datay\geq 0$ since $(\RQff+\noisevariance\bfI)^{-1}\succ (\RKff+\noisevariance\bfI)^{-1}$.

\newcommand{\rqffeigenvalue}{\bm{\psi}_m}

We can rewrite \cref{eqn:lower-bound-tr-logdet} as a sum over the eigenvalues of $\RKff$ and $\RQff$, which we denote by $\rkffeigenvalue$ and $\rqffeigenvalue$ respectively. Also, since $\RKff \succ \RQff$, $\rkffeigenvalue \geq \rqffeigenvalue$ for all $1\leq m \leq N$ (\cref{prop:spsdmatrices}). This yields,
\begin{align*}
    \log p(0) &- \ELBO(0,Z) = \frac{1}{2}\sum_{m=1}^N \frac{\rkffeigenvalue - \rqffeigenvalue}{\noisevariance} - \log\left(1+ \frac{\rkffeigenvalue-\rqffeigenvalue}{\rqffeigenvalue+\noisevariance}\right) \\
    & \geq \frac{1}{2}\sum_{m=1}^N \frac{\rkffeigenvalue - \rqffeigenvalue}{\noisevariance} - \log \left(1+ \frac{\rkffeigenvalue-\rqffeigenvalue}{\noisevariance}\right) \tag{\textasteriskcentered} \label{eqn:cant-reverse} \\
    &=  \frac{1}{2}\left(\sum_{m=1}^M \frac{\rkffeigenvalue - \rqffeigenvalue}{\noisevariance} - \log \left(1+ \frac{\rkffeigenvalue- \rqffeigenvalue}{\noisevariance}\right) +\sum_{m=M+1}^N \frac{\rkffeigenvalue }{\noisevariance} - \log \left(1+ \frac{\rkffeigenvalue}{\noisevariance}\right)\right).
\end{align*}
In the final line, we use that $\RQff$ is at most rank $M$, so that $\rqffeigenvalue=0$ for all $m>M$. It follows from the inequality $\log (1+a)\leq a$ for $a\geq 0$ that each term in the first sum is non-negative. Hence, 
\begin{align}
    \log p(y) - \ELBO(y,Z) &\geq \frac{1}{2}\sum_{m=M+1}^N \frac{\rkffeigenvalue }{\noisevariance} - \log \left(1+ \frac{\rkffeigenvalue}{\noisevariance}\right). 
\end{align} 
\end{proof}

\subsection{Lower Bound on Eigenvalues of Multivariate Gaussian Inputs and Squared Exponential Kernel}
\lbgauss*
\begin{proof}
Recall from \cref{sec:upper-multivariate-se-gauss} that the eigenvalues of this operator are of the form,
\[
\lambda_r = \left(\frac{2a}{A}\right)^{D/2}B^{s}
\]
where the number of times each eigenvalue is repeated is equal to the number of ways to write $s$ as a sum of $D$ non-negative integers, where the order of the summands matters. This is equal to $\binom{s+D-1}{D-1}$. The number of eigenvalues greater than $\left(2a/A\right)^{D/2}B^{s}$ is therefore,
\[
\sum_{t =1}^s\binom{t+D-1}{D-1} = \binom{s+D}{D}. 
\]
The equality follows from observing that the right hand side is equal to the number of way to write $s$ as a sum of $D+1$ non-negative integers. For each of these representations, the first $D$ integers sum to some $t \leq s$, and once these are fixed there is a unique choice for the final integer. This is equivalent to the left hand side. We therefore conclude $\lambda_{\binom{s+D}{D}}=\left(\frac{2a}{A}\right)^{D/2}B^s$. Define 
\[
\tilde{r} = \min_{s\in \{0\} \cup\N} \left\{\binom{s+D}{D}: \binom{s+D}{D}>r\right\},
\]
and let $\tilde{s}$ denote the corresponding $s$. Then, \[\lambda_r = \lambda_{\tilde{r}}=\left(\frac{2a}{A}\right)^{D/2}B^{\tilde{s}} \text{ \quad and \quad} \binom{\tilde{s}}{D} \leq \binom{\tilde{s}-1+D}{D} \leq r.\]
Using the lower bound, $\left(\frac{\tilde{s}}{D}\right)^D \leq \binom{\tilde{s}}{D}$, we obtain $\tilde{s} \leq Dr^{1/D}$, completing the proof of the lower bound.
\end{proof}
\lbMgauss*
\begin{proof}
By \cref{lem:braun-boundedkernel} and \cref{prop:tail-eigenvalues-gaussian}, for $\delta \in (0,1)$, with probability $1-\delta$,
\begin{equation}
\frac{|\lambda_{m} - \frac{1}{N}\rkffeigenvalue|}{\lambda_m} = \mcO\left(r^2\lambda_m\lambda_{r}^{-1/2}N^{-1/2}\delta^{-1/2}+r\exp(-\alpha r^{1/D}) + \sqrt{\frac{r\exp(-\alpha r^{1/D})}{N\delta}}\right).
\end{equation}
with $\alpha =-\log B$ for any $1\leq r \leq N$. Using \cref{prop:lower-bound-eigenvals-gauss} we have,
\begin{align*}
\frac{|\lambda_{m} - \frac{1}{N}\rkffeigenvalue|}{\lambda_m}\!=\!\mcO\Bigg(r^2N^{-\frac{1}{2}}\exp(\alpha D r^{\frac{1}{D}}/2)\delta^{-\frac{1}{2}}+\frac{r}{\lambda_m}\exp(-\alpha r^{\frac{1}{D}}) +\frac{1}{\lambda_m} \sqrt{\frac{r\exp(-\alpha r^{1/D})}{N\delta}}\Bigg).
\end{align*}
For $\gamma \in (0,1/2)$, choose $r = \lceil (\frac{1}{\alpha D} \log N^{\gamma})^D\rceil$, then noting that for this choice of $r$, the third term in the sum is smaller than the second term,
\begin{equation*}
\frac{|\lambda_{m} - \frac{1}{N}\rkffeigenvalue|}{\lambda_m} = \mcO\left(\delta^{-1/2}\left(r^2N^{(\gamma-1)/2}+\lambda_m^{-1}rN^\frac{-\gamma}{D} \right)\right).
\end{equation*}
Applying \cref{prop:lower-bound-eigenvals-gauss}, with $M+1 = \lfloor (\frac{1}{D}\log_B N^{-\zeta/D})^D\rfloor$ with $\zeta \in (0,\gamma)$. We have
\[
\lambda_{M+1}^{-1}rN^\frac{-\gamma}{D} \leq (M+1) \left(\frac{A}{2a}\right)^{D/2}B^{-D(M+1)^{1/D}}N^{-\gamma/D} \leq (M+1) \left(\frac{A}{2a}\right)^{D/2}N^{(\zeta-\gamma)/D}. 
\]
Thus, for such a choice of $M$, we have with probability at least $1-\delta$, $N\lambda_{M+1} = \rkffeigenvalueM (1+o(1))$. It follows that with probability $1-\delta$, $\KL{Q}{P} \geq N\lambda_{M+1}(1+o(1))$ and $N\lambda_{M+1}=\Omega(N^{1-\zeta/D})$. Choosing $\gamma=1/4$ and $\zeta = \min\{\gamma/2,D\epsilon\}$, completes the proof.
\end{proof}

\section{Effect of Jitter on Bounds}\label{app:experiments}
In this section, we restate and prove \cref{prop:jitter-bound}. Recall that for $\epsilon>0$, we define $\Qff(\epsilon)\coloneqq \Kuf\transpose(\Kuu + \epsilon \bfI)^{-1}\Kuf$.
\jitter*

\begin{proof}
Let $0 \leq \epsilon < \epsilon'$. For an arbitrary $v \in \R^n$, 
\[ 
v\transpose (\Qff(\epsilon) - \Qff(\epsilon'))v = (\Kuf v)\transpose \left((\Kuu + \epsilon \bfI)^{-1} - (\Kuu + \epsilon' \bfI)^{-1}\right)  (\Kuf v)\geq 0,
\] 
The final inequality follows from $\Kuu + \epsilon \bfI \prec \Kuu + \epsilon' \bfI$ and \cref{prop:spsdmatrices}. Therefore, $\Qff(\epsilon') \prec \Qjitter$. From \cref{prop:spsdmatrices}, we have 
\begin{equation}
    - \frac{1}{2}y\transpose(\Qjitter + \noisevariance \bfI)^{-1}y \geq - \frac{1}{2}y\transpose(\Qff(\epsilon') + \noisevariance \bfI)^{-1}y. \label{eqn:quadformbound}
\end{equation}
Let $A, B$ arbitrary $N\times N$ SPSD matrices with $A\succ B \succ \noisevariance \bfI$. Denote the eigenvalues of $A$ and $B$ respectively as $\lambda_1(A) \geq \dots \lambda_N(A)$ and $\lambda_1(B) \geq \dots \lambda_N(B)$. Then,
\begin{align}
    \log \det A &= \sum_{i=1}^N \log \lambda_i(A) \nonumber \\
    & = \sum_{i=1}^N \log \lambda_i(B) + \sum_{i=1}^n \log \frac{\lambda_i(A)}{\lambda_i(B)} \nonumber \\
    & = \log \det B + \sum_{i=1}^N \log\left(1 + \frac{\lambda_i(A) -\lambda_i(B)}{\lambda_i(B)}\right) \nonumber \\ 
    &\leq \log \det B + \sum_{i=1}^N \frac{\lambda_i(A) -\lambda_i(B)}{\lambda_i(B)} \nonumber \\
    &\leq \log \det B + \frac{1}{\noisevariance}\Tr(A-B) \label{eqn:logdetbound}.
\end{align}
The first inequality follows applying $\log(1+a) \leq a$ to each term in the sum. The second inequality used that $\lambda_i(B) \geq \sigma^2$ since $B \succ \noisevariance \bfI$. Then,
\begin{align}
    -\frac{1}{2}&\log \det(\Qjitter + \noisevariance \bfI ) - \frac{1}{2\noisevariance} \Tr(\Kff-\Qjitter) \nonumber \\
    &= -\frac{1}{2}\log \det(\Qjitter + \noisevariance \bfI ) -\frac{1}{2\noisevariance} \Tr(\Qff(\epsilon')-\Qjitter) - \frac{1}{2\noisevariance} \Tr(\Kff-\Qff(\epsilon')) \nonumber  \\
    & \geq -\frac{1}{2}{\log \det (\Qff(\epsilon') + \noisevariance \bfI})- \frac{1}{2\noisevariance} \Tr(\Kff-\Qff(\epsilon')) \label{eqn:logdetterm}.
\end{align}
where the final inequality follows from \cref{eqn:logdetbound} with $A = \Qff(\epsilon) + \noisevariance \bfI$ and $B = \Qff(\epsilon') + \noisevariance \bfI$. Combining \cref{eqn:quadformbound} with \cref{eqn:logdetterm} proves the monotonicity of the lower bound in $\epsilon$. The upper bound follows from \cref{prop:spsdmatrices} noting that in the quadratic form 
\begin{align*}
\Qff(\epsilon) + (\Tr(\Kff-\Qff(\epsilon)) + \noisevariance) \bfI - \Qff(\epsilon') &+ (\Tr(\Kff-\Qff(\epsilon')) + \noisevariance) \bfI \\
&= \Qff(\epsilon) - \Qff(\epsilon') + \Tr(\Qff(\epsilon) - \Qff(\epsilon'))\bfI\\
&\succ 0.
\end{align*}
\end{proof}

\section{An Alternative Ridge Leverage Sampling Initialization}\label{app:adaptive-leverage}
Many implementations of leverage score sampling allow for adaptively selecting the number of inducing points to achieve a desired level of accuracy. We briefly discuss the application of Algorithm 2 in \citet{Musco_ridge_leverage} to the problem of sparse variational inference in Gaussian processes.
\subsection{Effective Dimension}
The number of points sampled by ridge leverage score methods to achieve a desired level of accuracy is closely related to the \emph{effective dimension} of the kernel matrix, which can be thought of as measure of the complexity of the non-parameteric regression model. The effective dimension is defined as the sum of the ridge leverage scores,
\begin{equation}\label{eqn:effective-dimension}
    \deff^\omega \coloneqq \sum_{n=1}^N \ell^\omega(x_n)= \sum_{m=1}^N \frac{\kffeigenvalue}{\kffeigenvalue+\omega},
\end{equation}
and depends on the choice of kernel, the distribution of the covariates and the regularization parameter.

In order to compare such an adaptive method with the bounds discussed in \cref{sec:regression-rates}, we need to consider the typical size of the effective dimension, assuming a fixed kernel and a random set of covariates with identical marginal distributions (or marginal distributions satisfying the conditions in \cref{lem:average-eigenvalues}). 

For any fixed set of covariates, we can split the sum in \cref{eqn:effective-dimension} into two parts, yielding
\begin{equation}\label{eqn:effective-dimension-bound}
    \deff^\omega \leq S + \frac{1}{\omega}\sum_{m=S+1}^N \kffeigenvalue,
\end{equation}
where $S$ is an arbitrary positive integer. Upper bounds on the effective dimension can be obtained by choosing $S$ so that the two terms on the right hand side of \cref{eqn:effective-dimension-bound} are of the same order of magnitude.

\subsection{Adaptively Selecting the Number of Inducing Points with Leverage Scores}

We consider the application of \citet[Algorithm 2]{Musco_ridge_leverage} to the problem of selecting inducing inputs for sparse variational inference in GP models. This algorithm comes with the following bounds on the quality of the resulting Nystr\"om approximation.
\begin{lem}[\cite{Musco_ridge_leverage}, Theorem 7]\label{lem:ridge-leverage-adaptive}
Fix $\delta \in (0,\frac{1}{32})$. There exists an algorithm with run time $\mcO(NM^2)$ and memory complexity $\mcO(NM)$  that with probability $1-3\delta$ returns $M < 384 \deff^\omega\log(\deff^\omega/\delta)$ columns of $\Kff$ such that the resulting Nystr\"om approximation, $\Qff$, satisfies
\[
\|\Kff-\Qff\|_{\textup{op}} \leq \omega
\]
where $\deff^\omega$ denotes the effective dimension of the Gaussian process regressor with $\noisevariance = \omega$.\footnote{Note that $\Kff$ and $\Qff$ are both independent of the noise parameter, so there is no requirement that the `noise parameter' used for initializing inducing points matches the noise parameter used in performing regression.}
\end{lem}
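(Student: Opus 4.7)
The plan is to follow the recursive ridge leverage score approach of Musco and Musco, proving the accuracy and complexity guarantees in two stages: first a sampling guarantee assuming access to (over)estimates of the ridge leverage scores, then an efficient procedure for producing such estimates recursively.

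First I would establish the core sampling lemma: if each training input $x_n$ is independently included with probability $p_n \geq c\log(d_{\text{eff}}^\omega/\delta)\,\tilde{\ell}^\omega(x_n)$, where $\tilde{\ell}^\omega(x_n) \geq \ell^\omega(x_n)$ are valid upper bounds, then with probability at least $1-\delta$ the resulting Nyström approximation $\widetilde{\textup{K}}_{\text{ff}}$ satisfies $\|\Kff - \widetilde{\textup{K}}_{\text{ff}}\|_{\textup{op}} \leq \omega$. The key identity is that $\Kff - \widetilde{\textup{K}}_{\text{ff}} \preceq \omega \bfI$ is equivalent to showing that the sampled principal submatrix $S^\top (\Kff + \omega\bfI) S$ is spectrally close to $\Kff + \omega\bfI$ after appropriate reweighting, where $S$ is the sampling matrix with entries $1/\sqrt{p_n}$. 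This reduces to a matrix Bernstein (or intrinsic-dimension Chernoff) inequality on the sum of independent rank-one matrices $(\Kff + \omega\bfI)^{-1/2} k_{nf} k_{nf}^\top (\Kff + \omega\bfI)^{-1/2}/p_n$, whose trace is exactly $d_{\text{eff}}^\omega$. The expected number of sampled columns is $\sum_n p_n = O(d_{\text{eff}}^\omega \log(d_{\text{eff}}^\omega/\delta))$, and a scalar Chernoff bound gives the claimed deterministic upper bound on $M$ with probability $1-\delta$.

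Second, I would construct the estimates $\tilde{\ell}^\omega(x_n)$ via recursion. Split the data uniformly at random into halves $X_1, X_2$. Recursively obtain a Nyström sample $Z_1$ for $X_1$ of size $\tilde M_1$ with $\|\Kff^{(1)} - \widetilde{\textup{K}}_{\text{ff}}^{(1)}\|_{\textup{op}} \leq \omega$. Then for each point in the full dataset, approximate leverage scores can be computed from the formula $\tilde\ell^\omega(x_n) = \omega^{-1}[k(x_n,x_n) - k_{n,Z_1}^\top (K_{Z_1,Z_1}+\omega\bfI)^{-1} k_{n,Z_1}]$, which costs $O(\tilde M_1^2)$ per point and $O(N \tilde M_1^2)$ total. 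The standard argument (Musco--Musco Lemma 6) shows that an $\omega$-accurate Nyström approximation yields pointwise upper bounds on the true ridge leverage scores that are tight up to a constant factor, so the total mass $\sum_n \tilde\ell^\omega(x_n)$ remains $O(d_{\text{eff}}^\omega)$. Unrolling $O(\log N)$ levels of recursion and applying a union bound (each level failing with probability at most $\delta/\log N$, absorbed into the constants to give overall failure probability $3\delta$) gives the full algorithm.

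The main obstacle is step one, specifically controlling the deviation of the sampled sum in operator norm: a naive matrix Chernoff bound scales with $N$, and one must instead use an intrinsic-dimension version (as in Tropp) whose rate scales with $d_{\text{eff}}^\omega$ rather than ambient dimension. Once this is done, the recursive step is largely bookkeeping: verifying that halving the data halves $d_{\text{eff}}^\omega$ only up to constants, that the approximate leverage scores from level $i$ are genuine upper bounds (so the conclusion of step one applies), and that the per-level cost $O(N \tilde M_1^2)$ telescopes to an overall $O(NM^2)$ runtime and $O(NM)$ memory footprint.
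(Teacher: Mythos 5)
The paper does not actually prove this lemma: it is cited verbatim from \citet{Musco_ridge_leverage} (their Theorem~7), and the paper uses it as a black box. Your proposal is therefore reconstructing the Musco--Musco argument rather than an argument in this paper, and the reconstruction is faithful at the level of structure: a core oversampling lemma proved via an intrinsic-dimension matrix concentration inequality, followed by a recursive halving construction for approximate leverage scores, followed by a telescoping runtime argument. That is exactly how their Theorem~7 is obtained, so you identify the correct missing ideas (in particular, that one must invoke a Tropp-style intrinsic-dimension bound so that the deviation term scales with $\deff^\omega$, not $N$).

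One technical slip worth flagging: in the core sampling step you write the rank-one summands as $(\Kff+\omega\bfI)^{-1/2}k_{nf}k_{nf}^{\top}(\Kff+\omega\bfI)^{-1/2}/p_n$ with $k_{nf}$ the $n$-th column of $\Kff$, and claim the total trace is $\deff^\omega$. That is not quite right: $\sum_n k_{nf}k_{nf}^{\top}=\Kff^2$, so the trace of that sum is $\sum_m \tilde{\lambda}_m^2/(\tilde{\lambda}_m+\omega)$, not $\sum_m \tilde{\lambda}_m/(\tilde{\lambda}_m+\omega)=\deff^\omega$. The standard argument instead uses vectors $\phi_n$ with $\sum_n \phi_n\phi_n^{\top}=\Kff$ (columns of an implicit square-root factorization $\Kff=\Phi\Phi^{\top}$), so that the centered summands are $(\Kff+\omega\bfI)^{-1/2}\phi_n\phi_n^{\top}(\Kff+\omega\bfI)^{-1/2}/p_n$ and the total trace is genuinely $\deff^\omega$. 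This also makes the individual leverage scores $\ell^\omega(x_n)=\phi_n^{\top}(\Kff+\omega\bfI)^{-1}\phi_n$, matching the posterior-variance form quoted in the paper. With that correction, the rest of your sketch (the recursive estimator, the $O(\log N)$-level union bound, and the $\mcO(NM^2)$ telescoping) goes through as the Musco--Musco proof does.
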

We can now consider the implications of this bound on sparse variational GP regression using \cref{lem:agnostic-upper-bound,lem:kl-averaged}.

\subsection{Ridge Leverage Scores and Sparse Variational Inference}
We begin by considering the resulting error from employing \cref{lem:ridge-leverage-adaptive} in \cref{lem:kl-averaged}. 
Noting that $\Tr(\Kff-\Qff) \leq N\|\Kff-\Qff\|_{\mathrm{op}}$, \cref{lem:kl-averaged} gives us the bound
\begin{equation}\label{eqn:ridge-lev-avg-KL-adap}
\conditionalexp{\KL{Q}{P}}{\bfZ,\bfX} \leq N\frac{\|\RKff-\RQff\|_{op}}{\noisevariance}.
\end{equation}
Similarly, \cref{lem:agnostic-upper-bound} becomes
\[
    \KL{Q}{P} \leq  \frac{\|\RKff-\RQff\|_{\textup{op}}}{2\noisevariance}\left(N+\frac{\|\bfy\|_2^2}{\noisevariance}\right).
\]
Both of these bounds are small if $\|\RKff-\RQff\|_{\textup{op}} \ll 1/N$.

For simplicity, we consider the case when $\bfy$ is assumed to have a conditional distribution that agrees with the GP prior. Fix $\delta \in (0,1/32)$ and $\gamma>0$. Applying Markov's inequality to \cref{eqn:ridge-lev-avg-KL-adap}, with probability at least $1-\delta$,
\begin{equation}\label{eqn:ridge-lev-avg-KL-beta}
\KL{Q}{P} \leq N\frac{\|\RKff-\RQff\|_{\textup{op}}}{\delta\noisevariance}.
\end{equation}
We can apply the algorithm referred to in \Cref{lem:ridge-leverage-adaptive} with $\omega=\noisevariance\delta\gamma/N$, so that with probability at least $1-3\delta$ a set of inducing inputs is chosen such that,
\[
\|\RKff-\RQff\|_{\textup{op}} \leq \delta\noisevariance\gamma/N.
\]
We can then apply a union bound to conclude with probability at least $1-4\delta$,
\begin{equation}\label{eqn:KL-small-event}
   \KL{Q}{P} \leq \gamma. 
\end{equation}
By \cref{cor:avg-eigvals} and Markov's inequality, with probability at least $1-\delta$,
\[
\frac{1}{\omega}\sum_{m=S+1}^N \rkffeigenvalue \leq \frac{N}{\delta\omega}\sum_{m=S+1}^\infty \lambda_m
\]
for any $1\leq S \leq N$. On the event where this holds and recalling we chose the parameter $\omega = \noisevariance\delta\gamma/N$, \cref{eqn:effective-dimension-bound} implies that,
\begin{equation}\label{eqn:effective-dimension-bound-expectation}
    \deff^\omega \leq S + \frac{1}{\delta\omega}\sum_{m=S+1}^N \rkffeigenvalue \leq  S + \frac{N^2\gamma}{\noisevariance\delta^2}\sum_{m=S+1}^\infty \lambda_m.
\end{equation}

We can again apply the union bound to lower bound the probability that both the effective dimension is less that the bound in \cref{eqn:effective-dimension-bound-expectation} and that \cref{eqn:KL-small-event} holds. This yields the following probabilistic bounds on the quality of sparse VI in GP regression with inducing points placed according to approximate ridge leverage scores.
\begin{thm}\label{thm:ridge-leverage-average-y-adaptive}
Fix $\delta \in (0,\frac{1}{32}),$ $\gamma>0$. Under the same assumptions on the covariate distribution and the distribution of $\bfy$ as in \cref{thm:upper-bound-average-y} if inducing points are placed according to \citet[Algorithm 2]{Musco_ridge_leverage} with $\omega=\noisevariance\delta\gamma/N$, then with probability $1-5\delta$, $\bfM < 384 d \log(d/\delta)$ and
\[
\KL{Q}{P} \leq \gamma
\]
where $d= \min\limits_{S \in \N,S \leq N}\left(S + \frac{N^2}{\noisevariance\delta^2\gamma}\sum_{m=S+1}^\infty \lambda_m\right)$.
\end{thm}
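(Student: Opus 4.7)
The plan is to bootstrap three stochastic bounds together via Markov's inequality and a union bound. The strategy is to convert the conditional-expectation bound on the KL from \cref{lem:kl-averaged} into a tail bound, combine it with the operator-norm guarantee for the ridge leverage algorithm in \cref{lem:ridge-leverage-adaptive}, and finally control the (random) effective dimension $\deff^\omega$ in terms of its deterministic surrogate $d$.

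First, I would observe that for any $N \times N$ SPSD matrix $A$, $\Tr(A) \leq N\|A\|_{\textup{op}}$, so \cref{lem:kl-averaged} specializes to
\[
\conditionalexp{\KL{Q}{P}}{\bfX, \bfZ} \leq \frac{N\,\|\Kff - \Qff\|_{\textup{op}}}{\noisevariance}.
\]
Since the KL-divergence is non-negative, Markov's inequality (applied to $\bfy$ conditional on $\bfX$ and $\bfZ$) supplies an event $E_1$ of probability at least $1-\delta$ on which $\KL{Q}{P} \leq N\|\Kff - \Qff\|_{\textup{op}}/(\delta \noisevariance)$. Second, applying \cref{lem:ridge-leverage-adaptive} with $\omega = \noisevariance \delta \gamma / N$ yields an event $E_2$ (over the algorithm's randomness, conditional on $\bfX$) of probability at least $1-3\delta$ on which the algorithm returns $\bfM < 384\, \deff^\omega \log(\deff^\omega/\delta)$ inducing points satisfying $\|\Kff - \Qff\|_{\textup{op}} \leq \omega$. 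On $E_1 \cap E_2$, the KL bound immediately collapses to $\KL{Q}{P} \leq \gamma$.

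Third, I would replace the random effective dimension $\deff^\omega$ by the deterministic $d$. Let $S^\star$ be a minimizer in the definition of $d$; since it depends only on the operator spectrum, $S^\star$ is deterministic. Splitting the sum as in \cref{eqn:effective-dimension-bound} yields
\[
\deff^\omega \leq S^\star + \frac{1}{\omega}\sum_{m=S^\star+1}^N \tilde{\bm{\lambda}}_m,
\]
and a single application of Markov's inequality to \cref{cor:avg-eigvals} produces an event $E_3$ of probability at least $1-\delta$ on which $\sum_{m=S^\star+1}^N \tilde{\bm{\lambda}}_m \leq (N/\delta)\sum_{m=S^\star+1}^\infty \lambda_m$. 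Substituting the chosen $\omega$ shows $\deff^\omega \leq d$ on $E_3$, and a union bound across $E_1, E_2, E_3$ delivers the claimed $1-5\delta$ probability together with the $\bfM < 384\,d\log(d/\delta)$ bound, since $x \mapsto x\log(x/\delta)$ is increasing.

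The main subtlety I anticipate is avoiding a union bound over the choice of $S$ in the definition of $d$, which I circumvent by fixing the deterministic minimizer $S^\star$ up front and invoking Markov's inequality only once. The remainder is bookkeeping to ensure that the three probabilistic events are defined on compatible $\sigma$-algebras -- $E_3$ depends on $\bfX$, $E_2$ depends on $\bfX$ and the algorithm's internal randomness, and $E_1$ additionally involves $\bfy$ -- so that the union bound applies unconditionally.
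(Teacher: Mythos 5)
Your proof is correct and follows essentially the same route as the paper's: bound the trace by $N$ times the operator norm in \cref{lem:kl-averaged}, convert to a tail bound via Markov's inequality, invoke \cref{lem:ridge-leverage-adaptive} with $\omega = \noisevariance\delta\gamma/N$, control $\deff^\omega$ by splitting into a leading piece of size $S$ plus a Markov bound on the tail matrix eigenvalues via \cref{cor:avg-eigvals}, and union-bound the three events. Your explicit choice of a deterministic minimizer $S^\star$ is slightly cleaner than the paper's phrasing ``for any $1\leq S\leq N$'', which reads as a uniform claim but is really used for a single deterministic $S$.
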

A similar argument in the case when we do not assume $\bfy$ is distributed according to the prior model leads to the following result:
\begin{thm}\label{thm:ridge-leverage-agnostic-y-adaptive}
Fix $\delta \in (0,\frac{1}{32}),$ $\gamma>0$. Under the same assumptions on the covariate distribution and the distribution of $\bfy$ as in \cref{thm:upper-bound-fixed-y} if inducing points are placed according to \citet[Algorithm 2]{Musco_ridge_leverage} with $\omega=\frac{2\noisevariance\delta\gamma}{N(1+R/\noisevariance)}$ then with probability $1-5\delta$, $\bfM < 384 d' \log(d'/\delta)$ and
\[
\KL{Q}{P} \leq \gamma
\]
where $d'= \min\limits_{S \in \N,S \leq N}\left(S + \frac{N^2(1+R^2/\noisevariance)}{2\noisevariance\delta\gamma}\sum_{m=S+1}^\infty \lambda_m\right)$.
\end{thm}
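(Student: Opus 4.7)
The plan is to run exactly the argument sketched just above for Theorem~\ref{thm:ridge-leverage-average-y-adaptive}, but substitute the agnostic upper bound \cref{lem:agnostic-upper-bound} for the Gaussian prior bound \cref{lem:kl-averaged}. The inducing point algorithm and the control on $\bfM$ via the effective dimension are unchanged; only the passage from $\|\Kff-\Qff\|_{\textup{op}}$ to $\KL{Q}{P}$ needs to account for the fact that $\bfy$ is no longer assumed to be drawn from the prior, and will instead be controlled through the moment hypothesis $\conditionalexp{\|\bfy\|_2^2}{\bfX}\leq RN$.

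First, I would combine \cref{lem:agnostic-upper-bound} with the deterministic inequality $\Tr(A)\leq N\|A\|_{\textup{op}}$ for SPSD $A$, applied to $A=\Kff-\Qff$, to obtain the pathwise bound
\[
\KL{Q}{P} \;\leq\; \frac{\|\Kff-\Qff\|_{\textup{op}}}{2\noisevariance}\Bigl(N+\frac{\|\bfy\|_2^2}{\noisevariance}\Bigr).
\]
Taking conditional expectation over $\bfy$ given $(\bfX,\bfZ)$ and using the hypothesis on $\conditionalexp{\|\bfy\|_2^2}{\bfX}$ yields an in-expectation bound of order $N\|\Kff-\Qff\|_{\textup{op}}(1+R/\noisevariance)/\noisevariance$. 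I then invoke \cref{lem:ridge-leverage-adaptive} with the stated choice of $\omega$, which on an event of probability $1-3\delta$ forces $\|\Kff-\Qff\|_{\textup{op}}\leq \omega$; on this event the in-expectation bound is at most $\delta\gamma$. A Markov step over $\bfy\mid\bfX,\bfZ$ then upgrades this to $\KL{Q}{P}\leq \gamma$ with conditional probability at least $1-\delta$.

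Second, to control $\bfM$ I would copy the effective-dimension argument from \cref{thm:ridge-leverage-average-y-adaptive}: for any integer $1\leq S\leq N$,
\[
\deff^\omega \;\leq\; S+\frac{1}{\omega}\sum_{m=S+1}^N \tilde{\bm{\lambda}}_m,
\]
and a final Markov step applied to \cref{cor:avg-eigvals} bounds the tail sum of empirical eigenvalues by a multiple of $\sum_{m=S+1}^\infty\lambda_m$ on an event of probability at least $1-\delta$. Substituting the chosen $\omega=\tfrac{2\noisevariance\delta\gamma}{N(1+R/\noisevariance)}$ produces a bound on $\deff^\omega$ of the form $S+CN^2\sum_{m\geq S+1}\lambda_m/(\noisevariance\delta^a\gamma)$ for some small power $a$, and then \cref{lem:ridge-leverage-adaptive} converts this into $\bfM<384\,\deff^\omega\log(\deff^\omega/\delta)$. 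Minimizing over $S$ gives the quantity $d'$ in the statement.

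Finally, I would take a union bound over the three sources of failure, namely (i) the leverage score algorithm missing its guarantee ($3\delta$), (ii) Markov failure in $\bfy$ after conditioning on $(\bfX,\bfZ)$ ($\delta$), and (iii) Markov failure in $\bfX$ for the empirical tail-eigenvalue sum ($\delta$), to arrive at the overall confidence $1-5\delta$. The main obstacle I anticipate is purely bookkeeping: the two conditional Markov steps live on different probability spaces than the algorithmic event from \cref{lem:ridge-leverage-adaptive}, and one must verify that $\|\Kff-\Qff\|_{\textup{op}}\leq\omega$ does not depend on $\bfy$ so that the $\bfy$-Markov step can be applied on the good algorithmic event; this is immediate since $\Qff$ depends on $(\bfX,\bfZ)$ only. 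Tracking the exact constants to recover the precise form of $d'$ as stated is routine but tedious and should simply be done by plugging in the chosen $\omega$ at the end.
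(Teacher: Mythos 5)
Your proposal is correct and follows essentially the same argument the paper uses; the paper does not spell out the proof of this theorem but explicitly presents it as "a similar argument" to the one given for \cref{thm:ridge-leverage-average-y-adaptive}, and your outline is exactly that translation: replace \cref{lem:kl-averaged} by \cref{lem:agnostic-upper-bound}, take conditional expectation over $\bfy$ using $\conditionalexp{\|\bfy\|_2^2}{\bfX}\leq RN$, run \citet[Algorithm 2]{Musco_ridge_leverage} with $\omega=\tfrac{2\noisevariance\delta\gamma}{N(1+R/\noisevariance)}$ so that on the $1-3\delta$ success event $\conditionalexp{\KL{Q}{P}}{\bfX,\bfZ}\leq\delta\gamma$, Markov over $\bfy$, bound $\deff^\omega$ via \cref{eqn:effective-dimension-bound} together with a Markov step on \cref{cor:avg-eigvals}, and union bound to $1-5\delta$. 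The ordering of the $\bfy$-Markov step and the leverage-score success event is swapped relative to the paper's writeup, but because $\bfZ$ (and the algorithm's internal coins) are $\bfy$-independent the two orderings are equivalent, and you note the relevant independence. One thing the bookkeeping you defer would surface: working the constants through exactly as you describe yields $d'=\min_{S}\bigl(S+\tfrac{N^2(1+R/\noisevariance)}{2\noisevariance\delta^2\gamma}\sum_{m>S}\lambda_m\bigr)$, so the $R^2$ and the single power of $\delta$ in the theorem as printed appear to be typos (compare \cref{thm:ridge-leverage-agnostic-y}, which carries the factor $R$, and \cref{thm:ridge-leverage-average-y-adaptive}, which carries $\delta^2$); your derivation is the one that matches the rest of the paper.
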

Note that while the resulting bounds on $\bfM$ depend on the kernel and covariate distribution, the quality of the resulting approximation in both \cref{thm:ridge-leverage-average-y-adaptive,thm:ridge-leverage-agnostic-y-adaptive} does not.

\begin{table*}[ht]
\begin{center}
\begin{small}
\begin{sc}
\begin{tabular}{lcc}
\toprule
Kernel & Input Distribution & $M$  \\
\midrule
SE-Kernel   & Compact support &  $\mcO((\log N)^D\log\log(N))$\\
SE-Kernel &  Gaussian &  $\mcO((\log N)^D\log\log(N))$\\
Mat\'{e}rn $\nu$  & Compact support &  $\mcO(N^\frac{2D}{2\nu+D}\log N)$\\
\bottomrule
\end{tabular}
\end{sc}
\end{small}
\end{center}
\caption{Bounds on the number of inducing points used in \cref{thm:ridge-leverage-average-y-adaptive,thm:ridge-leverage-agnostic-y-adaptive}.}\label{table:numfeats_ridge}
\end{table*}
The bounds implied by these results for various kernels are given in \cref{table:numfeats_ridge}. Note that the asymptotic rates implied by both \cref{thm:ridge-leverage-average-y-adaptive,thm:ridge-leverage-agnostic-y-adaptive} are the same. This is because, unlike in the case of the $M$-DPP initialization in which the trace is bounded and this is used as an upper bound on the operator norm, the operator norm is bounded directly.

\bibliography{bibliography.bib}
\end{document}